%% file: tmlr2026.tex
\documentclass{article}

\usepackage[preprint]{styles/tmlr}

\usepackage{hyperref}
\usepackage[center]{styles/VABcommands}
\usepackage{styles/VABenvironments}
\usepackage{cleveref}
\usepackage{thmtools} 
\usepackage{thm-restate}
\AtBeginEnvironment{theorem}{}
\AtBeginEnvironment{restatable}{}
\hypersetup{
    colorlinks,
    linkcolor={red!50!black},
    citecolor={blue!50!black},
    urlcolor={blue!80!black}
}
\usepackage{booktabs}
\usepackage{makecell}
\usepackage{subcaption}
\usepackage{algorithm}
\usepackage{algpseudocode}
\usepackage{forest}
\usepackage{csquotes}
\MakeOuterQuote{"}

\newtheorem*{openproblem}{Open Problem}

\title{Heat and Matérn Kernels on Matchings}

\author{%
\name Dmitry Eremeev
\email dimaeremeev2002@gmail.com  \\
\addr Higher School of Economics
\AND
\name Salem Said
\email salem.said@univ-grenoble-alpes.fr \\
\addr University Grenoble Alpes
\AND
\name Viacheslav Borovitskiy
\email viacheslav.borovitskiy@gmail.com \\
\addr University of Edinburgh
}

\def\month{TODO}  
\def\year{TODO} 
\def\openreview{\url{https://openreview.net/forum?id=XXXX}} 

\newcommand\blfootnote[1]{%
  \begingroup
  \renewcommand\thefootnote{}\footnote{#1}%
  \addtocounter{footnote}{-1}%
  \endgroup
}

\begin{document}

\maketitle

\blfootnote{\vspace{0.2cm}Our source code is available at \url{https://github.com/eremeev-d/matchings-gp}}

\vspace*{-1cm}

\begin{abstract}

Applying kernel methods to matchings is challenging due to their discrete, non-Euclidean nature.
In this paper, we develop a principled framework for constructing geometric kernels that respect the natural geometry of the space of matchings.
To this end, we first provide a complete characterization of stationary kernels, i.e. kernels that respect the inherent symmetries of this space.
Because the class of stationary kernels is too broad, we specifically focus on the heat and Matérn kernel families, adding an appropriate inductive bias of smoothness to stationarity.
While these families successfully extend widely popular Euclidean kernels to matchings, evaluating them naively incurs a prohibitive super-exponential computational cost.
To overcome this difficulty, we introduce and analyze a novel, sub-exponential algorithm leveraging zonal polynomials for efficient kernel evaluation.
Finally, motivated by the known bijective correspondence between matchings and phylogenetic trees---a crucial data modality in biology---we explore whether our framework can be seamlessly transferred to the space of trees, establishing novel negative results and identifying a significant open problem.
\end{abstract}

\section{Introduction}

Matchings—partitions of a set of $2n$ elements into $n$ pairs—are fundamental combinatorial structures that model paired relationships across diverse domains.
In sports analytics, for instance, a matching represents a single round of a tournament schedule, determining which teams play against each other.
Similarly, in logistics and planning, matchings model the assignment of students to dormitory rooms or the pairing of participants in networking activities.
Despite the practical importance of these structures, applying machine learning techniques to tasks involving them remains challenging.
Standard machine learning models are typically designed for continuous Euclidean spaces ($\mathbb{R}^d$), whereas matchings are discrete, combinatorial objects that cannot be naturally treated as vectors without destroying their inherent geometric structure.

While we do not address these in the paper, our investigation is motivated by potential applications such as Bayesian optimization \citep{garnett2023} and kernel hypothesis testing \citep{gretton2012}.
For instance, one might wish to optimize a tournament schedule to maximize fairness, or to statistically detect latent biases in student housing allocations.
These scenarios illustrate the need to model complex functions and distributions over the non-Euclidean domain of matchings, underscoring the potential utility of learning thereon.

In this purely theoretical paper, we develop the foundations for kernel methods on matchings.
To enable this, one could technically define a kernel by embedding matchings into a high-dimensional vector space (e.g., via one-hot encoding of a matching's edges), but such an ad-hoc approach would fail to respect the natural geometry of the space of matchings.
A more principled approach is to use a true \emph{geometric} kernel \citep{mostowsky2025}: a kernel that respects the natural symmetries of the space---i.e. is \emph{stationary} under the relabeling of the underlying items---and varies smoothly with respect to a meaningful distance.

This geometric perspective has driven recent progress in machine learning, building practical generalizations of standard kernels---such as the heat\footnote{The heat kernel is also commonly known as the squared exponential, diffusion, Gaussian, or RBF kernel.} and Matérn families \citep{scholkopf2002, porcu2024}---to non-Euclidean domains like Riemannian manifolds \citep{borovitskiy2020, azangulov2024a, azangulov2024b} and graphs \citep{borovitskiy2021, borovitskiy2023, bolin2024, kondor2002, doumont2025}.
Adopting this paradigm, we present a principled framework for constructing kernels on matchings.

We begin by formalizing the geometry of the space of matchings in \Cref{sec:stationary}, characterizing it as the homogeneous space $S_{2n}/H_n$ and identifying the pair $(S_{2n}, H_n)$ as a \emph{Gelfand pair}.
This algebraic structure enables us to provide a complete characterization of all \emph{stationary} kernels on matchings.
However, stationarity alone is insufficient in practice, as it is too permissive, encompassing kernels such as the delta kernel ($k(x, x') = \1_{x = x'}$) that preclude any meaningful generalization.
To address this, we additionally view the set of matchings as vertices of a weighted quotient of the \emph{Cayley graph} of the symmetric group $S_{2n}$.
This construction induces a natural distance and a discrete notion of smoothness governed by the spectral theory of the graph Laplacian.
Using this spectral perspective, we identify the heat and Matérn kernels as the specific stationary kernels that generalize the smoothness properties of their Euclidean counterparts (\Cref{sec:kernels:heat_matern}).

However, this spectral characterization encounters an immediate computational barrier: the cardinality of the matching space $|\c{X}_n|$ grows super-exponentially.
Constructing the full graph Laplacian, let alone diagonalizing it---which is required for naive kernel computation---is intractable for all but the smallest cases ($n \leq 7$).
To overcome this, we utilize the expansion of stationary kernels into \emph{zonal spherical functions}.
This effectively reduces the task of kernel evaluation (or approximation) to the computation of these functions.
Crucially, we propose a novel algorithm for this task based on \emph{zonal polynomials} in \Cref{sec:computation}, reducing the computational complexity from the super-exponential scale of $|\c{X}_n|$ to the sub-exponential scale of the partition function $p(n)$---admittedly, still super-polynomial.
This algorithmic advance allows us to perform exact or highly accurate approximate kernel computations for much larger problem sizes (e.g., $n=25$).

\looseness=-1
Finally, in \Cref{sec:discussion}, we analyze the computational complexity and approximation quality of our proposed techniques, and we explore the connection between matchings and phylogenetic trees.
Although there is a known bijection between them, we prove it does not preserve geometry.
Consequently, the "push-forward" of our geometric kernels on matchings to the space of trees fails to yield kernels that respect the geometry of phylogenetic trees, highlighting the existence of a geometry-preserving bijection as an important open problem.

\subsection{Background}
\label{sec:background}

Intuitively, \emph{kernels} are similarity measures.
Formally, a function $k: \c{X} \times \c{X} \to \mathbb{R}$ on some set $\c{X}$ is a kernel if it satisfies the technical condition of being \emph{positive semi-definite}.
That is, if the matrix $\m{K}_{n n} = [k(x_i, x_j)]_{i, j}$ is positive semi-definite for all $n \in \mathbb{N}$ and all $x_1, \ldots, x_n \in \c{X}$.
Designing functions that represent a useful notion of similarity on a given $\c{X}$ and, at the same time, satisfy this technical condition is far from trivial.

When $\c{X} = \mathbb{R}^d$, the family of \emph{heat} kernels (also known as \emph{RBF}, \emph{squared expoential}, \emph{Gaussian}, or \emph{diffusion} kernels), and their extension known as the \emph{Matérn} kernels, are perhaps the most popular \citep{rasmussen2005,porcu2024}.
A key reason for this is they are essentially the simplest kernels satisfying:
\1* \emph{Stationarity}.
This means that shifting both arguments of the kernel by the same vector $\mathbf{s} \in \mathbb{R}^d$ leaves the kernel value unchanged: $k(\mathbf{x} + \mathbf{s}, \mathbf{x}' + \mathbf{s}) = k(\mathbf{x}, \mathbf{x}')$.
In many settings, e.g. in kernel ridge regression or Gaussian process regression, this translates into the following: training on a shifted dataset $\mathcal{D}_{\mathbf{s}} = (\mathbf{x}_1 + \mathbf{s}, y_1, \ldots, \mathbf{x}_n + \mathbf{s}, y_n)$ yields a model $f_{\mathbf{s}}(\cdot)$ related to the original model $f$ by the same shift $f_{\mathbf{s}}(\cdot) = f(\cdot + \mathbf{s})$.
This is a property so natural that it hardly requires further explanation.
\2* \emph{Smoothness}.
In its simplest form, this can be formalized as continuity: $k(\mathbf{x} + \v{\varepsilon}, \mathbf{x}') \approx k(\mathbf{x}, \mathbf{x}')$ for small $\v{\varepsilon} \in \mathbb{R}^d$.
More generally, in $\mathbb{R}^d$, this is formalized by differentiability---that is, we assume $k$ has a certain number of derivatives.\footnote{For Matérn kernels, the degree of differentiability can be non-integer and is controlled by a parameter $\nu \in \mathbb{R}_{>0}$.
See \citet{rasmussen2005} and \citet{porcu2024} for more details on Matérn kernels.}
Assuming some smoothness is crucial for \emph{generalization}: if $k$ could change arbitrarily even for nearby points, generalization to unseen data would be impossible.
\0*

These two properties---stationarity and smoothness---admit far-reaching generalizations, enabling the construction of useful kernels for input domains $\c{X}$ far beyond $\mathbb{R}^d$~\citep{azangulov2024a, azangulov2024b, borovitskiy2023}.
Notably, such generalizations include extensions of the heat and Matérn kernels~\citep{borovitskiy2020, borovitskiy2021}.\footnote{
Interestingly, generalizations based on stationarity and smoothness yield more successful kernel constructions than extending explicit formulas for the heat and Matérn kernels from $\mathbb{R}^d$ to other domains~\citep{feragen2015, dacosta2025}.
}
To illustrate the generalized notions of stationarity and smoothness in non-Euclidean settings, and how their interplay gives rise to generalized heat and Matérn kernels, we now discuss an example setting.
Specifically, let us assume $\c{X}$ is the node set of a graph---the setting that will turn out to be of key importance throughout the rest of the paper.
Formally, let $\c{X} = V$ be the set of nodes of an undirected graph without loops $G = (V, E)$ with edges $E \subseteq V \times V$ and with $V = \{1, \ldots, d\}$.
For simplicity, we assume $G$ is unweighted.

For a graph $G$, the analog of translations in $\mathbb{R}^d$ are \emph{automorphisms} of $G$, i.e., the set of all permutations $\sigma : V \to V$ preserving edge structure, i.e. such that
$(\sigma(x), \sigma(x')) \in E$ if and only if $(x, x') \in E$.
Therefore, a kernel $k: V \x V \to \R$ is called stationary, if and only if $k(\sigma(x), \sigma(x')) = k(x, x')$ for all such $\sigma$.
Intuitively, this reflects the fact that labeling nodes by indices $1, \ldots, d$ is arbitrary: for many tasks such as regression, relabeling the data should simply transform the model according to the same relabeling.
This is a concrete example of the general concept of being \emph{stationary under a group action}, which we revisit in~\Cref{sec:stationary:group_action}.

Smoothness on graphs is more subtle.
While it is possible to define proximity via the shortest path distance, this does not yield a quantitative smoothness scale analogous to the number of (generalized) derivatives.
Instead, smoothness can be characterized spectrally: the decay of coefficients in the basis of graph Laplacian eigenvectors generalizes the role of smoothness and Fourier decay in classical settings (see, e.g.,$\!$ \citet{chung1997}).

\citet{kondor2002} and \citet{borovitskiy2021} introduced general-purpose versions of the heat and Matérn kernels which respect the above-mentioned notions of stationarity and smoothness on graphs.
Let the connectivity of $G$ be encoded in the $d \times d$ adjacency matrix $\m{A}$, and define the graph Laplacian as $\m{\Delta} = \m{D} - \m{A}$, where $\m{D}$ is the diagonal degree matrix with entries $\m{D}_{ii} = \sum_j \m{A}_{ij}$.
It is well-known that $\m{\Delta}$ is a symmetric positive semi-definite matrix.
Let $\{\v{u}_i\}_{i=1}^d$ be an orthonormal basis of \emph{eigenvectors} of $\m{\Delta}$, with corresponding \emph{eigenvalues} $\lambda_i \geq 0$.
We can reinterpret them as \emph{eigenfunctions} $u_i: V \to \R$ with $u_i(x) = \del{\v{u}_i}_x$.
We denote the heat kernel on $V$ by $k_{\infty,\kappa}(\cdot,\cdot)$ and the Matérn kernel by $k_{\nu, \kappa}(\cdot,\cdot)$, where $\kappa,\nu > 0$.

Then, according to \citet{borovitskiy2021}, both can be computed via
\[
\label{eqn:graph_kernels}
k_{\nu, \kappa}(x, x')
=
\frac{1}{C_{\nu, \kappa}}
\sum_{\ell=1}^{d}
\Phi_{\nu, \kappa}(\lambda_\ell) u_{\ell}(x) u_{\ell}(x') ,
\qquad
\Phi_{\nu, \kappa}(\lambda) =
\begin{cases}
\exp\left(-\frac{\kappa^2}{2} \lambda \right) & \text{if } \nu = \infty , \\[0.15cm]
\left( \frac{2\nu}{\kappa^2} + \lambda \right)^{-\nu} & \text{if } \nu < \infty.
\end{cases}
\]
Here, $x, x'$ are nodes of the graph, and $C_{\nu,\kappa} > 0$ is a normalization constant ensuring $\frac{1}{\abs{\c{X}}}\sum_{x \in \c{X}} k_{\nu,\kappa}(x,x) = 1$.

\section{Stationary Kernels}
\label{sec:stationary}

Intuitively, since the specific labeling $1, \ldots, 2n$ of elements in a matching is arbitrary, a kernel on matchings should remain unchanged when both of its arguments are relabeled in the same way.
We formalize this intuition in~\Cref{sec:stationary:group_action} using the framework of stationarity under a group action.
Examining the relevant group action further in~\Cref{sec:stationary:homogeneous}, we naturally identify the structure of matchings as a homogeneous space.
This structure allows us to explicitly characterize the class of stationary kernels on matchings in~\Cref{sec:stationary:description}.

\subsection{Group Action and Stationarity under a Group Action}
\label{sec:stationary:group_action}

As mentioned above, the labeling $1, \ldots, 2n$ of elements in a matching is arbitrary.
This arbitrariness is an example of a \emph{symmetry}.
More generally, a symmetry of a set $\c{X}$ is any bijective map $g: \c{X} \to \c{X}$ that preserves its relevant structure.
The choice of structure determines what transformations qualify as symmetries.

Formally, we represent the symmetries of $\c{X}$ by considering the action of a group $G$ on $\c{X}$, expressed as a \emph{group action} $\lacts: G \times \c{X} \to \c{X}$.
In typical notation, $g \lacts x$ denotes the action of $g \in G$ on $x \in \c{X}$.
Given a group $G$ acting on $\c{X}$ via $\lacts$, a kernel $k: \c{X} \times \c{X} \to \mathbb{R}$ is called \emph{stationary under the group action} if and only if
\[
\label{eqn:stationary_kernel}
k(g \lacts x, g \lacts x') = k(x, x')
\qquad
\text{for all } x, x' \in \c{X},\ g \in G.
\]

For the set $\c{X}_n$ of matchings of size $n$, the relevant symmetries are permutations $\sigma: \{1, \ldots, 2n\} \to \{1, \ldots, 2n\}$ \citep{ceccherinisilberstein2008}.
The collection of all such permutations is called the \emph{symmetric group}~$S_{2n}$ of degree $2n$.
For $\sigma \in S_{2n}$ and a matching $x \in \c{X}_n$, we define the action by
\begin{equation}
\label{eqn:s2n_action}
\sigma \lacts x := \{ \{ \sigma(i_1), \sigma(i_2) \}, \ldots, \{ \sigma(i_{2n-1}), \sigma(i_{2n}) \} \}
,
\end{equation}
where $x = \{\{i_1, i_2\}, \ldots, \{i_{2n-1}, i_{2n}\}\} \in \c{X}_n$.

Combining this group action with the invariance condition~\eqref{eqn:stationary_kernel} defines stationarity for kernels on matchings.
We now describe how this group action offers an alternative, algebraic perspective on $\c{X}_n$: the space of matchings is a \emph{homogeneous space}.
This perspective is key to the explicit characterization of stationary kernels in~\Cref{sec:stationary:description}.

\subsection{The Set of Matchings \texorpdfstring{$\c{X}_n$}{} as a Homogeneous Space}
\label{sec:stationary:homogeneous}
Recall that the group action is said to be \emph{transitive} if, for all $x,x' \in \c{X}$, there exists $g \in G$ such that $g\lacts x = x'$.
Intuitively, this means the action is rich enough to transform any point into any other point.
In this case, $\c{X}$ is called a \emph{homogeneous space}, and it may be identified with the \emph{coset space} $G/H$, where $H$ is the \emph{isotropy subgroup} fixing some basepoint $x_0 \in \c{X}$:
\[
    H = \left\{ h \in G : h \lacts x_0 = x_0 \right\}.
\]
The elements of $G/H$ are \emph{cosets} $gH = \{g h : h \in H\}$.
The natural bijective correspondence between $G/H$ and $\c{X}$ maps the coset $gH$ to 
the element $g\lacts x_0$.
Here, $G$ acts on $G/H$ by left multiplication $g \lacts (g' H) = (g g') H$.

In the case of matchings $\c{X}_n$ with the action~\eqref{eqn:s2n_action}, let us fix the basepoint as $x_0 = \{\{1, 2\}, \ldots, \{2n-1, 2n\}\}$.
Any permutation $h$ that leaves $x_0$ unchanged (that is, $h \lacts x_0 = x_0$) can either swap the elements within a given pair (that is, map $(i, i+1)$ to $(i+1, i)$ for some odd $i$) or permute entire pairs among themselves (that is, send the $j$th pair $(2j-1, 2j)$ to the $k$th pair $(2k-1, 2k)$).
Together, these generate the isotropy subgroup $H$, which is isomorphic to the \emph{hyperoctahedral group} $H_n$, the symmetry group of the $n$-dimensional hypercube~\citep{ceccherinisilberstein2008}.
Thus, the set of matchings can be identified with the coset space $S_{2n}/H_n$, and this perspective enables the application of general theory of stationary kernels on homogeneous spaces.

\paragraph{Functions on $G/H$ and functions on $G$.}
An important consequence of viewing a homogeneous space as a quotient $G/H$ is that any function $\phi: G/H \to \mathbb{R}$ can be \emph{lifted} to a function $\widetilde{\phi}: G \to \mathbb{R}$ on the group $G$, constant on cosets, by defining
\[
\label{eqn:lifting}
\widetilde{\phi}(g) = \phi(gH)
,
&&
g \in G
.
\]
Conversely, any function $\widetilde{\psi}: G \to \mathbb{R}$ can be \emph{projected} down to a function on $\psi: G/H \to \R$ via averaging:
\[
\label{eqn:projection}
\psi(gH) = \frac{1}{|H|} \sum_{h \in H} \widetilde{\psi}(g h)
,
&&
g \in G
.
\]
Notably, the lifting from $\phi$ to $\widetilde{\phi}$ is information-preserving, while the projection from $\widetilde{\psi}$ to $\psi$ typically loses information due to averaging.
The constructions in~\Cref{eqn:lifting,eqn:projection} extend naturally to kernels: any kernel $k: G/H \times G/H \to \mathbb{R}$ can be lifted to a kernel $\widetilde{k}: G \times G \to \mathbb{R}$, and, conversely, any kernel on the group can be projected to the quotient.
Importantly, positive semidefiniteness is preserved in both directions.

\subsection{The Class of Stationary Kernels}
\label{sec:stationary:description}

As discussed in~\Cref{sec:stationary:homogeneous}, the set of matchings $\c{X}_n$ of size $n$ can be naturally regarded as the homogeneous space $S_{2n}/H_n$, where $S_{2n}$ is the symmetric group of degree $2n$ and $H_n$ is the hyperoctahedral group.
Thanks to this identification, we can apply the general theory of stationary kernels on homogeneous spaces as developed in~\citet{yaglom1961}.
Moreover, in our case, the pair $(S_{2n}, H_n)$ satisfies the technical assumptions of a \emph{Gelfand pair}~\citep{ceccherinisilberstein2008}---we omit the discussion of this notion here---further simplifying the general structure of stationary kernels on $\c{X}_n$.
This leads to the following description.

\begin{restatable}{theorem}{GeneralStationaryKernel}
\label{thm:general_stationary_kernel}
A kernel $k$ on the homogeneous space $\c{X}_n = S_{2n} / H_n$ is stationary \textup{(}in the sense of~\Cref{eqn:stationary_kernel}\textup{)} if and only if it can be written as
\[
k(\sigma H_n , \pi H_n) = \sum_{\v{\mu} \vdash n} a_{\v{\mu}}\, \phi_{\v{\mu}} (\pi^{-1} \sigma),
\]
where the sum runs over $\v{\mu} \vdash n$, all \emph{integer partitions} $\v{\mu} = (\mu_1, \ldots, \mu_s)$ of $n$; $a_{\v{\mu}} \in \mathbb{R}_{\geq 0}$; and $\phi_{\v{\mu}}: S_{2n} \to \mathbb{R}$ are the \emph{zonal spherical functions} on the homogeneous space $S_{2n}/H_n$ described in~\Cref{thm:character_projection} below.
\end{restatable}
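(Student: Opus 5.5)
The plan is to reduce the statement to the spectral decomposition of the $S_{2n}$-invariant operators on $L^2(\c{X}_n)$, i.e.\ to the Hecke algebra of bi-$H_n$-invariant functions, and then use the Gelfand pair property. First, I will show that stationarity forces a bi-invariant structure. Given a stationary $k$, define $f(g) := k(gH_n, H_n)$ for $g \in S_{2n}$. By stationarity with the element $\pi^{-1}$ we get $k(\sigma H_n, \pi H_n) = k(\pi^{-1}\sigma H_n, H_n) = f(\pi^{-1}\sigma)$. The function $f$ is right $H_n$-invariant by definition. Stationarity under $h \in H_n$, together with $hH_n = H_n$, gives $f(hg) = f(g)$, so $f$ is also left invariant. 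Hence $f$ is bi-$H_n$-invariant. Symmetry of $k$ gives $f(g^{-1}) = f(g)$. Conversely, any bi-invariant $f$ defines a stationary function $k(\sigma H_n,\pi H_n) = f(\pi^{-1}\sigma)$ that is well defined on cosets.

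Second, I will use the Gelfand pair property. Since $(S_{2n},H_n)$ is a Gelfand pair, the permutation representation $L^2(S_{2n}/H_n)$ decomposes multiplicity-free as $\bigoplus_{\v{\mu}\vdash n} V_{2\v{\mu}}$, where $2\v{\mu}$ is the partition of $2n$ with doubled parts. This is the known decomposition (cf.\ \citet{ceccherinisilberstein2008}), and each $V_{2\v{\mu}}$ contains a unique $H_n$-fixed unit vector. The zonal spherical functions $\phi_{\v{\mu}}$ then form a basis of the bi-invariant functions, indexed by $\v{\mu}\vdash n$. Each is real valued, since $S_{2n}$ representations are real, and satisfies $\phi_{\v{\mu}}(g^{-1}) = \phi_{\v{\mu}}(g)$. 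So every bi-invariant $f$ can be written as $\sum_{\v{\mu}} a_{\v{\mu}}\phi_{\v{\mu}}$ with unique real $a_{\v{\mu}}$.

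Third, and this is the main step, I will show that positive semidefiniteness is equivalent to $a_{\v{\mu}}\ge 0$. The kernel matrix $K = [f(\pi^{-1}\sigma)]$ over $\c{X}_n$ commutes with the $S_{2n}$-action, i.e.\ it is a convolution operator. By Schur's lemma and multiplicity-freeness, $K$ acts as a scalar $c_{\v{\mu}}$ on each isotypic component $V_{2\v{\mu}}$. The orthogonality relations for spherical functions, $\phi_{\v{\mu}} * \phi_{\v{\nu}} = \delta_{\v{\mu}\v{\nu}}\frac{|G|}{d_{\v{\mu}}}\phi_{\v{\mu}}$ up to the $|H_n|$ normalisation, show that the operator induced by $\phi_{\v{\mu}}$ is a positive multiple of the orthogonal projector onto $V_{2\v{\mu}}$. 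Concretely, $\phi_{\v{\mu}}(\pi^{-1}\sigma) = \frac{\dim V_{2\v{\mu}}}{|\c{X}_n|}^{-1}\sum_j u_j(\sigma H_n)u_j(\pi H_n)$ for an orthonormal basis $\{u_j\}$ of $V_{2\v{\mu}}$; this is the addition theorem. Hence $c_{\v{\mu}} = a_{\v{\mu}}\cdot(\text{positive constant})$, and $K\succeq 0$ holds iff all $c_{\v{\mu}}\ge 0$, iff all $a_{\v{\mu}}\ge 0$.

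I expect the main obstacle to be establishing this addition formula cleanly, i.e.\ identifying $\phi_{\v{\mu}}(\pi^{-1}\sigma)$ with a positive multiple of the reproducing kernel of $V_{2\v{\mu}}$. I would prove it by writing $\phi_{\v{\mu}}(g) = \langle \rho(g) e, e\rangle$ with $e$ the $H_n$-fixed unit vector. The matrix coefficient functions $u_j(gH_n) = \sqrt{d}\langle \rho(g)e, v_j\rangle$, for an orthonormal basis $\{v_j\}$ of $V_{2\v{\mu}}$, give an orthonormal basis of $V_{2\v{\mu}}$ by Schur orthogonality. Summing over $j$ gives $\sum_j u_j(\sigma)u_j(\pi) = d\,\langle\rho(\pi^{-1}\sigma)e,e\rangle$, which yields the formula with positive constant $d_{\v{\mu}}$ up to the normalisation of the measure. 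The remaining steps are routine once this is in place.
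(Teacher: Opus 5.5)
Your argument is correct. It differs from the paper's in one place: you prove the positivity step instead of citing it.

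The paper makes the same reduction as your first step: a stationary kernel is determined by the bi-$H_n$-invariant function $k(\cdot\,H_n, H_n)$, and the kernel is positive semidefinite iff this function is a positive semidefinite function on $S_{2n}$. It then invokes the Bochner--Godement theorem for finite Gelfand pairs directly. The list of spherical representations $2\v{\mu}$, $\v{\mu}\vdash n$, is used only to index the terms.

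You instead establish that theorem in this setting. Schur orthogonality gives the addition formula $\phi_{\v{\mu}}(\pi^{-1}\sigma)=\frac{|\mathcal{X}_n|}{d_{2\v{\mu}}}\sum_j u_j(\sigma H_n)u_j(\pi H_n)$, where $\{u_j\}$ is an orthonormal basis of $V_{2\v{\mu}}\subseteq L^2(\mathcal{X}_n)$ with counting measure. So the kernel matrix equals $\sum_{\v{\mu}} a_{\v{\mu}}\frac{|\mathcal{X}_n|}{d_{2\v{\mu}}}P_{\v{\mu}}$, where the $P_{\v{\mu}}$ are mutually orthogonal projectors summing to the identity. Positive semidefiniteness is then visibly equivalent to $a_{\v{\mu}}\ge 0$.

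The paper's route is shorter. Yours is elementary and gives more: an explicit Mercer decomposition of every stationary kernel, with eigenvalue $a_{\v{\mu}}|\mathcal{X}_n|/d_{2\v{\mu}}$ of multiplicity $d_{2\v{\mu}}$. This contains the norm identity $\|\phi_{\v{\rho}}\|^2=|\mathcal{X}_n|/d_{2\v{\rho}}$, which the paper later imports from the literature for its truncation-error formula. It also shows at once that the kernels of \Cref{thm:heat_matern_kernel_matchings} act on $V_{2\v{\rho}}$ by $\Phi(\lambda_{2\v{\rho}})/(C_{\nu,\kappa}|H_n|)$, consistent with the quotient-graph construction.

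Minor points:
\begin{itemize}
\item Once the addition formula is in hand, your separate Schur's-lemma argument that $K$ is scalar on each component is redundant.
\item The $H_n$-fixed unit vector is unique only up to sign, which does not affect $\phi_{\v{\mu}}$.
\item The constant in your addition formula should be stated cleanly as $|\mathcal{X}_n|/d_{2\v{\mu}}$.
\item Dropping complex conjugates there relies on choosing a real orthogonal model of $V_{2\v{\mu}}$ and a real fixed vector. This is possible, but you should say so explicitly.
\end{itemize}
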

\begin{proof}
This is a version of the Bochner--Godement theorem (see, for example, Section~1.1, Theorem~1.2 in~\citet{faraut2007}), with the general representation-theoretic indexing of zonal spherical functions specialized to the combinatorial partition-based indexing for matchings.
Please see \Cref{appendix:stationary_kernels} for details.
\end{proof}

\Cref{thm:general_stationary_kernel} generalizes the classical Bochner’s theorem, which states that every continuous positive semidefinite function is proportional to the inverse Fourier transform of a unique probability measure.
Here, the expansion into a linear combination of zonal spherical functions plays the role of inverse Fourier transform, while specifying a probability measure amounts to assigning the non-negative coefficients $a_{\v{\mu}}$ in the expansion.

The \emph{zonal spherical functions} from~\Cref{thm:general_stationary_kernel} admit the following description in terms of \emph{characters} of the symmetric group $S_{2 n}$ which, in their turn, posses well-known analytical expressions.

\begin{restatable}{proposition}{CharacterProjection}
\label{thm:character_projection}
Let $\v{\rho} = (\rho_1, \ldots, \rho_s)$ be a partition of $n$.
For the zonal spherical function $\phi_{\v{\rho}}: S_{2 n} \to \mathbb{R}$, we have
\[
\phi_{\v{\rho}} (\sigma) = \frac{1}{|H_n|^2} \sum_{\pi_1 , \pi_2 \in H_n} \chi^{(2\v{\rho})} (\pi_1 \sigma \pi_2),
\]
where $2\v{\rho} = (2\rho_1, \ldots, 2\rho_s)$ is a partition of $2n$ and $\chi^{(2\v{\rho})}$ is the corresponding \emph{character} of the group $S_{2n}$.
\end{restatable}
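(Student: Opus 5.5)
The plan is to derive the formula from the general Gelfand-pair description of zonal spherical functions, combined with the known decomposition of the permutation representation of $S_{2n}$ on $L^2(S_{2n}/H_n)$. Throughout, the zonal spherical functions are normalized by $\phi_{\v{\rho}}(e)=1$.

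\textbf{Step 1 (abstract formula).} For a Gelfand pair $(G,H)$ and an irreducible representation $V$ with $\dim V^H = 1$, let $v$ be a unit $H$-invariant vector. Then the spherical function is $\phi(g) = \langle V(g) v, v\rangle$. The operator $P = \frac{1}{|H|}\sum_{h\in H} V(h)$ is the orthogonal projection onto $V^H = \mathrm{span}(v)$, so $P = v v^*$. It follows that
\[
\frac{1}{|H|^2}\sum_{h_1,h_2\in H}\chi^V(h_1 g h_2) = \mathrm{tr}\bigl(P V(g) P\bigr) = \langle V(g)v,v\rangle = \phi(g).
\]
Here we use $\mathrm{tr}(vv^*V(g)vv^*) = \langle V(g)v,v\rangle$ and the reality of characters of $S_{2n}$. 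If instead $V^H = 0$, then $P=0$ and the same average vanishes.

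\textbf{Step 2 (which irreducibles occur).} We need the classical fact (Macdonald, Ch.~VII; Ceccherini-Silberstein et al.) that
\[
\mathrm{Ind}_{H_n}^{S_{2n}} \mathbf{1} \cong \bigoplus_{\v{\rho}\vdash n} S^{(2\v{\rho})},
\]
with each summand appearing exactly once. By Frobenius reciprocity, $\dim (S^{\v{\lambda}})^{H_n}$ equals $1$ if $\v{\lambda} = 2\v{\rho}$ and $0$ otherwise. This is the indexing used in \Cref{thm:general_stationary_kernel}: $\phi_{\v{\rho}}$ is the spherical function attached to $S^{(2\v{\rho})}$. One route to the decomposition is the symmetric-function identity
\[
\sum_{\v{\rho}} s_{2\v{\rho}} = \prod_{i\le j}(1-x_ix_j)^{-1}.
\]
Its degree-$2n$ part is the Frobenius characteristic of the induced trivial character, namely $h_n[h_2]$, a plethysm. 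Alternatively, one can count $H_n$-orbits on $S_{2n}/H_n$: these are indexed by coset types, i.e.\ partitions of $n$, which gives the number of spherical functions.

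\textbf{Step 3.} Substituting $V = S^{(2\v{\rho})}$ into Step 1 yields the stated formula.

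The main obstacle is Step 2, the multiplicity-free decomposition with exactly the even partitions. I would cite it rather than reprove it. If a self-contained argument is wanted, I would derive it from Littlewood's identity above, by comparing degree-$2n$ parts and using that the characteristic map sends $\mathrm{Ind}_{H_n}^{S_{2n}}\mathbf{1}$ to $h_n[h_2]$.
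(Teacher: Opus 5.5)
Your argument is correct, but it reaches the formula by a different route than the paper.

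\textbf{The paper's route.} The paper works at the level of functions on $S_{2n}$. It first proves a lemma: the two-sided average $f \mapsto \frac{1}{|H|^2}\sum_{h_1,h_2} f(h_1 g h_2)$ is the orthogonal projection of $L(G)$ onto $L(H\backslash G/H)$. It then expands $\chi^{(2\v{\rho})} = \phi_{\v{\rho}} + \sum_{i \geq 2} \pi^{(2\v{\rho})}_{ii}$ in an orthonormal basis extending the spherical vector. Schur orthogonality, together with the fact that the zonal spherical functions form an orthogonal basis of $L(H\backslash G/H)$, shows that only the $\phi_{\v{\rho}}$ term survives the projection.

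\textbf{Your route.} You push the averaging inside the trace, where $\frac{1}{|H|}\sum_h V(h)$ becomes the rank-one projector $vv^*$ onto $V^H$. The identity $\mathrm{tr}(vv^*V(g)vv^*) = \langle V(g)v, v\rangle$ then finishes the argument in one line.

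\textbf{Comparison.} Your version is more elementary and self-contained. It needs neither the projection lemma nor Schur orthogonality or orthogonality of spherical functions. It isolates the single representation-theoretic input $\dim (S^{(2\v{\rho})})^{H_n} = 1$. It also gives for free the vanishing of averaged non-spherical characters, which the paper states as a separate remark. In exchange, the paper's version fits the $L^2$-projection framework it relies on later: projecting kernels from $S_{2n}$ to $S_{2n}/H_n$, and using orthogonality of zonal spherical functions in the truncation-error computation.

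\textbf{The shared input in Step 2.} Both proofs import the same external fact: the spherical irreducibles are exactly the $S^{(2\v{\rho})}$, each with a one-dimensional space of $H_n$-fixed vectors. The paper simply asserts this, whereas you cite it and give a correct sketch via Littlewood's identity and the plethysm $h_n[h_2]$. Your alternative of counting double cosets only yields the number $p(n)$ of spherical functions, and only once the Gelfand property is known. It does not yield their labels $2\v{\rho}$, so it cannot replace the Littlewood argument.
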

\begin{proof}
See \Cref{appendix:stationary_kernels} for the proof and further discussion on zonal spherical functions.
\end{proof}

Thus, zonal spherical functions can be viewed as two-sided $H_n$-averages of the corresponding characters of $S_{2n}$.
Analytic expressions for these characters are available, for example, via the Murnaghan--Nakayama rule (see Theorem~7.17.1 in~\citet{stanley2001}).
However, zonal spherical function computation using averaging and these analytic expressions is computationally expensive---the issue we return to in~\Cref{sec:computation}.

\begin{restatable}{remark}{StationaryAreBiInvariantProjection}
It can be shown that the stationary kernels on $S_{2n}/H_n$ are precisely those obtained as the projection (i.e., double coset averaging) of bi-invariant kernels on the group $S_{2n}$.
Here, a kernel $k: S_{2n} \x S_{2n} \to \R$ is bi-invariant if $k(\pi \sigma, \pi \sigma') = k(\sigma \pi, \sigma' \pi) = k(\sigma, \sigma')$ for all $\sigma, \sigma', \pi \in S_{2n}$.
\end{restatable}
\begin{proof}
See \Cref{appendix:stationary_kernels}.
\end{proof}

\section{Heat and Matérn Kernels}
\label{sec:kernels:heat_matern}

The class of stationary kernels described in the previous section is far too broad for practical applications.
For instance, setting $a_{\v{\mu}} = 1$ for all $\v{\mu}$ yields the delta kernel $k(x, x') = \mathbf{1}_{x = x'}$ which precludes any generalization.
To define useful kernels, it is necessary to also impose a degree of smoothness.
This is precisely where the heat and Matérn kernels become relevant.
As discussed in~\Cref{sec:background}, constructing such kernels on discrete spaces is subtle, as it is difficult to define a quantitative scale of differentiability on such spaces.

Here, we address this challenge indirectly, circumventing it.
We first describe heat and Matérn kernels on the group $S_{2n}$ where an appropriate notion of smoothness can be obtained by regarding it as its \emph{Caley graph}.
We then obtain kernels on the set of matchings by canonically projecting these group kernels onto the matchings space, as described at the end of~\Cref{sec:stationary:homogeneous}.
Finally, we show that these projected kernels can also be interpreted as heat and Matérn kernels defined directly on a certain associated \emph{quotient graph}---see details in~\Cref{appendix:quotient}.

\subsection{Symmetric Group}

We begin by introducing a notion of distance between two elements $g, g'$, of a finite abstract group $G$.
Intuitively, this distance can be understood as the minimum number of elementary operations needed to transform $g$ into $g'$.
Formally, let $\c{S} = \{g_1, \ldots, g_m\}$ be a set of \emph{generators} for the group $G$---i.e, such a set that any $g \in G$ can be written as $g = g_{i_1} \cdots g_{i_\ell}$ for some $\ell \in \N$ and some sequence $g_{i_1}, \ldots, g_{i_\ell} \in \c{S}$ (repetitions allowed).
Given a set $\c{S}$, we can  define the distance $d(g, g')$ as the minimal length $\ell$ needed to express $g^{-1}g'$ as a product of generators from~$\c{S}$.
It turns out \citep{kondor2008} that this is the shortest-path distance between $g$ and $g'$ in the \emph{Cayley graph} of group $G$ generated by $\c{S}$;
that is, the graph $\Gamma(G, \c{S})$ whose nodes are the group elements $V = G$ and whose edges are pairs of the form $(g, g_i g)$ where $g \in G$ and $g_i \in \c{S}$.

Now, consider the specific case $G = S_{2n}$ of the symmetric group of permutations of $2n$ elements, which acts as the group of symmetries of matchings.
There are several possible choices for the generating set~$\c{S}$.
However, not all of them behave equally well---see Corollary~5.2.2 and the subsequent discussion in~\citet{kondor2008}.
A particularly appealing choice is the set $\c{S}$ that consists of all transpositions, i.e., all permutations that swap exactly two elements $1 \leq i, j \leq 2n$.
With this choice, the resulting heat and Matérn kernels on $\Gamma(G,\c{S})$---as defined by~\Cref{eqn:graph_kernels}---are bi-invariant, meaning
\[
k(\pi \sigma, \pi \sigma') = k(\sigma \pi, \sigma' \pi) = k(\sigma, \sigma')
\qquad
\text{for all }
\sigma, \sigma', \pi \in S_{2n}
.
\]
This is shown for heat kernels in~\citet{kondor2008}.
However, the exact same argument applies to Matérn~kernels.

The argument above uses the definition of  heat and Matérn kernels in terms of eigenfunctions and eigenvalues of the graph Laplacian associated with the Cayley graph, as described in~\Cref{sec:background}.
Importantly, by leveraging the algebraic structure of $S_{2n}$, one can reformulate the standard expansion by grouping together eigenfunctions corresponding to the same eigenvalue, thereby dramatically reducing the number of terms and making it simpler.
This leads to the character-theoretic expansion for kernels on $S_{2n}$, as presented below.

\begin{restatable}{proposition}{KernelsAreCharactersSum}
\label{thm:kernels_character_sum}
Let $\Phi: \mathbb{R}_{\geq 0} \rightarrow \mathbb{R}_{\geq 0}$, e.g., $\Phi(\lambda) = e^{-\frac{\kappa^2}{2}\lambda}$ or $\Phi(\lambda) = \left(\frac{2 \nu}{\kappa^2} + \lambda\right)^{-\nu}$.
Then, for any kernel of the form
\[
\label{eqn:s2n_eigenvector_expansion}
k(\sigma, \sigma')
=
\frac{1}{C(\Phi)}
\sum_{\pi \in S_{2 n}}
\Phi(\lambda_{\pi}) u_{\pi}(\sigma) \overline{u_{\pi}(\sigma')}
,
&&
\sigma, \sigma' \in S_{2 n}
,
\]
we have
\[
\label{eqn:s2n_character_expansion}
k(\sigma, \sigma')
=
\frac{1}{C(\Phi)}
\sum_{\v{\rho} \vdash 2n}
\Phi(\lambda_{\v{\rho}})
\frac{d_{\v{\rho}}}{|S_{2 n}|}
\chi^{(\v{\rho})}((\sigma')^{-1} \sigma)
,
&&
\sigma, \sigma' \in S_{2 n}
,
\]
where the sum is over all partitions $\v{\rho}$ of $2n$, $\lambda_{\v{\rho}}$ is the graph Laplacian eigenvalue corresponding to the respective character  $\chi^{(\v{\rho})}$ of $S_{2 n}$, and $d_{\v{\rho}}/|S_{2 n}|$ is a normalization constant ($d_{\v{\rho}}$ is the \emph{dimension of the representation} of $S_{2 n}$ corresponding to the character indexed by $\v{\rho}$).
\end{restatable}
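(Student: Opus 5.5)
The plan is to diagonalize the Laplacian of the Cayley graph $\Gamma(S_{2n},\c{S})$ explicitly through the Fourier transform on $S_{2n}$. With $\c{S}$ the set of all transpositions, the adjacency operator is right convolution with the indicator of $\c{S}$:
\[
(\m{A} f)(\sigma) = \sum_{\tau \in \c{S}} f(\tau \sigma).
\]
Since $\c{S}$ is a full conjugacy class, its indicator is a class function, so $\m{A}$ commutes with both left and right translations. The graph is regular of degree $|\c{S}| = \binom{2n}{2}$, so $\m{\Delta} = \binom{2n}{2}\m{I} - \m{A}$ shares these properties.

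Fix, for each irreducible representation $\v{\rho} \vdash 2n$, a unitary matrix realization $R_{\v{\rho}}$. The matrix coefficients $\sqrt{d_{\v{\rho}}/|S_{2n}|}\,[R_{\v{\rho}}(\sigma)]_{ij}$, taken over all $\v{\rho}$, $i$ and $j$, form an orthonormal basis of $\ell^2(S_{2n})$ by Peter--Weyl/Schur orthogonality. There are $\sum_{\v{\rho}} d_{\v{\rho}}^2 = |S_{2n}|$ of them, which matches the indexing by $\pi \in S_{2n}$ in \eqref{eqn:s2n_eigenvector_expansion}.

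Next I would show that each such coefficient is a Laplacian eigenfunction. We have
\[
\textstyle\sum_{\tau\in\c{S}} R_{\v{\rho}}(\tau\sigma) = \big(\sum_{\tau\in\c{S}} R_{\v{\rho}}(\tau)\big) R_{\v{\rho}}(\sigma),
\]
and by Schur's lemma the sum over the conjugacy class $\c{S}$ is the scalar $|\c{S}|\chi^{(\v{\rho})}(\tau_0)/d_{\v{\rho}}$, where $\tau_0$ is any transposition. Hence every coefficient of $R_{\v{\rho}}$ is an eigenfunction with eigenvalue
\[
\lambda_{\v{\rho}} = \binom{2n}{2}\Big(1 - \frac{\chi^{(\v{\rho})}(\tau_0)}{d_{\v{\rho}}}\Big).
\]
Crucially, this value depends only on $\v{\rho}$.

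The kernel in \eqref{eqn:s2n_eigenvector_expansion} is $\Phi(\m{\Delta})$, up to $1/C(\Phi)$, so it does not depend on the choice of orthonormal eigenbasis. This basis-independence is the step I would state carefully: if eigenvalues coincide across different $\v{\rho}$, any orthonormal eigenbasis still yields the same spectral projectors. I may therefore use the matrix-coefficient basis and group the terms by $\v{\rho}$. For each $\v{\rho}$ the grouped sum is
\begin{align*}
\frac{d_{\v{\rho}}}{|S_{2n}|}\sum_{i,j}[R_{\v{\rho}}(\sigma)]_{ij}\overline{[R_{\v{\rho}}(\sigma')]_{ij}}
&= \frac{d_{\v{\rho}}}{|S_{2n}|}\operatorname{tr}\big(R_{\v{\rho}}(\sigma) R_{\v{\rho}}(\sigma')^{*}\big) \\
&= \frac{d_{\v{\rho}}}{|S_{2n}|}\chi^{(\v{\rho})}((\sigma')^{-1}\sigma),
\end{align*}
where the second line uses unitarity and the cyclicity of the trace. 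Multiplying by $\Phi(\lambda_{\v{\rho}})$ and summing over $\v{\rho} \vdash 2n$ gives \eqref{eqn:s2n_character_expansion}. Characters of $S_{2n}$ are real, so the conjugation causes no issue.

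I expect the main obstacle to be bookkeeping rather than ideas. First, the edge convention $(g, g_i g)$ gives left multiplication, and one must check that the matrix coefficients, not their transposes, are eigenfunctions. This is easy because $\c{S}$ is conjugation-invariant, so left and right convolution coincide. Second, the basis-independence argument must be made explicit, since the statement presupposes some fixed eigenbasis $u_\pi$.
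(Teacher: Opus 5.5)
Your proposal is correct and follows essentially the same route as the paper. In both, the normalized matrix coefficients of unitary irreducibles form an orthonormal Laplacian eigenbasis whose eigenvalue depends only on $\v{\rho}$, the kernel does not depend on the choice of eigenbasis, and grouping by $\v{\rho}$ turns $\sum_{i,j}$ into $\frac{d_{\v{\rho}}}{|S_{2n}|}\chi^{(\v{\rho})}((\sigma')^{-1}\sigma)$. One difference is that you get the eigenfunction property directly from Schur's lemma applied to the central class sum $\sum_{\tau}R_{\v{\rho}}(\tau)$, which also gives the explicit $\lambda_{\v{\rho}}$ that the paper only derives later in its appendix; the paper instead uses commutation of $\m{\Delta}$ with left and right translations. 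You also justify basis-independence explicitly via spectral projectors, where the paper only asserts it.
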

\begin{proof}
See Appendix~\ref{appendix:heat_and_matern_kernels}.
The expression comes from reinterpreting the sum $\sum u_{\pi}(\sigma) u_{\pi}(\sigma')$ over all $\pi$ with the same eigenvalue as proportional to a character of $S_{2 n}$ or a sum of a few such characters.
\end{proof}

By substituting $\Phi_{\nu, \kappa}$ from~\Cref{eqn:graph_kernels} in place of~$\Phi$ in the above theorem, we immediately obtain a character-based expansion of the heat and Matérn kernels on the symmetric group $S_{2 n}$.

\subsection{Matchings}

To obtain heat and Matérn kernels on the set of matchings~$\c{X}_n$, we project the corresponding kernels on the symmetric group~$S_{2n}$ onto the quotient~$S_{2n}/H_n$, following the projection idea described at the end of~\Cref{sec:stationary:homogeneous}.
It turns out that averaged characters in the expansion~\eqref{eqn:s2n_character_expansion} either vanish or coincide with the zonal spherical functions of $S_{2n}/H_n$ from~\Cref{sec:stationary}, which leads to the main result of this section:

\begin{restatable}{proposition}{HeatMaternKernelMatchings}\label{thm:heat_matern_kernel_matchings}
Projecting---with projection defined in~\Cref{eqn:projection}---of the heat and Matérn kernels on $S_{2n}$ onto the quotient $S_{2n}/H_n$ results in the kernels stationary under the action of $S_{2n}$ which are given by
\[
\label{eqn:heat_matern_matchings}
k_{\nu, \kappa}(\sigma H_n, \sigma' H_n)
=
\frac{1}{C_{\nu, \kappa}}
\sum_{\v{\rho} \vdash n}
\Phi_{\nu, \kappa}(\lambda_{2\v{\rho}})
\,
\frac{d_{2\v{\rho}}}{|S_{2n}|}\, \phi_{\v{\rho}} ((\sigma')^{-1} \sigma),
\qquad
\Phi_{\nu, \kappa}(\lambda) =
\begin{cases}
\exp\left(-\frac{\kappa^2}{2} \lambda \right) & \text{if } \nu = \infty , \\
\left( \frac{2\nu}{\kappa^2} + \lambda \right)^{-\nu} & \text{if } \nu < \infty.
\end{cases}
\]
where the sum is over all partitions $\v{\rho}$ of $n$; $2\v{\rho} = (2\rho_1, \ldots, 2\rho_s)$ is the respective partition of $2n$; $d_{2\v{\rho}}$ is as in~\Cref{thm:kernels_character_sum}; $\lambda_{2\v{\rho}}$ is the Cayley graph Laplacian eigenvalue corresponding to the character $\chi^{(2\v{\rho})}$; and $\phi_{\v{\rho}}$ is the zonal spherical function obtained by averaging the respective character as in~\Cref{thm:character_projection}.
\end{restatable}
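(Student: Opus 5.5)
We start from the character expansion of \Cref{thm:kernels_character_sum}, with $\Phi = \Phi_{\nu,\kappa}$. Projecting a kernel on $S_{2n}$ onto $S_{2n}/H_n$ means averaging each argument over its coset:
\[
k(\sigma H_n, \sigma' H_n) = \frac{1}{|H_n|^2}\sum_{h, h' \in H_n} \widetilde{k}(\sigma h, \sigma' h').
\]
We apply this average to \eqref{eqn:s2n_character_expansion} and interchange the finite sums. Each term becomes
\[
\frac{1}{|H_n|^2}\sum_{h,h'} \chi^{(\v{\rho})}\bigl((h')^{-1}(\sigma')^{-1}\sigma h\bigr).
\]
As $h'$ ranges over $H_n$ so does $(h')^{-1}$, so this is exactly the two-sided $H_n$-average of $\chi^{(\v{\rho})}$ evaluated at $(\sigma')^{-1}\sigma$. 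Thus the projected kernel is
\[
\frac{1}{C}\sum_{\v{\rho}\vdash 2n}\Phi(\lambda_{\v{\rho}})\frac{d_{\v{\rho}}}{|S_{2n}|}\,\bar\chi^{(\v{\rho})}\bigl((\sigma')^{-1}\sigma\bigr),
\]
where $\bar\chi$ denotes the bi-average. The normalization constant is then readjusted to enforce the average-diagonal-one condition.

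The key step is to show that $\bar\chi^{(\v{\rho})} \equiv 0$ unless $\v{\rho} = 2\v{\mu}$ for some $\v{\mu}\vdash n$. In that case \Cref{thm:character_projection} identifies $\bar\chi^{(2\v{\mu})}$ with $\phi_{\v{\mu}}$. For the vanishing, we write
\[
\chi^{(\v{\rho})}(g) = \operatorname{tr} V_{\v{\rho}}(g).
\]
The two-sided average equals $\operatorname{tr}\bigl(P\, V_{\v{\rho}}(g)\, P\bigr)$, where $P = |H_n|^{-1}\sum_h V_{\v{\rho}}(h)$ is the orthogonal projector onto the $H_n$-fixed vectors of $V_{\v{\rho}}$. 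Hence $\bar\chi^{(\v{\rho})} = 0$ whenever $V_{\v{\rho}}$ has no nonzero $H_n$-invariant vector.

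By Frobenius reciprocity, the dimension of the $H_n$-fixed space equals the multiplicity of $V_{\v{\rho}}$ in $\operatorname{Ind}_{H_n}^{S_{2n}}\mathbf{1}$. The classical decomposition of this permutation module on matchings is
\[
\bigoplus_{\v{\mu}\vdash n} V_{2\v{\mu}},
\]
multiplicity-free, which is precisely the Gelfand-pair property cited from \citet{ceccherinisilberstein2008}. Equivalently, this is the plethysm $h_n[h_2] = \sum_{\v{\mu}} s_{2\v{\mu}}$. So the fixed space is zero for $\v{\rho}$ not of the form $2\v{\mu}$, and one-dimensional otherwise, which matches \Cref{thm:character_projection}.

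The main obstacle is this representation-theoretic input: that the fixed space is nonzero exactly for even partitions. I would cite it rather than reprove it. Once it is in hand, the remaining steps are straightforward. Reindexing $\v{\rho} = 2\v{\mu}$ gives \eqref{eqn:heat_matern_matchings}. Stationarity follows because the result depends only on $(\sigma')^{-1}\sigma$ and is invariant under $\sigma \mapsto \pi\sigma$, $\sigma' \mapsto \pi\sigma'$; it can also be read off \Cref{thm:general_stationary_kernel}, since the coefficients $\Phi(\lambda_{2\v{\mu}})\, d_{2\v{\mu}}/|S_{2n}|$ are nonnegative. Positive semidefiniteness is inherited through the projection, as noted in \Cref{sec:stationary:homogeneous}.

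Finally, I would check that the expression is well defined on cosets: replacing $\sigma$ by $\sigma h$ or $\sigma'$ by $\sigma' h'$ leaves $\phi_{\v{\mu}}$ unchanged, by its bi-$H_n$-invariance as a two-sided average.
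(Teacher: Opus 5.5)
Your proposal is correct and follows essentially the same route as the paper. Both expand the $S_{2n}$ kernel via \Cref{thm:kernels_character_sum}, interchange the finite sums, and note that each two-sided $H_n$-average of $\chi^{(\v{\rho})}$ at $(\sigma')^{-1}\sigma$ equals $\phi_{\v{\mu}}$ when $\v{\rho} = 2\v{\mu}$ and vanishes otherwise. The only difference is how the vanishing is justified: you use $\mathrm{tr}(P\,V_{\v{\rho}}(g)\,P)$ and Frobenius reciprocity, whereas the paper uses the orthogonal-projection lemma and orthogonality of matrix coefficients (the remark after \Cref{thm:character_projection}); both rest on the same cited fact that the spherical representations are exactly those indexed by $2\v{\mu}$.
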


\begin{proof}
Please refer to \Cref{appendix:heat_and_matern_kernels}.
\end{proof}

\begin{remark}
Alternatively, instead of averaging from $S_{2n}$, one can consider the quotient Cayley graph corresponding to the space of matchings $\c{X}_n$ and define heat and Matérn kernels directly on that graph.
This construction leads to the same kernels as above.
See \Cref{appendix:quotient} for further discussion.
\end{remark}

\section{Computation}
\label{sec:computation}

Here, we develop algorithms for computing the heat and Matérn kernels introduced above.
We begin in~\Cref{sec:computation:reduction} by showing that kernel evaluation---either exact or approximate---ultimately reduces to computing zonal spherical functions.
In~\Cref{sec:computation:existing}, we review existing algorithms for computing zonal spherical functions, emphasizing their super-exponential cost.
In~\Cref{sec:computation:ours}, we present our own sub-exponential algorithm, which enables kernel computations for much larger values of~$n$.
The key insight is that each zonal spherical function depends only on a generalized distance to a fixed basepoint, and the number of distinct generalized distance values is sub-exponential in~$n$.
A discussion, including computational complexity estimates, practical considerations, and numerical experiments, is deferred to the next section.

\subsection{From Kernels to Zonal Spherical Functions}
\label{sec:computation:reduction}

As observed in~\Cref{sec:kernels:heat_matern}, the heat and Matérn kernels can be interpreted as kernels on the node set of a certain graph.
In principle, this would allow for evaluation via~\Cref{eqn:graph_kernels}, provided that one computes the eigenvalues and eigenvectors of the corresponding graph Laplacian.
However, the graph Laplacian matrix has size $\abs{\c{X}_n} \times \abs{\c{X}_n}$, where $\abs{\c{X}_n} = \frac{(2n)!}{2^n n!}$ is the number of matchings---already super-exponential in~$n$.
As a result, full spectral decomposition quickly becomes computationally intractable, even for moderate values of~$n$.
This motivates the search for alternative approaches.

Alternatively, we may exploit the expansion in~\Cref{eqn:heat_matern_matchings}.
Here, the number of terms is equal to the number of integer partitions of $n$, denoted $p(n)$.
This is a classical combinatorial quantity, and it is well known that $p(n)$ is sub-exponential in $n$, growing roughly as $O(\exp(\pi\sqrt{n}))$.\footnote{It is sub-exponential because of the squared root in the exponent, which makes the resulting expression grow much slower than $\exp(a n)$ for any $a > 0$---as opposed to the expression $\frac{(2n)!}{2^n n!}$ which grows faster than $\exp(a n)$ for any $a > 0$.}
Thus, the number of terms in~\Cref{eqn:heat_matern_matchings} is far more manageable than in~\Cref{eqn:graph_kernels}.
Moreover, not all terms in~\Cref{eqn:heat_matern_matchings} contribute equally to the kernel.
As we discuss in~\Cref{sec:discussion}, in practical applications the sum can be efficiently truncated to fewer than a hundred terms while still providing a good approximation.
We refer to this process as \emph{truncation}, and we will work throughout this section with \emph{truncated kernels} of form
\begin{equation}
\label{eqn:TruncatedHeatMaternMatchingKernels}
k_{\c{R}}(\sigma_1 H, \sigma_2 H) = \sum_{\v{\rho} \in \c{R}} \Phi(\lambda_{2\v{\rho}}) \frac{d_{2\v{\rho}}}{(2n)!} \, \phi_{\v{\rho}} (\sigma_2^{-1} \sigma_1),
\end{equation}
where $\c{R}$ is a chosen subset of partitions.
Importantly, according to the general form of stationary kernels in \Cref{thm:general_stationary_kernel}, the truncated kernel in~\Cref{eqn:TruncatedHeatMaternMatchingKernels} remains a well-defined stationary kernel for any choice~of~$\c{R}$.

In the expansion above, three quantities must be computed for each partition $\v{\rho}$: the Laplacian eigenvalue $\lambda_{2\v{\rho}}$, the representation dimension $d_{2\v{\rho}}$, and the zonal spherical function $\phi_{\v{\rho}}$.

\begin{itemize}
    \item \emph{Representation dimensions $d_{2\v{\rho}}$} can be computed analytically via the classical \emph{hook length formula}.
    This algebraic formula can be easily implemented with negligible cost---$O(n)$ per partition---as detailed in~\Cref{appendix:eigenvalues_and_dim}.
    The hook length formula is standard and, for example, is available in the \texttt{SageMath} mathematical software system \citep{sage}.
    \item \emph{Laplacian eigenvalues $\lambda_{2\v{\rho}}$} correspond precisely to those for kernels on the symmetric group $S_{2n}$, studied in detail by~\citet{kondor2008}.
    Algorithmically, they can be calculated using the \emph{Murnaghan--Nakayama rule}, as described in~\Cref{appendix:eigenvalues_and_dim}, with complexity $O(n^2)$ per partition.
    While we are not aware of existing readily usable implementations beyond our own, the procedure is straightforward.
    \item \emph{Zonal spherical functions $\phi_{\v{\rho}}$} are the key computational challenge.
    In the following sections, we review existing approaches for their computation, discuss their bottlenecks, and introduce our own, substantially more efficient algorithm.
\end{itemize}

\subsection{Zonal Spherical Functions: Existing Algorithms}
\label{sec:computation:existing}

The most direct method for computing zonal spherical functions is to use~\Cref{thm:character_projection}, which expresses them as averages of characters of the symmetric group~$S_{2n}$.
Even if one optimistically assumes that computing the characters themselves requires only $O(1)$ time per evaluation---which is not the case---this approach still requires averaging over $|H|^2$ terms, where $H$ is the hyperoctahedral group, whose size is $|H| = 2^n n! \gg \exp(n)$.
Consequently, the overall computational cost remains super-exponential in~$n$.

Another available approach is to use the explicit formula for zonal spherical functions, as described, for example, in~\citet{ceccherinisilberstein2008}---we do not reproduce it in the main text for simplicity, please refer to \Cref{appendix:zsf_naive} for details.
However, direct application of this formula leads to a time complexity of $O\left( |\c{X}_n|\, \rho_1'! \cdots \rho_r'! \cdot n \right)$ for evaluating a single zonal spherical function $\phi_{\v{\rho}}$, where $\v{\rho}' \vdash n$ is the conjugate partition (i.e., $\rho_j' = \lvert \{ i : \rho_i \geq j \} \rvert$).
This complexity exceeds $|\c{X}_n|$ operations, and thus is also super-exponential in $n$.
Although the explicit formula is thus impractical for actual computation, it yields the following structural property, which will be useful below.

\begin{proposition}
\label{thm:zsf_generalized_distance}
Let $\phi$ be zonal spherical function.
Then, its value $\phi(x)$ depends only on \emph{generalized distance} $d(x, x_0)$ defined below.
Namely, if $x, y$ are two matchings such that $d(x, x_0) = d(y, x_0)$, then $\phi(x) = \phi(y)$.
\end{proposition}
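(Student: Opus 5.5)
Since the generalized distance is only promised "below", I take as its intended definition the standard one for matchings (Ceccherini-Silberstein et al.). Given matchings $x, x_0$, the union of their edge sets is a multigraph on $\{1,\dots,2n\}$ in which every vertex has degree two. It therefore decomposes into disjoint cycles of even lengths $2\mu_1 \ge \dots \ge 2\mu_\ell$, where a shared edge counts as a $2$-cycle. The generalized distance is the partition $d(x,x_0)=\v{\mu}=(\mu_1,\dots,\mu_\ell)\vdash n$.

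The plan is to prove two facts and combine them. First, the $H_n$-orbits on $\c{X}_n = S_{2n}/H_n$ are exactly the level sets of $d(\cdot, x_0)$. Second, every zonal spherical function, viewed on $\c{X}_n$ through the lifting \Cref{eqn:lifting}, is constant on $H_n$-orbits.

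For the second fact I use \Cref{thm:character_projection}: $\phi_{\v{\rho}}(\sigma) = |H_n|^{-2}\sum_{\pi_1,\pi_2\in H_n}\chi^{(2\v{\rho})}(\pi_1\sigma\pi_2)$. Reindexing the sum shows that $\phi_{\v{\rho}}$ is bi-$H_n$-invariant, i.e.\ $\phi_{\v{\rho}}(h\sigma h') = \phi_{\v{\rho}}(\sigma)$ for all $h,h'\in H_n$. Right invariance means $\phi_{\v{\rho}}$ descends to a well-defined function on matchings, $x=\sigma\lacts x_0 \mapsto \phi_{\v{\rho}}(\sigma)$. Left invariance means this function is invariant under $x\mapsto h\lacts x$ for $h\in H_n$.

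For the first fact, one direction is easy. Since $h\in H_n$ fixes $x_0$, the relabeling $h$ maps the multigraph $x\cup x_0$ isomorphically onto $(h\lacts x)\cup x_0$. It preserves cycles, so $d(h\lacts x,x_0)=d(x,x_0)$.

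The converse is the main step: if $d(x,x_0)=d(y,x_0)$, then some $h\in H_n$ satisfies $h\lacts x=y$. I will construct $h$ explicitly. The cycle types agree, so I pair each cycle $C$ of $x\cup x_0$ with a cycle $C'$ of $y\cup x_0$ of equal length. Each cycle alternates between $x_0$-edges and $x$-edges (respectively $y$-edges). Choose a starting vertex and orientation on $C$ and on $C'$ such that both walks begin with an $x_0$-edge, and let $h$ map the $k$-th vertex of $C$ to the $k$-th vertex of $C'$. Doing this for all cycles defines a bijection of $\{1,\dots,2n\}$ that sends $x_0$-edges to $x_0$-edges, so $h\in H_n$, and sends $x$-edges to $y$-edges, so $h\lacts x=y$. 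The only care needed is the degenerate $2$-cycles, where an edge belongs to both matchings; mapping such an edge to a corresponding shared edge handles them.

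Combining the two facts: $d(x,x_0)=d(y,x_0)$ gives $y=h\lacts x$ with $h\in H_n$, and then $\phi(y)=\phi(x)$ by left $H_n$-invariance. I expect the orbit construction above to be the only step needing care. The rest is bookkeeping with \Cref{thm:character_projection} and the identification $\c{X}_n\cong S_{2n}/H_n$.
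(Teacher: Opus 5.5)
Your proof is correct, but it takes a different route from the paper. The paper's proof is a one-line citation of Equation~(11.5) in \citet{ceccherinisilberstein2008}. That is the explicit formula $\phi_{\lambda}=\sum_{\mu\vdash n}\frac{a^{\lambda}_{\mu}}{|A_{\mu}|}\mathbf{1}_{A_{\mu}}$ restated in the appendix, and it directly shows that $\phi$ is constant on each generalized sphere $A_\mu$.

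You instead prove the structural fact underlying that formula, in two steps:
\begin{itemize}
\item $\phi$ is $H_n$-bi-invariant. This holds by definition of a zonal spherical function, or by reindexing the average in \Cref{thm:character_projection}.
\item The level sets of $d(\cdot,x_0)$ are exactly the $H_n$-orbits on $\mathcal{X}_n$.
\end{itemize}
Your cycle-by-cycle construction of $h$ is sound. That includes orienting each cycle so its walk starts with an $x_0$-edge, and treating the shared-edge $2$-cycles separately. The resulting bijection maps $x_0$ onto $x_0$, so $h \in H_n$, and it maps $x$ onto $y$.

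What your route buys is a self-contained, elementary argument that avoids the tableau-based formula entirely. It also proves more: any $H_n$-bi-invariant function, for instance $k(\cdot,x_0)$ for an arbitrary stationary kernel $k$, depends only on $d(x,x_0)$. And since every partition of $n$ occurs as a generalized distance, the space of such functions has dimension exactly $p(n)$, which matches the partition indexing of zonal spherical functions.

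The paper's citation additionally supplies the actual values $a^{\lambda}_{\mu}/|A_{\mu}|$ on each sphere. For the bare constancy claim, however, it is heavier machinery than needed.
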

\begin{proof}
Directly follows from Equation~(11.5) from~\citet{ceccherinisilberstein2008}.
\end{proof}

\begin{figure}[t]
\centering
\begin{tikzpicture}[
    dot/.style={circle, fill, inner sep=1.5pt},
    every label/.style={font=\sffamily}
]
    \draw (1.25,1.25) circle (1.25);
    \node[dot, label=below:1] at (1.25,2.5) {};
    \node[dot, label=above:2] at (1.25,0) {};

    \draw (4,0) rectangle (6.5,2.5);
    \node[dot, label=below right:3] at (4,2.5) {};
    \node[dot, label=above right:4] at (4,0) {};
    \node[dot, label=below left:5] at (6.5,2.5) {};
    \node[dot, label=above left:6] at (6.5,0) {};

    \draw (8,0) rectangle (10.5,2.5);
    \node[dot, label=below right:7] at (8,2.5) {};
    \node[dot, label=above right:8] at (8,0) {};
    \node[dot, label=below left:9] at (10.5,2.5) {};
    \node[dot, label=above left:10] at (10.5,0) {};
\end{tikzpicture}
\caption{If $x = \{ \{1, 2\}, \{3, 4\}, \{5, 6\}, \{7, 8\}, \{9, 10\} \}$ and $y = \{ \{1, 2\}, \{3, 5\}, \{4, 6\}, \{7, 9\}, \{8, 10\} \}$, then the union $x \cup y$ forms the edge set of the disconnected graph shown here.
It consists of two cycles of length four and one cycle of length two; thus, the generalized distance in this case is $d(x, y) = (2, 2, 1)$.}
\label{fig:generalized-distance}
\end{figure}
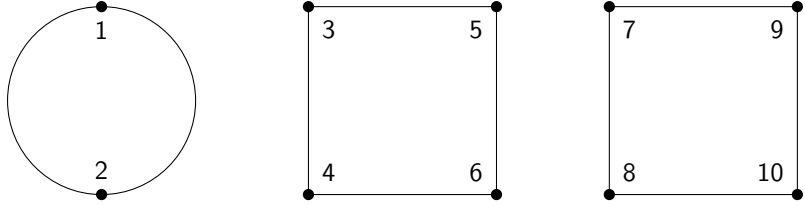

We now define the \emph{generalized distance} $d$ referenced above (see~\Cref{fig:generalized-distance} for an illustrative example).
\begin{definition}
Given two matchings $x, y \in \c{X}_n$, construct an undirected graph on the vertex set $\{1, 2, \ldots, 2n\}$ with edge set $x \cup y$; that is, $\{i, j\}$ is an edge if it appears in either $x$ or $y$.
It is straightforward to verify that the connected components of this graph are disjoint cycles, each of even length~\citep{ceccherinisilberstein2008}.
Let $2\mu_1 \geq 2\mu_2 \geq \cdots \geq 2\mu_s$ denote the lengths of these cycles.
Then the generalized distance between $x$ and $y$ is then defined to be the partition $d(x, y) = \v{\mu} = (\mu_1, \mu_2, \ldots, \mu_s) \vdash n$.
\end{definition}

Crucially, \Cref{thm:zsf_generalized_distance} implies that the number of distinct values a zonal spherical function may take is at most the number of possible generalized distances---that is, the number of partitions $p(n)$ of $n$.
As discussed above, $p(n)$ is dramatically smaller than the number of matchings $|\c{X}_n|$.
This insight suggests that there could exist algorithms for computing all values of a zonal spherical function whose cost depends only on $p(n)$, rather than on $|\c{X}_n|$.
In the next section, we develop an algorithm that realizes this possibility.

\subsection{Zonal Spherical Functions: Our Algorithm}
\label{sec:computation:ours}

We now describe our new algorithm for computing zonal spherical functions.
Our approach is based on the connection between zonal spherical functions and \emph{zonal polynomials}~\citep{james1960, james1961, james1968, macdonald1998}, a class of homogeneous symmetric polynomials widely used in multivariate statistics~\citep{muirhead1982}.

Let $\v{\rho} \vdash n$ and $m \geq n$.
One way to define the zonal polynomial $\mathcal{C}_{\v{\rho}}: \R^{m} \to \R$ is via its expansion in the basis of \emph{monomial symmetric polynomials} $m_{\v{\kappa}}$ \citep{james1968}:
\[
\mathcal{C}_{\v{\rho}}(z_1, \ldots, z_m) = \sum_{\v{\kappa} \leq \v{\rho}} c_{\v{\rho}, \v{\kappa}} \, m_{\v{\kappa}}(z_1, \ldots, z_m),
&&
m_{\v{\kappa}}(z_1, \ldots, z_m)
=
\!\!\!\!\!\!\!\!\!\!\!\!\!
\sum_{\substack{1 \leq i_1, \ldots, i_{\abs{\v{\kappa}}} \leq m \\ \text{all indices distinct}}} z_{i_1}^{\kappa_1} \cdots z_{i_{\abs{\v{\kappa}}}}^{\kappa_{\abs{\v{\kappa}}}}
\]
where the first sum is over partitions $\v{\kappa}$ of $n$ which are lexicographically not larger than $\v{\rho}$.
The coefficients $c_{\v{\rho}, \v{\kappa}}$ are uniquely determined by recurrence relations detailed in~\Cref{appendix:zsf_our_algorithm}.
Moreover, these coefficients can be efficiently computed using a dynamic programming algorithm, as explained in the same~\Cref{appendix:zsf_our_algorithm}.

It was shown by~\citet{bergeron1992, james1961} that the coefficients of zonal polynomials in \emph{another basis} correspond---up to a simple normalization constant---to the values of zonal spherical functions, as stated below.

\begin{restatable}{theorem}{ZpCoefsAreZsf}
Let $\v{\rho}\vdash n$ be a partition, $\mathcal{C}_{\v{\rho}}$ the corresponding zonal polynomial of $m\geq n$ variables, and $\phi_{\v{\rho}}$ the corresponding zonal spherical function.
Then, there exists a constant $c = c(\v{\rho})$ such that
\[
C_{\v{\rho}}(z_1, \ldots, z_m) = \sum_{\v{\mu} \vdash n} \left(c \cdot \phi_{\v{\rho}}(\v{\mu}) |A_{\v{\mu}}| \right) p_{\v{\mu}}(z_1, \ldots, z_m),
\]
where $A_{\v{\mu}} = \{x \in \c{X}_n \mid d(x, x_0) = \v{\mu} \}$ is the \emph{generalized sphere} of radius $\v{\mu}$ whose size $|A_{\v{\mu}}|$ can be computed in $O(n)$ time and the \emph{the products of power sum symmetric polynomials} $p_{\v{\mu}}$ are defined~by
\[
p_{\v{\mu}} (z_1, \ldots, z_m) = \prod_{j=1}^{\abs{\v{\mu}}} p_{\mu_j}(z_1, \ldots, z_m),
&&
p_d (z_1, \ldots, z_m) = z_1^d + \ldots + z_m^d
.
\]
\end{restatable}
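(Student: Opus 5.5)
Throughout I write $\mathcal{C}_{\v{\rho}}$ for the zonal polynomial; the $C_{\v{\rho}}$ on the left-hand side of the displayed identity is this same polynomial. The plan is to go through the \emph{characteristic map} of the Hecke algebra of the Gelfand pair $(S_{2n},H_n)$, following \citet{macdonald1998} (Ch.~VII), and then identify its image with the zonal polynomial.

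\textbf{Step 1: reduce to functions on partitions.} By \Cref{thm:zsf_generalized_distance}, every $H_n$-bi-invariant function on $S_{2n}$ (equivalently, every function on $\c{X}_n$ depending only on $d(x,x_0)$) is a function of the coset type $\v{\mu}\vdash n$. The double cosets $H_n\sigma H_n$ are exactly the preimages of the generalized spheres $A_{\v{\mu}}$. For such a function $f$, define
\[
\mathrm{ch}'(f) = \frac{1}{|\c{X}_n|}\sum_{x\in\c{X}_n} f(x)\, p_{d(x,x_0)} = \frac{1}{|\c{X}_n|}\sum_{\v{\mu}\vdash n} |A_{\v{\mu}}|\, f(\v{\mu})\, p_{\v{\mu}}.
\]
The claim then becomes $\mathcal{C}_{\v{\rho}} \propto \mathrm{ch}'(\phi_{\v{\rho}})$, with any positive normalizing constant absorbed into $c$.

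\textbf{Step 2: compute $\mathrm{ch}'(\phi_{\v{\rho}})$.} Insert \Cref{thm:character_projection}, which writes $\phi_{\v{\rho}}$ as the two-sided $H_n$-average of $\chi^{(2\v{\rho})}$. Since $p_{d(\sigma x_0,x_0)}$ is $H_n$-bi-invariant in $\sigma$, the averaging can be moved onto the $p$'s. The remaining sum is $\sum_{\sigma\in S_{2n}}\chi^{(2\v{\rho})}(\sigma)$ times the $H_n$-average of $p_{\mathrm{type}}$. Expand that average with the Frobenius characteristic formula,
\[
p_{\v{\nu}}=\sum_{\v{\lambda}\vdash 2n}\chi^{\v{\lambda}}(\v{\nu})\,s_{\v{\lambda}}.
\]
Use the classical fact $1_{H_n}^{S_{2n}}=\bigoplus_{\v{\lambda}\vdash n}\chi^{(2\v{\lambda})}$, together with the plethysm identity $h_n[h_2]=\sum_{\v{\lambda}}s_{2\v{\lambda}}$. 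Character orthogonality then shows that $\mathrm{ch}'(\phi_{\v{\rho}})$ equals, up to a scalar, the unique symmetric function that is orthogonal to all $Z_{\v{\lambda}}$, $\v{\lambda}\neq\v{\rho}$, under the inner product $\langle p_{\v{\mu}},p_{\v{\nu}}\rangle_2 = \delta_{\v{\mu}\v{\nu}}\, z_{\v{\mu}}2^{\ell(\v{\mu})}$. Concretely, $\phi_{\v{\rho}}$ are orthogonal on $\c{X}_n$, and the weights $|A_{\v{\mu}}|$ are proportional to $1/(z_{\v{\mu}}2^{\ell(\v{\mu})})$; this is where the $O(n)$ formula $|A_{\v{\mu}}| = |H_n|/(z_{\v{\mu}} 2^{\ell(\v{\mu})})$ enters. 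In addition, $\mathrm{ch}'(\phi_{\v{\rho}})$ is triangular in the monomial basis with leading term $m_{\v{\rho}}$, because $s_{2\v{\rho}}$ is triangular and the $H_n$-averaging preserves dominance.

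\textbf{Step 3: identify with zonal polynomials.} Zonal polynomials are characterized, as Jack polynomials with $\alpha=2$, by two properties: triangularity in the monomial basis (only $\v{\kappa}\leq\v{\rho}$ appear, which is the property used in the definition above) and orthogonality under $\langle\cdot,\cdot\rangle_2$. Equivalently, they are the eigenfunctions of the Laplace–Beltrami-type operator whose eigenvalue recurrence is the one recalled in \Cref{appendix:zsf_our_algorithm}. By Gram–Schmidt uniqueness, $\mathrm{ch}'(\phi_{\v{\rho}})=c'\,\mathcal{C}_{\v{\rho}}$. Substituting back gives the stated identity, with $c$ collecting $c'$ and $1/|\c{X}_n|$. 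Restricting to $m\geq n$ variables loses nothing, since the $p_{\v{\mu}}$, $\v{\mu}\vdash n$, remain linearly independent there.

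The main obstacle is Step 2: the orthogonality, and above all the triangularity, of $\mathrm{ch}'(\phi_{\v{\rho}})$. Establishing the orthogonality relation on the image cleanly requires the identification $\langle \mathrm{ch}'f,\mathrm{ch}'g\rangle_2\propto\sum_x f(x)g(x)$. If triangularity resists a direct argument, I would instead cite \citet{james1961,bergeron1992} (Macdonald VII.2) for the eigen-operator characterization and verify that $\mathrm{ch}'(\phi_{\v{\rho}})$ is an eigenvector of the corresponding operator. This follows because $\phi_{\v{\rho}}$ is an eigenfunction of convolution with the indicator of $A_{(2,1^{n-2})}$, which corresponds to the transposition class.
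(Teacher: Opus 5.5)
\textbf{The gap is the triangularity of $\mathrm{ch}'(\phi_\rho)$, which is exactly what Step~3 needs.} Your stated reason, ``$s_{2\rho}$ is triangular and the $H_n$-averaging preserves dominance'', is not an argument, because nothing in your write-up carries $s_{2\rho}\in\Lambda^{2n}$ to $\mathrm{ch}'(\phi_\rho)\in\Lambda^{n}$. The Frobenius/plethysm paragraph expands cycle-type power sums of degree $2n$. By contrast, $\mathrm{ch}'$ uses coset types of degree $n$, and the $H_n$-average of $p_{d(\sigma x_0,x_0)}$ is just $p_{d(\sigma x_0,x_0)}$ itself. Your fallback does not close the gap either. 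An eigenvector of the operator induced by convolution with $\mathbf{1}_{A_{(2,1^{n-2})}}$ need not be a multiple of a single zonal polynomial, because eigenvalues collide for dominance-incomparable partitions. For example, at $n=6$ both $(4,1,1)$ and $(3,3)$ have $\sum_i\rho_i(\rho_i-i)=9$. Equivalently, $(8,2,2)$ and $(6,6)$ have the same content sum $24$, so $\phi_{(4,1,1)}$ and $\phi_{(3,3)}$ share their convolution eigenvalue. The corresponding eigenspace of symmetric functions is two-dimensional, and mutual orthogonality does not single out a basis of it. James's characterization assumes the triangular form as an input, exactly as in the recurrence of \Cref{appendix:zsf_our_algorithm}.

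The rest of your route is sound, and once completed it proves what the paper only cites. The paper's proof is your Step~1 regrouping applied to the identity $\mathcal{C}_\rho=c\sum_{\sigma}\phi_\rho(\sigma)p_{d(\sigma x_0,x_0)}$, taken from \citet{bergeron1992} and \citet{james1961}; citing Macdonald VII.2 would just reproduce that. Your orthogonality step works: from $|A_\mu|\,z_\mu2^{\ell(\mu)}=|H_n|$ you get $\langle\mathrm{ch}'f,\mathrm{ch}'g\rangle_2=\frac{|H_n|}{|\mathcal{X}_n|^2}\sum_{x}f(x)g(x)$. Hence the $\mathrm{ch}'(\phi_\lambda)$ are nonzero and pairwise orthogonal; phrase it this way rather than as ``orthogonal to all $Z_\lambda$'', which is circular. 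Given dominance triangularity, Gram--Schmidt uniqueness then yields $\mathrm{ch}'(\phi_\rho)\propto J^{(2)}_\rho\propto\mathcal{C}_\rho$.

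The missing lemma has a short direct proof. By bi-invariance, $\sum_\sigma\phi_\rho(\sigma)p_{d(\sigma x_0,x_0)}=\sum_\sigma\chi^{(2\rho)}(\sigma)p_{d(\sigma x_0,x_0)}$. Write $p_{d(\sigma x_0,x_0)}(z)=\sum_g\prod_{i=1}^{2n}z_{g(i)}^{1/2}$, summing over maps $g:\{1,\dots,2n\}\to\{1,\dots,m\}$ constant on the pairs of both $x_0$ and $\sigma\lacts x_0$. The coefficient of $z_1^{\mu_1}\cdots z_\ell^{\mu_\ell}$ is then a sum, over such $g$ with fibres of sizes $2\mu_j$, of $\sum_{\sigma\in E_g}\chi^{(2\rho)}(\sigma)$, where $E_g=\{\sigma: g\circ\sigma \text{ is constant on the pairs of } x_0\}$. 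This set is left-invariant under the Young subgroup $S_g\cong S_{2\mu_1}\times S_{2\mu_2}\times\cdots$ stabilizing $g$. Averaging over $S_g$ therefore replaces $\chi^{(2\rho)}(\sigma)$ by $\mathrm{tr}\bigl(P_g\,\pi^{(2\rho)}(\sigma)\bigr)$, where $P_g$ is the projector onto the $S_g$-invariant vectors. By Young's rule these invariants have dimension $K_{2\rho,2\mu}$, which is zero unless $\mu\leq\rho$ in dominance order.
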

\begin{proof}
See \Cref{appendix:zsf_our_algorithm}.
\end{proof}

Given an expansion of the zonal polynomial in the basis of products of power sum symmetric polynomials,
\[
\mathcal{C}_{\v{\rho}} (z_1, \ldots, z_m) = \sum_{\v{\mu} \vdash n} b_{\v{\rho}, \v{\mu}}\, p_{\v{\mu}}(z_1, \ldots, z_m),
\]
the values of the zonal spherical function $\phi_{\v{\rho}}$ can be easily recovered.
Specifically, using the fact that $\phi_{\v{\rho}}((1,\ldots, 1)) = 1$ and $|A_{(1,\ldots, 1)}| = 1$, we have
\[
\phi_{\v{\rho}}(\v{\mu}) = \frac{b_{\v{\rho}, \v{\mu}}}{c \cdot |A_{\v{\mu}}|} = \frac{b_{\v{\rho}, \v{\mu}}}{b_{\v{\rho}, (1, \ldots, 1)}  \cdot |A_{\v{\mu}}|} .
\]
Thus, the key idea of our algorithm is to obtain the required coefficients $b_{\v{\rho}, \v{\mu}}$ by performing a change of basis from monomial symmetric polynomials to products of power sum symmetric polynomials.

It remains to understand how to perform the change of basis.
Let $T_{\v{\kappa}, \v{\mu}}$ denote the transition matrix between monomial symmetric polynomials and products of power sum symmetric polynomials, so that
\[
m_{\v{\kappa}}(z_1, \ldots, z_m) = \sum_{\v{\mu} \vdash n} T_{\v{\kappa}, \v{\mu}}\, p_{\v{\mu}}(z_1, \ldots, z_m).
\]
There exist several algorithms for computing $T_{\v{\kappa}, \v{\mu}}$.
In this work, we adopt a modified version of the approach described by~\citet{merca2015}.
We describe this algorithm and our modifications in~\Cref{appendix:zsf_our_algorithm}.

\begin{algorithm}[t]
\caption{Zonal Spherical Function via Zonal Polynomials}
\label{algorithm:ZSFUsingZP}
\begin{algorithmic}
\State \textbf{Input:} Partitions $\v{\rho}, \v{\mu} \vdash n$ and precomputed transition matrix entries $T_{\v{\kappa}, \v{\mu}}$ for all relevant $\v{\kappa}, \v{\mu} \vdash n$
\State \textbf{Output:} Value of the zonal spherical function $\phi_{\v{\rho}} (\v{\mu})$
\State Compute the coefficients $c_{\v{\rho}, \v{\kappa}}$ for all $\v{\kappa} \vdash n$ using the algorithm from~\Cref{appendix:zsf_our_algorithm}
\State Compute $b_{\v{\rho}, \v{\mu}} \gets \sum_{\v{\kappa} \vdash n} c_{\v{\rho}, \v{\kappa}}\, T_{\v{\kappa}, \v{\mu}}$
\State Compute $\phi_{\v{\rho}}(\v{\mu}) \gets \frac{b_{\v{\rho}, \v{\mu}}}{b_{\v{\rho}, (1, \ldots, 1)} \cdot |A_{\v{\mu}}|}$
\end{algorithmic}
\end{algorithm}

The full procedure is summarized in~\Cref{algorithm:ZSFUsingZP}.
Note that the algorithm for computing $c_{\v{\rho}, \v{\kappa}}$ produces all coefficients for a fixed $\v{\rho}$.
The computation of $T_{\v{\kappa}, \v{\mu}}$ is based on recursive calls; in practice, computing it for a single value of~$\v{\mu}$ is often as costly as computing it for all $\v{\mu}$.
Therefore, it is convenient to precompute the entire transition matrix in advance.
Alternatively, as in our software implementation, one can build $T_{\v{\kappa}, \v{\mu}}$ on demand by caching the previously computed values; the cost of precomputing each entry is then amortized over all algorithm calls.
However, this latter strategy is more complicated to analyze theoretically, so for the purpose of complexity estimates, we focus on the version that precomputes the full matrix.
In either case, the crucial point is that the change of basis occurs in the $p(n)$-dimensional space, and thus the overall cost depends on $p(n)$ rather than $|\c{X}_n|$.
A detailed complexity analysis is provided in the next section.

\section{Discussion} \label{sec:discussion}

We begin in~\Cref{sec:discussion:complexity} by comparing---both theoretically and through empirical results---the computational complexity that arises when using our proposed method with the existing baselines.
In~\Cref{sec:discussion:truncation}, we present heuristics for selecting a subset of partitions to truncate the kernel expansion, and analyze the associated approximation error.
Next,~\Cref{sec:discussion:hyperparameters} addresses practical considerations for choosing hyperparameters for heat and Matérn kernels on matchings.
The final~\Cref{sec:discussion:phylogenetic} is motivated by an intimate connection between matchings and \emph{phylogenetic trees}, a data modality important in many biological applications.
It explores how the techniques developed throughout the paper could be leveraged to define and compute kernels on sets of phylogenetic trees, and the open problem associated with that.

\subsection{Computational Complexity}
\label{sec:discussion:complexity}

We now estimate the computational complexity of kernel evaluation; see~\Cref{table:asymptotics} for a summary comparing different approaches for computing zonal spherical functions.
During the precomputation stage, it is necessary to select partitions for truncation $\mathcal{R}$, compute their eigenvalues $\lambda_{2\v{\rho}}$ and dimensions $d_{2\v{\rho}}$, and perform any additional preprocessing required for the chosen zonal spherical function algorithm.
Selection of $\mathcal{R}$ via the maximal partition value strategy we will describe later in~\Cref{sec:discussion:truncation} can be carried out in $O\big(|\mathcal{R}| n\big)$ time using a simple recursion.
Eigenvalues and dimensions can each be computed in $O(n^2)$ and $O(n)$ time per partition, respectively, resulting in a total complexity of $O\big(|\mathcal{R}| n^2\big)$ for this step.
For the computation of the zonal spherical functions themselves, the use of the explicit formula requires $O\big(\abs{\c{X}_n}\, n! n p(n)\big)$ operations for precomputation (\Cref{appendix:zsf_naive}), whereas our proposed method reduces this to $O\big(p(n)^2 n^3\big)$ (\Cref{appendix:zsf_our_algorithm}).

\begin{table}[b]
\centering
\begin{tabular}{@{} l
                @{\hskip 16pt} c c
                @{\hskip 16pt} c c
                @{}}
\toprule
& Precomputation time & Query time & Precomputation memory & Query memory \\
\midrule
\Cref{thm:character_projection}
    & $O\left(|\c{R}| n^2\right)$
    & $O\left(|\c{R}| \cdot 2^n n! \cdot T(n)\right)$
    & $O(|\c{R}|+n)$ & $O(M(n))$ \\
Explicit formula
    & $O\left(\abs{\c{X}_n}\, n! n p(n)\right)$
    & $O\left(|\mathcal{R}| n\right)$
    & $O(p(n)^2 n)$ & $O(n)$ \\
Our method
    & $O\left(p(n)^2 n^3\right)$
    & $O\left(|\mathcal{R}|\, p(n)\, n^3\right)$
    & $O(p(n)^2 n^2)$ & $O(p(n) n)$ \\
\bottomrule
\end{tabular}
\caption{
Time and memory complexities for kernel evaluation using different algorithms for computing zonal spherical functions.
Here, $|\mathcal{R}|$ denotes the size of the truncation partition set, $p(n)$ is the number of integer partitions of $n$, and $\abs{\c{X}_n}$ is the number of matchings on $2n$ elements.
$T(n)$ and $M(n)$ stand for amortized time and memory complexities of computing value of character of $S_{2n}$.
In practice, we use caching instead of full precomputation in both the ``Explicit formula`` approach and in our method, making them more efficient than the theoretical analysis shows.
}
\label{table:asymptotics}
\end{table}

After precomputation, evaluation of $k(x, x')$ for a single pair $x, x' \in \c{X}_n$ involves computing $|\mathcal{R}|$ values of zonal spherical functions and taking a linear combination with precomputed coefficients.
In the approach based on \Cref{thm:character_projection}, computing one zonal spherical function value essentially reduces to evaluating $|H_n|$ character values of $S_{2n}$. Therefore, even if the cost of character evaluation is ignored, the time complexity is at least $O\big(2^n n!\big)$, which grows super-exponentially in $n$.
In the naive approach---where all possible values of zonal spherical functions are precomputed in advance, requiring a massive amount of memory---this reduces to computing the generalized distance and performing a lookup, resulting in $O\big(|\mathcal{R}| n\big)$ time per query.
By contrast, our method requires $O\big(|\mathcal{R}|\, p(n)\, n^3\big)$ operations to compute $|\mathcal{R}|$ zonal spherical function values on the fly, as detailed in \Cref{appendix:zsf_our_algorithm}.

\begin{table}[t]
\centering
\resizebox{\textwidth}{!}{
\begin{tabular}{lcccccc}
\toprule
 & $n=5$ & $n=6$ & $n=10$ & $n=15$ & $n=20$ & $n=25$ \\
\midrule
\multicolumn{7}{c}{Wall-clock time (seconds)} \\
\midrule
\Cref{thm:character_projection} & $8.53 \pm 0.08$ & $131.98 \pm 2.43$ & -- & -- & -- & -- \\
Explicit formula & $6.49 \pm 0.09$ & $115.46 \pm 1.70$ & -- & -- & -- & -- \\
Our method & $7.54 \pm 0.07$ & $13.97 \pm 0.08$ & $56.06 \pm 0.51$ & $216.78 \pm 3.63$ & $907.90 \pm 29.56$ & $3639.62 \pm 73.28$ \\
\midrule
\multicolumn{7}{c}{Memory consumption (MB)} \\
\midrule
\Cref{thm:character_projection} & $33.7 \pm 0.4$ & $34.1 \pm 0.1$ & -- & -- & -- & -- \\
Explicit formula & $33.1 \pm 0.2$ & $35.1 \pm 0.1$ & -- & -- & -- & -- \\
Our method & $33.9 \pm 0.2$ & $33.5 \pm 0.4$ & $33.6 \pm 0.2$ & $38.2 \pm 0.2$ & $91.0 \pm 0.1$ & $576.2 \pm 0.1$ \\
\bottomrule
\end{tabular}
}
\caption{
Wall-clock time (in seconds) and peak memory consumption (in MB) required to compute $100 \times 100$ covariance matrix using different methods for different values of $n$.
Results are reported as mean $\pm$ standard deviation over 5 trials.
}
\label{table:computation-time}
\end{table}

\paragraph{Numerical experiments.}

To further analyze the performance of the considered methods, we measured their runtime and memory consumption using our own software implementation.
It is important to note that, unlike the theoretical analysis, our implementation of the explicit-formula approach does not rely on full precomputation. 
Instead, values are computed on demand and then cached. 
Our method follows the same strategy: rather than precomputing all values in advance, we compute them on demand and store them in a cache.

Our experimental setup is as follows.
We generate $100$ random matchings $x_1, \ldots, x_{100}$ and measure the time and memory required to compute the respective kernel matrix $\del{k(x_i, x_j)}_{1 \leq i, j \leq 100}$.
This includes both precomputation during kernel initialization and evaluation of $100 \times 100 = 10,\!000$ kernel values.
We repeat all experiments $5$ times to obtain more reliable results and report the mean and standard deviation of the observed run times and memory requirements.

The results are reported in~\Cref{table:computation-time}. 
We observe that the naive methods are already about one order of magnitude slower at $n=6$.
To further evaluate the scalability of our approach, we run it for larger values of $n$.
The results show that our method scales up to $n=25$ while staying within an approximate one-hour budget to compute a $100 \times 100$ covariance matrix.

\subsection{Truncation}
\label{sec:discussion:truncation}

\begin{figure}[b]
\begin{subfigure}[b]{0.20\textwidth} 
\includegraphics[width=1.0\textwidth, trim=55pt 0pt 0pt 0pt]{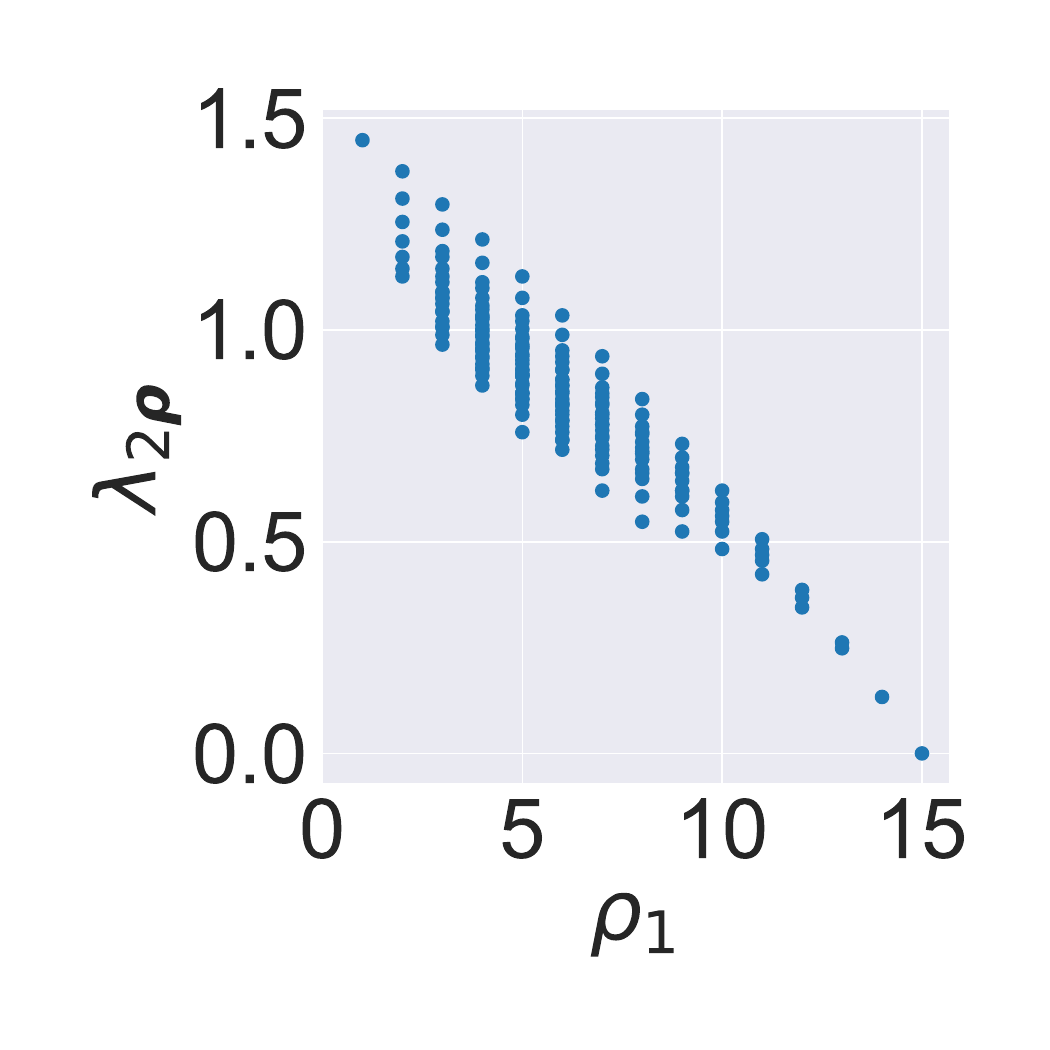}
\caption{$n=15$}
\end{subfigure}
\hfill
\begin{subfigure}[b]{0.20\textwidth}  
\includegraphics[width=1.0\textwidth, trim=55pt 0pt 0pt 0pt]{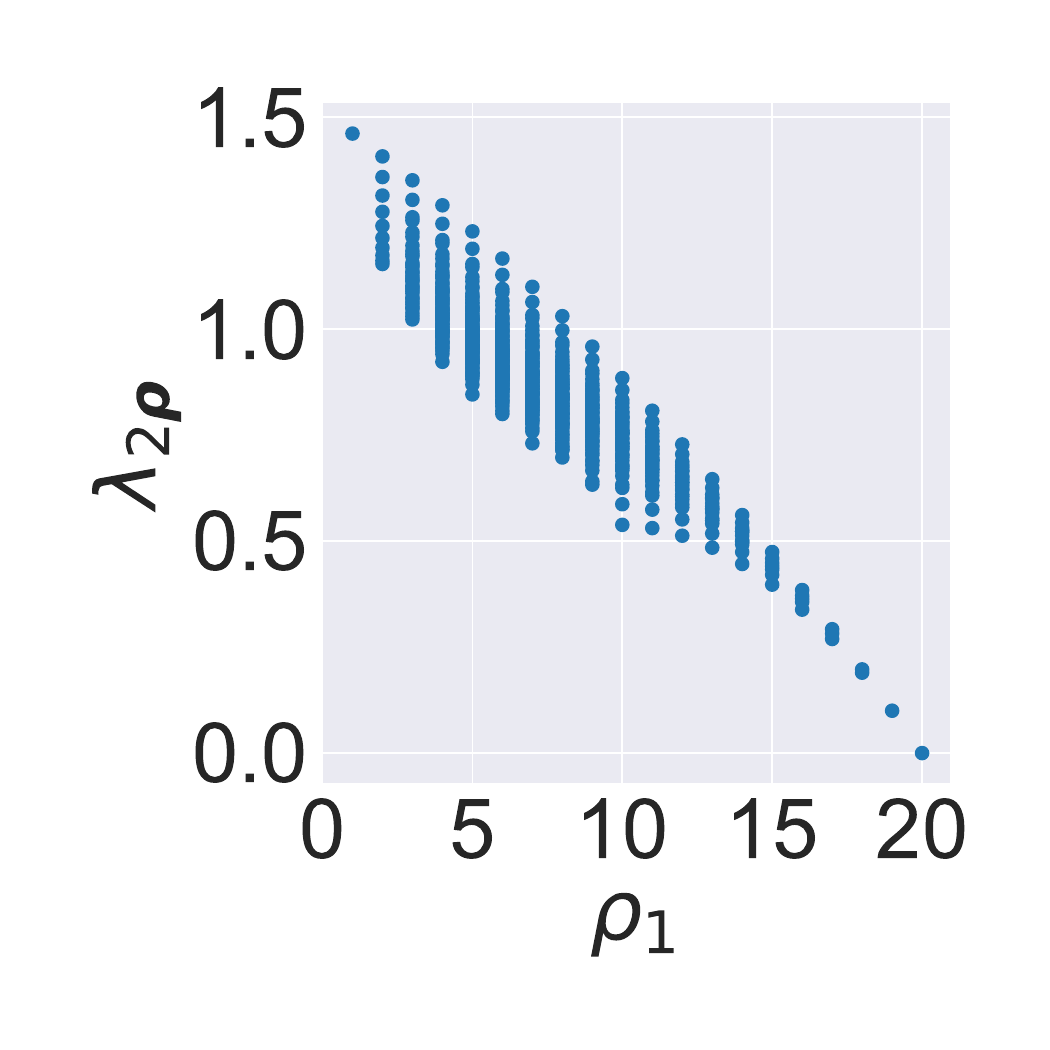}
\caption{$n=20$}
\end{subfigure}
\hfill
\begin{subfigure}[b]{0.20\textwidth}  
\includegraphics[width=1.0\textwidth, trim=55pt 0pt 0pt 0pt]{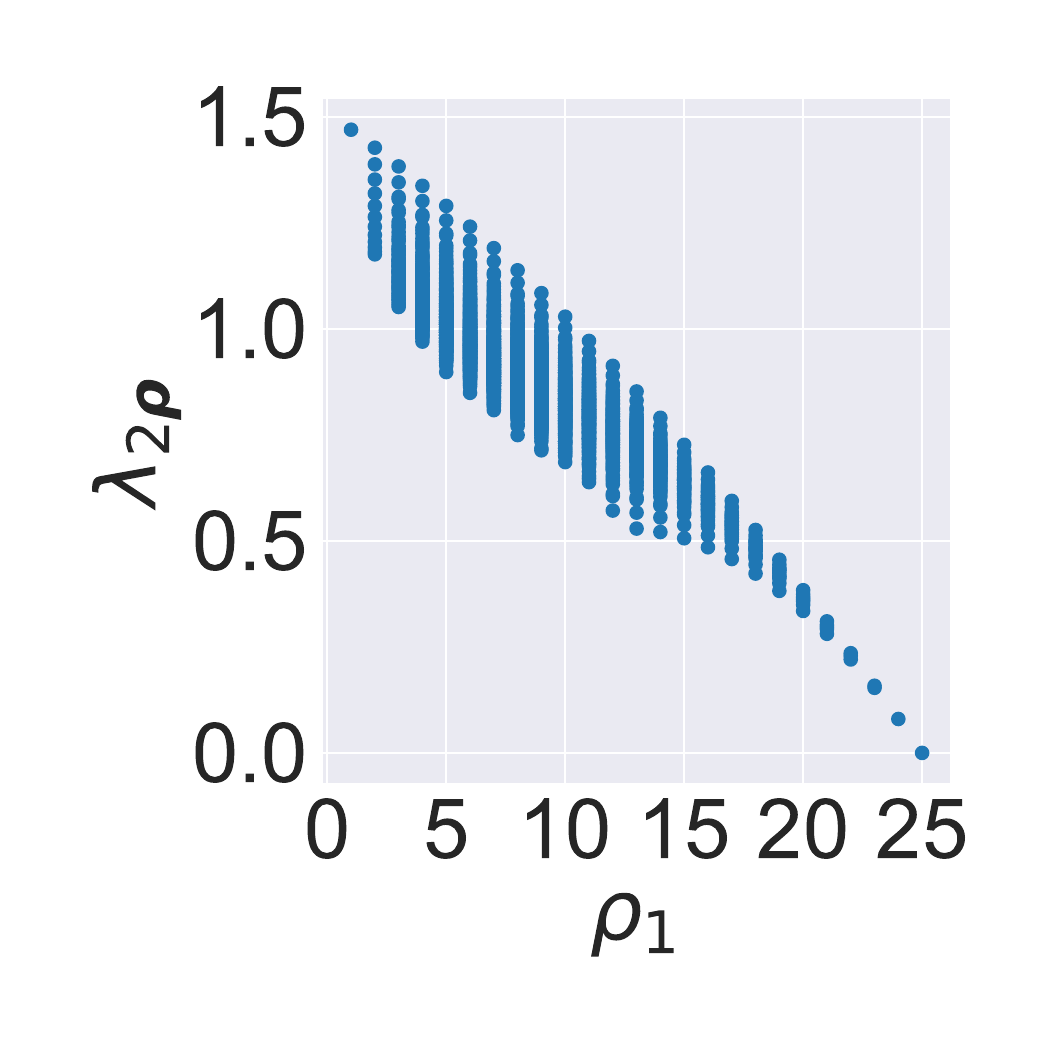}
\caption{$n=25$}
\end{subfigure}
\hfill
\begin{subfigure}[b]{0.20\textwidth}  
\includegraphics[width=1.0\textwidth, trim=55pt 0pt 0pt 0pt]{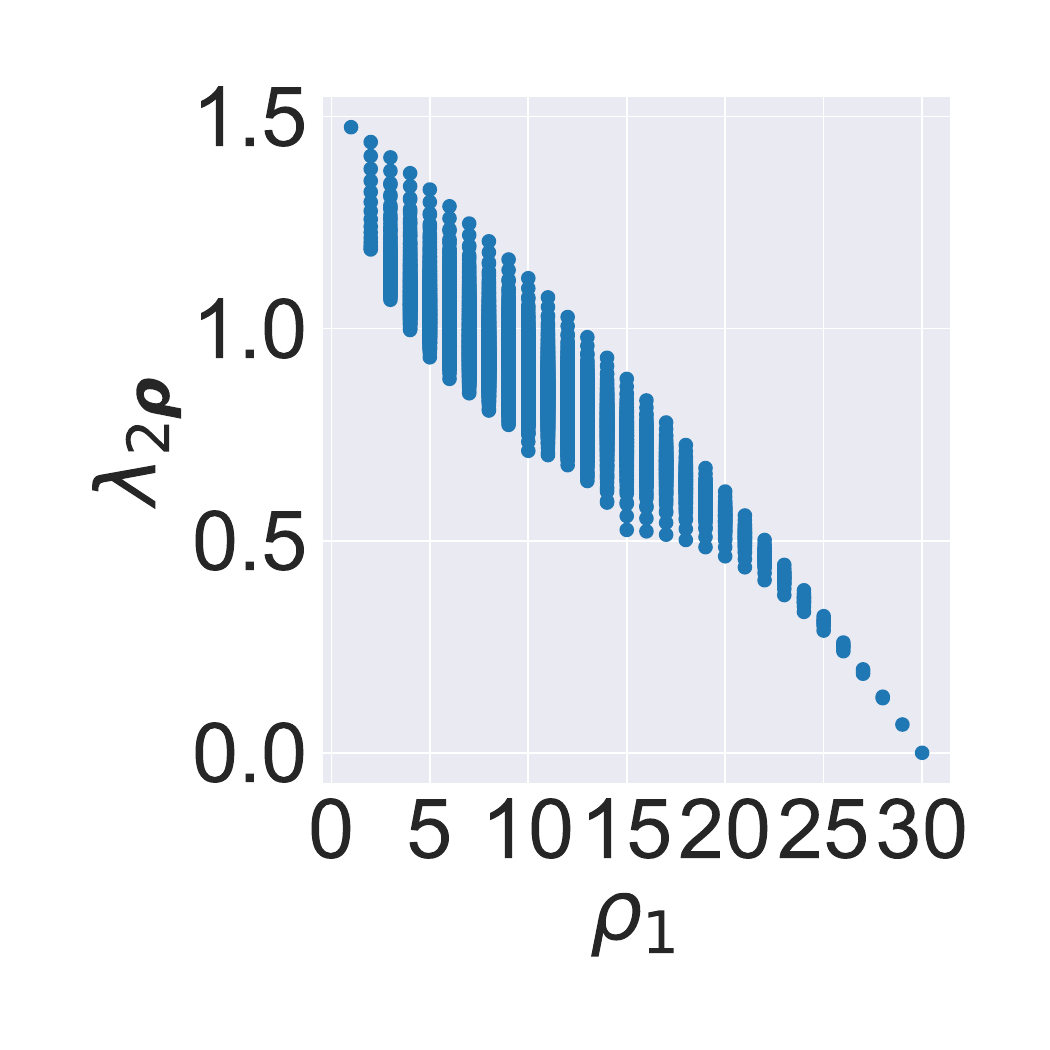}
\caption{$n=30$}
\end{subfigure}
\caption{Laplacian eigenvalues $\lambda_{2\v{\rho}}$ versus the maximal part $\rho_1$, for all partitions $\v{\rho} = (\rho_1, \ldots, \rho_s) \vdash n$.}
\label{fig:scatter-max-eigenvalue}
\end{figure}

The individual terms in the expansion~\Cref{eqn:heat_matern_matchings} contribute unequally to the resulting kernel.
Replacing the sum over all partitions in the kernel definition with a sum over a subset $\mathcal{R} \subseteq \{ \v{\rho} \mid \v{\rho} \vdash n\}$ often yields a good approximation---importantly, one that always remains a well-defined positive semi-definite kernel---at a fraction of the computational cost.
In practice, a feasible size for $\mathcal{R}$ is typically between 20 and 100 partitions, and a natural question is how to select which partitions to include.
We propose selecting partitions corresponding to \emph{low frequencies}.
Indeed, by~\Cref{thm:kernels_character_sum}, each term in the sum in~\Cref{eqn:heat_matern_matchings} is associated with a group of terms in the Mercer expansion of the kernel, i.e. the expansion in~\Cref{eqn:graph_kernels}.
For the latter, it is common to retain only the low-frequency terms---i.e., those corresponding to the smallest Laplacian eigenvalues---as these correspond to smoother and better-behaved functions, and we argue that this intuition carries over naturally to our setting.

To implement this, one may simply compute all Laplacian eigenvalues during precomputation, and select the partitions with the lowest corresponding eigenvalues.
However, we have also observed that certain characteristics of the partitions themselves can act as proxies for these eigenvalues.
For example, as illustrated in~\Cref{fig:scatter-max-eigenvalue}, the maximal part $\rho_1$ of the partition $\v{\rho} = (\rho_1, \rho_2, \ldots, \rho_s) \vdash n$---we assume ${\rho_1 \geq \rho_2 \geq \ldots \geq \rho_s}$---is well-correlated with the eigenvalue $\lambda_{\v{\rho}}$, particularly for the small $\lambda_{\v{\rho}}$.
Thus, one can efficiently select $\mathcal{R}$ as the set of partitions with largest $\rho_1$, which will roughly coincide with the partitions contributing the lowest-frequency terms.
It is easy in practice to enumerate such partitions via a simple recursive algorithm, yielding a selection complexity of $O(\abs{\mathcal{R}} n)$.
Our software implementation employs exactly this heuristic.
Alternative heuristics for partition selection are discussed in \Cref{appendix:additional_results}.

\begin{figure}[t]
\centering
\begin{subfigure}[b]{0.20\textwidth}
\includegraphics[width=\textwidth, trim=55pt 20pt 0pt -10pt, clip]{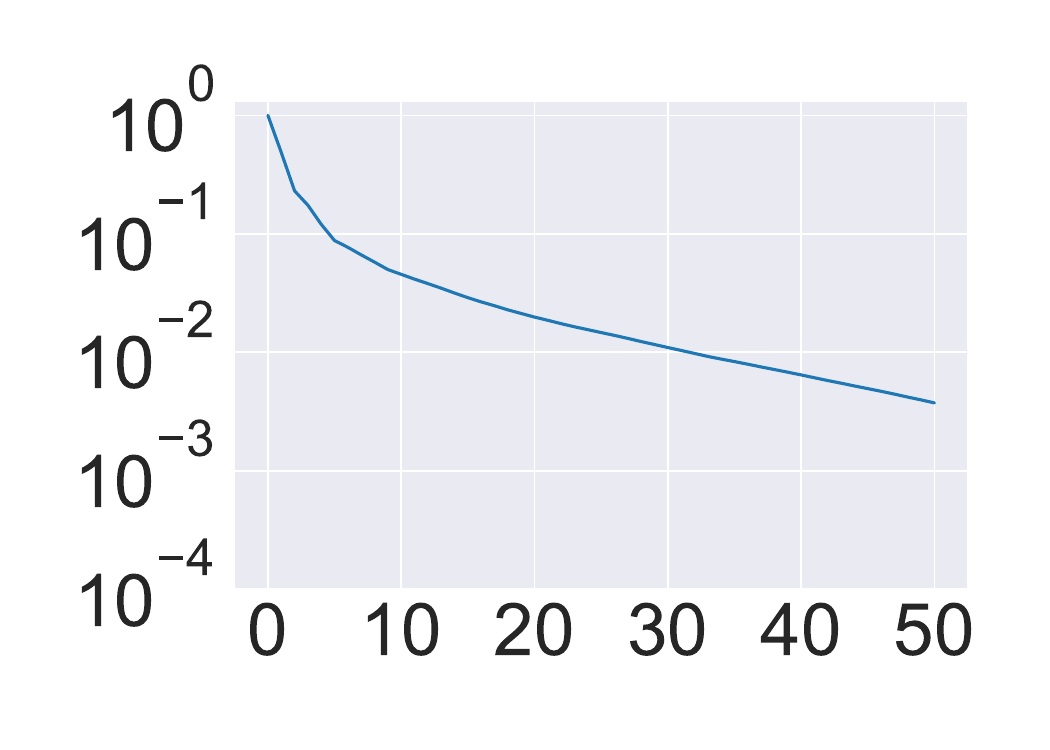}
\caption{$n=15,\ \nu=2.5$}
\end{subfigure}
\hfill
\begin{subfigure}[b]{0.20\textwidth}
\includegraphics[width=\textwidth, trim=55pt 20pt 0pt -10pt, clip]{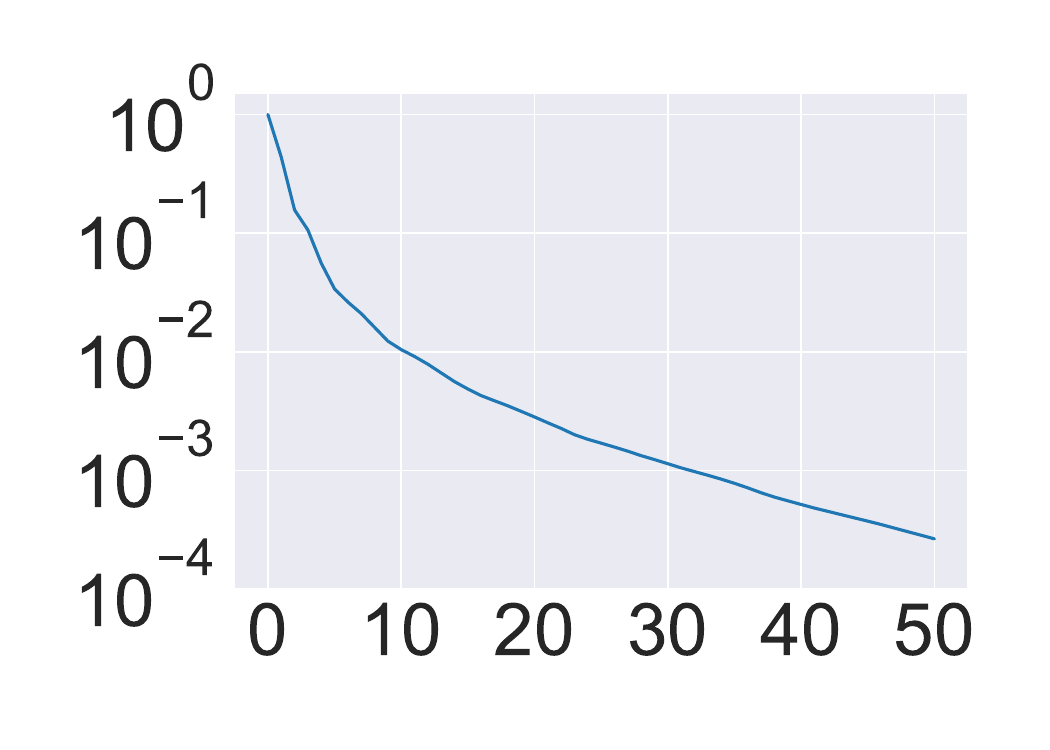}
\caption{$n=15,\ \nu=\infty$}
\end{subfigure}
\hfill
\begin{subfigure}[b]{0.20\textwidth}
\includegraphics[width=\textwidth, trim=55pt 20pt 0pt -10pt, clip]{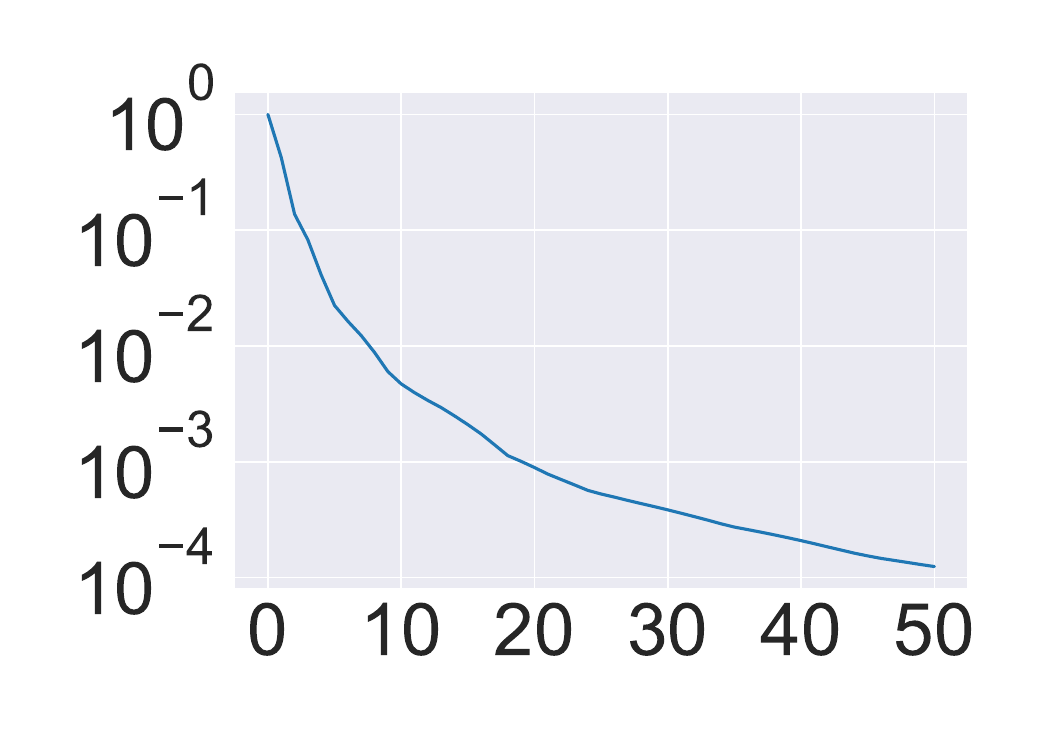}
\caption{$n=30,\ \nu=2.5$}
\end{subfigure}
\hfill
\begin{subfigure}[b]{0.20\textwidth}
\includegraphics[width=\textwidth, trim=55pt 20pt 0pt -10pt, clip]{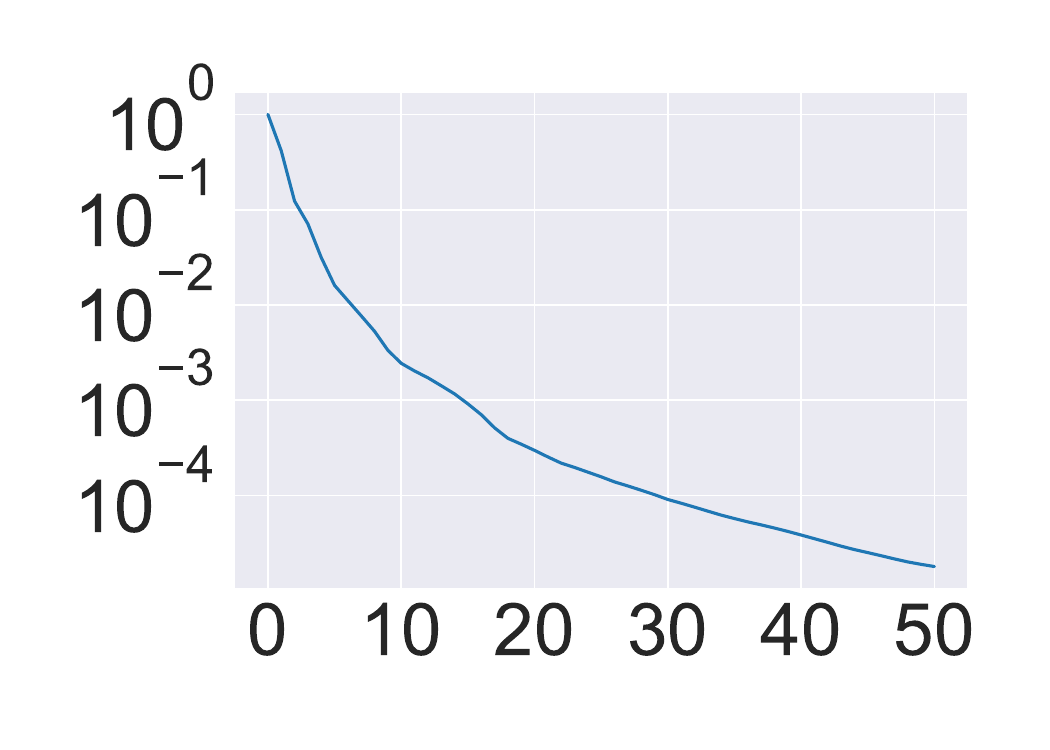}
\caption{$n=30,\ \nu=\infty$}
\end{subfigure}
\caption{
Approximation quality of truncated kernels as a function of $|\mathcal{R}|$.
The $x$-axis represents the number of partitions $|\mathcal{R}|$ used in the truncation; the $y$-axis displays the relative $L^2$ approximation error $\norm{k - k_\mathcal{R}}_{L^2(\c{X}_n \times \c{X}_n)} / \norm{k}_{L^2(\c{X}_n \times \c{X}_n)}$ (log scale).
For each $|\mathcal{R}|$, the set $\mathcal{R}$ is selected by the maximizing~$\rho_1$.
}
\label{fig:approximation-error}
\end{figure}

\paragraph{Approximation quality.}

A truncated kernel is an approximation to the exact kernel.
It is therefore natural to ask how good this approximation is in practice.
We analyze this question as follows.
Recall that the truncated kernel is defined as
\[
k_\mathcal{R}(x, x') = \sum_{\v{\rho} \in \mathcal{R}} \Phi(\lambda_{\v{\rho}}) \frac{d_{\v{\rho}}}{|S_{2n}|} \phi_{\v{\rho}} (\sigma_2^{-1} \sigma_1),
&&
x = \sigma_1 H_n, \quad
x' = \sigma_2 H_n,
\]
and the exact kernel $k(x, x')$ is given by $k_{\mathcal{R}}(x, x')$ with $\mathcal{R}$ taken to be the set of all partitions of $n$.
We quantify the approximation error by the relative squared $L_2$ norm,
\[
\frac{\| k - k_\mathcal{R}\|_{L_2(\c{X}_n \times \c{X}_n)}^2}{\| k\|_{L_2(\c{X}_n \times \c{X}_n)}^2},
&&
\norm{f}_{L^2(\c{X}_n \x \c{X}_n)}^2
=
\sum_{x, x' \in \c{X}_n} \abs{f(x, x')}^2.
\]
This error is easily analyzed, since the zonal spherical functions form an orthogonal system; hence, the squared norm of the sum equals the sum of squared norms of the individual terms.
We provide the formal derivation in~\Cref{appendix:approximation_formula_derivation}, and empirical results are reported in~\Cref{fig:approximation-error}.
We find that the approximation error decays rapidly as more terms are included, with $30$ terms often sufficient to reach an error on the order of $10^{-3}$, justifying the use of truncated kernels.
Finally, note that the approximation quality also depends on the choice of hyperparameters; our strategy for their selection is described in the following section.
Additional plots illustrating the approximation quality for a wider range of settings can be found in \Cref{appendix:additional_results}.

\subsection{Hyperparameters}
\label{sec:discussion:hyperparameters}

Besides choosing the set of representations for truncation, in practice we also need to specify two hyperparameters for the kernel: the lengthscale $\kappa$ and the smoothness parameter $\nu$.

For a fixed smoothness parameter $\nu$, we choose the lengthscale $\kappa$ using a simple heuristic.
We consider the expansion in~\Cref{eqn:heat_matern_matchings}, and in particular the coefficients corresponding to the partitions $\v{\rho} = (n)$ and $\v{\rho} = (n-1,1)$.
We then choose $\kappa$ so that the ratio of these two coefficients is equal to $2$.
This heuristic is intended to avoid two degenerate regimes.
If $\kappa$ is too large, the kernel approaches a constant kernel and therefore treats all inputs as almost equally similar.
If $\kappa$ is too small, the kernel approaches a delta kernel and therefore treats all inputs as almost completely dissimilar.
The choice of partitions is motivated by spectral considerations.
Ideally, we want this ratio condition to involve the partitions associated with the largest and second-largest eigenvalues.
In the set $\mathcal{R}$, however, partitions are selected according to the size of $\rho_1$ rather than directly by their eigenvalues.
It is therefore natural to use the partitions with the largest and second-largest values of $\rho_1$, namely $(n)$ and $(n-1,1)$.

For the \emph{smoothness parameter}, it can be empirically observed that values of $\nu$ popular in the Euclidean case, such as $\nu = 2.5$, do not yield reasonable kernels in our setting.
For example, such choices tend to result in overly ``rough'' kernels, as the coefficients in~\Cref{eqn:heat_matern_matchings} end up amplifying high frequencies more than low frequencies.
We hypothesize that this is due to the need to adjust kernels for the effective ``dimensionality'' of the domain.
Indeed, while the original definition of Matérn kernels in the Euclidean case incorporated a dimension correction, this adjustment is omitted in the standard graph-based definition~\citep{borovitskiy2021}, since the notion of dimension is not well-defined for a general graph:
\[
\text{No dimension correction: } \quad \Phi(\lambda) = \left(\frac{2\nu}{\kappa^2} + \m{\Delta}\right)^{-\nu}
\qquad
\text{Dimension correction: } \quad \Phi(\lambda) = \left(\frac{2\nu}{\kappa^2} + \m{\Delta}\right)^{-(\nu + \frac{d}{2})}\!\!\!\!\!\!\!\!\!\!\!\!\!\!\!\!\!\!.
\]
However, since the graphs we consider are regular with constant degree, interpreting the degree as the effective dimension is natural.
This idea has appeared in previous work~\citep{brualdi1991, borovitskiy2023}.
In our case, the degree is $n(n-1)$---see derivation in~\Cref{appendix:quotient}---which is a relatively large value.
Therefore, accounting for this correction may be crucial in practice.
Empirically, with this correction, kernels behave as expected, and values such as $\nu \in \{0.5, 1.5, 2.5, \infty\}$ serve as good defaults---please refer to \Cref{appendix:additional_results} for examples with and without correction.
We also note that, in this case, it is straightforward to differentiate the (approximate) kernel with respect to the hyperparameter $\nu$, as for other topologically compact domains~\citep{borovitskiy2020, borovitskiy2021}.
This allows, in settings such as Gaussian process regression, for $\nu$ to be optimized over jointly with $\kappa$, rather than selected manually.
Importantly, in this case, the dimension correction may still be important for choosing a reasonable initialization of $\nu$.

\subsection{Connection to Phylogenetic Trees}
\label{sec:discussion:phylogenetic}

\emph{Phylogenetic trees}~\citep{warnow2018} are the principal way of representing evolutionary relationships, whether between biological species and populations~\citep{kapli2020}, or in modeling the historical development of natural languages~\citep{dunn2015}, and more.
The problem of defining meaningful kernels on phylogenetic trees---and on related domains, such as trees annotated with node features---has attracted research attention, leading to several proposed constructions~\citep{weyenberg2014, aiolli2007, vert2002}.

However, none of the existing kernels are as principled as the heat and Matérn families.
An avenue for addressing this relies on a combinatorial connection between trees and matchings.
Specifically, there exists a bijective correspondence between the set of rooted phylogenetic trees with $n+1$ leaves and the set of matchings $\c{X}_n$ of size~$n$ (that is, matchings of $2n$ elements,~\citet{diaconis1998}).
This correspondence naturally leads to a question: could we define new, principled kernels on spaces of phylogenetic trees by ``pushing forward'' the (stationary, smooth) kernels constructed on matchings, via a bijection between them?

Formally, a (rooted, binary) phylogenetic tree is a rooted tree in which every internal node has degree three---that is, two children and one parent, except at the root---and in which each of the $n+1$ leaves is assigned a unique numerical label from $1$ to $n+1$.
The bijection of~\citet{diaconis1998} proceeds by assigning labels $n+2, \ldots, 2n$ to the $n-1$ internal nodes (excluding the root), thereby extending the label set up to $2n$, and then "matching" nodes that share the same parent.
Full details of this labeling procedure can be found in~\cite{ceccherinisilberstein2008}; see also~\Cref{fig:ptree-numering-example} for an illustrative example.

\begin{figure}[t]%
\hfill
\begin{subfigure}[b]{0.245\textwidth}%
\centering%
\begin{forest}
[R
    [$\bullet$
        [$\bullet$
            [1]
            [5]
        ]
        [4]
    ]
    [$\bullet$
        [3]
        [2]
    ]
]
\end{forest}%
\caption{}%
\end{subfigure}%
\hfill%
\begin{subfigure}[b]{0.245\textwidth}%
\centering%
\begin{forest}%
[R
    [$\bullet$
        [6
            [1]
            [5]
        ]
        [4]
    ]
    [$\bullet$
        [3]
        [2]
    ]
]
\end{forest}%
\caption{}%
\end{subfigure}%
\hfill%
\begin{subfigure}[b]{0.245\textwidth}%
\centering%
\begin{forest}%
[R
    [$\bullet$
        [6
            [1]
            [5]
        ]
        [4]
    ]
    [7
        [3]
        [2]
    ]
]
\end{forest}%
\caption{}%
\end{subfigure}%
\hfill%
\begin{subfigure}[b]{0.245\textwidth}%
\centering%
\begin{forest}%
[R
    [8
        [6
            [1]
            [5]
        ]
        [4]
    ]
    [7
        [3]
        [2]
    ]
]
\end{forest}%
\caption{}%
\end{subfigure}%
\hfill%
\caption{Example of the inner node labeling procedure for a rooted phylogenetic tree, following the algorithm of~\citet{diaconis1998}.
At each step, select a pair of nodes with the same parent such that both nodes are already labeled while their parent is not yet labeled.
Among all such pairs, choose the pair containing the minimal label.
Then, assign the next unused label to their parent node.
The transition from (a) to (b) corresponds to the selection of the pair $\{1, 5\}$ (which contains $1$, the smallest label), and the assignment of label $6$ to their parent.
The transition from (b) to (c) selects $\{2, 3\}$ and assigns label $7$ to their parent.
The final transition from (c) to (d) selects the only remaining feasible pair, $\{4, 6\}$, and labels their parent with~$8$.
The resulting matching is $\{\{1,5\},\, \{2,3\},\, \{4,6\},\, \{7,8\}\}$.}
\label{fig:ptree-numering-example}
\end{figure}
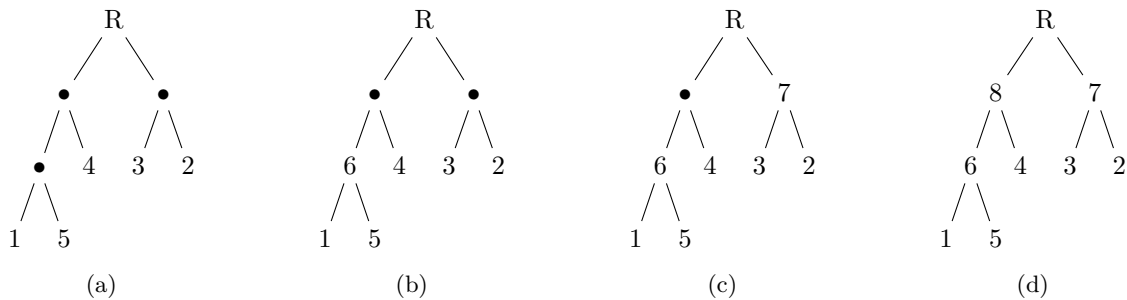
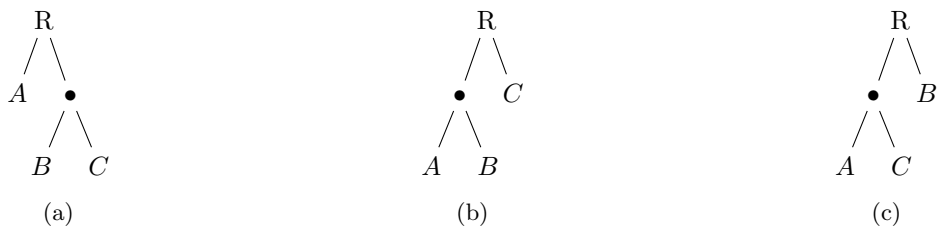
\begin{figure}[b]%
\hfill%
\begin{subfigure}[b]{0.31\textwidth}%
\centering%
\begin{forest}%
[R [$A$] [{$\bullet$} [$B$] [$C$]]]%
\end{forest}%
\caption{}%
\end{subfigure}%
\hfill%
\begin{subfigure}[b]{0.31\textwidth}%
\centering%
\begin{forest}%
[R [{$\bullet$} [$A$] [$B$]] [$C$]]%
\end{forest}%
\caption{}%
\end{subfigure}%
\hfill%
\begin{subfigure}[b]{0.31\textwidth}%
\centering%
\begin{forest}%
[R [{$\bullet$} [$A$] [$C$]] [$B$]]%
\end{forest}%
\caption{}%
\end{subfigure}%
\hfill%
\caption{Illustration of the \emph{nearest neighbor interchange} (NNI) operation. Starting from the tree in (a), where $A$, $B$, and $C$ denote arbitrary subtrees, a single NNI move interchanges two adjacent subtrees around an internal edge. Figures (b) and (c) show the two distinct topologies that may be obtained from (a) by performing an NNI move at the indicated edge. Note that $R$ does not need be the root of the entire tree---it may also be the root of a subtree.
}%
\label{fig:nni-example}%
\end{figure}%

Unfortunately, we can show that this bijection does not preserve geometry in any meaningful sense.
Specifically, there are small perturbations of phylogenetic trees that can be mapped to very large perturbations of matchings, and vice versa.
To formalize this, we need to define distances both on the set of phylogenetic trees and on the set of matchings.
The latter can be naturally defined as the shortest path distance in the quotient graph discussed in~\Cref{sec:kernels:heat_matern}.
For the distance on the set of phylogenetic trees, we choose the commonly used \emph{NNI-distance}~\citep{robinson1971, moore1973}, where \emph{NNI} stands for \emph{nearest neighbor interchange}, an elementary operation that defines distance~$1$ between trees and is illustrated in~\Cref{fig:nni-example}.

The fact that small perturbations of a phylogenetic tree can be mapped to very large perturbations of matchings is formalized by the following result.

\begin{restatable}{proposition}{BijectionSingleNniBigMatchingDist}
\label{BijectionSingleNniBigMatchingDist}
For every $n \geq 7$, under the bijection of~\citet{diaconis1998}, there exist a pair of rooted binary phylogenetic trees $T$ and $T'$ with $n+1$ leaves that differ by a single NNI move, such that their corresponding matchings $x, x'$ in $\mathcal{X}_n$ have quotient Cayley graph distance between them at least $\frac{n-1}{2}$.
\end{restatable}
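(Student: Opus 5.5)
We would prove the statement by a lower bound on the quotient Cayley graph distance in terms of the generalized distance, together with an explicit family of trees. The first step is the lower bound. In the quotient graph (see \Cref{appendix:quotient}), two matchings are adjacent exactly when one is obtained from the other by the action \eqref{eqn:s2n_action} of a single transposition $(i\,j)$. Such a move replaces the two edges $\{i,a\},\{j,b\}$ of $x$ by $\{j,a\},\{i,b\}$; if $\{i,j\}$ is itself an edge of $x$, nothing changes.

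Now fix $y$ and look at the cycle decomposition of the graph $x \cup y$ from the definition of $d(x,y)$. This is a $2$-switch on the alternating cycles, exactly analogous to multiplying a permutation by a transposition.
\begin{itemize}
\item If $i$ and $j$ lie on different cycles, the two cycles merge into one.
\item If $i$ and $j$ lie on the same cycle, that cycle either stays a single cycle or splits into two.
\end{itemize}
Hence the number of parts $\ell(d(x,y))$ changes by at most one per move. Since $\ell(d(y,y)) = n$, we get
\[
\mathrm{dist}(x, x') \;\geq\; n - \ell\bigl(d(x,x')\bigr).
\]
It therefore suffices to exhibit, for each $n \geq 7$, trees $T, T'$ one NNI apart whose matchings satisfy $\ell(d(x,x')) \leq \frac{n+1}{2}$. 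In words, $x \cup x'$ must consist of few, long alternating cycles.

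The second step is the construction, which exploits the sequential nature of the labeling in \Cref{fig:ptree-numering-example}. The internal labels $n+2, \ldots, 2n$ are assigned greedily, always to the parent of the feasible pair containing the minimal label. A local change near the leaves that alters which pair is processed first therefore shifts the labels of many internal nodes by one. We take $T$ to be (essentially) a caterpillar: a long spine whose internal nodes receive consecutive labels $n+2, n+3, \ldots$, each spine node paired with one leaf. We then choose $T'$ by a single NNI at the bottom of the spine, for instance by detaching a small cherry and reattaching it, so that the first processed cherry changes. The aim is that in $T'$ every spine label is shifted by one relative to $T$.

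With this shift, the pairs $\{\ell_k, n+k\}$ in $x$ become pairs $\{\ell_k, n+k\pm 1\}$ (or pairs with neighboring leaves) in $x'$. In $x \cup x'$ these edges chain consecutive spine levels together, producing one long alternating cycle of length about $n$ plus $O(1)$ short cycles. This would give $\ell(d(x,x')) \leq \frac{n+1}{2}$, and indeed much less. If a pure caterpillar forces the leaf labels to pair up in a way that closes the cycles too early, we would interleave leaf labels, for example by placing small and large leaf labels alternately along the spine, so that the chaining alternates between leaf edges and internal edges.

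The main obstacle is the second step: writing down, uniformly in $n \geq 7$, a concrete labelled pair $T, T'$ and verifying, via the tie-breaking rule ``choose the pair containing the minimal label'', the exact labeling of both trees and hence the exact cycle structure of $x \cup x'$. We would first work out small cases ($n = 7, 8$) by hand to find a pattern whose union is a single long cycle plus a few $2$-cycles. We would then prove the general pattern by induction along the spine, handling the parity of $n$ separately, which is presumably where the constant $\frac{n-1}{2}$ and the threshold $n \geq 7$ come from.
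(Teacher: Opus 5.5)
Your first step is correct, and it is actually sharper than what the paper uses. A single transposition changes the number of cycles of $x \cup y$ by at most one, and one can always split off a $2$-cycle, so the quotient distance is exactly $n - \ell(d(x,x'))$. The paper only argues that its two matchings share a single pair, hence differ in $n-1$ pairs, and that a transposition alters at most two pairs.

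The gap is in the second step, which carries the whole existence claim. You do not give the trees, and the mechanism you suggest (an NNI at the bottom of a caterpillar spine) cannot produce the shift you want. Suppose the move is made low on the spine, and take any spine node $w$ above it. Everything outside the subtree of $w$ consists of ancestors of $w$ whose other child is a leaf, so no pair outside that subtree becomes feasible before $w$ is labeled. Hence $w$ receives the label $n+k_w$ in both $T$ and $T'$, where $k_w$ is the number of leaves below $w$. Only pairs near the modified edge change. For the lowest edge the result is again a caterpillar, and the two matchings differ by a single transposition.

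For every spine label to shift by one, the new cherry must be processed \emph{first} and must lie outside the subtree of every spine node, i.e.\ it must sit at the root. This is the paper's construction:
\begin{itemize}
\item $T$ is the caterpillar with bottom cherry $\{3,4\}$, then leaves $5,6,\ldots,n+1$ up the spine, then $2$, and finally $1$ as a child of the root. This gives $x=\{\{3,4\},\{5,n+2\},\ldots,\{n+1,2n-2\},\{2,2n-1\},\{1,2n\}\}$.
\item $T'$ is obtained by the NNI on the root's internal edge, which makes $\{1,2\}$ a cherry. This cherry contains the minimal label, so it is processed before $\{3,4\}$. This gives $x'=\{\{1,2\},\{3,4\},\{5,n+3\},\ldots,\{n+1,2n-1\},\{n+2,2n\}\}$.
\end{itemize}
Then $x\cup x'$ is the $2$-cycle on $\{3,4\}$ together with the single cycle $1,2n,n+2,5,n+3,6,\ldots,2n-2,n+1,2n-1,2$ on the remaining $2n-2$ points. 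So $\ell(d(x,x'))=2$, and your bound gives distance at least (indeed exactly) $n-2\geq \frac{n-1}{2}$.

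No interleaving of leaf labels or parity case analysis is needed. The constant $\frac{n-1}{2}$ comes from the paper's cruder pair-counting bound, not from any feature of the construction.
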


\begin{proof}
See \Cref{appendix:ptrees}.
\end{proof}

The next result shows the converse: a small change in a matching can result in large changes to the three.

\begin{restatable}{proposition}{MatchingSmallMoveBigNni}
\label{prop:matching-small-move-big-nni}
For every $n \geq 9$, there exists a matching $x \in \mathcal{X}_n$ and a transposition $\sigma$ such that the phylogenetic trees corresponding to $x$ and $\sigma \lacts x$ differ by NNI-distance at least $\frac{n}{2}-2$.
\end{restatable}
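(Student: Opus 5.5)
The plan is to build an explicit example from a \emph{caterpillar} tree. For a caterpillar, the labeling procedure of \citet{diaconis1998} has no choices to make, so a transposition of two leaf labels in the matching corresponds exactly to swapping two leaves in the tree. We then bound the NNI-distance from below by counting clades.

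\textbf{Step 1 (construction).} Take the caterpillar $T$ with leaf order $(\ell_1,\ldots,\ell_{n+1})$. Here $\{\ell_1,\ell_2\}$ is the unique cherry, each further internal node joins the previous subtree with $\ell_k$, and $\ell_{n+1}$ is a child of the root $R$. At every stage of the labeling procedure (\Cref{fig:ptree-numering-example}) exactly one feasible pair exists: the current bottom subtree's label together with the next leaf. So the internal labels $n+2,\ldots,2n$ are assigned in a forced order. The resulting matching $x$ pairs $\ell_1$ with $\ell_2$, pairs the label of the $(k-1)$-th internal node with $\ell_{k+1}$ for each $k$, and finally pairs the top two labels.

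Let $\sigma$ be the transposition of the two leaf labels $\ell_1$ and $\ell_{n+1}$, both at most $n+1$. Then $\sigma\lacts x$ is the matching with the same pair structure and these two leaf labels exchanged. Running the (again forced) labeling procedure on the caterpillar with leaf order $(\ell_{n+1},\ell_2,\ldots,\ell_n,\ell_1)$ assigns the same internal labels in the same order and produces exactly $\sigma\lacts x$. Hence, by bijectivity, $\sigma\lacts x$ corresponds to the tree $T'$ obtained from $T$ by swapping leaves $\ell_1$ and $\ell_{n+1}$. The point to check carefully is that the forcedness does not depend on which leaf labels are small. This holds because at each step only one sibling pair has both members labeled.

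\textbf{Step 2 (NNI lower bound via clades).} For a rooted binary tree, call a \emph{clade} the leaf set below a non-root internal node. The key lemma is that a single NNI move changes exactly one clade. In the move of \Cref{fig:nni-example}, the clade $B\cup C$ is replaced by $A\cup B$ or $A\cup C$, while the clades $A$, $B$, $C$, $A\cup B\cup C$ and all others are unchanged. Consequently,
\[
d_{\mathrm{NNI}}(T,T') \;\geq\; \bigl|\{\text{clades of } T\}\setminus\{\text{clades of } T'\}\bigr|.
\]

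\textbf{Step 3 (counting).} The clades of $T$ are $C_k=\{\ell_1,\ldots,\ell_k\}$ for $2\le k\le n$. Each $C_k$ contains $\ell_1$ but not $\ell_{n+1}$. Every clade of $T'$ either does not contain $\ell_1$ or, being the whole leaf set minus a leaf, is not proper below the root in the same way. More precisely, the clades of $T'$ are $\{\ell_{n+1},\ell_2,\ldots,\ell_k\}$ for $2\le k\le n$, and none of these contains $\ell_1$. So all $n-1$ clades of $T$ are missing from $T'$, giving $d_{\mathrm{NNI}}(T,T')\ge n-1\ge \frac n2-2$ for all $n\ge 9$.

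The main obstacle is Step 1. One must rigorously verify, against the precise minimal-label rule, that the labeling bijection commutes with the leaf swap, i.e.\ that $\sigma$ acting on the matching really gives the swapped caterpillar. If that turned out subtle for the endpoint leaf $\ell_{n+1}$, I would instead swap $\ell_1$ with a middle leaf $\ell_m$, $m\approx n/2+1$. That still yields $m-2\ge \frac n2-2$ differing clades, which matches the stated bound.
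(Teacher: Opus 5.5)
Your proof is correct, but it takes a genuinely different route from the paper in both the construction and the invariant.

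\textbf{The paper's route.} It starts from the caterpillar matching $x_1$ with leaf order $(3,4,5,\ldots,n+1,2,1)$. It applies the transposition swapping $1$ and $2n-1$, which exchanges a leaf label with an \emph{internal} label. The new matching decodes to a different tree shape, made of two interleaved chains. The paper then bounds the NNI-distance from below by the height difference, since one NNI move changes the height by at most one. This gives height $n$ versus at most $\frac{n}{2}+2$.

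\textbf{Your route.} You transpose two \emph{leaf} labels at opposite ends of a caterpillar, so the tree shape is preserved. The encoding step becomes essentially trivial: the Diaconis--Holmes labeling of a caterpillar is forced, so it commutes with leaf relabeling, and bijectivity finishes the step. You replace height by the clade set, in which each NNI move replaces exactly one clade. This gives $d_{\mathrm{NNI}} \geq n-1$.

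\textbf{Trade-off.} The height invariant is more elementary. However, it needs two trees of very different shapes. That forces the paper to track the minimal-label rule through the alternating two-chain reconstruction, which it does only somewhat informally. It also yields only $\frac{n}{2}-2$ and needs $n \geq 9$. Your clade lemma is a slightly more structural fact, but the bookkeeping is cleaner. The bound $n-1$ is stronger and holds without any restriction on $n$. Your example also shows the phenomenon already occurs for a pure swap of two leaves.

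\textbf{Cosmetic points.} The sentence in Step 3 beginning ``Every clade of $T'$ either does not contain $\ell_1$ or\ldots'' is garbled. It should simply say that $\ell_1$ is a child of the root of $T'$, so no clade of $T'$ contains it, while every clade $C_k$ of $T$ does. The phrase ``finally pairs the top two labels'' in Step 1 duplicates the case $k=n$ of your indexing. The fallback of swapping $\ell_1$ with a middle leaf is unnecessary, since the endpoint swap works exactly as you argued.
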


\begin{proof}
See Appendix~\ref{appendix:ptrees}.
\end{proof}

Taken together, these results show that the bijection of~\citet{diaconis1998} does not preserve any meaningful notion of smoothness.
Consequently, the heat and Matérn kernels on matchings pushed forward via this bijection will be fundamentally misaligned with the underlying geometry of phylogenetic trees.

Given this negative result, a natural question arises:

\begin{openproblem}
\label{openproblem:ptree}
Does there exist an efficiently computable bijection between matchings and phylogenetic trees such that ``small'' changes in one space correspond to ``small'' changes in the other and vice versa?
\end{openproblem}

Some hope may be drawn from the following argument.
Recent work by~\citet{richman2025} proposes an alternative encoding for the nodes of a phylogenetic tree: they assume zero-based labels $\{0, 1, \ldots, n\}$ for the leaves and label the internal nodes, \emph{including the root}, using negative numbers $\{-1, -2, \ldots, -n\}$.
Without loss of generality, we may instead assume that leaves are labeled by $\{1, \ldots, n+1\}$ as usual, while internal nodes are labeled by $\{n+2, \ldots, 2n+1\}$, with one of them corresponding to the root node.

The label assigned to the root is different for different trees, making it impossible to consistently ``match'' nodes that share the same parent: such a procedure would result in a matching over a different set of labels for every tree.
Nonetheless, using these labels, we can define an \emph{embedding} from the set of phylogenetic trees with $n+1$ leaves into the set of matchings $\c{X}_{n+1}$ of size $n+1$ (i.e., on $2(n+1)$ points).
To do this, we introduce an auxiliary label $n+2$ and ``match'' the label corresponding to the root for the given tree with this additional label.
A key feature of this embedding is that a small change in a phylogenetic tree necessarily leads to a small change in the associated matching, as made precise in the next proposition:

\begin{restatable}{proposition}{EmbeddingSingleNniSmallMatchingDistance}
\label{EmbeddingSingleNniSmallMatchingDistance}
In the embedding based on~\citet{richman2025}, any two phylogenetic trees differing by a single NNI move are mapped to matchings whose quotient Cayley graph distance is at most two.
\end{restatable}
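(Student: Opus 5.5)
The plan is to track exactly which pairs of the matching an NNI move changes, and to express that change as a composition of at most two transpositions acting via \eqref{eqn:s2n_action}. In the quotient Cayley graph, two matchings $x, y$ are adjacent precisely when $y = \tau \lacts x$ for a transposition $\tau$. Hence it suffices to exhibit transpositions $\tau_1, \tau_2$ with $\tau_2 \lacts (\tau_1 \lacts x) = x'$, where $x, x'$ are the images of the two trees. The bound then follows from the triangle inequality for the shortest-path distance.

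First I will set up notation for an NNI move as in \Cref{fig:nni-example}. There is an internal node $R$ (possibly the root) with children $A$ and $v$, and $v$ has children $B$ and $C$; after the move, $R$ has children $u$ and $C$, and $u$ has children $A$ and $B$. Here $A, B, C$ also denote the labels of the roots of the respective subtrees. All parent--child relations outside this local configuration are untouched. So in the matching, every pair of siblings away from $R$ and $v$ (or $u$) is unchanged. The pair $\{\text{label of the root},\, n+2\}$ is also unchanged, because the node $R$ keeps its role and, I will argue, its label. Locally, the pairs $\{A, \ell(v)\}$ and $\{B, C\}$ are replaced by $\{\ell'(u), C\}$ and $\{A, B\}$, where $\ell$ and $\ell'$ denote the Richman labelings of the two trees.

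Second, I handle the case $\ell'(u) = \ell(v)$, meaning the new internal node inherits the label of the old one. Then the transposition $\tau_1 = (A\ C)$ maps $\{A,\ell(v)\} \mapsto \{C,\ell(v)\}$ and $\{B,C\} \mapsto \{B,A\}$ and fixes all other pairs, so the distance is $1$. In general the labeling rule of~\citet{richman2025} may assign $u$ a different label than $v$. In that case I will show that the relabeling induced by the move is a single transposition of labels. One transposition exchanges $\ell(v)$ with whatever label $\ell'(u)$ previously denoted. Composing this with $\tau_1$ gives at most two transpositions, and hence distance at most two.

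The main obstacle is precisely this last claim: one must check from the definition of the labeling of~\citet{richman2025} that an NNI move changes the internal-node labels only by swapping at most two of them (those of $v$/$u$ and one other node), and leaves all leaf labels and the root label fixed. I would first try to read this off from the recursive construction of the encoding. Internal labels in that construction are determined by local information, such as an ordering of the children's labels, so only the single re-created node $u$ should be affected. If the rule instead assigns labels globally, I would aim to prove that the relabeling is a transposition by an induction on the labeling order, showing that all nodes processed before $u$, and all nodes after it, receive the same labels in both trees. The case where $R$ is the root needs a separate check that the pair with the auxiliary label $n+2$ is preserved.
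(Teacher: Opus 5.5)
\paragraph{Verdict.} Your skeleton is sound and mirrors the paper: isolate the pairs created at $R$ and at $v$/$u$, realize the change of topology by $(A\ C)$ with $v$'s label carried over to $u$, and absorb any relabeling in at most one more transposition. The gap is the labeling lemma you defer as the ``main obstacle'', and the version you intend to prove is false.

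\paragraph{The false claim.} You assert that $R$ keeps its label, that the root label is fixed, and that the auxiliary pair survives when $R$ is the root, and you expect only $u$ to be affected. But $R$'s children change under the move. In the labeling of \citet{richman2025}, an internal node whose two child subtrees have minimal leaf labels $p,q$ receives the label $-\max(p,q)$; this is what the paper's locality and label-set properties amount to. Let $a,b,c$ be the minimal leaf labels of $A,B,C$. Then $\ell(v)=-\max(b,c)$, $\ell(R)=-\max(a,\min(b,c))$, $\ell'(u)=-\max(a,b)$ and $\ell'(R)=-\max(\min(a,b),c)$. Take the three-leaf tree $R(0,v(1,2))\mapsto R(u(0,1),2)$. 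This gives $\ell(R)=-1$ and $\ell(v)=-2$, but $\ell'(R)=-2$ and $\ell'(u)=-1$. So the root label changes, the pair with the auxiliary label is \emph{not} preserved, and your planned separate check for the root case would fail. This is the generic situation: $\ell'(u)\neq\ell(v)$ unless $b$ is the largest of $a,b,c$. Your plan is also internally inconsistent. If $R$ and all nodes outside the configuration kept their labels, $u$ would be forced to receive $\ell(v)$, since the set of internal labels never changes. So the nontrivial relabeling you anticipate can only come from $R$ itself, which your plan rules out.

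\paragraph{What is missing.} You need the paper's label-set property: a subtree with leaf labels $a_1<\dots<a_k$ carries exactly the internal labels $\{-a_2,\dots,-a_k\}$. Together with locality, it yields three facts:
\begin{enumerate}
\item the labels inside $A$, $B$, $C$ are unchanged;
\item the subtree at $R$ has the same leaf set before and after the move, so the labels of $R$'s sibling and of all ancestors of $R$ are unchanged;
\item $\{\ell'(R),\ell'(u)\}=\{\ell(R),\ell(v)\}$.
\end{enumerate}
Hence the relabeling is either trivial (your distance-one case) or exactly the transposition $(\ell(R)\ \ell(v))$. Applied after $(A\ C)$, this transposition sends $\{C,\ell(v)\}$ to $\{C,\ell'(u)\}$. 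It simultaneously sends the pair $\{\ell(R),P\}$ above $R$ to $\{\ell'(R),P\}$, where $P$ is the sibling's label or the auxiliary label. So two transpositions suffice. With this correction, your ``topology change, then relabel'' decomposition is a tidy repackaging of the paper's two-case argument. There, the root of the modified subtree changes its label precisely in the second case.
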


\begin{proof}
See Appendix~\ref{appendix:ptrees}.
\end{proof}

\begin{table}[t]
\vspace*{-0.75cm}
\centering
\caption{
Comparison of known constructions relating matchings and phylogenetic trees.
}
\label{tab:bijection-properties}
\begin{tabular}{lcc}
\toprule
Method & \makecell{Small matching distance \\ $\Rightarrow$ small tree distance} & \makecell{Small tree distance \\ $\Rightarrow$ small matching distance} \\
\midrule
Bijection of \citet{diaconis1998} & No (Prop.~\ref{prop:matching-small-move-big-nni}) & No (Prop.~\ref{BijectionSingleNniBigMatchingDist}) \\
Embedding of \citet{richman2025}     & No (not invertible) & Yes (Prop.~\ref{EmbeddingSingleNniSmallMatchingDistance}) \\
\bottomrule
\end{tabular}
\end{table}

However, because not every matching of size $n+1$ corresponds to a phylogenetic tree, this mapping lacks invertibility: "small" perturbations of matchings can lead outside the the set of phylogenetic trees.
In fact, the fraction of matchings corresponding to phylogenetic trees in~$\c{X}_{2n}$ is as little as $\frac{4n}{(2n+2)(2n+1)} \leq 1/3$.

As summarized in Table~\ref{tab:bijection-properties}, to the best of our knowledge there is currently no efficient, geometry-preserving bijection between matchings and phylogenetic trees that enables direct transfer of kernels while respecting geometry.
Resolving this open problem would provide a principled pathway for constructing meaningful and interpretable kernels on phylogenetic trees by leveraging the powerful tools developed for matchings.

\bibliography{references}
\bibliographystyle{tmlr}

\newpage

\appendix

\section{Kernel Definition}

\subsection{Stationary Kernels}\label{appendix:stationary_kernels}

In this section, we will prove \Cref{thm:general_stationary_kernel} and \Cref{thm:character_projection}. \Cref{thm:general_stationary_kernel} describes the general form of stationary kernels on matchings in terms of zonal spherical functions, while \Cref{thm:character_projection} describes zonal spherical functions as two-sided averages of group characters.

We will extensively use the machinery of representation theory and Gelfand pairs. While we will give a brief introduction to Gelfand pairs, we also refer the reader to the systematic presentation to be found in~\citep{ceccherinisilberstein2008}. 

Remember that harmonic (Fourier) analysis can be generalized far beyond the classic Euclidean case.
For example, there exists a well-studied theory of harmonic analysis on locally-compact \textit{abelian} groups.
However, this theory becomes more complicated when the group is not abelian.
Intuitively, Gelfand pairs provide a setting in which harmonic analysis on non-abelian groups retains some of the simplicity of the abelian case. 
\begin{definition}
Let $G$ be a finite group, and $H$ a subgroup of $G$. We will say that the pair $(G,H)$ is a Gelfand pair if the algebra of $H$-bi-invariant functions ${L(H\backslash G/H) = \{f\in L(G)\mid f(h_1 g h_2) = f(g) \,\, \forall h_1, h_2 \in H, \forall g \in G\}}$ is commutative with respect to convolution.
\end{definition}

\begin{definition} 
Let $(G, H)$ be a Gelfand pair.
The function $\phi\in L(H\backslash G/H)$ is called a zonal spherical function if it is not identically zero and if
$$ \frac{1}{|H|}\sum_{h \in H} \phi(g_1 h g_2) = \phi(g_1) \phi(g_2) $$
for all $g_1, g_2 \in G$.
\end{definition}

Let $(G, H)$ be a Gelfand pair.
One good thing about zonal spherical functions is that they constitute an orthogonal basis in the space of $H$-bi-invariant functions $L(H \backslash G / H)$.
Additionally, every zonal spherical function $\phi$ can be represented as a diagonal matrix coefficient.
Namely, $\phi(g) = \langle u, \rho(g)u \rangle$ for some unitary irreducible representation $(V, \rho)$ and some $u \in V, \|u\|=1$.
The corresponding representation $(V, \rho)$ is called a spherical representation associated with $\phi$.
Moreover, the zonal spherical function $\phi$ uniquely determines the representation $(V,\rho)$ up to equivalence.

In the context of the present paper, the group $G$ is the symmetric group $S_{2n}$, and the subgroup $H$ is the stabilizer of some matching $x_0 \in X$.
One can show that in this case $(G, H)$ is indeed a Gelfand pair.
Moreover, all spherical representations are known.
Remember that non-equivalent unitary irreducible representations of $S_{2n}$ are enumerated by partitions of $2n$.
Then, spherical representations are exactly those that correspond to a partition of the form $2\rho = (2\rho_1, 2\rho_2, \ldots, 2\rho_s) \vdash 2n$ for some partition $\rho = (\rho_1, \ldots, \rho_s)$ of $n$.

Now, we are going to provide proofs for \Cref{thm:general_stationary_kernel} and \Cref{thm:character_projection}.
We will start with the latter one, which describes zonal spherical functions in terms of two-sided averages of characters.
But before, let us prove the following lemma.

\begin{restatable}{lemma}{AveragingIsOrthoprojeciton}
\label{lemma:AveragingIsOrthoprojeciton}
Let $G$ be a finite group and $H$ a subgroup of $G$.
Define the operator 
$$ \Pr : L(G) \rightarrow L(G) \quad\quad\quad (\Pr f)(g) := \frac{1}{|H|^2} \sum_{h_1, h_2 \in H} f(h_1 g h_2) , $$
then $\Pr$ is the orthogonal projection from $L(G)$ onto $L(H\backslash G/H)$, the subspace of $H$-bi-invariant functions.
\end{restatable}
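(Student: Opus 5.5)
We must show three things about $\Pr$: its image is exactly $L(H\backslash G/H)$, it is idempotent, and it is self-adjoint with respect to the inner product $\langle f_1, f_2\rangle = \sum_{g\in G} f_1(g)\overline{f_2(g)}$ on $L(G)$. A linear operator on a finite-dimensional inner product space that is idempotent and self-adjoint is the orthogonal projection onto its image, so these three facts suffice.

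First, the image lies in the bi-invariant subspace. For $k_1, k_2 \in H$,
\[
(\Pr f)(k_1 g k_2) = \frac{1}{|H|^2}\sum_{h_1,h_2\in H} f\big((h_1k_1)\, g\, (k_2h_2)\big).
\]
The substitutions $h_1 \mapsto h_1k_1$ and $h_2 \mapsto k_2h_2$ are bijections of $H$, so the right-hand side equals $(\Pr f)(g)$.

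Second, $\Pr$ fixes bi-invariant functions. If $f \in L(H\backslash G/H)$, each summand $f(h_1 g h_2)$ equals $f(g)$, and there are $|H|^2$ of them, so $\Pr f = f$. Together with the first step, this gives that the image is exactly $L(H\backslash G/H)$ and that $\Pr^2 = \Pr$.

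Third, $\Pr$ is self-adjoint. We compute
\[
\langle \Pr f_1, f_2\rangle = \frac{1}{|H|^2}\sum_{h_1,h_2}\sum_{g} f_1(h_1 g h_2)\,\overline{f_2(g)}.
\]
In the inner sum over $g$, substitute $g' = h_1 g h_2$, so that $g = h_1^{-1} g' h_2^{-1}$. Then reindex $h_i \mapsto h_i^{-1}$, which is a bijection of $H$ because $H$ is a subgroup. The result is $\langle f_1, \Pr f_2\rangle$, after pulling the averaging into the conjugate, which is harmless since the normalizing constant is real.

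There is no real obstacle here. The only step needing care is the self-adjointness reindexing, which uses that left and right multiplication by group elements are bijections of $G$ and that inversion maps $H$ onto itself.
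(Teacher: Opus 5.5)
Your proof is correct, but it certifies orthogonality by a different route than the paper.

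\textbf{Your route.} You show that $\Pr$ is idempotent with image exactly $L(H\backslash G/H)$. You then prove self-adjointness directly, via the substitution $g' = h_1 g h_2$ followed by $h_i \mapsto h_i^{-1}$. Idempotent plus self-adjoint then gives the orthogonal projection.

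\textbf{The paper's route.} The paper shows that $\Pr$ is the identity on $V = L(H\backslash G/H)$ and vanishes on $V^\perp$. To get the second part, it introduces the fiber count $\alpha(g,k) = |\{(h_1,h_2)\in H^2 : h_1 g h_2 = k\}|$ and shows it is constant for $k$ in the double coset $HgH$. This yields $(\Pr f)(g) = \frac{\alpha(g,g)}{|H|^2}\sum_{k\in HgH} f(k)$, a fixed multiple of $\langle f, \mathbf{1}_{HgH}\rangle$, which is zero for $f \perp V$.

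\textbf{Comparison.} Your argument is more economical: it uses only that multiplication permutes $G$ and that $H$ is closed under inversion. It also carries over directly to compact groups with Haar measure, where the paper's fiber counting would need a measure-theoretic replacement. The price is that you must separately check that the image lies in the bi-invariant subspace. The paper's ``identity on $V$, zero on $V^\perp$'' formulation makes that check unnecessary, since those two conditions already pin down the operator. In exchange, the paper's counting gives a concrete description of $\Pr f$ that your proof leaves implicit. Because $\alpha(g,g)\,|HgH| = |H|^2$, $(\Pr f)(g)$ is simply the plain average of $f$ over the double coset $HgH$.
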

\begin{proof}
Note that this proof almost completely repeats the proof of Lemma 2 and the comments after it in the paper \citet{borovitskiy2023}.
It is easy to see that $\Pr\Pr = \Pr$, and therefore $\Pr$ is a projection.
Also, if $f\in L(H\backslash G/H)$, then $\Pr f = f$.
It remains only to prove that if $f\bot L(H\backslash G /H)$, then $\Pr f = 0$.

For $g, k\in G$ we denote
$$\alpha(g, k) = \left| \left\{ (h_1, h_2) \in H^2 \mid h_1 g h_2 = k \right\} \right|.$$
Let us prove that if $k\in H g H$, then $\alpha(g,k) = \alpha(g,g)$.
Indeed, let $k = \widetilde{h}_1 g \widetilde{h}_2$.
Then,
\begin{align*}
\alpha(g, k) &= \left| \left\{ (h_1, h_2) \in H^2 \mid h_1 g h_2 = k \right\} \right| \\
&= \left| \left\{ (h_1, h_2) \in H^2 \mid h_1 g h_2 = \widetilde{h}_1 g \widetilde{h}_2 \right\} \right| \\
&= \left| \left\{ (h_1, h_2) \in H^2 \mid \widetilde{h}_1^{-1} h_1 g h_2 \widetilde{h}_2^{-1} = g \right\} \right| \\
&= \left| \left\{ (h_1, h_2) \in H^2 \mid h_1 g h_2 = g \right\} \right| .
\end{align*}

Therefore, if $f\bot L (H\backslash G/H)$, then for any $g\in G$,
\begin{align*}
(\Pr f) (g) 
&= \frac{1}{|H|^2} \sum_{h_1, h_2 \in H} f(h_1 g h_2)
= \frac{1}{|H|^2} \sum_{k \in HgH} \alpha(g, k) f(k) \\
&= \frac{1}{|H|^2} \sum_{h \in HgH} \alpha(g, g) f(k) 
= \frac{\alpha(g, g)}{|H|^2} \sum_{k \in HgH} f(k) = 0 ,
\end{align*}
where the latter equality is true due to the fact that $f$ is orthogonal to $\1_{HgH}\in L^2(H\backslash G / H)$.
\end{proof}

\CharacterProjection*
\begin{proof}

According to Lemma \ref{lemma:AveragingIsOrthoprojeciton}, averaging corresponds to orthogonal projection onto $L(H \backslash G / H)$.
As was mentioned above, zonal spherical functions form an orthogonal basis in that space.
So, it is sufficient to compute inner products of characters with zonal spherical functions.

Consider the representation corresponding to $2\rho$.
We know that its zonal spherical function $\phi_\rho$ can be represented as a diagonal matrix coefficient $\phi_\rho(g) = \langle u_1, \rho(g) u_1 \rangle$ for some $u_1, \|u_1\|=1$.
If the vector $u_1$ is extended to an orthonormal basis $\{u_1, \ldots, u_{d_{2\rho}}\}$, then
\[
\chi^{(2\rho)}(g) = \sum_{i=1}^{d_\rho} \langle u_i, \rho(g) u_i \rangle = \phi_\rho(g) + \sum_{i=2}^{d_\rho} \pi_{i, i}^{(2\rho)}(g) ,
\]
and we know that matrix coefficients are orthogonal.
Therefore, orthogonal projection of $\chi^{(2\rho)}$ is clearly $\phi_\rho$.

\end{proof}

\begin{remark}
Similarly, if $\chi^{(\mu)}$ is a character of a unitary irreducible non-spherical representation, then $\chi^{(\mu)}$ is the sum of some matrix coefficients that are orthogonal to zonal spherical functions, because all zonal spherical functions are (diagonal) matrix coefficients.
Therefore, the orthogonal projection of $\chi^{(\mu)}$ is zero.
\end{remark}

Now, let us move to discussing \Cref{thm:general_stationary_kernel}.

\GeneralStationaryKernel*
\begin{proof}
First, let us note that a stationary kernel $k(\bullet H_n, \bullet H_n)$ is uniquely determined by the $H$-bi-invariant function $k(\bullet) = k(\bullet H_n, H_n)$ since $k(\sigma H_n, \pi H_n) = k(\pi^{-1} \sigma H_n, H_n)$.
Requiring the kernel $k(\bullet H_n, \bullet H_n)$ to be positive semi-definite is equivalent to requiring the function $k(\bullet)$ to be positive semi-definite.
Then, we can apply the Bochner--Godement theorem (see, for example, Section~1.1, Theorem~1.2 in~\citet{faraut2007}).
In particular, this states that if $G$ is a finite group and $(G, H)$ is a Gelfand pair, then any positive semi-definite $H$-bi-invariant function can be uniquely represented as a sum of zonal spherical functions with non-negative coefficients.
Since in our case all zonal spherical functions are indexed by partitions, the sum over all zonal spherical functions can be written as a sum over all partitions.
\end{proof}

\StationaryAreBiInvariantProjection*
\begin{proof}
Theorem 2 in \citet{yaglom1961} provides a general form of bi-invariant kernels on $S_{2n}$.
Namely, each bi-invariant kernel on $S_{2n}$ can be uniquely represented as the sum of characters with non-negative coefficients:
$$k(\sigma, \pi) = \sum_{\v{\mu} \vdash 2n} a_{\v{\mu}} \chi^{(\v{\mu})} (\pi^{-1} \sigma).$$
By \Cref{thm:character_projection}, the average of a character will either be a zonal spherical function (if the corresponding representation is spherical) or zero (if the corresponding representation is not spherical).
Therefore, projections of bi-invariant kernels have the form
$$k(\sigma H_n, \pi H_n) = \sum_{\v{\mu} \vdash n} a_{2\v{\mu}} \phi_{\v{\mu}} (\pi^{-1} \sigma),$$
which clearly corresponds to stationary kernels according to \Cref{thm:general_stationary_kernel}.
Moreover, since $a_{\v{\mu}}$ can be arbitrary non-negative coefficients, one can get all possible stationary kernels in this way.
\end{proof}

\subsection{Heat and Matérn Kernels}\label{appendix:heat_and_matern_kernels}

In this section, we provide proofs for \Cref{thm:kernels_character_sum} and \Cref{thm:heat_matern_kernel_matchings}.
In order to do so, we need to show that matrix coefficients are eigenvectors of the Laplacian $\m{\Delta}$.
This is stated in the following lemma.
\begin{lemma}
\label{lemma:MatrixCoefsAreLaplacianEigenvectors} 
Let $G$ be a finite group, and $W\subseteq G$ a subset such that $W$ generates $G$, $W^{-1}=W, e\notin W$, and $gWg^{-1} = W$ for any $g \in G$.
Let $\Gamma(G,W)$ be the corresponding Cayley graph and $\m{\Delta}$ its Laplacian.
Then, there is an orthonormal basis of eigenvectors of $\m{\Delta}$, consisting of properly normalized matrix coefficients of non-equivalent irreducible unitary representations $\sqrt{\frac{d_\rho}{|G|}}\pi_{i, j}^{(\rho)}$.
Moreover, within one irreducible representation $\rho$, all matrix coefficients have the same eigenvalue $\lambda_\rho$.
\end{lemma}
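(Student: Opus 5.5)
The plan is to compute the action of the Laplacian on matrix coefficients directly, using the fact that $W$ is closed under conjugation. We use the convention that edges of $\Gamma(G,W)$ are $(g, wg)$ with $w\in W$. Since $W=W^{-1}$, the graph is undirected and $|W|$-regular. Hence
\[
(\m{\Delta} f)(g) = |W| f(g) - \sum_{w\in W} f(wg).
\]
Fix a unitary irreducible representation $(V,\pi)$ with an orthonormal basis, and let $\pi_{i,j}(g) = \langle e_i, \pi(g) e_j\rangle$ be its matrix coefficients. Then
\[
\sum_{w\in W}\pi_{i,j}(wg) = \langle e_i, T\,\pi(g) e_j\rangle, \qquad T := \sum_{w\in W}\pi(w).
\]

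The key step is to show that $T$ is a scalar. Because $hWh^{-1}=W$ for every $h\in G$, reindexing the sum gives $\pi(h)T\pi(h)^{-1} = \sum_{w}\pi(hwh^{-1}) = T$. So $T$ commutes with the irreducible action, and Schur's lemma gives $T = c_\pi I$. Taking traces identifies the constant: $c_\pi = \frac{1}{d_\pi}\sum_{w\in W}\chi^{(\pi)}(w)$, which is real because $W=W^{-1}$. It follows that every $\pi_{i,j}$ satisfies
\[
\m{\Delta}\pi_{i,j} = \lambda_\pi \pi_{i,j}, \qquad \lambda_\pi := |W| - c_\pi.
\]
This value depends only on $\pi$, which gives the ``moreover'' claim. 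The fact that $e\notin W$ only ensures there are no loops and plays no role here.

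For the basis claim, we invoke the Schur orthogonality relations over inequivalent irreducibles $\pi,\pi'$:
\[
\sum_{g\in G}\pi_{i,j}(g)\overline{\pi'_{k,l}(g)} = \frac{|G|}{d_\pi}\,\delta_{\pi\pi'}\delta_{ik}\delta_{jl}.
\]
This shows that the functions $\sqrt{d_\pi/|G|}\,\pi_{i,j}$ are orthonormal. Their number is $\sum_\pi d_\pi^2 = |G| = \dim L(G)$, so they form an orthonormal basis of eigenvectors of $\m{\Delta}$.

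The only delicate points are the reindexing-plus-Schur step and keeping the left-multiplication convention consistent with the definition of the Cayley graph. If edges are instead $(g, gw)$, then $T$ acts on the right, and the conjugation-invariance of $W$ again makes it scalar. A minor caveat is that the basis is complex-valued; for real kernels one either works over $\mathbb{C}$ or, for $S_{2n}$, uses real orthogonal representations, such as Young's orthogonal form.
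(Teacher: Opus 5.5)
Your proof is correct and is essentially the paper's argument. The paper proves that $\m{\Delta}$ commutes with left and right translations. From this it deduces $\pi^{(\rho)}(g)\psi^{(\rho)}(e)=\psi^{(\rho)}(g)=\psi^{(\rho)}(e)\pi^{(\rho)}(g)$ for $\psi^{(\rho)}=\m{\Delta}\pi^{(\rho)}$, and applies Schur's lemma to $\psi^{(\rho)}(e)=|W|I-\sum_{w\in W}\pi^{(\rho)}(w)$, which is exactly your $|W|I-T$; orthonormality is likewise taken from Schur orthogonality.

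Your direct reindexing is an unpacked version of the translation-commutation step. It also gives, as a small bonus, the explicit eigenvalue $\lambda_\pi=|W|-\frac{1}{d_\pi}\sum_{w\in W}\chi^{(\pi)}(w)$, which the paper derives separately later, for transpositions, via an inner-product computation with characters.
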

\begin{proof}

Note that this proof almost completely repeats the proof of result 13 in the paper \cite{azangulov2024a}, with the only difference that we consider finite groups instead of compact Lie groups.

The fact that $\sqrt{\frac{d_\rho}{|G|}} \pi_{i,j}^{(\rho)}$ form an orthonormal basis is known from the general theory of representations.
Therefore, it is enough for us to prove that for all $\rho$ there exists $\lambda_\rho \in \mathbb{R}$ such that for all $1 \leq i, j \leq d_\rho$, $\m{\Delta} \pi_{i, j}^{(\rho)} = \lambda_\rho \pi_{i, j}^{(\rho)}$.
Recall that each element of the group $g\in G$ defines left and right shift operators on the space of functions $L(G)$:
$$L_g, R_g: L(G) \rightarrow L(G) \quad\quad\quad (L_g f)(h) = f(g^{-1} h) \quad\quad\quad (R_g f)(h) = f(h g).$$
Note that these shifts commute with the Laplacian $\m{\Delta}$:
\begingroup
\allowdisplaybreaks
\begin{align*}
(L_g \m{\Delta} f)(h) 
&= (\m{\Delta} f)(g^{-1} h) \\
&= \sum_{w \in W} \left( f(g^{-1}h) - f(wg^{-1}h) \right) \\
&= \sum_{w \in W} \left( f(g^{-1}h) - f(g^{-1} (gwg^{-1}) h) \right) \\
&= \sum_{w \in W} \left( f(g^{-1}h) - f(g^{-1} w h) \right) \\
&= \sum_{w \in W} \left( (L_g f)(h) - (L_g f)(w h) \right) \\
&= (\m{\Delta} L_g f)(h) , \\
(R_g \m{\Delta} f)(h) &= (\m{\Delta} f)(hg) \\
&= \sum_{w \in W} \left( f(hg) - f(whg) \right) \\
&= \sum_{w \in W} \left( (R_g f)(h) - (R_g f)(wh) \right) \\
&= (\m{\Delta} R_g f)(h) .
\end{align*}
\endgroup

Let us define $ \psi^{(\rho)}_{j, k} := \m{\Delta} \pi^{(\rho)}_{j, k}$.
Then, $ (\m{\Delta} L_g \pi^{(\rho)}_{j, k})(h) = (L_g \psi^{(\rho)}_{j, k})(h) = \psi^{(\rho)}_{j, k} (g^{-1} h) $.
And at the same time, $ (\m{\Delta} L_g \pi^{(\rho)}_{j, k})(h) = \left( \m{\Delta} \sum_l \pi^{(\rho)}_{j, l} (g^{-1}) \pi^{(\rho)}_{l, k} \right)(h) = \sum_l \pi^{(\rho)}_{j, l} (g^{-1}) \psi^{(\rho)}_{l, k}(h) $,
which means that $ \psi^{(\rho)}_{j, k} (g^{-1} h) = \sum_l \pi^{(\rho)}_{j, l} (g^{-1}) \psi^{(\rho)}_{l, k}(h) $.
Denoting $ \psi^{(\rho)} (h) = \left[ \psi^{(\rho)}_{i, j} (h) \right]_{i, j} , $ the obtained result can be written in matrix form: $ \psi^{(\rho)} (gh) = \pi^{(\rho)}(g) \psi^{(\rho)}(h) $.
By doing the same calculations for $R_g$, we get $ \psi^{(\rho)} (hg) = \psi^{(\rho)}(h) \pi^{(\rho)}(g) $.
Then, substituting $h=e$, we have $ \pi^{(\rho)} (g) \psi^{(\rho)} (e) = \psi^{(\rho)} (g) = \psi^{(\rho)} (e) \pi^{(\rho)} (g)$.
So, according to Schur's lemma, $ \psi^{(\rho)}(e) = \lambda_\rho \m{I}$, a scalar multiple of the $d_\rho \times d_\rho$ identity matrix.
Therefore, $ \psi^{(\rho)}(g) = \lambda_\rho \pi^{(\rho)} (g) $, which proves the required statement.
\end{proof}

\KernelsAreCharactersSum*
\begin{proof}
By \Cref{lemma:MatrixCoefsAreLaplacianEigenvectors}, properly normalized matrix coefficients $\sqrt{\frac{d_{\v{\rho}}}{|S_{2n}|}} \pi_{i, j}^{(\v{\rho})}$ constitute an orthonormal basis of Laplacian eigenvectors.
In \Cref{eqn:s2n_eigenvector_expansion}, one can replace the orthonormal eigenbasis $\{u_\pi\}_{\pi \in S_{2n}}$ with any other orthonormal eigenbasis, for example, the one consisting of properly normalized matrix coefficients.
Then,
\begingroup
\allowdisplaybreaks
\begin{align*}
k(\sigma, \sigma') 
&= \frac{1}{C(\Phi)} \sum_{\pi \in S_{2n}} \Phi(\lambda_\pi) u_\pi(\sigma) \overline{u_\pi (\sigma')}
\\ &= \frac{1}{C(\Phi)} \sum_{\v{\rho} \vdash 2n} \Phi(\lambda_{\v{\rho}}) \sum_{1 \leq i, j \leq d_{\v{\rho}}} \left( \sqrt{\frac{d_{\v{\rho}}}{|S_{2n}|}} \pi_{i, j}^{(\v{\rho})} (\sigma) \right) \left( \sqrt{\frac{d_{\v{\rho}}}{|S_{2n}|}} \overline{\pi_{i, j}^{(\v{\rho})} (\sigma')} \right)
\\ &= \frac{1}{C(\Phi)} \sum_{\v{\rho} \vdash 2n} \Phi(\lambda_{\v{\rho}}) \frac{d_{\v{\rho}}}{|S_{2n}|} \sum_{1 \leq i, j \leq d_{\v{\rho}}} \pi_{i, j}^{(\v{\rho})} (\sigma) \pi_{j, i}^{(\v{\rho})} ((\sigma')^{-1})
\\ &= \frac{1}{C(\Phi)} \sum_{\v{\rho} \vdash 2n} \Phi(\lambda_{\v{\rho}}) \frac{d_{\v{\rho}}}{|S_{2n}|} \sum_{1 \leq i \leq d_{\v{\rho}}} \pi_{i, i}^{(\v{\rho})} (\sigma (\sigma')^{-1}) 
\\ &= \frac{1}{C(\Phi)} \sum_{\v{\rho} \vdash 2n} \Phi(\lambda_{\v{\rho}}) \frac{d_{\v{\rho}}}{|S_{2n}|} \chi^{(\v{\rho})} (\sigma (\sigma')^{-1}) 
\\ &= \frac{1}{C(\Phi)} \sum_{\v{\rho} \vdash 2n} \Phi(\lambda_{\v{\rho}}) \frac{d_{\v{\rho}}}{|S_{2n}|} \chi^{(\v{\rho})} ((\sigma')^{-1} \sigma) .
\end{align*}
and the last expression is exactly the one in the proposition.
\end{proof}
\endgroup

\HeatMaternKernelMatchings*
\begin{proof}
By definition of the projection,
\begin{align*}
k_{\nu, \kappa}(\sigma H_n, \sigma' H_n) 
&= \frac{1}{|H|^2} \sum_{h_1, h_2 \in H_n} k_{\nu, \kappa} (\sigma h_1, \sigma' h_2) 
\intertext{By \Cref{thm:kernels_character_sum}, one may expand $k_{\nu, \kappa}$ into a character sum}
\\ &= \frac{1}{|H|^2} \sum_{h_1, h_2 \in H_n} \frac{1}{C_{\nu, \kappa}} \sum_{\v{\rho} \vdash 2n} \Phi(\lambda_{\v{\rho}}) \frac{d_{\v{\rho}}}{|S_{2n}|} \chi^{(\v{\rho})} ( h_2^{-1} (\sigma')^{-1} \sigma h_1)
\\ &= \frac{1}{C_{\nu, \kappa}} \sum_{\v{\rho} \vdash 2n} \Phi(\lambda_{\v{\rho}}) \frac{d_{\v{\rho}}}{|S_{2n}|} \frac{1}{|H|^2} \sum_{h_1, h_2 \in H_n} \chi^{(\v{\rho})} ( h_2 ((\sigma')^{-1} \sigma) h_1)
\intertext{Finally, by \Cref{thm:character_projection}, the two-sided average of a character will be either a zonal spherical function or zero}
\\ &= \frac{1}{C_{\nu, \kappa}} \sum_{\v{\rho} \vdash n} \Phi(\lambda_{2\v{\rho}}) \frac{d_{2\v{\rho}}}{|S_{2n}|}  \phi_{\v{\rho}} ((\sigma')^{-1} \sigma) .
\end{align*}
as required in the proposition.
\end{proof}

\subsection{Quotient Cayley Graph}\label{appendix:quotient}

In this section, we will define quotient Cayley graph and discuss two related questions.
First, we will prove that defining the kernel on quotient Cayley graph is equivalent to averaging the kernel on Cayley graph, therefore justifying our definition through averaging.
Second, we will compute degree of the corresponding quotient Cayley graph, which we will further use to employ degree correction for hyperparameter selection.

\begin{definition}
Let $\Gamma(S_{2n}, W)$ be Cayley graph, where the generation set is given by the set of all transpositions $W$.
Then, (weighted) quotient Cayley graph $\Gamma_{/H}(S_{2n}, W)$ is defined as follows.
Its nodes are given by cosets $\sigma H$, and edge connects two cosets $\sigma H, \pi H$ if and only if there was at least one edge between nodes $\sigma h_1$ and $\pi h_2$ in the original Cayley graph for some $h_1, h_2 \in H$.
The weight of an edge between the cosets $\sigma H$ and $\pi H$ is defined as the number of edges connecting these two cosets in the original Cayley graph.
\end{definition}

Since the set of matchings $\mathcal{X}_n$ can be represented as a homogeneous space $S_{2n} / H$, nodes of the quotient Cayley graph can be associated with matchings.
Therefore, the quotient Cayley graph provides a natural geometry for the set of matchings induced from the geometry on $S_{2n}$.

\paragraph{Equivalent ways to define kernels}

Remember that \Cref{eqn:graph_kernels} allows to define kernels on arbitrary graphs.
Then, there are two ways to define kernel on the quotient Cayley graph $\Gamma_{/H}(S_{2n}, W)$.
First is to apply \Cref{eqn:graph_kernels} directly to the quotient Cayley graph.
Second is to use it for the original Cayley graph, and then apply averaging, as defined in \Cref{eqn:projection}.
Here, in \Cref{thm:two_quotient_kernels} we will prove that these ways are equivalent.
For the proof, we will follow proof from \cite{borovitskiy2023} with minor changes.
Specifically, we will adopt the proof of the theorem 5 and all needed prerequisites.

\begin{definition}
Consider an unweighted undirected graph $G = (V, E)$ with adjacency matrix $\m{A}_G$.
For a set $C \subseteq V$ and a vertex $v \in V$ define $deg(v,C)= | \{v' \in C:(v,v')\in E \} |$.
A partition $V = \bigcup_{i=1}^m V_i$ is called equitable if and only if 

$$ deg(v,V_i) = deg(v',V_i) \text{ for all } v,v' \in V_j \text{ and all } j \in \{1,\ldots,m\} $$.
\end{definition}

One can proof that in our case the partition, consisting of cosets is equitable.

\begin{proposition}
Let $\Gamma(S_{2n}, W)$ be the Cayley graph.
Then, the set of cosets $\{ gH \}$ is equitable partition.
\end{proposition}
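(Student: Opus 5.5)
We need to show that for fixed cosets $\sigma H$ and $\tau H$, the number $\deg(v, \tau H)$ is the same for every $v \in \sigma H$. Recall that in the Cayley graph $\Gamma(S_{2n}, W)$ the edges are pairs $(g, wg)$ with $w \in W$. The set $W$ of all transpositions satisfies $W^{-1} = W$ and is closed under conjugation, so the graph is undirected.

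The plan is to exhibit graph automorphisms that move any element of $\sigma H$ to any other while mapping the coset $\tau H$ onto itself. The natural candidates are the right multiplications $R_h : g \mapsto gh$ for $h \in H$.

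\textbf{Step 1.} We check that $R_h$ is a graph automorphism. It is a bijection of $S_{2n}$, and it maps the edge $(g, wg)$ to $(gh, wgh)$. This is again an edge, because $wgh = w(gh)$. No conjugation invariance is needed here: right multiplication always commutes with the left-generated edge structure.

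\textbf{Step 2.} We check that $R_h$ preserves every coset setwise, since $(\tau H)h = \tau H$. Consequently $R_h$ restricts to a bijection between the neighbours of $v$ lying in $\tau H$ and the neighbours of $vh$ lying in $\tau H$.

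\textbf{Step 3.} We conclude. Any two elements $v, v' \in \sigma H$ satisfy $v' = v h$ with $h = v^{-1}v' \in H$. Hence $\deg(v, \tau H) = \deg(v', \tau H)$ for all cosets $\sigma H$ and $\tau H$, which is exactly the definition of an equitable partition.

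The main point to be careful about is the convention for edges. If the edges were instead $(g, gw)$, right multiplication would not obviously be an automorphism. In that case we would use the conjugation invariance $hWh^{-1} = W$: the identity $gwh = (gh)(h^{-1}wh)$ shows that $R_h$ still maps edges to edges. This is the only place where the choice of all transpositions as generators matters, so we will mention it as a remark covering both conventions.
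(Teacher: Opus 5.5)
Your proof is correct and follows essentially the same route as the paper: right multiplication by $h \in H$ sends an edge $(g,k)$ to the edge $(gh,kh)$, with $kh$ in the same coset as $k$, so degrees into each coset agree across a coset. You phrase this explicitly as a coset-preserving graph automorphism, and your remark is accurate that conjugation invariance of $W$ is not needed under the left-multiplication edge convention; the paper invokes $Wk = kW$, but it is superfluous there.
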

\begin{proof}
Follows proof of Proposition 1 from \cite{borovitskiy2023}.

If $g, g' \in gH$, then there exists $h: g' = gh$.
If there is edge $(g, k) \in \Gamma (S_{2n}, W)$, then $g \in Wk = kW$, therefore exists $w \in W: g = wk$.
Therefore, $g' = gh = wkh$, so there is the edge $(g', kh) \in \Gamma (S_{2n}, W)$.
And vice versa, if there is edge $(g', kh) \in \Gamma (S_{2n}, W)$, then there exists $w \in W: g' = wkh$, and therefore $g = wk$, so there is edge $(g, k) \in \Gamma (S_{2n}, W)$.
\end{proof}

In this case, there is good basis of $L^2(V)$, consisting of eigenfunctions of adjacency matrix that are either constant on cosets or average to zero.
Namely, the following theorem holds.

\begin{theorem}{(Theorem 7 from \cite{borovitskiy2023})}\label{theorem:OrthonormalBasisOfEquitableFunctions}
Consider an equitable partition $V= \bigcup_{i=1}^{m} V_i$.
There is orthonormal basis $\{ f_j \}_{j=1}^{|V|}$ of $L^2(V)$ consisting of eigenfunctions of $\m{A}_G$ split in two groups $\{1, \ldots, |V|\} = \Lambda_1 \cap \Lambda_2$ such that
$$ f_j |_{V_i} = c_{ij} \text{ if } j \in \Lambda_1, \qquad \sum_{v \in V_i} f_j(v) = 0 \text{ if } j \in \Lambda_2, \qquad 1 \leq i \leq m . $$
Moreover, $|\Lambda_1| = m$ and $|\Lambda_2| = |V| - m$.
In words, there are $m$ functions in $\Lambda_1$, all of which are piecewise-constant on the partition, and any function in $\Lambda_2$ has zero average in each part of the partition.
\end{theorem}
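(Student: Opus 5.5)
The approach is to prove the statement for the adjacency matrix $\m{A}_G$ by decomposing $L^2(V)$ into two complementary subspaces, each invariant under $\m{A}_G$, and diagonalizing $\m{A}_G$ separately on each. The first subspace is $\mathcal{P} = \{ f : f|_{V_i} \text{ constant for all } i\}$, the functions that are piecewise constant on the partition; it has dimension $m$. The second is its orthogonal complement $\mathcal{P}^\perp = \{ f : \sum_{v \in V_i} f(v) = 0 \text{ for all } i\}$, of dimension $|V| - m$.

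\textbf{Step 1.} Show that $\m{A}_G$ preserves $\mathcal{P}$, and this is exactly where equitability is used. Take $f = \sum_i c_i \1_{V_i}$ and $v \in V_j$. Then
\[
(\m{A}_G f)(v) = \sum_i c_i \deg(v, V_i).
\]
By the equitable partition assumption, $\deg(v,V_i)$ depends only on $j$ and not on the choice of $v \in V_j$. So $\m{A}_G f$ is constant on each $V_j$, and hence lies in $\mathcal{P}$.

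\textbf{Step 2.} Show that $\m{A}_G$ also preserves $\mathcal{P}^\perp$. Because $\m{A}_G$ is symmetric, the orthogonal complement of an invariant subspace is invariant. Explicitly, if $g \perp \mathcal{P}$ and $f \in \mathcal{P}$, then
\[
\langle \m{A}_G g, f\rangle = \langle g, \m{A}_G f\rangle = 0,
\]
since $\m{A}_G f \in \mathcal{P}$ by Step 1.

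\textbf{Step 3.} Diagonalize on each piece. The restriction of a symmetric operator to an invariant subspace is still self-adjoint on that subspace with the inherited inner product. By the spectral theorem, pick an orthonormal eigenbasis of $\m{A}_G|_{\mathcal{P}}$, which has $m$ elements, and call its index set $\Lambda_1$. Pick an orthonormal eigenbasis of $\m{A}_G|_{\mathcal{P}^\perp}$, which has $|V|-m$ elements, and call its index set $\Lambda_2$. The union is an orthonormal basis of $L^2(V) = \mathcal{P} \oplus \mathcal{P}^\perp$ consisting of eigenfunctions of $\m{A}_G$. Functions indexed by $\Lambda_1$ are constant on each $V_i$. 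Functions indexed by $\Lambda_2$ are orthogonal to each indicator $\1_{V_i}$, which is precisely the condition $\sum_{v\in V_i} f_j(v) = 0$.

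The index sets must partition $\{1,\ldots,|V|\}$, so the $\cap$ in the statement should be read as a disjoint union. The only step requiring care is Step 1, and the rest is routine linear algebra.
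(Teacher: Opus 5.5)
Your proof is correct. Equitability makes the span of the indicators $\mathbf{1}_{V_i}$ invariant under the adjacency matrix, symmetry then makes its orthogonal complement (the functions with zero sum on every part) invariant as well, and the spectral theorem on each piece gives the required basis; you are also right that $\Lambda_1 \cap \Lambda_2$ is a typo for a disjoint union. The paper gives no argument of its own here, only deferring to \citet{borovitskiy2023}, and your invariant-subspace argument is the standard proof of this fact.
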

\begin{proof}
See \cite{borovitskiy2023}
\end{proof}

\begin{proposition}
Let $\Pr$ be defined as following
$$(\Pr f) (g) = \frac{1}{|H|}\sum_{h \in H} f(gh) \qquad \Pr: L^2(S_{2n}) \rightarrow L^2(S_{2n}) , $$
then $\Pr$ is orthogonal projection onto space of $H$-invariant functions $L^2(\mathcal{X}_n)$.
\end{proposition}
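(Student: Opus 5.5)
The claim reduces to three facts: $\Pr$ is idempotent, its image is exactly the space of right-$H$-invariant functions, and it is self-adjoint with respect to the standard inner product $\langle f_1, f_2\rangle = \sum_{g \in S_{2n}} f_1(g)\overline{f_2(g)}$. A self-adjoint idempotent is the orthogonal projection onto its image, so this suffices. Here $L^2(\mathcal{X}_n)$ is identified, via the lifting in \Cref{eqn:lifting}, with the subspace of functions on $S_{2n}$ that are constant on left cosets $gH$. Equivalently, these are the functions with $f(gh)=f(g)$ for all $g \in S_{2n}$ and $h\in H$.

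\emph{Image and idempotence.} Fix $f$ and $h' \in H$. The substitution $h \mapsto h' h$ is a bijection of $H$, so
\[
(\Pr f)(gh') = \frac{1}{|H|}\sum_{h\in H} f(g h' h) = \frac{1}{|H|}\sum_{h \in H} f(gh) = (\Pr f)(g).
\]
Hence $\Pr f$ is $H$-invariant. Conversely, if $f$ is $H$-invariant, every term in the average equals $f(g)$, so $\Pr f = f$. Together these give $\Pr^2=\Pr$ and show that the image of $\Pr$ is exactly $L^2(\mathcal{X}_n)$.

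\emph{Self-adjointness.} Expand the inner product and change variables $g \mapsto g h^{-1}$ in the sum over $S_{2n}$:
\[
\langle \Pr f_1, f_2\rangle = \frac{1}{|H|}\sum_{h\in H}\sum_{g} f_1(gh)\overline{f_2(g)} = \frac{1}{|H|}\sum_{h\in H}\sum_{g} f_1(g)\overline{f_2(gh^{-1})}.
\]
Next reindex $h \mapsto h^{-1}$ over $H$, which is valid because $H$ is closed under inverses. The right-hand side becomes $\langle f_1, \Pr f_2\rangle$.

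As an alternative to self-adjointness, one can mirror the proof of \Cref{lemma:AveragingIsOrthoprojeciton}. Suppose $f \perp L^2(\mathcal{X}_n)$. Then $f$ is orthogonal to the indicator $\1_{gH}$ of each coset, which gives $(\Pr f)(g) = \frac{1}{|H|}\sum_{k\in gH} f(k) = 0$. None of these steps presents a real obstacle. The only points needing care are that both reindexings are bijections of the respective groups, and that the invariance convention (right multiplication by $H$) matches the identification $\mathcal{X}_n\cong S_{2n}/H$.
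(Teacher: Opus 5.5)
Your proof is correct and essentially matches the paper's. The paper checks that $\Pr$ fixes $L^2(\mathcal{X}_n)$ and then, exactly as in your alternative paragraph, shows $\Pr f = 0$ for $f \perp L^2(\mathcal{X}_n)$, since $(\Pr f)(g)$ is $\frac{1}{|H|}$ times the inner product of $f$ with the indicator of $gH$. Your main self-adjointness argument (substituting $g \mapsto gh^{-1}$, then $h \mapsto h^{-1}$) is an equally valid replacement for that step.
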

\begin{proof}
First, it is easy to see that $\Pr \Pr f = \Pr f$, and if $f \in L^2(\mathcal{X}_n)$, then $\Pr f = f$.

So, the only thing we need to proof is that if $f \in L^2(\mathcal{X}_n)^\perp$, then $\Pr f = 0$.
Indeed, let $f \in L^2(X)^\perp$, then
$$
(\Pr f)(g) = \frac{1}{|H|} \sum_{h \in H} f(gh) = \frac{1}{|H|} \sum_{g' \in gH} f(g') = \frac{1}{|H|} \langle f , 1_{gH} \rangle = 0 .
$$
\end{proof}

\begin{theorem}{(Theorem 5 from \cite{borovitskiy2023}})\label{thm:two_quotient_kernels}
Consider the $\Phi$-kernel $k$, i.e., kernel of the form like in \Cref{eqn:graph_kernels}, induced by the symmetric normalized Laplacian.
Consider it's $H$-invariant version $k_{/H}$ obtained through averaging.
Then, $k_{/H}(g_1, g_2) = \psi(g_1 H) \psi (g_2 H) k_{\Phi} (g_1 H, g_2 H)$, where $k_{\Phi}$ is the $\Phi$-kernel on the quotient graph $\Gamma_{/H}$ with the same function $\Phi$.
Moreover, $\psi(g H) = |gH|^{-1/2} = |H|^{-1/2}$.
\end{theorem}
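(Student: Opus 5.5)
We will prove \Cref{thm:two_quotient_kernels} by working with the symmetric normalized Laplacian $\m{\Delta} = \m{I} - \m{D}^{-1/2}\m{A}\m{D}^{-1/2}$ on the Cayley graph $\Gamma(S_{2n},W)$ and on the weighted quotient graph $\Gamma_{/H}$. Since the Cayley graph is regular of degree $|W|$, the Laplacian on $S_{2n}$ is $\m{I} - \m{A}/|W|$. Its eigenfunctions therefore coincide with those of $\m{A}$. We then invoke \Cref{theorem:OrthonormalBasisOfEquitableFunctions} for the equitable partition into cosets $\{gH\}$, whose equitability we established in the preceding proposition. This gives an orthonormal eigenbasis $\{f_j\}$ split into $\Lambda_1$, consisting of functions constant on cosets, and $\Lambda_2$, consisting of functions with zero average on each coset. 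Writing $k(g_1,g_2) = \frac{1}{C}\sum_j \Phi(\lambda_j) f_j(g_1) f_j(g_2)$ in this basis, we apply the averaging $k_{/H}(g_1,g_2) = \frac{1}{|H|^2}\sum_{h_1,h_2} k(g_1h_1,g_2h_2)$. This acts as $\Pr\otimes\Pr$, where $\Pr$ is the one-sided projection from the last proposition. Each $f_j$ with $j\in\Lambda_2$ is annihilated and each $f_j$ with $j\in\Lambda_1$ is fixed, so $k_{/H}$ retains only the $\Lambda_1$ terms.

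The second step identifies the $\Lambda_1$ eigenpairs with eigenpairs of the quotient Laplacian. For $f\in\Lambda_1$ with $f|_{gH} = c(gH)$, we compute
\[
(\m{A}f)(g) = \sum_{\pi H} \deg(g,\pi H)\, c(\pi H).
\]
Here $\deg(g,\pi H)$ depends only on the cosets, by equitability. Next we relate $\deg(g,\pi H)$ to the quotient edge weight $w(gH,\pi H)$, which counts all edges between the two cosets: we have $w(gH,\pi H) = |H|\deg(g,\pi H)$. The quotient vertex degree is then $D_{/H}(gH) = |H|\,|W|$. Hence the symmetric normalized quotient Laplacian $\m{I} - \m{D}_{/H}^{-1/2}\m{W}\m{D}_{/H}^{-1/2}$ has entries $\delta - \deg(g,\pi H)/|W|$. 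These match the action of the Cayley Laplacian on coset-constant functions. So $c$ is an eigenvector of the quotient Laplacian with the same eigenvalue $\lambda_j$. Conversely, every quotient eigenvector lifts to a coset-constant eigenfunction, and the dimension count $|\Lambda_1| = m$ shows this correspondence is a bijection.

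The third step handles normalization. We have $\|f_j\|^2_{L^2(S_{2n})} = \sum_{gH} |H|\, c_j(gH)^2 = 1$, so the rescaled vectors $\tilde c_j = |H|^{1/2} c_j$ are orthonormal on the quotient. It follows that
\[
f_j(g_1)f_j(g_2) = |H|^{-1}\,\tilde c_j(g_1H)\,\tilde c_j(g_2H),
\]
and substituting gives $k_{/H}(g_1,g_2) = |H|^{-1/2}|H|^{-1/2} k_\Phi(g_1H,g_2H)$, so $\psi(gH) = |H|^{-1/2}$. The normalization constant $C$ must be matched, or absorbed, consistently on both sides.

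The main obstacle is the bookkeeping in the second step. The quotient graph carries weights, and loops may arise from edges inside a coset. We must check that the normalized Laplacian of the weighted quotient, with loops counted in the degree, is exactly the restriction of the Cayley Laplacian to coset-constant functions, and that no spurious factor appears. We will verify this first by the explicit degree computation $D_{/H} = |H||W|$ above.
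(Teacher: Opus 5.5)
Your proposal is correct and follows essentially the same route as the paper. Both arguments expand in the equitable eigenbasis from \Cref{theorem:OrthonormalBasisOfEquitableFunctions}, remove the zero-average part by averaging, and then identify the coset-constant eigenfunctions with eigenvectors of the quotient normalized Laplacian via $\m{D}_{/H} = |H|\,|W|\,\m{I}$ and the rescaling $u_j = |H|^{-1/2}\m{S}^T f_j$, which is exactly your $\tilde c_j$. The differences are cosmetic: you compute in coordinates with $\deg(g,\pi H)$ where the paper writes $\m{A}_{/H} = \m{S}^T\m{A}\m{S}$, and you spell out the completeness (dimension-count) and normalization-constant points that the paper leaves implicit.
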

\begin{proof}

Using the Theorem \ref{theorem:OrthonormalBasisOfEquitableFunctions}, one can obtain an orthonormal basis $\{f_j\}_{j=1}^{|S_{2n}|}$ of eigenfunctions of adjacency matrix $\m{A}$ such that $f_1, \ldots, f_{|\mathcal{X}_{n}|}$ are constant on all cosets and the rest average to zero over all cosets.
Let $\lambda_j$ be the eigenvalue of $\m{A}$, corresponding to $f_j$.
Then, the $f_j$ is eigenvector of normalized symmetric Laplacian $\Lambda_{sym}$ with eigenvalue $\lambda_j^{sym} = 1 - \lambda_j/|W|$, where $W$ is the generating set of Cayley graph.
Then, 
$$ k(g_1, g_2) = \sum_{j=1}^{|\mathcal{X}_{n}|} \Phi (\lambda_j^{sym}) f_j (g_1) f_j (g_2) + \sum_{j=|\mathcal{X}_{n}|+1}^{|S_{2n}|} \Phi (\lambda_j^{sym}) f_j (g_1) f_j (g_2) . $$
Remember that $\Pr$ is orthogonal projector, and therefore
\begin{align*}
k_{/H}(g_1, g_2) &= \sum_{j=1}^{|X|} \Phi (\lambda_j^{sym}) (\Pr f_j) (g_1) (\Pr f_j) (g_2) + \sum_{j=|\mathcal{X}_{n}|+1}^{|S_{2n}|} \Phi (\lambda_j^{sym}) (\Pr f_j) (g_1) (\Pr f_j) (g_2) \\
&= \sum_{j=1}^{|\mathcal{X}_{n}|} \Phi (\lambda_j^{sym}) f_j (g_1) f_j (g_2) .
\end{align*}

By $\m{A}_{/H}$ we will denote adjacency matrix of quotient Cayley graph $\Gamma_{/H} (S_{2n}, W)$.
By $\m{S}$ we will denote the $|S_{2n}| \times |\mathcal{X}_n|$ matrix with $\m{S}_{g_1, g_2H} = 1$ if $g_1 \in g_2 H$ and $\m{S}_{g_1, g_2H} = 0$ otherwise.
It is easy to see that $\m{A}_{/H} = \m{S}^T \m{A} \m{S}$.

Computing the corresponding degree matrix yields
$$(\m{D}_{/H})_{g_1 H, g_1 H} = \sum_{g_2 H \in X} (\m{A}_{/H})_{g_1 H, g_2 H} = \sum_{g_2 H \in X} \sum_{g_1' \in g_1 H} \sum_{g_2' \in g_2 H} \m{A}_{g_1', g_2'} = \sum_{g_1' \in g_1 H} \sum_{g_2' \in S_{2n}} \m{A}_{g_1', g_2'} = |H| \cdot |W| , $$
so $\m{D}_{/H} = |H| \cdot |W| \cdot \m{I}$.
Moreover, $\frac{1}{|H|}\m{S}\m{S}^T f_j = f_j$ for $j=1, \ldots, |\mathcal{X}_n|$.
Define $u_j = |H|^{-1/2} \m{S}^T f_j \in L^2(\mathcal{X}_n)$ and $\m{\Delta}_{sym, /H} = \m{I} - \m{D}_{/H}^{-1/2} \m{A}_{/H} \m{D}_{/H}^{-1/2}$.
Then,

\begin{align*}
\m{\Delta}_{sym, /H} u_j &= u_j - \frac{1}{|H| \cdot |W|} \m{S}^T \m{A} \m{S} u_j\\
&= u_j - \frac{1}{|H|^{1.5} \cdot |W|} \m{S}^T \m{A} \m{S} \m{S}^T f_j \\
&= u_j - \frac{1}{|H|^{0.5} \cdot |W|} \m{S}^T \m{A} f_j \\
&= u_j - \frac{\lambda_j}{|H|^{0.5} \cdot |W|} \m{S}^T f_j \\
&= u_j - \frac{\lambda_j}{|W|} u_j \\
&= (1 - \lambda_j / |W|) u_j = \lambda_j^{sym} u_j .
\end{align*}

Hence, the $\Phi$-kernel $k_{\Phi}$ on $\Gamma_{/H} (S_{2n}, W)$ corresponding to the same $\Phi$ and the symmetric normalized Laplacian is given by 
$$ k_{\Phi}(g_1 H, g_2 H) = \sum_{j=1}^{|\mathcal{X}_{n}|} \Phi(\lambda_j^{sym}) u_j (g_1 H) u_j (g_2 H) . $$

It is easy to see that, $f_j (g) = \psi(g H) u_j (g H)$ for $j \in \{ 1, \ldots, |\mathcal{X}_n|\}$, where $\psi (g H) = |g H|^{-1/2} = |H|^{-1/2}$.
Thus, 
$$k_{/H} (g_1 H, g_2 H) = \psi(g_1 H) \psi(g_2 H) k_{\Phi} (g_1 H, g_2 H)$$

\end{proof}

\paragraph{Degree} 

As we will further see, the quotient Cayley graph is regular, so all nodes have the same degree.
We will further need the exact value of degree in order to make a degree correction to one of the hyperparameters of the defined kernel.
So, here we will compute the (unweighted) degree.

Consider a node in the Cayley quotient graph, i.e., a coset $\sigma H$ or, equivalently, a matching $x \in \mathcal{X}_n$.
There will be an edge between $x$ and $y \in \mathcal{X}_n$ if and only if there is transposition $\pi \in W$ such that $y = \pi \lacts x$.
Let us analyze how $\pi \lacts x$ looks like for all transpositions $\pi$.

Note that if $\pi = (i \, j)$ such that the pair $\{i, j\} \in x$, then $\pi \lacts x = x$.
Otherwise, there are two pairs $\{i, k\}, \{j, l\} \in x$, and the generalized distance (see \Cref{sec:computation:existing} for definition) $d(x, \pi \lacts x) = (2, 1 \ldots, 1)$ since $\pi \lacts x$ has exactly the same pairs as $x$ except $\{i, k\}, \{j, l\}$, which are replaced with $\{j, k\}, \{i, l\}$.
And vice versa, if $d(x, y) = (2, 1, \ldots, 1)$, then there is transposition $\pi$ such that $y = \pi \lacts x$.
So, degree of $x$ is given by $| \{ y \in X \mid d(x, y) = (2, 1, \ldots, 1) \} |$.

It is easy to see that $| \{ y \in X \mid d(x, y) = (2, 1, \ldots, 1) \} | = n(n-1)$ since one should select two pairs in $x$, and for each selection there are two ways to swap elements in them.
So, there are $\binom{n}{2} \cdot 2 = n(n-1)$ matchings that can be formed from $x$ by applying a single transposition.

\section{Computation}

\subsection{Representation Dimension and Laplacian Eigenvalues}

\label{appendix:eigenvalues_and_dim}

In this section, we will employ the machinery of the representation theory of symmetric groups to compute the Laplacian eigenvalues $\lambda_\rho$ and representation dimensions $d_\rho$, whenever $\rho$ is partition of $2n$.
For readers that are not familiar with the representation theory of symmetric groups,  we refer to \citet{kondor2008, ceccherinisilberstein2008}.
Two key results that we will use are the hook length formula and Murnaghan--Nakayama rule.

\begin{theorem}[Hook length formula]
Let $\mu$ be a partition of $m$ and $\chi^{(\mu)}$ the character of the representation corresponding to $\mu$.
Then, 
\[
\chi^{(\mu)} ((1, \ldots, 1)) = \frac{m!}{\prod_{x \in Y(\mu)} \text{hook}(y)} ,
\]
where $Y(\mu)$ is the set of cells of the Young diagram of the form $\mu$, $\text{hook}(x)$ is the hook length of the cell $x$, that is, the number of cells to the right of $x$ plus the number of cells below the cell $x$ plus one (the cell $x$ itself).
\end{theorem}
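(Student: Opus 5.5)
We prove the hook length formula by the classical probabilistic \emph{hook walk} argument of Greene, Nijenhuis and Wilf. This route avoids any explicit construction of the representation. First we reduce the statement to a purely combinatorial one: $\chi^{(\mu)}((1,\ldots,1))$ is the value of the character at the identity, i.e.\ the dimension $d_\mu$. By the standard construction of irreducible representations of $S_m$ (Young's orthogonal or seminormal form, or the branching rule), $d_\mu$ equals the number $f^\mu$ of standard Young tableaux of shape $\mu$. We take this identification from the representation theory references cited above (\citet{ceccherinisilberstein2008}). It therefore suffices to show
\[
f^\mu = \frac{m!}{\prod_{y \in Y(\mu)} \text{hook}(y)} =: F(\mu).
\]

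Second, $f^\mu$ satisfies the branching recursion $f^\mu = \sum_{c} f^{\mu \setminus c}$. Here the sum runs over the corners $c$ of $Y(\mu)$, meaning cells whose removal leaves a Young diagram, and $f^{\emptyset}=1$. The reason is that the entry $m$ of a standard tableau must sit in a corner. By induction on $m$ it is then enough to show that $F$ obeys the same recursion, i.e.
\[
\sum_{c \text{ corner}} \frac{F(\mu\setminus c)}{F(\mu)} = 1 .
\]

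Third, and this is the main step, we interpret each ratio $F(\mu\setminus c)/F(\mu)$ as a probability. Consider the hook walk: start at a uniformly random cell of $Y(\mu)$. From the current cell $y$, move to a uniformly random other cell of its hook, i.e.\ to one of the $\text{hook}(y)-1$ cells to its right or below it. Stop when a corner is reached. The walk stops almost surely at some corner, so the probabilities of ending at the various corners sum to $1$.

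It remains to show that the walk ends at corner $c=(a,b)$ with probability exactly $F(\mu\setminus c)/F(\mu)$. Removing $c$ changes only the hooks in row $a$ and column $b$, each decreasing by one. This gives
\[
\frac{F(\mu\setminus c)}{F(\mu)} = \frac{1}{m}\prod_{i<a}\Bigl(1+\frac{1}{h_{ib}-1}\Bigr)\prod_{j<b}\Bigl(1+\frac{1}{h_{aj}-1}\Bigr),
\]
where $h_{ij}$ denotes the hook length of cell $(i,j)$. Expanding the products yields a sum over subsets $I\subseteq\{1,\ldots,a-1\}$ and $J\subseteq\{1,\ldots,b-1\}$.

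We then prove the following lemma by induction on $|I|+|J|$: the probability that a walk started at cell $(\min I\cup\{a\}, \min J\cup\{b\})$ has row projection exactly $I\cup\{a\}$ and column projection exactly $J\cup\{b\}$ equals $\prod_{i\in I}\frac{1}{h_{ib}-1}\prod_{j\in J}\frac{1}{h_{aj}-1}$. The induction conditions on the first move, which goes either down to the next row in $I$ or right to the next column in $J$. The key identity is that the factor $1/(h-1)$ for the current cell splits as the sum of the two continuations, because the arm and leg lengths telescope: $h_{i_1 j_1}-1 = (h_{i_2 j_1}-1)+(h_{i_1 j_2}-1)$-type relations hold when both endpoints lie in row $a$ and column $b$ of the corner.

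Summing over starting cells, with weight $1/m$ each, and over $I,J$ then recovers the expanded product above, which proves the recursion. The lemma is the only delicate part, and we will verify the telescoping identity carefully there. The final cell-counting step is routine.
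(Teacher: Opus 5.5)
Your proposal is correct, and it takes a different route from the paper. The paper gives no argument of its own for this statement and simply defers to Section~4.3 of the cited Fulton reference.

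Standard textbook treatments reach $\chi^{(\mu)}(e)$ through the determinantal (Frobenius) form $d_\mu=\frac{m!}{l_1!\cdots l_k!}\prod_{i<j}(l_i-l_j)$ with $l_i=\mu_i+k-i$. They obtain it either by specializing Frobenius's character formula at the identity, or by checking the corner recursion through the Vandermonde-type identity $\sum_i x_i\,\Delta(x-e_i)=\bigl(\sum_i x_i-\binom{k}{2}\bigr)\Delta(x)$. They then rewrite that expression as the hook product. That route ties the dimension directly to character theory and, in the Frobenius version, does not need the identification $d_\mu=f^\mu$, at the cost of heavier algebraic machinery.

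Your route puts the only representation-theoretic input into the single fact $d_\mu=f^\mu$. After that everything is elementary: the corner recursion for $f^\mu$, your computation of $F(\mu\setminus c)/F(\mu)$ (which is correct), and the Greene--Nijenhuis--Wilf walk. There the whole difficulty reduces to one hook identity.

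When you write up the lemma, tighten three points:
\begin{enumerate}
\item State the identity cleanly as $h_{\alpha\beta}-1=(h_{\alpha b}-1)+(h_{a\beta}-1)$ for all $\alpha\le a$, $\beta\le b$. It follows from arm/leg counting together with $\mu_a=b$ and $\mu'_b=a$.
\item Note that the degenerate cases $\alpha=a$ or $\beta=b$ are covered because $h_{ab}-1=0$.
\item Remark that the walk cannot stop before reaching $(a,b)$, because $(a,b)$ is the only corner of $Y(\mu)$ in the rectangle $\{1,\ldots,a\}\times\{1,\ldots,b\}$. Also, termination is sure rather than almost sure, since each step strictly increases $i+j$.
\end{enumerate}
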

\begin{proof}
See Section 4.3 in \citet{fulton1996}
\end{proof}

In software implementations, we represent a partition as a tuple $\mu = (\mu_1, \ldots, \mu_s)$, and represent each cell as a pair of indices $(i, j)$, where $1 \leq i \leq s$ is the row index and $1 \leq j \leq \mu_i$ is the column index.
Then, the hook length is given by $1 + (\mu_i - j) + (\mu'_j - i)$, where $\mu'$ is the conjugate partition.
Thus, the implementation of the hook length formula is straightforward and does not require any specialized packages.

\begin{theorem}[Murnaghan--Nakayama Rule]\label{theorem:MurnaghanNakayama Rule}
Let $\lambda = (\lambda_1, \ldots, \lambda_s)$ and $\mu = (\mu_1, \ldots, \mu_r)$ be partitions of $n$.
Then,
\[
\chi^{(\lambda)} (\mu) = \sum_{\xi \in BS(\lambda, \mu_1)} (-1)^{ht(\xi)} \chi^{(\lambda \backslash \xi)} (\mu \backslash \mu_1) ,
\]
where 
\begin{itemize}
\item $BS(\lambda, \mu_1)$ is a set of border strips of size $\mu_1$ (that is, connected skew-shapes of the form $(\lambda, \bullet)$ that do not contain the square $2 \times 2$), the removal of which gives a correct Young diagram.
\item $ht(\xi)$ --- the height of the border strip, that is, the number $1$ less than the number of rows in $\xi$.
\item$\lambda\backslash\xi$ means the partition that remains after removal of border strip $\xi$ from the Young diagram of shape $\lambda$.
\item $\mu\backslash \mu_1$ means the partition that remains after removal of the first element from the $\mu$.
\end{itemize}
\end{theorem}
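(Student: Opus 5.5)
The Murnaghan--Nakayama rule is classical, and I would give the standard proof through symmetric functions. The plan rests on the Frobenius characteristic map. Write $s_\lambda$ for the Schur function and $p_\mu = p_{\mu_1}\cdots p_{\mu_r}$ for products of power sums, as in \Cref{sec:computation:ours}. The key identity I would take as the starting point is Frobenius's formula
\[
p_\mu = \sum_{\lambda \vdash n} \chi^{(\lambda)}(\mu)\, s_\lambda .
\]
Equivalently, $\chi^{(\lambda)}(\mu) = \langle p_\mu, s_\lambda\rangle$ in the Hall inner product, for which the $s_\lambda$ are orthonormal. I would assume it as the standard link between characters of $S_n$ and symmetric functions. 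Given this, the recursion in the statement reduces to a single Pieri-type identity:
\[
p_k\, s_\nu = \sum_{\lambda} (-1)^{ht(\lambda/\nu)}\, s_\lambda ,
\]
where the sum runs over all $\lambda \supseteq \nu$ such that $\lambda/\nu$ is a border strip of size $k$.

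Assuming the Pieri-type identity, the rule follows in two steps. First write $p_\mu = p_{\mu_1}\, p_{\mu\backslash\mu_1}$. Then expand $p_{\mu\backslash\mu_1} = \sum_\nu \chi^{(\nu)}(\mu\backslash\mu_1)\, s_\nu$ using Frobenius's formula for $n-\mu_1$. Multiplying by $p_{\mu_1}$ via the identity and extracting the coefficient of $s_\lambda$ gives
\[
\chi^{(\lambda)}(\mu) = \sum_{\xi} (-1)^{ht(\xi)}\, \chi^{(\lambda\backslash\xi)}(\mu\backslash\mu_1),
\]
where $\xi$ ranges over border strips of size $\mu_1$ removable from $\lambda$. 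This is exactly the claimed rule.

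To prove the identity I would use the bialternant formula in $m \ge n$ variables, $s_\nu = a_{\nu+\delta}/a_\delta$. Here $\delta = (m-1,\ldots,0)$ and $a_\alpha = \det(z_i^{\alpha_j})$. Multiplying $a_{\nu+\delta}$ by $p_k = \sum_i z_i^k$ gives
\[
p_k\, a_{\nu+\delta} = \sum_{j} a_{\nu+\delta+k e_j}.
\]
This holds because multiplying the whole alternant by $z_i^k$ and summing over $i$ is the same as adding $k$ to one exponent at a time, by multilinearity together with antisymmetrisation. Each term $a_{\nu+\delta+ke_j}$ is then either zero, when two exponents coincide, or equals $\pm a_{\lambda+\delta}$ after re-sorting the exponents.

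The main obstacle is the combinatorial bookkeeping in this re-sorting. One must show that the nonzero terms correspond bijectively to border strips $\lambda/\nu$ of size $k$. One must also show that the number of transpositions needed to sort the exponent sequence equals the number of rows of the strip minus one, namely $ht$. I would do this with the beta-set (abacus) picture. Adding $k$ to the part $\nu_j+m-j$ moves a bead $k$ positions up. The new sequence is strictly decreasing, after sorting, exactly when the target position is unoccupied. The number of beads jumped over equals the number of rows the resulting strip spans minus one. A strip arising this way is connected and contains no $2\times 2$ square precisely because it corresponds to a single bead move. Once this lemma is established, dividing by $a_\delta$ completes the proof.
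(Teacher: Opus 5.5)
Your proposal is correct, and it is essentially the argument the paper relies on. The paper gives no proof of its own and simply cites Theorem~7.17.1 of \citet{stanley2001}, which is the identity $p_k s_\nu = \sum_\lambda (-1)^{ht(\lambda/\nu)} s_\lambda$ proved with the bialternant formula and the same exponent-shifting (bead-moving) bookkeeping you describe; the character recursion then follows from Frobenius's formula $p_\mu = \sum_\lambda \chi^{(\lambda)}(\mu)\, s_\lambda$ by splitting off $p_{\mu_1}$ and comparing coefficients of $s_\lambda$, exactly as you do.
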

\begin{proof}
See Theorem 7.17.1 in \cite{stanley2001}
\end{proof}

One can note that a border strip for a given partition is uniquely identified by its starting row and its size.
In our software implementation, in order to iterate over border strips, we simply iterate over all rows and check whether there is a valid border strip starting from that row.
The software implementation of this formula uses simple constructs such as loops and arrays and does not require any specialized packages.

Using the hook length formula and the fact that $d_\rho = \chi^{(\rho)}(e) = \chi^{(\rho)}((1, \ldots, 1))$, one can compute $d_\rho$ with $O(n)$ time complexity.
As for the eigenvalues $\lambda_\rho$, the following proposition holds.

\begin{restatable}{proposition}{LaplaceEigenvaluesFormula}
Let $\rho$ be an irreducible unitary representation of the group $S_{2n}$ and $d_\rho$ be its dimension.
Then, the eigenvalue of the Laplacian of the Cayley graph corresponding to the representation of $\rho$ has the form
\[
\lambda_\rho = \frac{|W|}{d_\rho} \left[ d_\rho - \chi^{(\rho)}((2, 1, \ldots, 1)) \right] ,
\]
where the notation $\chi^{(\rho)}((2, 1, \ldots, 1))$ means the value of $\chi^{(\rho)}$ on some permutation having the cycle type $(2, 1, \ldots, 1)\vdash 2n$.\footnote{Remember that a value of character depends only on cycle type of permutation.}
\end{restatable}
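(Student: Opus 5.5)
By \Cref{lemma:MatrixCoefsAreLaplacianEigenvectors}, every matrix coefficient of the irreducible representation $\rho$ is an eigenvector of $\m{\Delta}$ with a common eigenvalue $\lambda_\rho$. It therefore suffices to compute $\m{\Delta}$ on a single convenient function in this eigenspace. We choose the character $\chi^{(\rho)}$ itself: it is the sum of the diagonal matrix coefficients $\pi^{(\rho)}_{i,i}$, hence lies in the $\lambda_\rho$-eigenspace, and it is nonzero at $e$ since $\chi^{(\rho)}(e) = d_\rho$. The lemma applies to $W$, the set of all transpositions, because $W$ generates $S_{2n}$, $W^{-1} = W$, $e \notin W$, and $W$ is closed under conjugation (conjugation preserves cycle type).

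Next we evaluate both sides of $\m{\Delta}\chi^{(\rho)} = \lambda_\rho \chi^{(\rho)}$ at the identity $e$. Using the formula for the Laplacian of the Cayley graph from the proof of \Cref{lemma:MatrixCoefsAreLaplacianEigenvectors},
\[
(\m{\Delta}\chi^{(\rho)})(e) = \sum_{w \in W} \bigl( \chi^{(\rho)}(e) - \chi^{(\rho)}(w) \bigr).
\]
Every $w \in W$ has cycle type $(2,1,\ldots,1)$, and characters are class functions. Hence each summand equals $d_\rho - \chi^{(\rho)}((2,1,\ldots,1))$, and the sum is $|W|\,\bigl[d_\rho - \chi^{(\rho)}((2,1,\ldots,1))\bigr]$.

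The right-hand side at $e$ is $\lambda_\rho d_\rho$. Dividing by $d_\rho > 0$ gives the claimed formula.

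No step is a real obstacle. The only point needing care is that the character lies in a single eigenspace. This holds because the lemma gives all $d_\rho^2$ matrix coefficients of $\rho$ the same eigenvalue, so their sum $\chi^{(\rho)}$ shares it. One should also check that the Laplacian convention, $\m{D} - \m{A}$ with degree $|W|$, matches the one used in \Cref{lemma:MatrixCoefsAreLaplacianEigenvectors}, which it does. Finally, $|W| = \binom{2n}{2}$ could be substituted explicitly, and $\chi^{(\rho)}((2,1,\ldots,1))$ can be computed via the Murnaghan--Nakayama rule, but neither is needed for the statement.
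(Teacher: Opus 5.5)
Your proof is correct, but it reaches the formula by a more direct route than the paper does.

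The paper sets $\Phi(\lambda)=\lambda$ (with $C(\Phi)=1$) in \Cref{thm:kernels_character_sum}, so the kernel becomes the Laplacian itself. This expands the column $u(g)=\Delta_{g,e}=|W|\mathbf{1}_{\{e\}}(g)-\mathbf{1}_W(g)$ as $\sum_{\rho}\lambda_\rho \frac{d_\rho}{|G|}\chi^{(\rho)}(g)$. It then reads off $\lambda_\rho$ as a Fourier coefficient, using orthonormality of $\{|G|^{-1/2}\chi^{(\rho)}\}$: $\lambda_\rho=\frac{1}{d_\rho}\langle |W|\mathbf{1}_{\{e\}}-\mathbf{1}_W,\chi^{(\rho)}\rangle$.

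You instead use \Cref{lemma:MatrixCoefsAreLaplacianEigenvectors} directly. Since $\chi^{(\rho)}$ is a sum of matrix coefficients, it is a $\lambda_\rho$-eigenvector, and evaluating $\Delta\chi^{(\rho)}=\lambda_\rho\chi^{(\rho)}$ at $e$ gives the formula.

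The two arguments are dual views of the same identity. Because $\Delta$ is symmetric, $(\Delta\chi^{(\rho)})(e)=\sum_{g}\Delta_{g,e}\chi^{(\rho)}(g)$ is exactly the inner product the paper computes.

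Your version needs neither the spectral expansion of \Cref{thm:kernels_character_sum} nor the orthogonality relations for characters. It uses only the common-eigenvalue property, that characters are class functions, and $\chi^{(\rho)}(e)=d_\rho\neq 0$. The paper's version, in exchange, presents the whole spectrum at once as the character coefficients of the single class function $|W|\mathbf{1}_{\{e\}}-\mathbf{1}_W$. That framing fits its general treatment of kernels as functions of the Laplacian.
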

\begin{proof}

Let $u(g) = \Delta_{g, e}$.
Substituting $\Phi(\lambda) = \lambda$ to Proposition \ref{thm:kernels_character_sum}, we have
$$ u(g) = \Delta_{g, e} = \sum_{\rho} \lambda_\rho \frac{d_\rho}{|G|} \chi^{(\rho)} (g) = \sum_{\rho} \lambda_\rho \frac{d_\rho}{|G|} \chi^{(\rho)} (g) . $$
At the same time, according to the definition of Laplacian,
\[
u(g) = \Delta_{g, e} = \begin{cases}
|W|, g = e \\
-1, g \in W
\end{cases} .
\]
Therefore,
$$ |W| \1_{\{e\}} - \1_W = u = \sum_{\rho} \lambda_\rho \frac{d_\rho}{|G|} \chi^{(\rho)} . $$
Recall that $\left\{\frac{1}{\sqrt{|G|}} \chi^{(\rho)} \right\}_{\rho}$ is orthonormal system.
Therefore, $\lambda_\rho$ can be calculated as inner products:
\begin{align*}
\lambda_\rho 
&= \frac{\sqrt{G}}{d_\rho} \left\langle |W|\1_{\{e\}} - \1_W , \frac{1}{\sqrt{|G|}} \chi^{(\rho)} \right\rangle 
= \frac{1}{d_\rho} \left \langle |W|\1_{\{e\}} - \1_W , \chi^{(\rho)} \right\rangle \\
&= \frac{1}{d_\rho} \left[ |W| \chi^{(\rho)} (e) - \sum_{g \in W} \chi^{(\rho)}(g)  \right] 
= \frac{1}{d_\rho} \left[ |W| d_\rho - \sum_{g \in W} \chi^{(\rho)}(g)  \right] \\
&= \frac{1}{d_\rho} \left[ |W| d_\rho - |W| \chi^{(\rho)}((2, 1, \ldots, 1)) \right] = \frac{|W|}{d_\rho} \left[ d_\rho - \chi^{(\rho)} ((2, 1, \ldots, 1)) \right] ,
\end{align*}
where, in the second last equality, we used the fact that all transpositions have a cycle type $(2, 1, \ldots, 1)$.

\end{proof}

As was mentioned above, $d_\rho$ can be efficiently computed using the hook length formula.
And for $\chi^{(\rho)}((2, 1, \ldots, 1))$, we can apply the Murnaghan-Nakayama Rule to show that
\[
\chi^{(\rho)}((2, 1, \ldots, 1)) = \sum_{\xi \in BS(\rho, 2)} (-1)^{ht(\xi)} \chi^{(\rho \backslash \xi)} ((1, \ldots, 1)) .
\]
As stated earlier, it can be shown that the size $|BS(\rho, 2)|$ is linear in terms of $n$.
And therefore, computation of eigenvalues reduces to computation of $O(n)$ representation dimensions $\chi^{(\rho \backslash \xi)}$, so $\lambda_\rho$ can be computed with $O(n^2)$ time complexity.

\subsection{Zonal Spherical Functions\texorpdfstring{: \Cref{thm:character_projection}}{}}\label{appendix:zsf_averaging}

In this section, we will describe time and memory complexity of computing zonal spherical functions via \Cref{thm:character_projection} and discuss its implementation in practice.
First, let us note that since characters are class functions and $H_{n}$ is subgroup of $S_{2n}$, one can simplify formula from \Cref{thm:character_projection}:
\[
\phi_{\v{\rho}} (\sigma) 
= \frac{1}{|H_n|^2} \sum_{\pi_1 , \pi_2 \in H_n} \chi^{(2\v{\rho})} (\pi_1 \sigma \pi_2)
= \frac{1}{|H_n|^2} \sum_{\pi_1 , \pi_2 \in H_n} \chi^{(2\v{\rho})} (\pi_2 \pi_1 \sigma)
= \frac{1}{|H_n|} \sum_{\pi \in H_n} \chi^{(2\v{\rho})} (\pi \sigma) .
\]
Thus, this approach essentially reduces to computing $|H_n| = 2^n n!$ values of characters of $S_{2n}$. 
For computing them, one can use, for example, Murnaghan-Nakayama rule discussed above or other approaches.

We use $T(n)$ and $M(n)$ to denote amortized time and memory complexity of computing a single value of character of the group $S_{2n}$.
Then, query time complexity is $O(|\c{R}| \cdot 2^n n! \cdot T(n))$ and query memory complexity is $O(M(n))$.
Please note that even if $T(n) = O(1)$, query time is at least $O(|\c{R}| \cdot 2^n n!)$, making this approach quickly infeasible in practice.

Unlike other approaches, precomputation time and memory complexity are dominated by precomputing representation dimensions and Laplacian eigenvalues.
Thus, precomputation time complexity is $O(|\c{R}| n^2)$ and memory complexity is $O(|\c{R}| + N)$ since we need to store $|\c{R}|$ values and computing each of them requires $O(n)$ memory.

For fair comparison, similar to other approaches we augment this one with caching mechanism that caches values of zonal spherical functions after computing them.

\subsection{Zonal Spherical Functions: Explicit Formula}\label{appendix:zsf_naive}

\input{algorithms/zsf-naive}

In this section, we will provide an explicit formula for zonal spherical functions, and describe the algorithm that directly implements this formula~\citep{ceccherinisilberstein2008}.
We will then also give the time complexity of that algorithm.

The present section extensively uses the representation theory of symmetric groups.
Readers unfamiliar with this subject are referred to Chapter 10 in \citep{ceccherinisilberstein2008}.
To begin, let us provide some supplementary definitions that will be useful for the explicit formula of zonal spherical functions.

\begin{definition}
Let $\lambda = (\lambda_1, \ldots, \lambda_r)$ be a partition of $n$, $\{t\} \in \mathfrak{T}_{2\lambda}$ be a Young tabloid of the form $2\lambda = (2\lambda_1, \ldots, 2\lambda_r)$, $x = \{\{i_1, i_2\}, \ldots, \{i_{2n-1}, i_{2n}\}\} \in X$ a matching on $2n$ points.
We will say that $\{t\}$ covers $x$ if and only if for all $h \in \lbrace 1,\ldots, n\rbrace$, $i_{2h-1}$ and $i_{2h}$ belong to the same row in ~$\{t\}$.
\end{definition}

\begin{definition}
Let $x_0 = \{\{i_1, i_2\}, \ldots, \{i_{2n-1}, i_{2n}\}\}\in X$ be a matching on $2n$ points.
We will say that the Young tableau $t$ of the form $2\lambda\vdash 2n$ is $x_0$-good if for all $h \in \lbrace 1,\ldots, n\rbrace$, $i_{2h-1}$ and $i_{2h}$ are in the same row, and moreover belong to adjacent columns of $t$.
\end{definition}

Then, the following theorem gives the explicit formula for zonal spherical functions (ZSF).

\begin{theorem}\label{theorem:SphericalFunctionIsSphereIndicatorSum}
Let $X$ be the space of perfect matchings on $2n$ points and $x_0\in X$.
Also, let $H\cong S_2\wr S_n$ be the stabilizer of $x_0$, and $t$ an $x_0$-good Young tableau of the form $2\lambda$.
Then, the zonal spherical function $\phi_\lambda$ corresponding to the spherical representation $D_\lambda S^{2\lambda}$ has the following expression
\[
\phi_\lambda(x) = \sum_{\mu \vdash n} \frac{a_\mu^\lambda}{|A_\mu|} \1_{A_\mu}(x) ,
\]
where 
\[
A_\mu = \{ x \in X \mid d(x, x_0) = \mu \} ,
\]
\[
a_\mu^\lambda = \sum_{\sigma \in C'_t} \sgn (\sigma) |\{ x \in A_\mu \mid \{\sigma \lacts t\} \text{ covers } x\}| . 
\]
Here $C'_t$ is a subgroup of the column stabilizer $C_t$, consisting of permutations that fix each number standing in an even column.
\end{theorem}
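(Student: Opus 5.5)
Since $(S_{2n},H)$ is a Gelfand pair, I would realize the spherical representation $D_\lambda S^{2\lambda}$ concretely inside the permutation module of matchings, $L(X)\cong \mathrm{Ind}_H^{S_{2n}}\mathbf{1}$, and take its unique (up to scaling) $H$-invariant vector. Concretely, I would use the Specht-module description of $S^{2\lambda}$: for a tableau $t$ of shape $2\lambda$, the polytabloid $e_t=\sum_{\sigma\in C_t}\operatorname{sgn}(\sigma)\{\sigma t\}$ spans $S^{2\lambda}$ inside the tabloid module $M^{2\lambda}$. Then I would map $M^{2\lambda}$ into $L(X)$ by the $S_{2n}$-equivariant map $T:\{s\}\mapsto \mathbf{1}_{\{x:\{s\}\text{ covers }x\}}$. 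Covering is clearly preserved by relabelling, so $T$ is equivariant. The proof of $T$ itself is then routine.

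\textbf{Step 1: the image of $e_t$.} The image $Te_t(x)=\sum_{\sigma\in C_t}\operatorname{sgn}(\sigma)\mathbf{1}[\{\sigma t\}\text{ covers }x]$ is a nonzero vector in a copy of $S^{2\lambda}$ inside $L(X)$. Since $L(X)$ is multiplicity-free with constituents exactly the $S^{2\mu}$, this copy is the $2\lambda$-isotypic component.

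\textbf{Step 2: reduction from $C_t$ to $C'_t$.} Use $x_0$-goodness: the pairs of $x_0$ sit in adjacent columns $(2j-1,2j)$ of the same row. I would show that summing over $C_t$ factors as $C'_t$ times the subgroup permuting even-column entries. Then, by a pairing/sign-cancellation argument, I would show that the resulting function (after $H$-averaging) is proportional to the function defined with $C'_t$. Equivalently, I would directly define $f=\sum_{\sigma\in C'_t}\operatorname{sgn}(\sigma)\mathbf{1}[\{\sigma t\}\text{ covers }\cdot]$ and check two things: it lies in the $2\lambda$-component (as the image under $T$ of a Young-symmetrizer-type element), and its $H$-average is nonzero.

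\textbf{Step 3: the zonal spherical function.} The zonal spherical function is the normalized $H$-average of any nonzero vector of the isotypic component that is not killed by averaging. So $\phi_\lambda(x)=\frac{1}{|H|}\sum_{h}f(hx)$, rescaled so that $\phi_\lambda(x_0)=1$. Since $H$ acts transitively on each sphere $A_\mu$ (the spheres are exactly the $H$-orbits, by the cycle-type description of double cosets), the average of $f$ over $H$ equals the average of $f$ over $A_\mu$. This gives
\[
\phi_\lambda(x)\propto\sum_\mu \frac{1}{|A_\mu|}\Big(\sum_{y\in A_\mu}f(y)\Big)\mathbf{1}_{A_\mu}(x)=\sum_\mu\frac{a^\lambda_\mu}{|A_\mu|}\mathbf{1}_{A_\mu}(x),
\]
after swapping the sums over $y$ and $\sigma$.

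\textbf{Step 4: normalization.} Finally, check that $a^\lambda_{(1^n)}=1$, so that the constant of proportionality is $1$. Only $x_0$ lies in $A_{(1^n)}$. Moreover, $\{\sigma t\}$ covers $x_0$ with $\sigma\in C'_t$ only when $\sigma$ keeps every odd-column entry in the row of its partner, forcing $\sigma=e$.

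The main obstacle is Step 2. Justifying that restricting to $C'_t$ still yields a vector in $S^{2\lambda}$, rather than in a larger sum of constituents, is the genuinely combinatorial heart of the argument. I would try to prove it via the decomposition $C_t=C'_t\cdot C''_t$ and the $H$-invariance of covering under swaps inside pairs of $x_0$.
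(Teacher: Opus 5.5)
\textbf{There is a genuine gap, in Step 2.} Your overall architecture is the right way to unpack the paper's proof, which only cites Section~11.3 of the Ceccherini-Silberstein--Scarabotti--Tolli book. That architecture is: realize the $2\lambda$-component of $L(X)$ as $T(S^{2\lambda})$, average over $H$ using that the spheres $A_\mu$ are the $H$-orbits, and normalize at $x_0$. Steps 1, 3 and 4 are fine as far as they go.

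The problem is the statement you single out as the heart of Step 2. You claim that $f=\sum_{\sigma\in C'_t}\mathrm{sgn}(\sigma)\,\mathbf{1}[\{\sigma t\}\text{ covers }\cdot\,]$ lies in the $2\lambda$-isotypic component. This is false in general, so Step 3 cannot be applied to $f$.

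Counterexample: take $n=4$, $\lambda=(2,2)$, $x_0=\{\{1,2\},\{3,4\},\{5,6\},\{7,8\}\}$, and $t$ with rows $1\,2\,3\,4$ and $5\,6\,7\,8$. Then $C'_t=S_{\{1,5\}}\times S_{\{3,7\}}$. Write $\mathbf{1}_{A|B}$ for the indicator of the matchings all of whose pairs lie inside $A$ or inside $B$. Then $f=\mathbf{1}_{1234|5678}-\mathbf{1}_{2345|1678}-\mathbf{1}_{1247|3568}+\mathbf{1}_{2457|1368}$.
\begin{itemize}
\item $f$ sums to zero over $X$, since each indicator has $9$ matchings.
\item However, $\sum_{x\ni\{1,3\}}f(x)=3+3=6\neq0$, coming from the first and last splits.
\item The function $x\mapsto\mathbf{1}[\{1,3\}\in x]$ is the image of a $(6,2)$-tabloid under the covering map, so by Young's rule it lies in the $S^{(8)}\oplus S^{(6,2)}$ part of $L(X)$.
\item Hence $f$ has a nonzero $S^{(6,2)}$-component.
\end{itemize}
For $Te_t$ the same sum vanishes: whether $1$ and $3$ share a row of $\sigma t$ does not depend on the $S_{\{2,6\}}$-factor of $\sigma$, whose signs cancel.

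Your other variant, that the $H$-averages are proportional, is true. But it is not a sign cancellation, and swaps inside $x_0$-pairs are not the relevant elements of $H$.

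\textbf{The missing idea: absorb the even-column part of $C_t$ into $H$ by reindexing.}
\begin{itemize}
\item For $\sigma\in C_t$, let $\pi_k$ be the row permutation it induces on column $k$.
\item Let $h$ apply $\pi_{2j}$ simultaneously to columns $2j-1$ and $2j$, for every $j$.
\item Since $t$ is $x_0$-good, $h$ permutes whole $x_0$-pairs. So $h\in K:=C_t\cap H\cong\prod_j S_{\lambda'_j}$, and $\mathrm{sgn}(h)=\prod_j\mathrm{sgn}(\pi_{2j})^2=1$.
\item $\sigma''':=h^{-1}\sigma$ fixes every even-column entry, so $\sigma'''\in C'_t$ and $\mathrm{sgn}(\sigma''')=\mathrm{sgn}(\sigma)$.
\item The map $\sigma\mapsto(h,\sigma''')$ is a bijection $C_t\to K\times C'_t$.
\end{itemize}
Now $\{h\sigma'''t\}$ covers $y$ iff $\{\sigma'''t\}$ covers $h^{-1}y$, and $h^{-1}$ preserves each $A_\mu$. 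This gives $\sum_{y\in A_\mu}(Te_t)(y)=|K|\,a^\lambda_\mu$ for every $\mu\vdash n$.

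At $\mu=(1^n)$, using your Step 4, this gives $(Te_t)(x_0)=|K|\neq0$. That is also the nonvanishing you assert without proof in Step 1.

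Apply Step 3 to $Te_t$, not to $f$. For $x\in A_\mu$ this yields $\phi_\lambda(x)=\frac{1}{(Te_t)(x_0)}\cdot\frac{1}{|A_\mu|}\sum_{y\in A_\mu}(Te_t)(y)=\frac{a^\lambda_\mu}{|A_\mu|}$, with constant exactly one. The passage from $C_t$ to $C'_t$ is valid only after summing over the $H$-orbits $A_\mu$, never pointwise.
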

\begin{proof}
See section 11.3 in \citep{ceccherinisilberstein2008}
\end{proof}

Now, let us describe the algorithm for computing zonal spherical functions, which is an immediate consequence of the above theorem.
Although the formula for $\phi_\lambda(x)$ contains a sum over all partitions of $n$, there is in fact only one non-zero term for each fixed $x\in X$.
Indeed, the terms of this sum are disjoint indicators multiplied by some constants.
Therefore, to compute $\phi_\lambda(x)$, it is enough to compute $d(x,x_0)$ and, if this turns out to be equal to $\mu$, return $\frac{a_\mu^\lambda}{|A_\mu|}$.
The values of $a_\mu^\lambda$ and $|A_\mu|$ can be computed once and for all at the pre-processing stage.
Algorithm \ref{algorithm:ZSFNaive} contains a pseudo-code description of how this can be done.

Now, let us estimate the time complexity of Algorithm \ref{algorithm:ZSFNaive}.
This algorithm iterates over all possible pairs $(x, \sigma) \in X\times C'_t$, and computes $\sgn(\sigma)$ and $\{\sigma \lacts t\}$ for each pair.
Thus, it has time complexity $O(|X|\cdot|C'_t|\cdot n)$.
Recall that $C'_t$ is a subgroup of the column stabilizer $C_t$, consisting of permutations that also fix all elements of even columns, and therefore\footnote{Recall that $t$ is a Young tableau of shape $2\lambda$, and $(2\lambda)' = (\lambda'_1, \lambda'_1, \lambda'_2, \lambda'_2, \ldots, \lambda'_r, \lambda'_r)$.} $|C'_t| = \lambda'_1! \cdot \ldots \cdot \lambda'_r!$, where $\lambda' = (\lambda'_1, \ldots, \lambda'_r)$ is the conjugate partition of $\lambda$.
Therefore, the time required to process one zonal spherical function is
\[
O(|X| \cdot |C'_t| \cdot n) = O\left( \frac{(2n)!}{2^n n!} \lambda'_1! \cdot \ldots \cdot \lambda'_r! \cdot n \right) ,
\]
and the processing of all zonal spherical functions requires 
\[
O \left( p(n) \cdot \frac{(2n)!}{2^n n!} \lambda'_1! \cdot \ldots \cdot \lambda'_r! \cdot n  \right) = O\left( \frac{(2n)! \cdot n \cdot p(n)}{2^n} \right) ,
\]
where $p(n)$ is the number of partitions of $n$. Note that we used the fact that $\lambda'_1! \cdot \ldots \cdot \lambda'_r! \leq (\lambda'_1 + \ldots + \lambda'_r)! = n!$.
Moreover, if $\lambda = (1, \ldots, 1)$, then $\lambda' = (n)$, and $\lambda'_1 =n!$, which means that, in the worst case scenario, the~estimate of $n!$ is accurate.

Finally, we describe an optimization that substantially improves the performance of the algorithm. 
Instead of relying on full precomputation, we combine on-the-fly computation with a caching mechanism. 
The key observation is that, to compute $a^{\lambda}_{\mu}$, it is sufficient to iterate only over elements $x \in A_{\v{\mu}}$. 
Moreover, the proof of \Cref{lemma:generalized_sphere_size} gives an explicit combinatorial description of the set $A_{\v{\mu}}$. 
This description allows us to enumerate only the relevant elements $x \in A_{\v{\mu}}$, rather than all elements $x$ considered in \Cref{algorithm:ZSFNaive}. 
As a result, the computational cost can be reduced significantly.

\subsection{Zonal Spherical Functions: Our Algorithm}\label{appendix:zsf_our_algorithm}

In this section, we provide further details on our algorithm for computing zonal spherical functions using their relationship with zonal polynomials. 
We outline the main computational steps and analyze their computational and memory complexity.

\paragraph{General algorithm}

As discussed in the main text, our algorithm is grounded in the relationship between zonal spherical functions and zonal polynomials.
In particular, the following theorem states that the values of a zonal spherical function can be directly obtained from the coefficients in the expansion of a zonal polynomial into the basis of products of power-sum symmetric polynomials.

\ZpCoefsAreZsf*
\begin{proof}

This theorem follows immediately from the following key equality:
\[
\mathcal{C}_\rho = c \cdot \sum_{\sigma \in S_{2n}} \phi_{\v{\rho}}(\sigma) p_{d(\sigma x_0, x_0)}
\]
for some nonzero constant $c = c(\rho)$.
Here, $\mathcal{C}_{\v{\rho}}$ denotes the zonal polynomial indexed by the partition $\v{\rho}$, $\phi_{\v{\rho}}$ is the corresponding zonal spherical function, and $d(\sigma x_0, x_0)$ is the partition-valued ``generalized distance'' between matchings.
For a proof of this equality, see Section 4 of \cite{bergeron1992} and \cite{james1961}.

Next, since $\phi_{\v{\rho}}(\sigma)$ depends only on the value $d(\sigma x_0, x_0)$, we can regroup the terms according to all possible partitions $\v{\mu} \vdash n$:
\begin{align*}
\mathcal{C}_{\v{\rho}} &= c \cdot \sum_{\sigma \in S_{2n}} \phi_{\v{\rho}}(\sigma) p_{d(x, x_0)} \\
&= c \cdot \sum_{\v{\mu} \vdash n} \left( \sum_{\sigma \in S_{2n}: \, d(\sigma x_0, x_0) = \v{\mu}} \phi_\rho(\sigma) \right) p_{\v{\mu}} \\
&= \sum_{\v{\mu} \vdash n} \left(c \cdot \phi_\rho(\v{\mu}) |A_{\v{\mu}}| \cdot |H| \right) p_{\v{\mu}} \\
&= \sum_{\v{\mu} \vdash n} \left(\tilde{c} \cdot \phi_\rho(\v{\mu}) |A_{\v{\mu}}| \right) p_{\v{\mu}} ,
\end{align*}
where, similarly to the case of characters, $\phi_{\v{\rho}} (\v{\mu})$ means the value of $\phi_{\v{\rho}}(x)$ for some $x \in \mathcal{X}_n$ with $d(x, x_0) = \v{\mu}$.
This completes the proof.\end{proof}

To complete the result, we now compute the value of $|A_{\v{\mu}}|$ in the following lemma.

\begin{lemma}\label{lemma:generalized_sphere_size}
Let $\mu$ be a partition of $n$ and let $t_k = |\{ j : \mu_j = k \}|$ be the multiplicity of $k$ in $\mu$.
Then,
\[
|A_{\v{\mu}}| = n! \prod_k \frac{2^{t_k (k-1)}}{t_k! k^{t_k}} .
\]
Hence, the size of each generalized sphere $|A_\mu|$ can be calculated in $O(n)$ time.
\end{lemma}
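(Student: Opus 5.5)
We count the matchings $x \in \c{X}_n$ with $d(x, x_0) = \v{\mu}$ directly, using the description of $x \cup x_0$ as a disjoint union of even cycles. Such an $x$ is determined by two pieces of data: first, a set partition of the $n$ pairs of $x_0$ into blocks, with $t_k$ blocks of size $k$ for each $k$; second, for each block of size $k$, a way of arranging its $k$ pairs of $x_0$ into a single alternating cycle of length $2k$ whose other $k$ edges come from $x$. Conversely, any such data gives a matching $x$ with generalized distance $\v{\mu}$. The correspondence is bijective, because the blocks are recovered as the connected components of $x \cup x_0$, and the $x$-edges inside each component are read off directly. So $|A_{\v{\mu}}|$ factors as (number of set partitions of the pairs of $x_0$ of type $\v{\mu}$) times $\prod_k (c_k)^{t_k}$, where $c_k$ is the number of matchings $y$ on the $2k$ endpoints of $k$ fixed $x_0$-pairs such that $y \cup x_0$ is a single $2k$-cycle.

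The first factor is the standard count
\[
\frac{n!}{\prod_k (k!)^{t_k}\, t_k!} ,
\]
obtained by ordering the $n$ pairs and then dividing out the orderings inside blocks and the permutations of equal-size blocks.

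For $c_k$, we build the cycle greedily. Start at one endpoint $a$ of a fixed pair $\{a, a'\}$, walk from $a$ to $a'$ along the $x_0$-edge, and then choose the $y$-partner of $a'$. It must lie in one of the $k-1$ other pairs, which gives $2(k-1)$ choices of endpoint. Next we move across that pair and choose again, with $2(k-2)$ choices, and so on. When the last pair is reached, its free endpoint is forced to close the cycle back to $a$. Each single cycle arises exactly once this way, since the starting point and direction are fixed by $a$ and the edge $\{a,a'\}$. Hence
\[
c_k = 2^{k-1}(k-1)! .
\]
We check small cases: $c_1 = 1$, and $c_2 = 2$ (the two crossings of two pairs).

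Multiplying the factors gives
\[
|A_{\v{\mu}}| = \frac{n!}{\prod_k (k!)^{t_k} t_k!} \prod_k \bigl(2^{k-1}(k-1)!\bigr)^{t_k} = n! \prod_k \frac{2^{t_k(k-1)}}{t_k!\, k^{t_k}} ,
\]
which is the claimed formula. As a sanity check, the case $\v{\mu} = (2,1,\ldots,1)$ gives $n!\,2/((n-2)!\,2) = n(n-1)$, matching the degree computed in \Cref{appendix:quotient}. The $O(n)$ bound follows because the product has at most $n$ distinct values of $k$, each contributing a constant number of arithmetic operations once factorials are available incrementally.

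The main obstacle is the bijectivity and no-overcounting in the count of $c_k$. We must make sure that orientation and starting point are fixed canonically, so that each cycle is counted once and not twice or $2k$ times. This is exactly what the choice of the fixed start $a$ and the fixed first edge $\{a,a'\}$ accomplishes.
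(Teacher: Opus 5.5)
Your proof is correct and follows essentially the same two-stage count as the paper: first distribute the $n$ pairs of $x_0$ into components of the prescribed sizes, then count the ways to realize the $x$-edges within each component. The only difference is bookkeeping. You put the $(k-1)!$ cyclic-order factor into $c_k = 2^{k-1}(k-1)!$ and multiply by the number of set partitions, whereas the paper counts permutations of cycle type $\mu$ (i.e.\ $n!/\prod_k k^{t_k} t_k!$) and then a factor $2^{k-1}$ per cycle. Your greedy-walk count of $c_k$, with its canonical start and direction, is actually a cleaner justification of the per-cycle factor than the paper's orientation argument.
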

\begin{proof}
For any $y\in \c{X}_n$, $x_0\cup y$ is a disjoint union of even cycles with edges alternating
between $x_0$-edges and $y$-edges.
If a connected component has $2k$ vertices, then it uses exactly $k$
distinct pairs of $x_0$.
Thus $d(x_0,y)=\mu$ if and only if $x_0\cup y$ has exactly $t_k$ components that
each use $k$ distinct $x_0$-pairs, for every $k$.
We count such $y$ in two steps.

First, choose the cyclic order of the $x_0$-pairs inside each cycle.
Since the $n$ pairs of $x_0$ are labeled, the number of ways to arrange them into cycles with cycle-type specified by $\mu$ equals $\frac{n!}{\prod_{k} k^{t_k}\,t_k!}$.
Here, division by $k^{t_k}$ accounts for cyclic rotations inside each cycle of length $k$, and division by $t_k!$ accounts for the indistinguishability of the $t_k$ cycles of the same length $k$.

Second, for each fixed cycle of $k$ $x_0$-pairs, there are exactly $2^{k-1}$ ways to realize the $y$-edges along that cycle.
Indeed, once the cyclic order of the $k$ pairs is fixed, choosing for one pair which of its two elements connects to the ``next'' pair fixes the parity of connections.
The remaining $k-1$ pairs each admit an
independent binary choice, yielding $2^{k-1}$ possibilities.
Over the $t_k$ cycles of length $k$ this contributes
$2^{t_k (k-1)}$.

Multiplying these two steps gives,
\[
|A_\mu|
\;=\;
\Biggl(\frac{n!}{\prod_{k} k^{t_k}\,t_k!}\Biggr)
\cdot \prod_{k} 2^{t_k (k-1)}
\;=\;
n!\;\prod_{k} \frac{2^{\,t_k (k-1)}}{t_k!\,k^{t_k}}\,.
\]
which is the desired result.
\end{proof}

In summary, the key idea of our algorithm is to use the Theorem above to compute coefficients of the $C_{\v{\rho}}$ expansion in the $p_{\v{\mu}}$ basis.
However, direct computation of these coefficients is challenging.
Therefore, we first compute the coefficients in the $m_{\v{\kappa}}$ (monomial symmetric polynomial) basis, and then convert these to the $p_{\v{\mu}}$ basis using a change of basis.
Both steps are described in detail below.

\paragraph{Decomposition in terms of $m_{\v{\kappa}}$}

The following theorem establishes recurrence relations for the coefficients \( c_{\v{\rho}, \v{\kappa}} \) in the expansion of the zonal polynomial \( \mathcal{C}_{\v{\rho}} \) along the basis of monomial symmetric polynomials \( m_{\v{\kappa}} \).
This recurrence also allows for an efficient dynamic programming algorithm for their computation, which we will discuss below.

\begin{theorem}
Let $\v{\rho}\vdash n$, $m\geq n$.
Then, the zonal polynomial $\mathcal{C}_{\v{\rho}}$ can be represented as
\[ 
\mathcal{C}_{{\v{\rho}}}(z_1, \ldots, z_m) = \sum_{\v{\kappa} \leq {\v{\rho}}} c_{{\v{\rho}}, {\v{\kappa}}} \, m_{\v{\kappa}}(z_1, \ldots, z_m) , 
\]
where the order on the partitions is lexicographic, and the coefficients $c_{{\v{\rho}}, {\v{\kappa}}}$ are uniquely determined from the recurrence  relations
\begin{equation}\label{eqn:ZPCoefInitialValue}
\sum_{{\v{\rho}}:\, {\v{\kappa}} \leq {\v{\rho}} \leq (n)} c_{{\v{\rho}}, {\v{\kappa}}} = \frac{n!}{\kappa_1! \ldots \kappa_s!} ,
\end{equation}
and
\begin{equation}\label{eqn:ZPCoefRecurrence}
c_{{\v{\rho}}, {\v{\kappa}}} = \sum_{{\v{\mu}}:\, {\v{\kappa}} < {\v{\mu}} \leq {\v{\rho}}} \frac{(\kappa_r + t) - (\kappa_l - t)}{f_{\v{\rho}} - f_{\v{\kappa}}} c_{{\v{\rho}}, {\v{\mu}}} \quad\quad\quad f_{{\v{\kappa}}} = \sum_{i=1}^{s} \kappa_i (\kappa_i - i) ,
\end{equation}
where the summation is over all $\v{\mu} = (\kappa_1,\ldots, \kappa_{r-1}, \kappa_r + t, \kappa_{r+1}, \ldots, \kappa_{l-1}, \kappa_l - t, \kappa_{l+1}, \ldots, \kappa_s), t=1, \ldots, \kappa_l$ such such that after ordering ${\v{\mu}}$ in non-decreasing order, ${\v{\kappa}} < {\v{\mu}} \leq {\v{\rho}}$.
\end{theorem}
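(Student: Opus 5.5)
The plan is to obtain both relations from two classical characterizations of zonal polynomials \citep{james1968, muirhead1982}. The first is the normalization $\sum_{\v{\rho} \vdash n} \mathcal{C}_{\v{\rho}} = (z_1 + \cdots + z_m)^n$. The second is the fact that $\mathcal{C}_{\v{\rho}}$ is the unique symmetric homogeneous polynomial of degree $n$ that is an eigenfunction of the Laplace--Beltrami-type operator
\[
\mathcal{D} = \sum_{i=1}^m z_i^2 \frac{\partial^2}{\partial z_i^2} + \sum_{i \neq j} \frac{z_i^2}{z_i - z_j} \frac{\partial}{\partial z_i}
\]
and has leading monomial $m_{\v{\rho}}$ in lexicographic order. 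I would quote both of these, together with upper triangularity, $\mathcal{C}_{\v{\rho}} \in \operatorname{span}\{m_{\v{\kappa}} : \v{\kappa} \leq \v{\rho}\}$, rather than reprove them.

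\textbf{Step 1: the initial-value relation.} Expanding by the multinomial theorem,
\[
(z_1 + \cdots + z_m)^n = \sum_{\v{\kappa} \vdash n} \frac{n!}{\kappa_1! \cdots \kappa_s!}\, m_{\v{\kappa}}.
\]
Comparing the coefficient of $m_{\v{\kappa}}$ on both sides of the normalization identity, and using triangularity to discard every $\v{\rho}$ with $c_{\v{\rho},\v{\kappa}} = 0$, i.e.\ every $\v{\rho} < \v{\kappa}$, yields \eqref{eqn:ZPCoefInitialValue}. This equation pins down the diagonal coefficients $c_{\v{\kappa},\v{\kappa}}$ once all $c_{\v{\rho},\v{\kappa}}$ with $\v{\rho} > \v{\kappa}$ are known.

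\textbf{Step 2: the action of $\mathcal{D}$ on monomials.} This is the main computational obstacle. The second-order part acts diagonally: on $z^{\v{\kappa}}$ it gives $\sum_i \kappa_i(\kappa_i - 1)$. For the first-order part I will group the terms for each pair $i < j$ as $(z_i^2 \partial_i - z_j^2 \partial_j)/(z_i - z_j)$. Applied to $z_i^a z_j^b$ with $a \geq b$, this gives
\[
\frac{a z_i^{a+1} z_j^b - b z_i^a z_j^{b+1}}{z_i - z_j}.
\]
Writing $a z_i^{a+1} z_j^b - b z_i^a z_j^{b+1} = (a - b)\, z_i^{a+1} z_j^b + b\, z_i^a z_j^b (z_i - z_j)$ and dividing the first term with the geometric sum turns this into
\[
b\, z_i^a z_j^b + (a - b) \sum_{u=0}^{a-b} z_i^{a-u} z_j^{b+u}.
\]
It then remains to do the following:
\begin{itemize}
\item Separate the terms $u = 0$ and $u = a - b$; these reproduce the original exponent multiset. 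Together with the term $b\, z_i^a z_j^b$ and the second-order part, they give a diagonal contribution which, summed over all ordered positions, equals $f_{\v{\kappa}}$ plus a constant $c(n,m)$ independent of $\v{\kappa}$.
\item Observe that the remaining terms move $t$ units from the larger part to the smaller one. Each therefore produces a partition lexicographically smaller than the one it came from, with coefficient equal to the difference of the original two parts.
\end{itemize}
Reading this in reverse, the coefficient of $m_{\v{\kappa}}$ in $\mathcal{D} m_{\v{\mu}}$ is $(\kappa_r + t) - (\kappa_l - t)$ whenever $\v{\mu}$ is obtained from $\v{\kappa}$ by moving $t$ units from part $l$ to part $r$, and it is zero for every other $\v{\mu} \neq \v{\kappa}$. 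I expect some care with symmetrization and multiplicities when parts are equal or are repeated; this is the delicate point of the argument.

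\textbf{Step 3: the recurrence.} The eigenvalue of $\mathcal{C}_{\v{\rho}}$ is read off from its leading monomial, so it equals $f_{\v{\rho}} + c(n,m)$. Equating coefficients of $m_{\v{\kappa}}$ in $\mathcal{D}\,\mathcal{C}_{\v{\rho}} = (f_{\v{\rho}} + c)\,\mathcal{C}_{\v{\rho}}$ gives
\[
(f_{\v{\rho}} - f_{\v{\kappa}})\, c_{\v{\rho},\v{\kappa}} = \sum_{\v{\mu}} \big[(\kappa_r + t) - (\kappa_l - t)\big]\, c_{\v{\rho},\v{\mu}},
\]
where the sum runs over the $\v{\mu}$ described in Step 2. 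Triangularity restricts this sum to $\v{\mu} \leq \v{\rho}$, and the construction of $\v{\mu}$ guarantees $\v{\mu} > \v{\kappa}$. To divide by $f_{\v{\rho}} - f_{\v{\kappa}}$, I will use the standard fact that $f$ is strictly monotone along dominance moves, so $f_{\v{\rho}} \neq f_{\v{\kappa}}$ for $\v{\kappa} < \v{\rho}$; this requires an argument or a citation. Dividing yields \eqref{eqn:ZPCoefRecurrence}.

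Uniqueness follows by induction downward in lexicographic order. The recurrence determines every $c_{\v{\rho},\v{\kappa}}$ with $\v{\kappa} < \v{\rho}$ from the diagonal coefficient $c_{\v{\rho},\v{\rho}}$. The identity \eqref{eqn:ZPCoefInitialValue} taken at $\v{\kappa} = \v{\rho}$ then fixes $c_{\v{\rho},\v{\rho}}$, since every other term in that sum involves some $\v{\rho}' > \v{\rho}$, which has already been handled.
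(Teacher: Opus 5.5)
Your route (normalization $\sum_{\v{\rho}}\mathcal{C}_{\v{\rho}} = (z_1+\cdots+z_m)^n$, the Laplace--Beltrami eigen-equation, and triangularity) is the James/Muirhead argument that the paper simply cites. Steps 1 and 3 are fine in outline. However, two steps fail as written.

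\textbf{The monomial computation in Step 2 is false.} The pair operator $L_{ij} = (z_i^2\partial_i - z_j^2\partial_j)/(z_i - z_j)$ does not send a single monomial to a polynomial. For example, $L_{ij} z_i = z_i^2/(z_i - z_j)$, while your display returns $z_i + z_j$. The slip is in the geometric-series division: $\sum_{u=0}^{a-b} z_i^{a-u} z_j^{b+u} = (z_i^{a+1}z_j^b - z_i^b z_j^{a+1})/(z_i - z_j)$, so the term $-z_i^b z_j^{a+1}$ was dropped. Bookkeeping built on your display gives wrong coefficients. Summing it over the two monomials of $m_{(2)}$ in two variables gives $\mathcal{D}m_{(2)} = 6m_{(2)} + 4m_{(1,1)}$ instead of the true $4m_{(2)} + 2m_{(1,1)}$. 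So the conclusions you announce at the end of Step 2, which happen to be correct, are not actually derived.

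The repair is to apply $L_{ij}$ only to blocks that are symmetric in $z_i, z_j$. Group the terms of $m_{\v{\mu}}$ by their exponents off $\{i,j\}$, so each group is $z^\beta$ times $z_i^a z_j^b + z_i^b z_j^a$ (or $z_i^a z_j^a$), and use
\[
L_{ij}\bigl(z_i^a z_j^b + z_i^b z_j^a\bigr) = a\bigl(z_i^a z_j^b + z_i^b z_j^a\bigr) + (a-b)\sum_{u=1}^{a-b-1} z_i^{a-u} z_j^{b+u} \quad (a > b), \qquad L_{ij}\bigl(z_i^a z_j^a\bigr) = a\, z_i^a z_j^a .
\]
Read off the coefficient of the single monomial $z^{\v{\kappa}}$, with $\v{\kappa}$ sorted so that $\kappa_i \geq \kappa_j$ for $i<j$. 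The diagonal value is $\sum_i \kappa_i(\kappa_i - 1) + \sum_{i<j}\kappa_i = f_{\v{\kappa}} + n(m-1)$. Every triple $(r<l,\,t)$ with $1 \leq t \leq \kappa_l$ contributes exactly $(\kappa_r + t) - (\kappa_l - t)$ to the coefficient of $m_{\v{\kappa}}$ in $\mathcal{D} m_{\v{\mu}}$, with no extra multiplicity factors. This settles the delicate point you flagged, and explains why the sum in the statement runs over triples rather than over distinct partitions $\v{\mu}$.

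\textbf{The division by $f_{\v{\rho}} - f_{\v{\kappa}}$ is not justified in lexicographic order.} Strict monotonicity of $f$ along dominance moves does not give $f_{\v{\rho}} \neq f_{\v{\kappa}}$ for every lexicographically smaller $\v{\kappa}$, because lexicographic order merely extends dominance. Already for $n = 6$:
\begin{itemize}
\item $(3,3)$ is lexicographically below $(4,1,1)$, the two are incomparable in dominance, and $f_{(4,1,1)} = f_{(3,3)} = 9$.
\item The only $\v{\mu}$ reachable from $(3,3)$ are $(4,2)$, $(5,1)$, $(6)$, all lexicographically above $(4,1,1)$.
\item So for this pair the relation collapses to $0 \cdot c = 0$, and your downward induction stalls.
\end{itemize}
For the same reason, the characterization you quote (the unique eigenfunction whose lexicographically leading monomial is $m_{\v{\rho}}$) is false. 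The polynomial $\mathcal{C}_{(4,1,1)} + \alpha\, \mathcal{C}_{(3,3)}$ has the same leading term and the same eigenvalue.

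The missing ingredient is triangularity in the dominance order: $c_{\v{\rho},\v{\kappa}} = 0$ unless $\v{\kappa} \preceq \v{\rho}$ in dominance (standard for Jack polynomials at $\alpha = 2$). With it, the recurrence is needed only for $\v{\kappa}$ strictly dominated by $\v{\rho}$. There your monotonicity argument does give $f_{\v{\kappa}} < f_{\v{\rho}}$, and every other lexicographically smaller coefficient is zero. This is really a defect of the lexicographic phrasing of the statement itself, apparently carried over from the cited sources. So ``uniquely determined'' has to be read with this extra input.
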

\begin{proof}
See, for example, section 5 in \cite{james1968} or section 7.2 in \cite{muirhead1982}.
\end{proof}

We now describe how these recurrence relations facilitate an efficient dynamic programming scheme for computing all coefficients $c_{\v{\rho}, \v{\kappa}}$ for a fixed $\v{\rho}$.
The key observation is that, in the recurrence \eqref{eqn:ZPCoefRecurrence}, the right-hand side depends only on coefficients indexed by partitions $\v{\mu}$ that are lexicographically larger than $\v{\kappa}$.
Thus, by initializing $c_{\v{\rho}, \v{\rho}} = 1$ (remember that we will anyway re-normalize coefficients $c_{\v{\rho}, \v{\kappa}}$ later, so it is sufficient to compute them only up to some constant), and iterating through the partitions $\v{\kappa}$ in decreasing lexicographic order, we ensure that all required coefficient values for $\v{\mu} > \v{\kappa}$ have been computed previously.

\paragraph{Change of basis}

We now describe the computation of the transition matrix $T_{\v{\kappa}, \v{\mu}}$, which expresses monomial symmetric polynomials $m_{\v{\kappa}}$ in terms of products of power sum symmetric polynomials $p_{\v{\mu}}$.
In~\citet{merca2015}, the authors present an efficient algorithm for decomposing $m_{\v{\kappa}}$ with respect to the $p_{\v{\mu}}$ basis.
For completeness, we describe their algorithm here; see Algorithm~\ref{algorithm:AugmentedMonomials} for details.

In this algorithm, it is convenient to first define \textit{augmented} monomial symmetric polynomials as $\widetilde{m}_{\v{\kappa}} = t_1! \cdot t_2! \cdots t_n! \cdot m_{\v{\kappa}}$, where $t_j$ denotes the multiplicity of $j$ in the partition $\v{\kappa}$.
After computing the decomposition of $\widetilde{m}_{\v{\kappa}}$ in terms of $p_{\v{\mu}}$, one can readily recover the coefficients for the standard $m_{\v{\kappa}}$.

Compared to the original algorithm, we suggest a minor but effective modification: we cache all intermediate values of the recursive function during computation.
This prevents recomputation of already-calculated values, leading to a significant efficiency improvement in practice.

\input{algorithms/augmented-monomials}

\paragraph{Computational complexity}

We now analyze the computational complexity of evaluating a single value of zonal spherical function using \Cref{algorithm:ZSFUsingZP}.

First, consider the computation of $c_{\v{\rho}, \v{\kappa}}$ using the dynamic programming procedure described above.
According to the recurrence in \Cref{eqn:ZPCoefRecurrence}, the sum on the right-hand side involves at most $n^2$ terms: for each $l$, there are at most $\kappa_l$ possible values of $t$, and since $\sum_{i=1}^{s} \kappa_i = n$, the total number of choices over all $l$ is bounded by $n$.
Each $(l,t)$ pair can be combined with at most $n$ possible values of $r$.
Therefore, provided all required $c_{\v{\rho}, \v{\mu}}$ values on the right-hand side are already computed, each individual $c_{\v{\rho}, \v{\kappa}}$ can be evaluated in $O(n^3)$ time since we have $O(n^2)$ terms and for each term we need $O(n)$ operations.
Finally, computing $c_{\v{\rho}, \v{\kappa}}$ for a fixed partition $\v{\rho}$ and all possible $\v{\kappa}$ has $O(p(n) \, n^3)$ complexity.

Second, consider the computation of the transition matrix $T_{\v{\kappa}, \v{\mu}}$.
For complexity analysis, we assume that all matrix entries $T_{\v{\kappa}, \v{\mu}}$ are precomputed at the start of the algorithm.
This yields an upper bound on the total computational cost, and in practice, the cost of precomputation is amortized across all uses thanks to caching.
There are $O(p(1)^2 + p(2)^2 + \ldots + p(n)^2) = O(n \cdot p(n)^2)$ possible pairs $(\v{\kappa}, \v{\mu})$, since both are partitions of $m$ (where $m \leq n$).
Each matrix element can be computed in $O(n^2)$ time, given cached decompositions.
Therefore, the overall cost to precompute the transition matrix is $O(p(n)^2\, n^3)$.

Finally, to compute a single value of a zonal spherical function using \Cref{algorithm:ZSFUsingZP}, we use the previously computed $c_{\v{\rho},\v{\kappa}}$ and $T_{\v{\kappa},\v{\mu}}$ to evaluate coefficients $b_{\v{\rho},\v{\mu}}$.
For each fixed $\v{\rho}$, this requires iterating over all partitions $\v{\kappa}$ and performing $O(n)$ operations for each, giving a complexity of $O(p(n) n)$ for this step.
Hence, an overall complexity of computing $b_{\v{\rho}, \v{\mu}}$ for a fixed $\v{\rho}$ and $\v{\mu}$ given precomputed $T_{\v{\kappa}, \v{\mu}}$ is $O(p(n) \, n^3 + p(n) \,n) = O(p(n) \, n^3)$.

\paragraph{Memory complexity}

During precomputation, we store transition matrices $T_{\v{\kappa}, \v{\mu}}$ for all partitions of $m \leq n$.
Each entry in these matrices is indexed by a pair of partitions, and since there are $p(m)$ partitions of $m$, the total number of entries is
\[
\sum_{m=1}^n p(m)^2 = O\big(n \cdot p(n)^2\big).
\]
In our implementation, these matrices are represented as hash-maps whose keys are partitions, and each partition is stored as a tuple of size $O(n)$.
As a result, the total memory required for precomputation is $O(p(n)^2 n)$.

During querying, computing coefficients such as $c_{\v{\rho}, \v{\kappa}}$ and $b_{\v{\rho}, \v{\mu}}$ involves vectors of length $O(p(n))$, where each index is a partition stored as a tuple of size $O(n)$.
Therefore, the memory required for queries is $O(p(n) n)$.

\paragraph{Alternative implementations} 

Shortly after developing our algorithm, we discovered that a similar method had already appeared in the literature~\citep{hillier2022}.
However, it was introduced in a completely different context, not related to kernels or matchings.
While both approaches use zonal polynomials, there are important differences in their methods.
For example, to the best of our knowledge, their algorithm does not support on-the-fly computation; instead, it requires precomputing all values of the zonal spherical functions.
In contrast, our algorithm can compute these values as needed.
Additionally, our software is implemented in Python, whereas \citet{hillier2022} implementation uses Matlab.

\section{Approximation Quality}

\subsection{Derivation of the approximation error formula}\label{appendix:approximation_formula_derivation}

In this section, we will derive a formula that will allow us to estimate the approximation error of truncated kernels.
 Remember that the relative approximation error is given by 
 $$
 \frac{\|k - k_\mathcal{R}\|_{L^2(\mathcal{X}_n \times \mathcal{X}_n)}}{\|k\|_{L^2(\mathcal{X}_n \times \mathcal{X}_n)}}
 $$
Our first step here is to replace the space $L^2(\mathcal{X}_n \times \mathcal{X}_n)$ by $L^2(\mathcal{X}_n)$, which is easier to analyze.

\begin{lemma}
Let $f \in L^2(\mathcal{X}_n \times \mathcal{X}_n)$ be a bi-invariant function.
Then,
\[
\| f  \|^2_{L^2(\mathcal{X}_n \times \mathcal{X}_n)} = |\c{X}_n| \cdot \| f(\bullet, x_0) \|^2_{L^2(\mathcal{X}_n)}
\]
\end{lemma}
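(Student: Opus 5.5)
The plan is to use transitivity of the $S_{2n}$ action on $\c{X}_n$ together with the invariance of $f$. Here "bi-invariant" for a function on $\c{X}_n \times \c{X}_n$ should be read as stationarity in the sense of \Cref{eqn:stationary_kernel}: $f(\sigma \lacts x, \sigma \lacts x') = f(x, x')$ for all $\sigma \in S_{2n}$. This is the property enjoyed by $k$, by $k_{\c{R}}$, and hence by $k - k_{\c{R}}$, which is where the lemma will be applied.

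First, I split the double sum defining the squared norm by the second argument:
\[
\| f \|^2_{L^2(\c{X}_n \times \c{X}_n)} = \sum_{x' \in \c{X}_n} \sum_{x \in \c{X}_n} |f(x, x')|^2 .
\]
It then suffices to show that the inner sum $S(x') = \sum_{x} |f(x, x')|^2$ does not depend on $x'$. Once that holds, each of the $|\c{X}_n|$ inner sums equals $S(x_0) = \| f(\bullet, x_0) \|^2_{L^2(\c{X}_n)}$, which gives the claim.

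To show independence, fix $x'$ and use transitivity (\Cref{sec:stationary:homogeneous}) to choose $\sigma \in S_{2n}$ with $\sigma \lacts x_0 = x'$. The map $x \mapsto \sigma \lacts x$ is a bijection of $\c{X}_n$, so reindexing gives
\[
S(x') = \sum_{x} |f(\sigma \lacts x, \sigma \lacts x_0)|^2 .
\]
By invariance, $f(\sigma \lacts x, \sigma \lacts x_0) = f(x, x_0)$, and therefore $S(x') = S(x_0)$.

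No step is a real obstacle. The only point needing care is to state precisely which invariance is meant: diagonal invariance under the left action, equivalently $f(\sigma H, \pi H)$ depending only on $H\pi^{-1}\sigma H$. This is exactly the invariance the truncated and exact kernels satisfy by \Cref{thm:general_stationary_kernel}.
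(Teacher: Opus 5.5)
Your proof is correct and is essentially the paper's argument: both use transitivity of the $S_{2n}$-action together with diagonal invariance, $f(\sigma \lacts x, \sigma \lacts x') = f(x, x')$, to reduce everything to the slice $f(\bullet, x_0)$. The only difference is bookkeeping. The paper lifts the double sum to $S_{2n} \times S_{2n}$, which introduces the factor $|\c{X}_n|^2/|S_{2n}|^2$, and then reindexes over the group; you instead reindex directly on $\c{X}_n$ one second argument at a time, which is slightly cleaner.
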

\begin{proof}
The following holds,
\begin{align*}
\| f \|^2_{L^2(\mathcal{X}_n \times \mathcal{X}_n)}
&= \sum_{x, y \in \mathcal{X}_n} |f(x, y)|^2 
= \frac{|\c{X}_n|^2}{|S_{2n}|^2} \sum_{\sigma, \pi \in S_{2n}} |f(\sigma x_0, \pi x_0)|^2 \\
&= \frac{|\c{X}_n|^2}{|S_{2n}|^2} \sum_{\sigma, \pi \in S_{2n}} |f(\pi^{-1} \sigma x_0, x_0)|^2 
= \frac{|\c{X}_n|^2}{|S_{2n}|} \sum_{\sigma \in S_{2n}} |f(\sigma x_0, x_0)|^2 \\
&= |\c{X}_n| \cdot \| f(\bullet, x_0) \|^2_{L^2(\mathcal{X}_n)} .
\end{align*}
as required in the lemma.
\end{proof}
According to this lemma, we are interested in computing 
$$
\frac{\|k(\bullet, x_0) - k_{\c{R}}(\bullet, x_0)\|_{L^2(\c{X}_n)}}{\|k(\bullet, x_0)\|_{L^2(\c{X}_n)}}
$$
However, since both $k$ and $k - k_{\c{R}}$ are of the form $k_{\c{R}'}$ for some $\c{R}'$ (this is $\c{R}'\ = \{\v{\rho} \vdash n\}$ for $k$ and $\c{R}' = \c{R}^c = \{\v{\rho} \vdash n \mid \v{\rho} \notin \c{R}\}$ for $k - k_{\c{R}}$), it is sufficient to derive a formula for $\|k_{\c{R}}(\bullet, x_0)\|_{L^2(\c{X}_n)}$ with an arbitrary $\c{R} \subseteq \{ \v{\rho} \vdash n\}$.
This is given by the following lemma.

\begin{lemma}
For arbitrary $\c{R} \subseteq \{ \v{\rho} \vdash n \}$, and truncated kernel
\[
k_{\c{R}} (\sigma H_n, \sigma' H_n)
= \sum_{\v{\rho} \in \c{R}}
\Phi(\lambda_{2\v{\rho}})
\,
\frac{d_{2\v{\rho}}}{|S_{2n}|}\, \phi_{\v{\rho}} ((\sigma')^{-1} \sigma),
\]
one has
\[
\| k_{\c{R}} (\bullet, x_0)\|_{L^2(\c{X}_n)} = \frac{|\c{X}_n|}{|S_{2n}|^2} \sum_{\v{\rho} \in \mathcal{R}} \Phi(\lambda_{2\v{\rho}})^2 d_{2\v{\rho}} .
\]
\end{lemma}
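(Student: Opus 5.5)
The plan is to expand $k_{\c{R}}(\bullet, x_0)$ in zonal spherical functions, use their orthogonality to kill the cross terms, and compute each $\|\phi_{\v{\rho}}\|^2_{L^2(\c{X}_n)}$ from Schur orthogonality. One remark first: the right-hand side is quadratic in $\Phi$, so the identity must be read for the \emph{squared} norm $\|k_{\c{R}}(\bullet,x_0)\|^2_{L^2(\c{X}_n)}$. I will prove that version, which is the one needed for the relative error.

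First, fix $x_0 = H_n$, i.e.\ $\sigma' = e$. Then $k_{\c{R}}(\sigma H_n, x_0) = \sum_{\v{\rho}\in\c{R}} \Phi(\lambda_{2\v{\rho}})\frac{d_{2\v{\rho}}}{|S_{2n}|}\phi_{\v{\rho}}(\sigma)$. Each $\phi_{\v{\rho}}$ is right $H_n$-invariant, so it is a well-defined function on $\c{X}_n$. For any right-invariant $f$ we have $\sum_{x\in\c{X}_n}|f(x)|^2 = \frac{1}{|H_n|}\sum_{\sigma\in S_{2n}}|f(\sigma)|^2$. This lets me do every computation on the group and divide by $|H_n|$ at the end.

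Second, I expand the square. The cross terms involve $\sum_{\sigma\in S_{2n}} \phi_{\v{\rho}}(\sigma)\overline{\phi_{\v{\rho}'}(\sigma)}$ with $\v{\rho}\neq\v{\rho}'$. As recalled in \Cref{appendix:stationary_kernels}, $\phi_{\v{\rho}}(\sigma) = \langle u, \pi^{(2\v{\rho})}(\sigma) u\rangle$ is a diagonal matrix coefficient of the irreducible representation indexed by $2\v{\rho}$, with $\|u\|=1$. Matrix coefficients of inequivalent irreducibles are orthogonal, and $2\v{\rho}\neq 2\v{\rho}'$, so every cross term vanishes. Hence
\[
\|k_{\c{R}}(\bullet,x_0)\|^2_{L^2(\c{X}_n)} = \sum_{\v{\rho}\in\c{R}} \Phi(\lambda_{2\v{\rho}})^2\frac{d_{2\v{\rho}}^2}{|S_{2n}|^2}\,\|\phi_{\v{\rho}}\|^2_{L^2(\c{X}_n)}.
\]

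Third, I compute the diagonal terms with the Schur orthogonality relations. Extend $u$ to an orthonormal basis with $u = u_1$, so that $\phi_{\v{\rho}} = \pi^{(2\v{\rho})}_{1,1}$. Schur orthogonality gives $\sum_{\sigma\in S_{2n}}|\pi^{(2\v{\rho})}_{1,1}(\sigma)|^2 = |S_{2n}|/d_{2\v{\rho}}$; this is the same normalization that makes $\sqrt{d_{\v{\rho}}/|G|}\,\pi_{i,j}^{(\v{\rho})}$ orthonormal in \Cref{lemma:MatrixCoefsAreLaplacianEigenvectors}. Dividing by $|H_n|$ yields $\|\phi_{\v{\rho}}\|^2_{L^2(\c{X}_n)} = |\c{X}_n|/d_{2\v{\rho}}$. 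Substituting into the display above gives $\frac{|\c{X}_n|}{|S_{2n}|^2}\sum_{\v{\rho}\in\c{R}}\Phi(\lambda_{2\v{\rho}})^2 d_{2\v{\rho}}$, as claimed.

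The only delicate step is the normalization in the third step. One must make sure that the $\phi_{\v{\rho}}$ defined by \Cref{thm:character_projection} really is the unit-vector diagonal coefficient, not a rescaled version of it. The proof of \Cref{thm:character_projection} already identifies the $H_n$-average of $\chi^{(2\v{\rho})}$ with $\langle u_1,\pi(\cdot)u_1\rangle$, and $\phi_{\v{\rho}}(e)=1$ is consistent with $\|u_1\|=1$. With that check done, the rest is bookkeeping of the factors $|H_n|$, $|S_{2n}|$ and $|\c{X}_n| = |S_{2n}|/|H_n|$.
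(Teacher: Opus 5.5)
Your proposal is correct and follows the paper's own proof. Both expand $k_{\mathcal{R}}(\bullet,x_0)$ in zonal spherical functions, drop the cross terms by orthogonality, and use $\|\phi_{\rho}\|^2_{L^2(\mathcal{X}_n)} = |\mathcal{X}_n|/d_{2\rho}$. The only difference is that you derive this norm from Schur orthogonality, after checking that $\phi_{\rho}$ is the unit-vector diagonal coefficient, whereas the paper simply cites Corollary~4.6.4 of Ceccherini-Silberstein et al. You are also right that the identity concerns the squared norm: the paper's proof computes $\|k_{\mathcal{R}}(\bullet,x_0)\|^2_{L^2(\mathcal{X}_n)}$, so the missing square in the statement is a typo.
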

\begin{proof}
Remember that $\{ \phi_\rho \}_{\rho \in \mathcal{R}}$ is an orthogonal system in $L(X)$, and $\norm{\phi_\rho}^2_{L^2(\c{X}_n)} = \frac{|\c{X}_n|}{d_{2\v{\rho}}}$ (see Corollary 4.6.4 in \citet{ceccherinisilberstein2008}).
Therefore,
\begin{align*}
\|k_{\mathcal{R}}(\bullet, x_0)\|_{L^2(\c{X}_n)}^2
&= \sum_{\v{\rho} \in \c{R}} \left( \Phi(\lambda_{2\v{\rho}}) \frac{d_{2\v{\rho}}}{|S_{2n}|} \right)^2 \|\phi_{\v{\rho}}\|^2_{L^2(\c{X}_n)}
= \sum_{\v{\rho} \in \c{R}} \left( \Phi(\lambda_{2\v{\rho}}) \frac{d_{2\v{\rho}}}{|S_{2n}|} \right)^2 \frac{|\c{X}_n|}{d_{2\v{\rho}}} \\
&= |\c{X}_n| \sum_{\rho \in \mathcal{R}} \left( \Phi(\lambda_{2\v{\rho}}) \frac{d_{2\v{\rho}}}{|S_{2n}|} \right)^2 \frac{1}{d_{2\v{\rho}}}
= |\c{X}_n| \sum_{\rho \in \mathcal{R}} \frac{\Phi(\lambda_{2\v{\rho}})^2 d_{2\v{\rho}}}{|S_{2n}|^2} \\
&= \frac{|\c{X}_n|}{|S_{2n}|^2} \sum_{\rho \in \mathcal{R}} \Phi(\lambda_{2\v{\rho}})^2 d_{2\v{\rho}} \\
\end{align*} 
which is the required formula.
\end{proof}

The computation of eigenvalues $\lambda_{2\v{\rho}}$ and dimensions $d_{2\v{\rho}}$ is relatively easy, compared to the computation of zonal spherical function values.
Therefore, it is possible to explicitly compute the term corresponding to each partition $\rho$ in the sum above.

\subsection{Additional Results}\label{appendix:additional_results}

In this section, we provide additional results on approximation quality.

Specifically, in \Cref{fig:extended-approximation-error} we provide additional plots of approximation error with a wider range of smoothness hyperparameter $\nu$ and of values of $n$.
As one can see, in all considered cases, $50$ partitions is enough to achieve a relative approximation error of $10^{-2}$ or less.

\Cref{fig:alternative-truncation-heuristics} provides scatter plots of the eigenvalue $\lambda_{2\v{\rho}}$ against various attributes of the partition $\v{\rho} = (\rho_1, \ldots, \rho_s)$.
Namely, its maximal entry $\rho_1$, its length $s$ and its minimal entry $\rho_s$, which can serve as heuristics for selection of partitions in the truncation.
Remember that we are interested in selecting partitions corresponding to the lowest eigenvalues.
One can see that among considered heuristics, the maximal entry $\rho_1$ best separates lower eigenvalues from the rest.

Finally, \Cref{fig:spectral-density} provides plots of ``spectral density'' with and without density correction.
Here, by ``spectral density'' we mean the (log-scaled) impact $\Phi(\lambda_{2\v{\rho}})^2 d_{2\v{\rho}}$ of a specific summand on the squared $L^2$ norm of the kernel $\| k \|^{2}_{L^2(\c{X} \times \c{X})}$.
Informally, high values of spectral density in low-eigenvalue regions corresponds to smoother kernels and vice versa.

As discussed in \Cref{appendix:quotient}, the degrees of all nodes in the quotient Cayley graph are equal to $n(n-1)$.
Therefore, including or excluding degree correction can greatly affect the kernel, and \Cref{fig:spectral-density} supports this statement.
Specifically, without degree correction, the spectral density is concentrated in relatively high-eigenvalue regions, while adding the degree correction makes the density concentrated in low-eigenvalue regions.

\section{Phylogenetic Trees}\label{appendix:ptrees}

\input{figures/small-nni-big-matching-dist}

\BijectionSingleNniBigMatchingDist*
\begin{proof}

We prove this result by constructing a specific counterexample.
\Cref{fig:small-nni-big-matching-dist} illustrates the case where $n=8$.
The geometric discrepancy arises from the label extension procedure (see, \Cref{fig:ptree-numering-example} for example) in the Diaconis--Holmes encoding, where the label assigned to a specific internal node depends on the topology of the entire tree.

Consider the caterpillar tree in \Cref{fig:small-nni-big-matching-dist} with leaves labeled
$1,2,\dots,n+1$.
During label extension, the first unresolved pair is $(3,4)$, so its parent receives the
label $n+2$.
This produces the matching pair $(5,n+2)$.
Continuing inductively along the caterpillar, we
obtain the sequence of pairs
\[
(5,n+2),\ (6,n+3),\ (7,n+4),\ \ldots,\ (n+1,2n-2).
\]

Now apply a single NNI move at the root so that the pair $(1,2)$ is formed (as in
\Cref{fig:small-nni-big-matching-dist}).
Then the parent of $(1,2)$ receives the label $n+2$, while the
parent of $(3,4)$ is created one step later and therefore receives label $n+3$.
Consequently, the pair
involving leaf $5$ becomes $(5,n+3)$, and continuing the procedure yields
\[
(5,n+3),\ (6,n+4),\ (7,n+5),\ \ldots,\ (n+1, 2n-1) .
\]

Comparing the two resulting matchings, we see that all leaves $5,6,\dots,n+1$ are paired with different
partners in the two constructions.
A direct inspection shows that the only common pair is $(3,4)$.
Hence, the two matchings differ in at least $n-1$ pairs.
In the quotient Cayley graph, an edge corresponds to applying a single transposition.
Such a move can
modify at most two pairs of a matching.
Therefore, any path transforming the first matching into the
second must have length at least $(n-1)/2$, and the quotient Cayley graph distance between the two
matchings is at least $(n-1)/2$.

\end{proof}

\begin{figure}[t!]
\begin{subfigure}[b]{0.45\textwidth}
\begin{forest}
[R [1] [$\bullet$ [2] [$\bullet$ [10] [$\bullet$ [9] [$\bullet$ [8] [$\bullet$ [7] [$\bullet$ [6] [$\bullet$ [5] [$\bullet$ [3] [4]]]]]]]]]]
\end{forest}
\end{subfigure}
\begin{subfigure}[b]{0.45\textwidth}
\begin{forest}
[R [$\bullet$ [9] [$\bullet$ [7] [$\bullet$ [5] [$\bullet$ [1] [2]]]]] [$\bullet$ [10] [$\bullet$ [8] [$\bullet$ [6] [$\bullet$ [3] [4]]]]]]
\end{forest}
\end{subfigure}
\caption{Illustration for the proof of Proposition~\ref{prop:matching-small-move-big-nni}.}
\label{fig:matching-small-move-big-nni}
\end{figure}

\MatchingSmallMoveBigNni*
\begin{proof}

Recall the Diaconis--Holmes correspondence between matchings in $\c{X}_n$ and rooted binary phylogenetic trees
with $n+1$ leaves.

We use the following simple observation: a single NNI move changes the height of a rooted binary phylogenetic tree by at most $1$.
Consequently, for any two trees, NNI-distance is lower bounded by a difference of heights of corresponding trees.

We now construct, for every $n\ge 9$, two matchings that differ by a single transposition but whose trees
have heights differing by at least $\frac{n}{2}-2$.

Define the matching $x_1\in \c{X}_n$ by
\[
x_1
=
\bigl((3,4),(5,n+2),(6,n+3),\ldots,(n+1,2n-2),(2,2n-1),(1,2n)\bigr).
\]
Let $\sigma$ be the transposition swapping $1$ and $2n-1$.
Then $\sigma\triangleright x_1$ differs from $x_1$ only in the last two pairs, and we obtain
\[
x_2 := \sigma\triangleright x_1
=
\bigl((1,2),(3,4),(5,n+2),(6,n+3),\ldots,(n+1,2n-2),(2n-1,2n)\bigr).
\]

Let $T_1,T_2$ be the rooted phylogenetic trees corresponding to $x_1,x_2$ under the Diaconis--Holmes map.

\medskip\noindent
\textbf{Height of $T_1$.}
In the tree reconstruction procedure, the first eligible pair among labels $\{1,\dots,n+1\}$ is $(3,4)$, so the
first-created internal node $n+2$ has children $3$ and $4$.
Next, among labels $\{1,\dots,n+2\}$, the smallest eligible pair is $(5,n+2)$, so the next internal node
$n+3$ has children $5$ and $n+2$.
Continuing inductively, we create a caterpillar chain: each new internal node has one new leaf as one child
and the previously created internal node as the other child, until the pair $(2,2n-1)$ creates node $2n$.
Finally, the pair $(1,2n)$ attaches leaf $1$ as a sibling of node $2n$ under the root.
Therefore the longest root-to-leaf path has length $n$.

\medskip\noindent
\textbf{Height of $T_2$.}
Now the tree reconstruction starts with two pairs: $(1,2)$ creates $n+2$, and $(3,4)$ creates $n+3$.
After that, the procedure alternates between attaching a new leaf to the current top of the ``left'' chain
(rooted at $n+2$) and attaching a new leaf to the current top of the ``right'' chain (rooted at $n+3$):
\[
(5,n+2)\ \leadsto\ n+4,\qquad (6,n+3)\ \leadsto\ n+5,\qquad (7,n+4)\ \leadsto\ n+6,\qquad (8,n+5)\ \leadsto\ n+7,\ \ldots
\]
Thus we obtain two caterpillar subtrees whose heights differ by at most $1$.
The final pair $(2n-1,2n)$ makes the two top internal nodes siblings under the root.
A direct count shows that the taller of the two chains has at most $n+1$ nodes, and thus the height of the longer chain is at most $n/2+1$, and the overall height of $T_2$ is at most $n/2+2$.

\medskip
Combining the two height estimates gives
\[
\bigl|\mathrm{ht}(T_1)-\mathrm{ht}(T_2)\bigr|
\;\ge\;
n-\left(\frac{n}{2}+2\right)
=
\frac{n}{2}-2 .
\]

\end{proof}

\begin{figure}[t]
\hfill
\begin{subfigure}[b]{0.45\textwidth}
\centering
\begin{forest}
[$u$ [{$v$} [$A$] [$B$]] [$C$]]
\end{forest}
\caption{Subtree $T$: before NNI-move.}
\end{subfigure}
\hfill
\begin{subfigure}[b]{0.45\textwidth}
\centering
\begin{forest}
[$v$ [$A$] [$u$ [$B$] [$C$]]]
\end{forest}
\caption{Subtree $\widetilde{T}$: after NNI-move.}
\end{subfigure}
\hfill
\caption{Illustration for the proof of \Cref{EmbeddingSingleNniSmallMatchingDistance}.
}
\label{fig:embedding-single-nni-small-matching-illustration}
\end{figure}
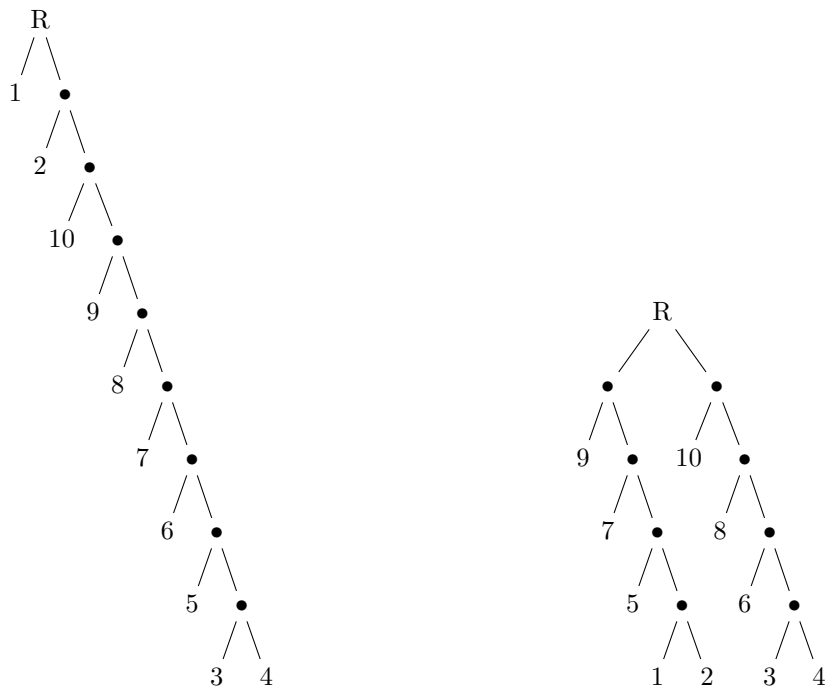

\EmbeddingSingleNniSmallMatchingDistance*
\begin{proof}

By analyzing the embedding proposed by \cite{richman2025}, we establish the following two properties:
\begin{enumerate}
\item \textbf{Locality of internal labels.}
The label of an internal node depends only on the labels of leaves in its subtree.
More precisely, the label equals $-k$, where $k$ is the largest leaf label $j$ in the subtree such that no
internal node in the same subtree has label $-j$.

\item \textbf{Labels present in a subtree.}
Let a subtree have leaf labels $0 \le a_1 < \dots < a_k$. Then the set of labels appearing on \emph{all}
nodes of this subtree is $\{-a_2,\dots,-a_k\}\ \cup\ \{a_1,\dots,a_k\}$.
\end{enumerate}
In particular, unlike the Diaconis--Holmes labeling, the label of a fixed internal node does not depend on
the global topology of the tree.

Now consider an NNI move performed on an internal edge $(u,v)$ inside a rooted subtree $T$ (see
\Cref{fig:embedding-single-nni-small-matching-illustration}).
Assume that $u$ has children $v$ and a subtree $C$, while $v$ has children subtrees $A$ and $B$.
After the NNI move we obtain a subtree $\widetilde T$ in which $v$ has children $A$ and $u$, and $u$ has
children $B$ and $C$.

Let $\mathrm{label}_T(\cdot)$ and $\mathrm{label}_{\widetilde T}(\cdot)$ denote the node labels before and after the move.
By the locality property above, the labelings \emph{within} each of the subtrees $A$, $B$, and $C$ are unchanged,
and therefore all matching pairs internal to $A$, internal to $B$, and internal to $C$ are unchanged as well.
Hence, the only pairs within subtree $T$ that may change are those created at the two internal parents $v$ and $u$.

Denote by $\mathrm{root}(\cdot)$ the root node of a subtree. In $T$ the two relevant pairs are
\[
\bigl(\mathrm{label}_T(\mathrm{root}(A)),\,\mathrm{label}_T(\mathrm{root}(B))\bigr)
\qquad\text{and}\qquad
\bigl(\mathrm{label}_T(v),\,\mathrm{label}_T(\mathrm{root}(C))\bigr),
\]
while in $\widetilde T$ they become
\[
\bigl(\mathrm{label}_{\widetilde T}(\mathrm{root}(A)),\,\mathrm{label}_{\widetilde T}(u)\bigr)
\qquad\text{and}\qquad
\bigl(\mathrm{label}_{\widetilde T}(\mathrm{root}(B)),\,\mathrm{label}_{\widetilde T}(\mathrm{root}(C))\bigr).
\]
Since $\mathrm{root}(A),\mathrm{root}(B),\mathrm{root}(C)$ belong to unchanged subtrees, we may recognize
$\mathrm{label}_T(\mathrm{root}(A))=\mathrm{label}_{\widetilde T}(\mathrm{root}(A))$ and similarly for $B,C$. Thus the change is
\begin{align*}
\bigl(\mathrm{label}_T(\mathrm{root}(A)),\,\mathrm{label}_T(\mathrm{root}(B))\bigr)
&\Rightarrow
\bigl(\mathrm{label}_T(\mathrm{root}(A)),\,\mathrm{label}_{\widetilde T}(u)\bigr),\\
\bigl(\mathrm{label}_T(v),\,\mathrm{label}_T(\mathrm{root}(C))\bigr)
&\Rightarrow
\bigl(\mathrm{label}_T(\mathrm{root}(B)),\,\mathrm{label}_T(\mathrm{root}(C))\bigr).
\end{align*}

We claim that these transformations can be realized by at most two transpositions of labels.
Define
\[
\sigma_1 \;=\; \bigl(\mathrm{label}_T(\mathrm{root}(B)),\,\mathrm{label}_{\widetilde T}(u)\bigr).
\]
Applying $\sigma_1$ swaps the two labels $\mathrm{label}_T(\mathrm{root}(B))$ and $\mathrm{label}_{\widetilde T}(u)$ and therefore
turns the first pair into the desired one. If additionally
$\mathrm{label}_T(v)\neq \mathrm{label}_{\widetilde T}(u)$, we also apply
\[
\sigma_2 \;=\; \bigl(\mathrm{label}_T(v),\,\mathrm{label}_T(\mathrm{root}(B))\bigr),
\]
which corrects the second pair as well. Thus, inside the affected part of the matching, at most two
transpositions suffice.

It remains to check that no additional transpositions are needed to fix the pair involving the root of $T$.
By the subtree label-set property, the set of labels appearing in the subtree rooted at $u$ is unchanged by
NNI. In particular,
\[
\{\mathrm{label}_T(u),\mathrm{label}_T(v)\} \;=\; \{\mathrm{label}_{\widetilde T}(u),\mathrm{label}_{\widetilde T}(v)\}.
\]
Hence there are only two cases:
\begin{enumerate}
\item If $\mathrm{label}_T(v)=\mathrm{label}_{\widetilde T}(u)$, then necessarily $\mathrm{label}_T(u)=\mathrm{label}_{\widetilde T}(v)$.
In this case the root label of the considered subtree is unchanged, and applying $\sigma_1$ does not affect
the pair containing $\mathrm{label}_T(u)$.

\item If $\mathrm{label}_T(v)\neq \mathrm{label}_{\widetilde T}(u)$, then $\mathrm{label}_T(u)=\mathrm{label}_{\widetilde T}(u)$ and
$\mathrm{label}_T(v)=\mathrm{label}_{\widetilde T}(v)$.
In this case, after applying $\sigma_1$ and $\sigma_2$, the pair incident to $\mathrm{label}_T(u)$ is transformed
into the pair incident to $\mathrm{label}_{\widetilde T}(u)$, as required.
\end{enumerate}
Therefore the pair involving the root of the modified subtree remains consistent after the same
(at most two) transpositions.

Finally, since the set of labels appearing in the modified subtree is unchanged, all labels of internal nodes
\emph{outside} this subtree (which depend only on the label sets of their descendant subtrees) remain unchanged.
Thus no other matching pairs are affected.

We conclude that the matchings associated with $T$ and $\widetilde T$ differ by at most two transpositions,
so their distance in the quotient Cayley graph is at most $2$.

\end{proof}

\newpage

\begin{figure}[t]
\centering
\begin{subfigure}[b]{0.20\textwidth}
\includegraphics[width=\textwidth, trim=55pt 20pt 0pt -10pt, clip]{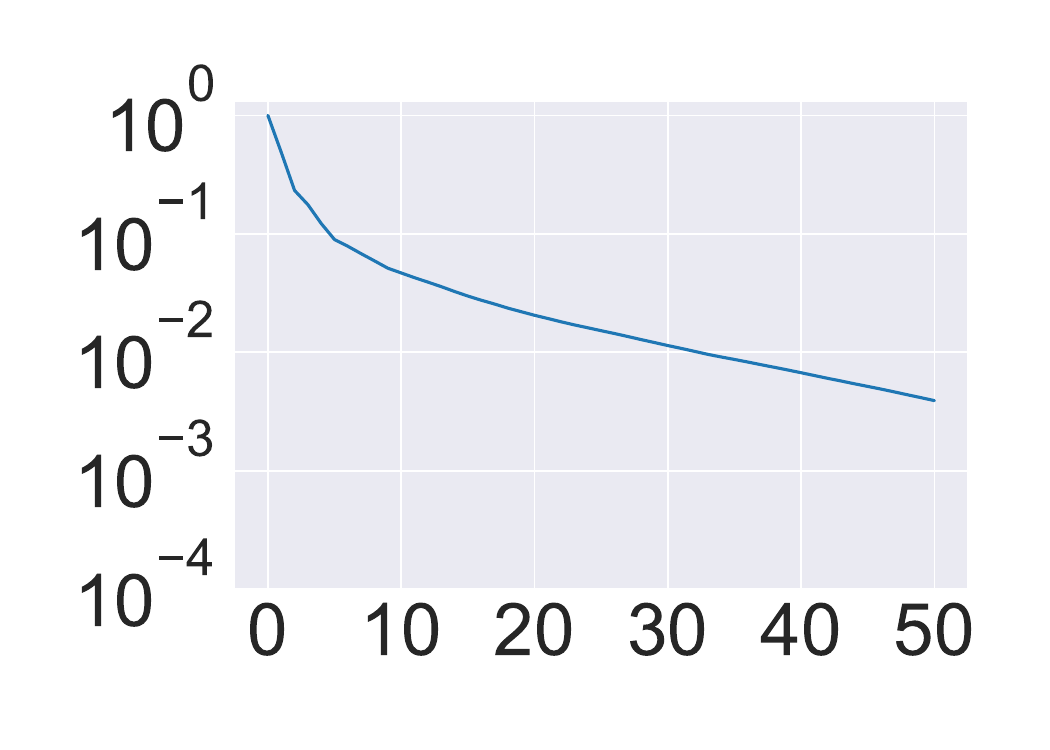}
\caption{$n=15,\ \nu=0.5$}
\end{subfigure}
\hfill
\begin{subfigure}[b]{0.20\textwidth}
\includegraphics[width=\textwidth, trim=55pt 20pt 0pt -10pt, clip]{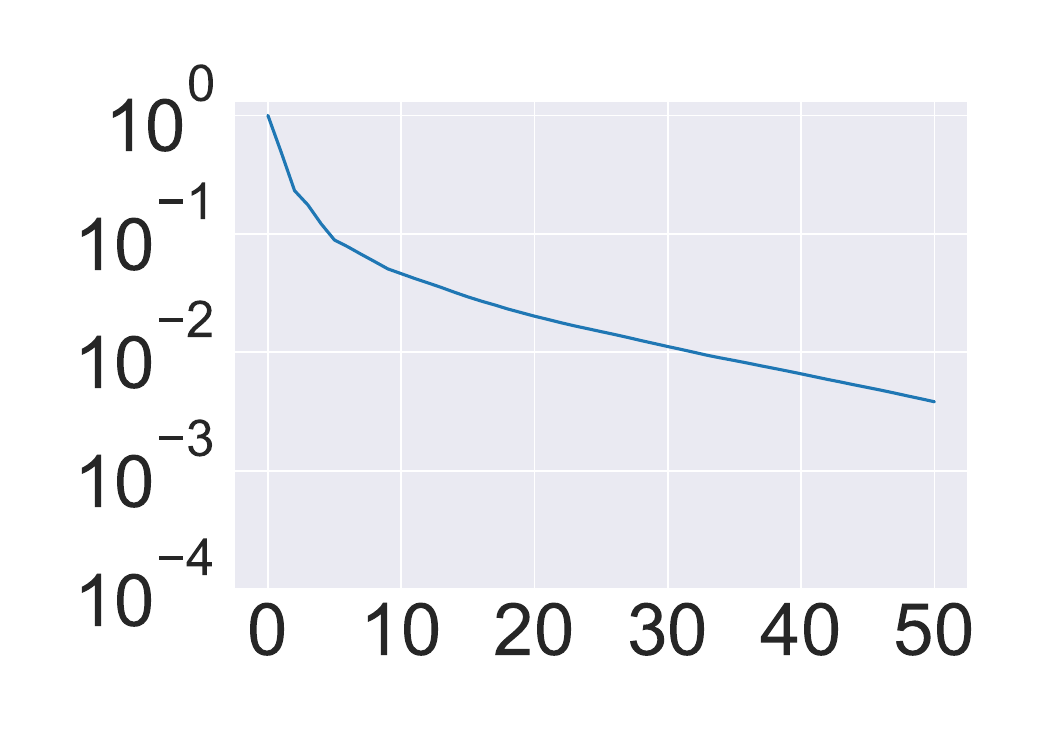}
\caption{$n=15,\ \nu=1.5$}
\end{subfigure}
\hfill
\begin{subfigure}[b]{0.20\textwidth}
\includegraphics[width=\textwidth, trim=55pt 20pt 0pt -10pt, clip]{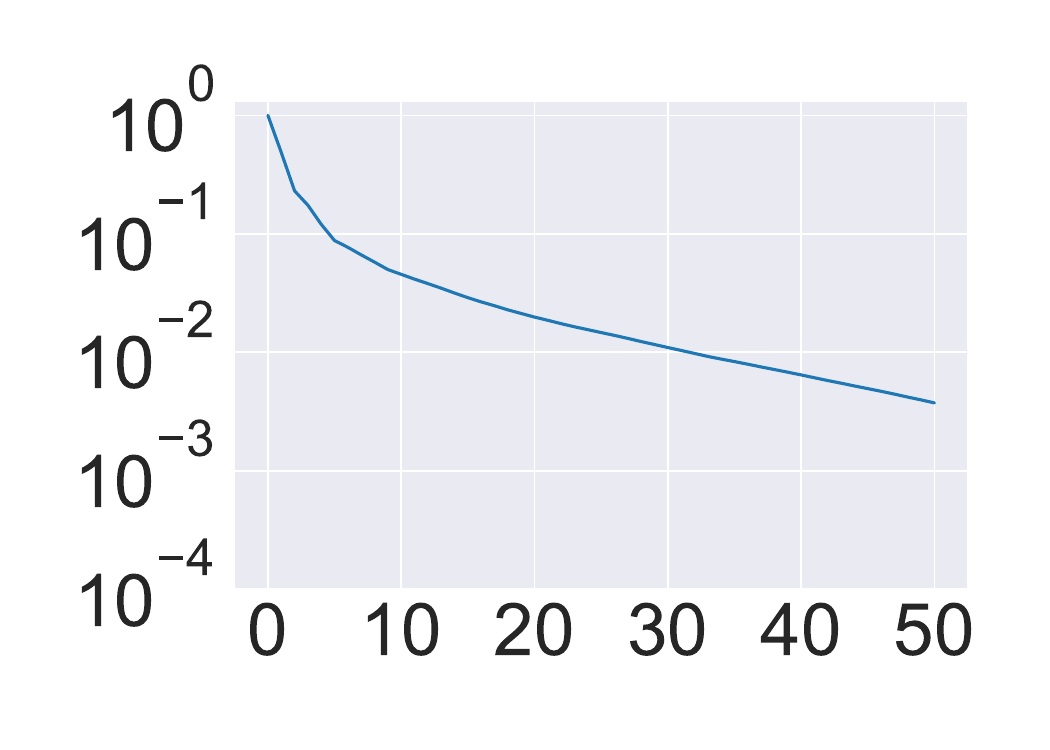}
\caption{$n=15,\ \nu=2.5$}
\end{subfigure}
\hfill
\begin{subfigure}[b]{0.20\textwidth}
\includegraphics[width=\textwidth, trim=55pt 20pt 0pt -10pt, clip]{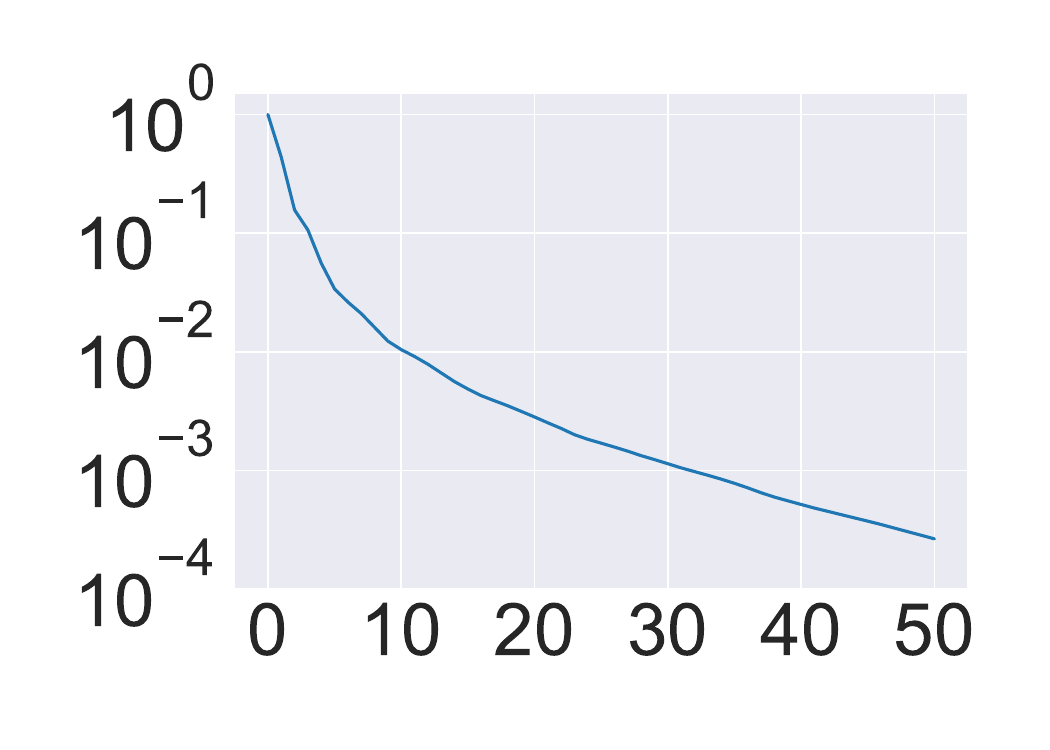}
\caption{$n=15,\ \nu=\infty$}
\end{subfigure}

\begin{subfigure}[b]{0.20\textwidth}
\includegraphics[width=\textwidth, trim=55pt 20pt 0pt -10pt, clip]{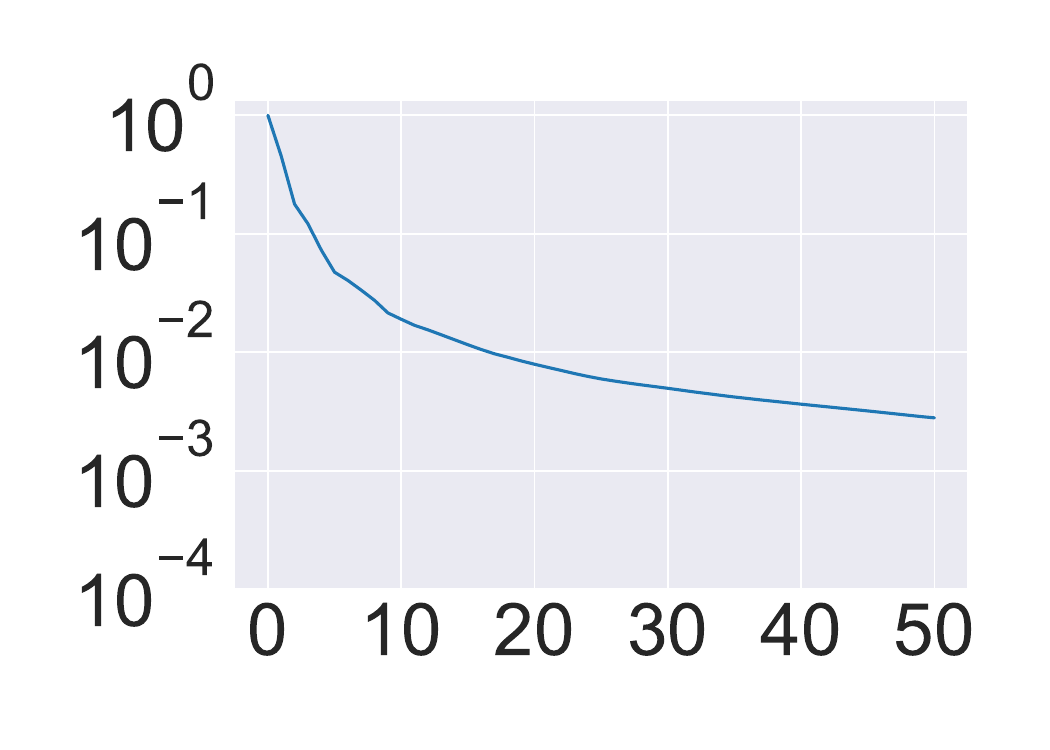}
\caption{$n=20,\ \nu=0.5$}
\end{subfigure}
\hfill
\begin{subfigure}[b]{0.20\textwidth}
\includegraphics[width=\textwidth, trim=55pt 20pt 0pt -10pt, clip]{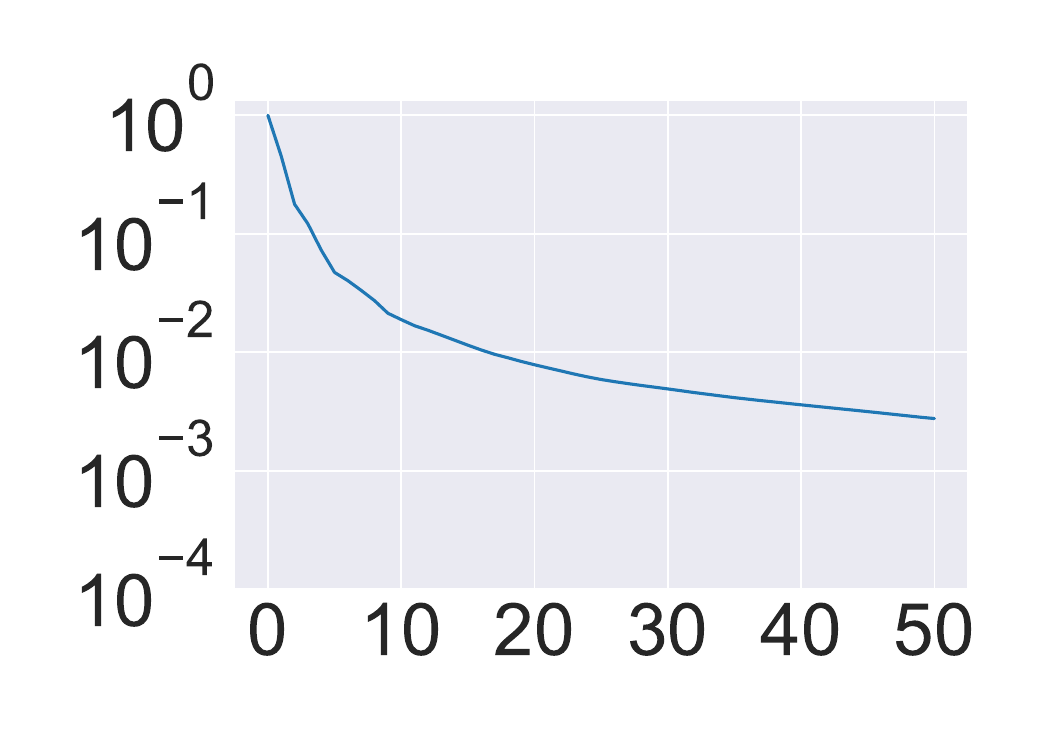}
\caption{$n=20,\ \nu=1.5$}
\end{subfigure}
\hfill
\begin{subfigure}[b]{0.20\textwidth}
\includegraphics[width=\textwidth, trim=55pt 20pt 0pt -10pt, clip]{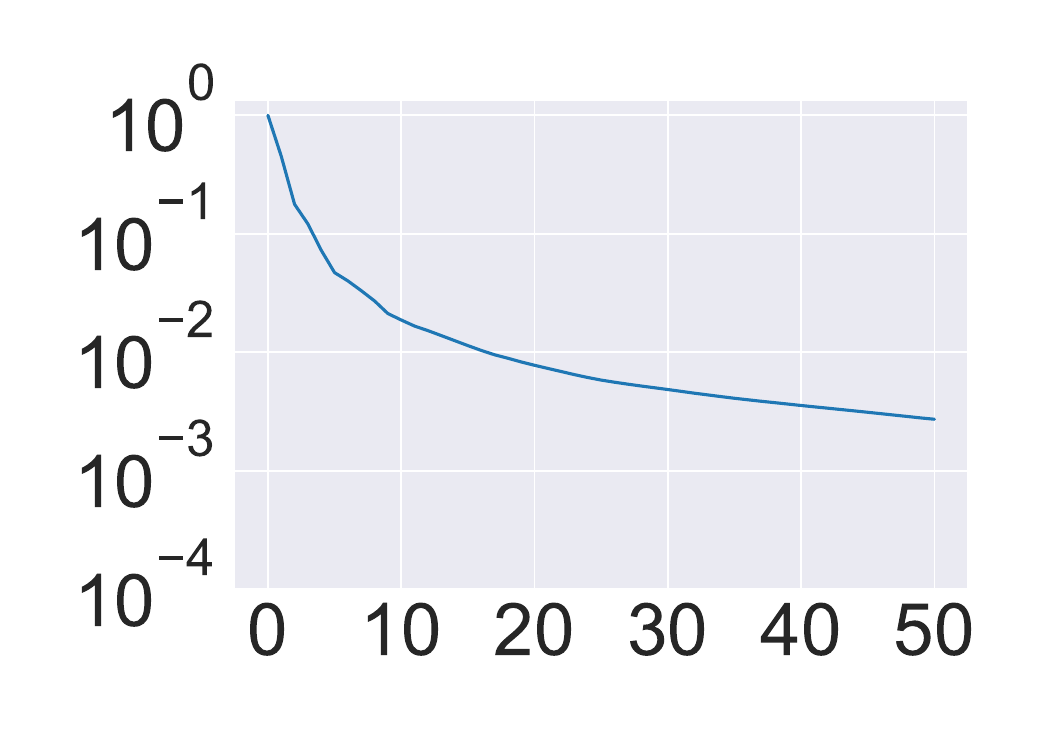}
\caption{$n=20,\ \nu=2.5$}
\end{subfigure}
\hfill
\begin{subfigure}[b]{0.20\textwidth}
\includegraphics[width=\textwidth, trim=55pt 20pt 0pt -10pt, clip]{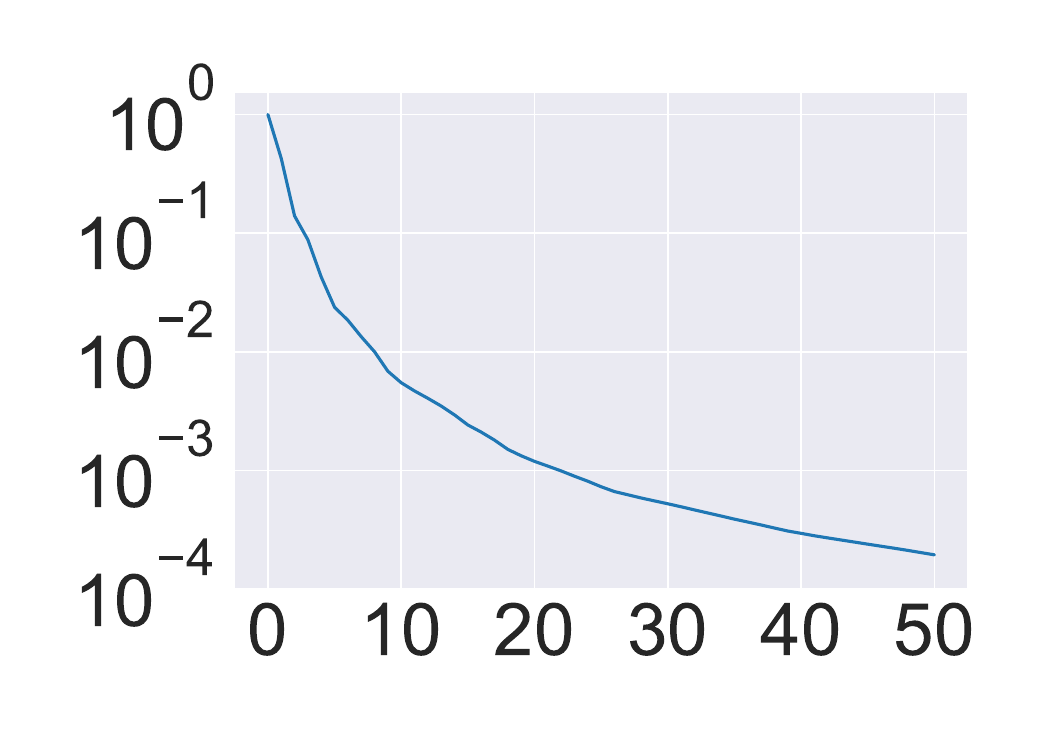}
\caption{$n=20,\ \nu=\infty$}
\end{subfigure}

\begin{subfigure}[b]{0.20\textwidth}
\includegraphics[width=\textwidth, trim=55pt 20pt 0pt -10pt, clip]{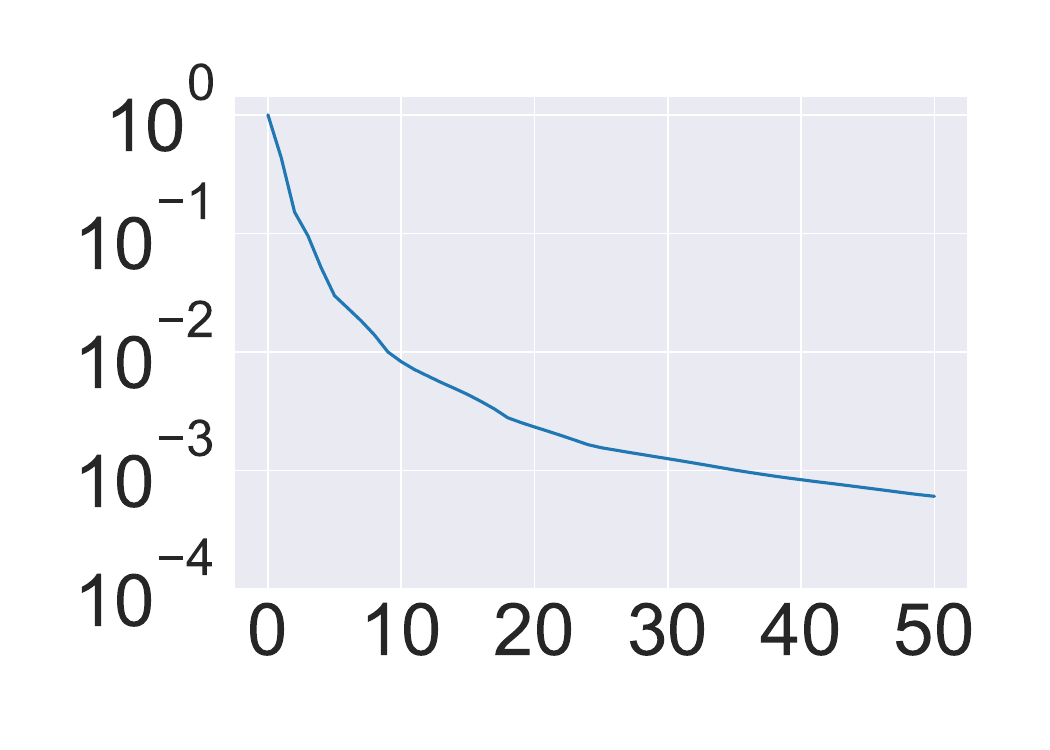}
\caption{$n=25,\ \nu=0.5$}
\end{subfigure}
\hfill
\begin{subfigure}[b]{0.20\textwidth}
\includegraphics[width=\textwidth, trim=55pt 20pt 0pt -10pt, clip]{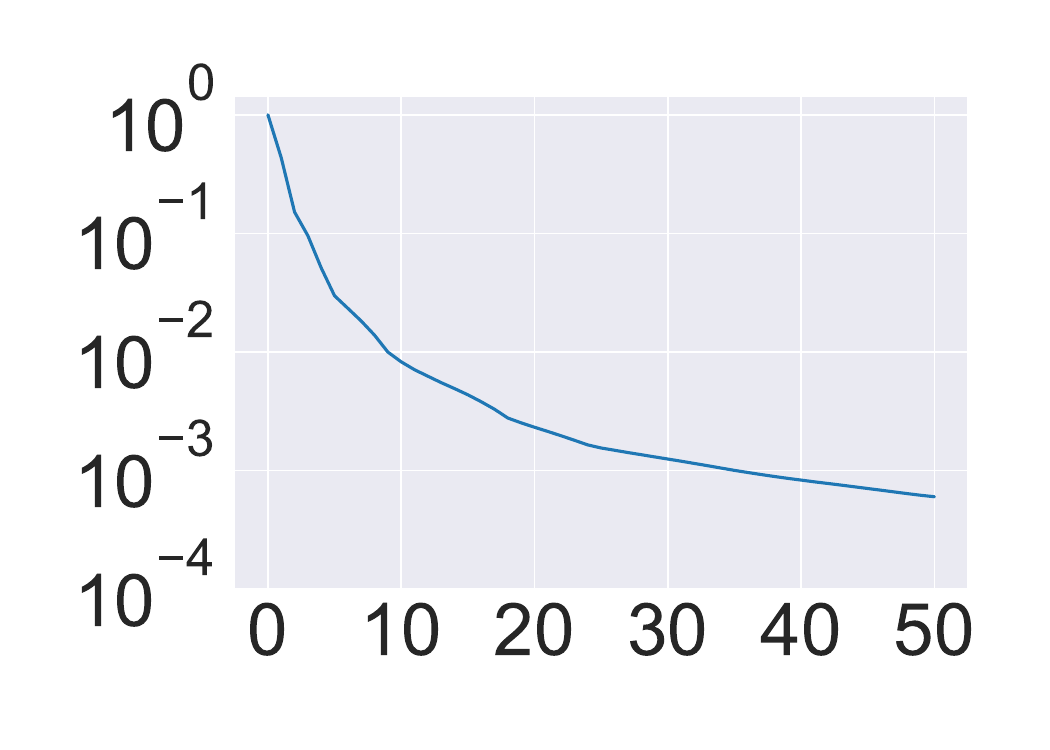}
\caption{$n=25,\ \nu=1.5$}
\end{subfigure}
\hfill
\begin{subfigure}[b]{0.20\textwidth}
\includegraphics[width=\textwidth, trim=55pt 20pt 0pt -10pt, clip]{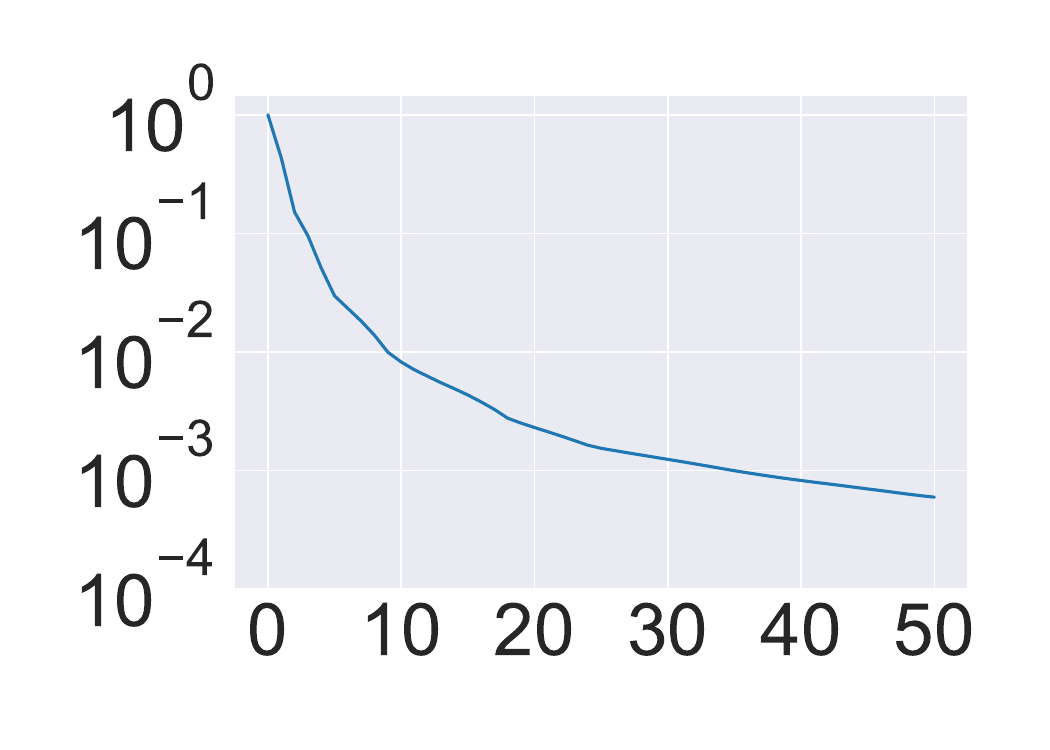}
\caption{$n=25,\ \nu=2.5$}
\end{subfigure}
\hfill
\begin{subfigure}[b]{0.20\textwidth}
\includegraphics[width=\textwidth, trim=55pt 20pt 0pt -10pt, clip]{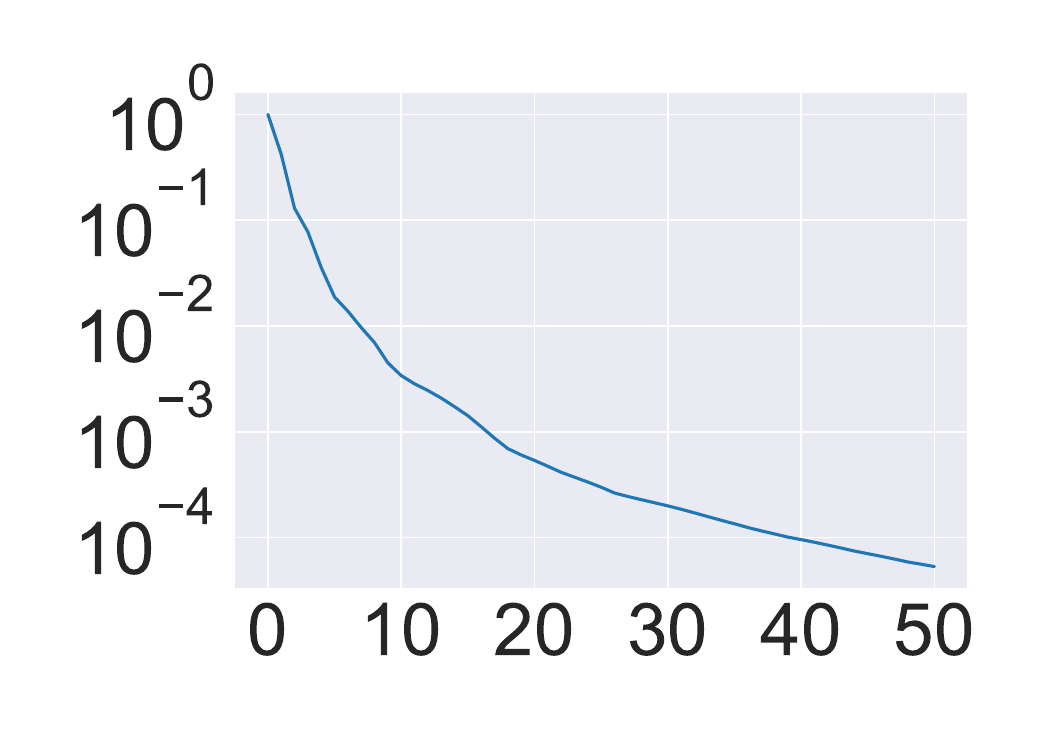}
\caption{$n=25,\ \nu=\infty$}
\end{subfigure}

\begin{subfigure}[b]{0.20\textwidth}
\includegraphics[width=\textwidth, trim=55pt 20pt 0pt -10pt, clip]{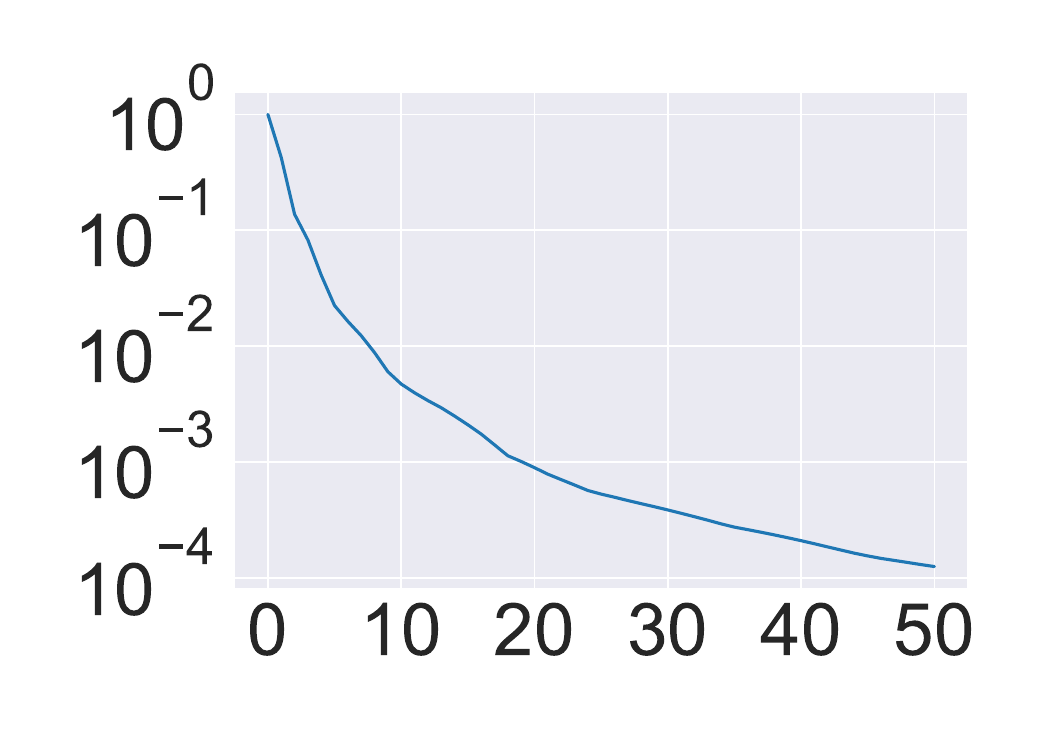}
\caption{$n=30,\ \nu=0.5$}
\end{subfigure}
\hfill
\begin{subfigure}[b]{0.20\textwidth}
\includegraphics[width=\textwidth, trim=55pt 20pt 0pt -10pt, clip]{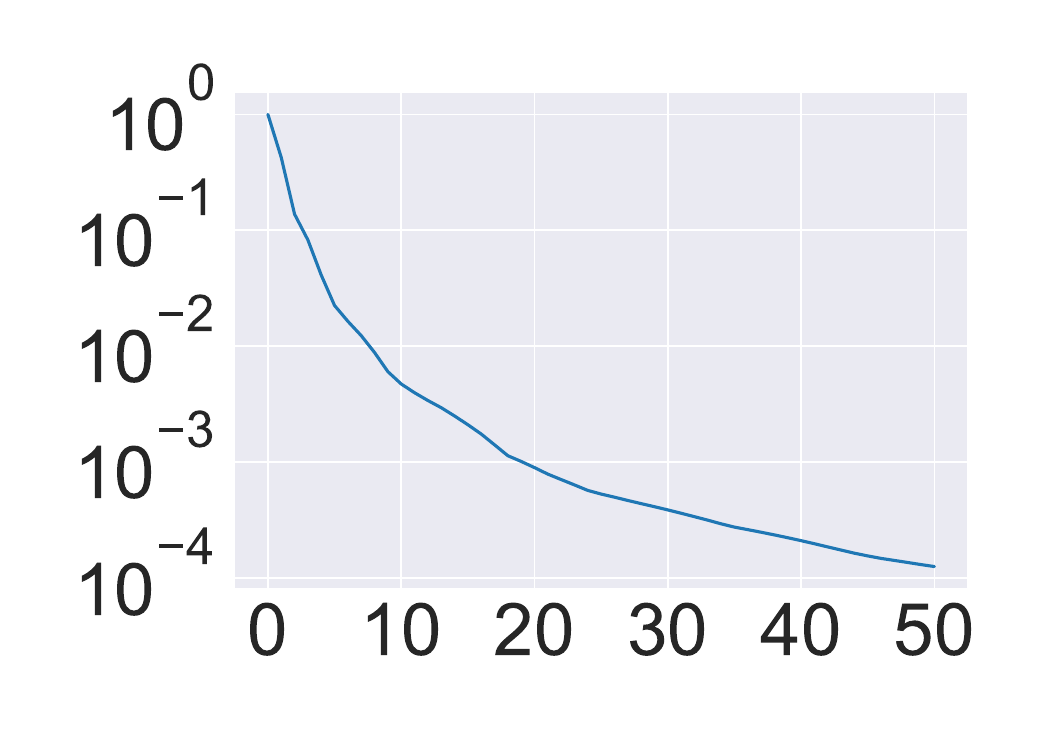}
\caption{$n=30,\ \nu=1.5$}
\end{subfigure}
\hfill
\begin{subfigure}[b]{0.20\textwidth}
\includegraphics[width=\textwidth, trim=55pt 20pt 0pt -10pt, clip]{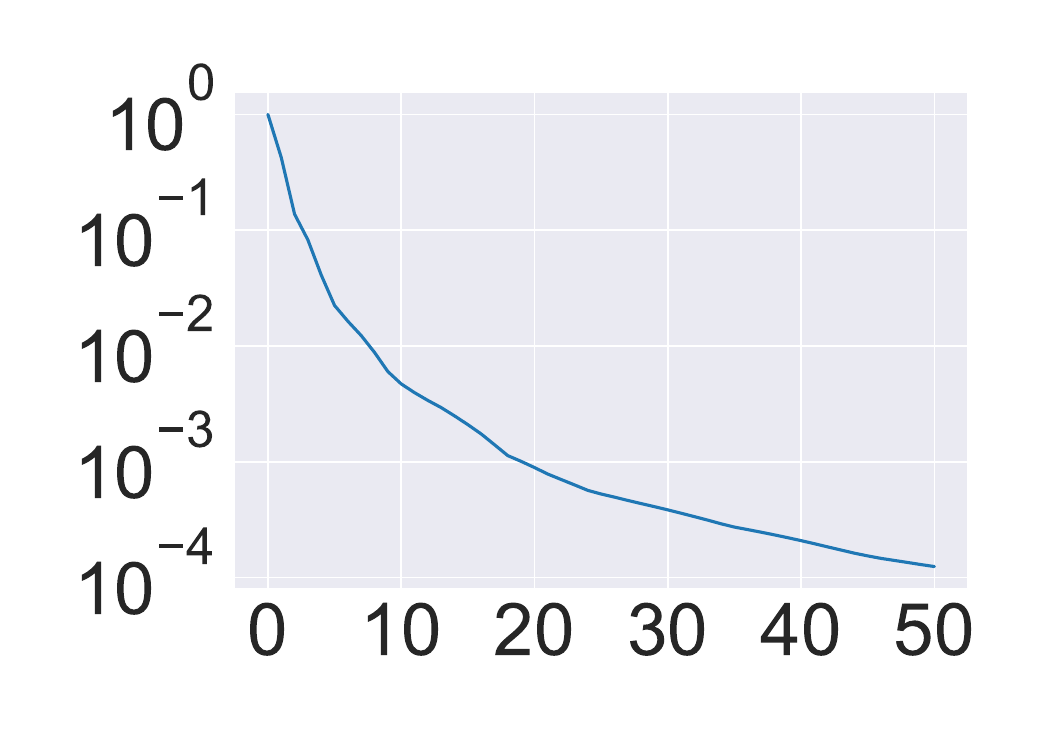}
\caption{$n=30,\ \nu=2.5$}
\end{subfigure}
\hfill
\begin{subfigure}[b]{0.20\textwidth}
\includegraphics[width=\textwidth, trim=55pt 20pt 0pt -10pt, clip]{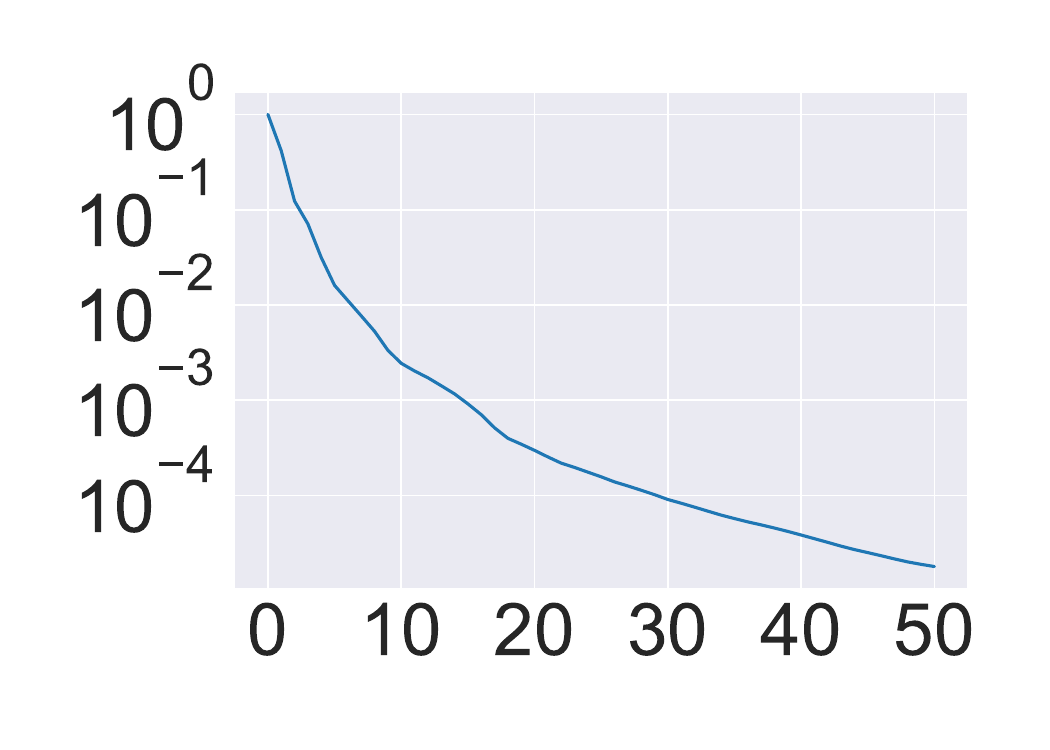}
\caption{$n=30,\ \nu=\infty$}
\end{subfigure}

\begin{subfigure}[b]{0.20\textwidth}
\includegraphics[width=\textwidth, trim=55pt 20pt 0pt -10pt, clip]{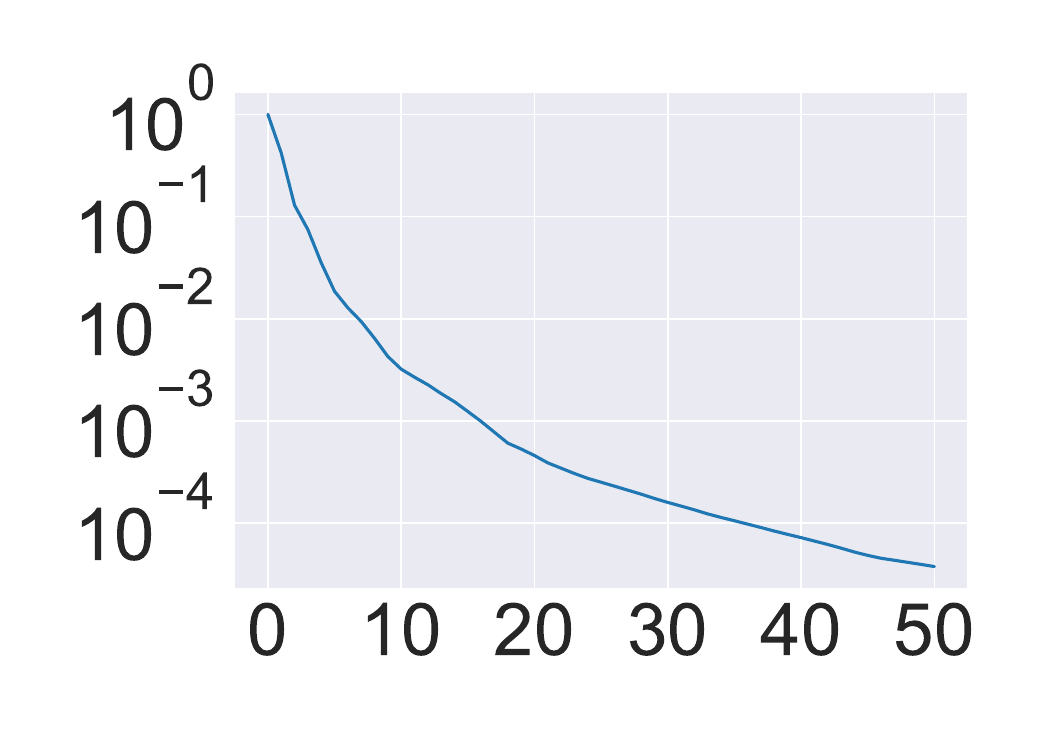}
\caption{$n=35,\ \nu=0.5$}
\end{subfigure}
\hfill
\begin{subfigure}[b]{0.20\textwidth}
\includegraphics[width=\textwidth, trim=55pt 20pt 0pt -10pt, clip]{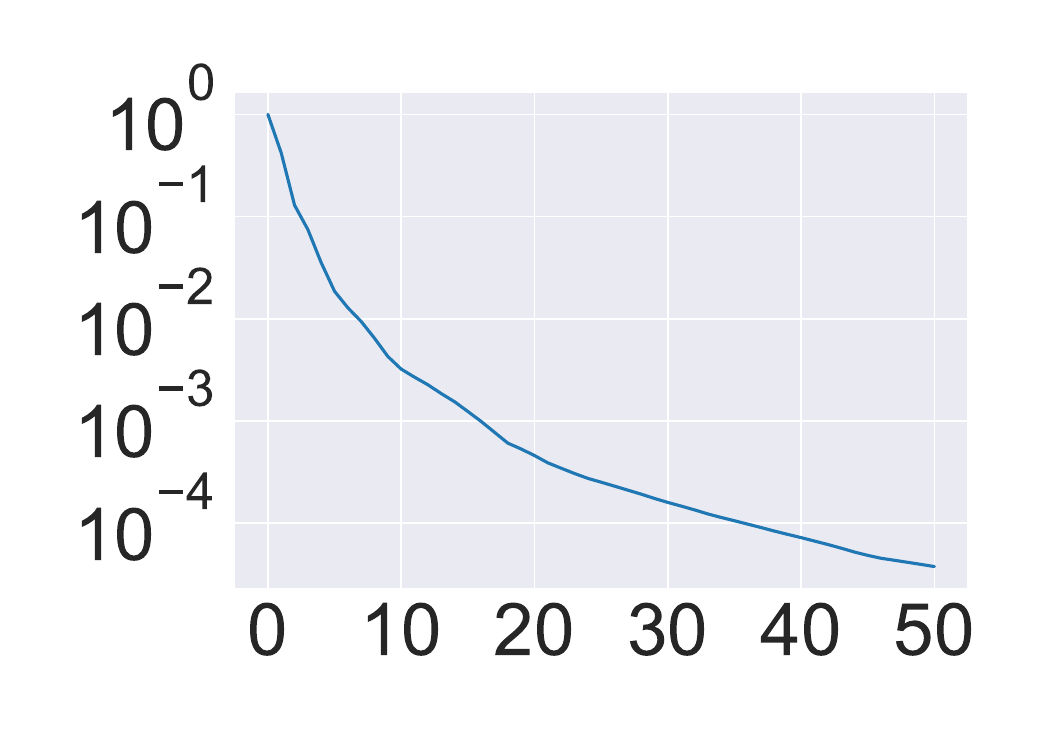}
\caption{$n=35,\ \nu=1.5$}
\end{subfigure}
\hfill
\begin{subfigure}[b]{0.20\textwidth}
\includegraphics[width=\textwidth, trim=55pt 20pt 0pt -10pt, clip]{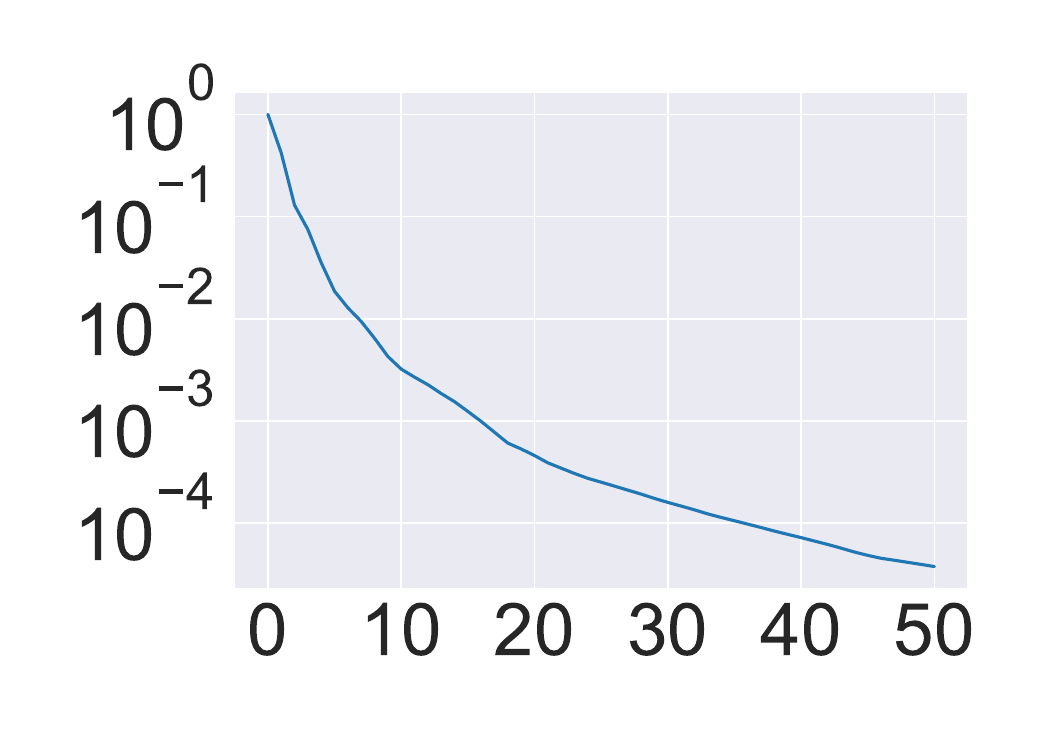}
\caption{$n=35,\ \nu=2.5$}
\end{subfigure}
\hfill
\begin{subfigure}[b]{0.20\textwidth}
\includegraphics[width=\textwidth, trim=55pt 20pt 0pt -10pt, clip]{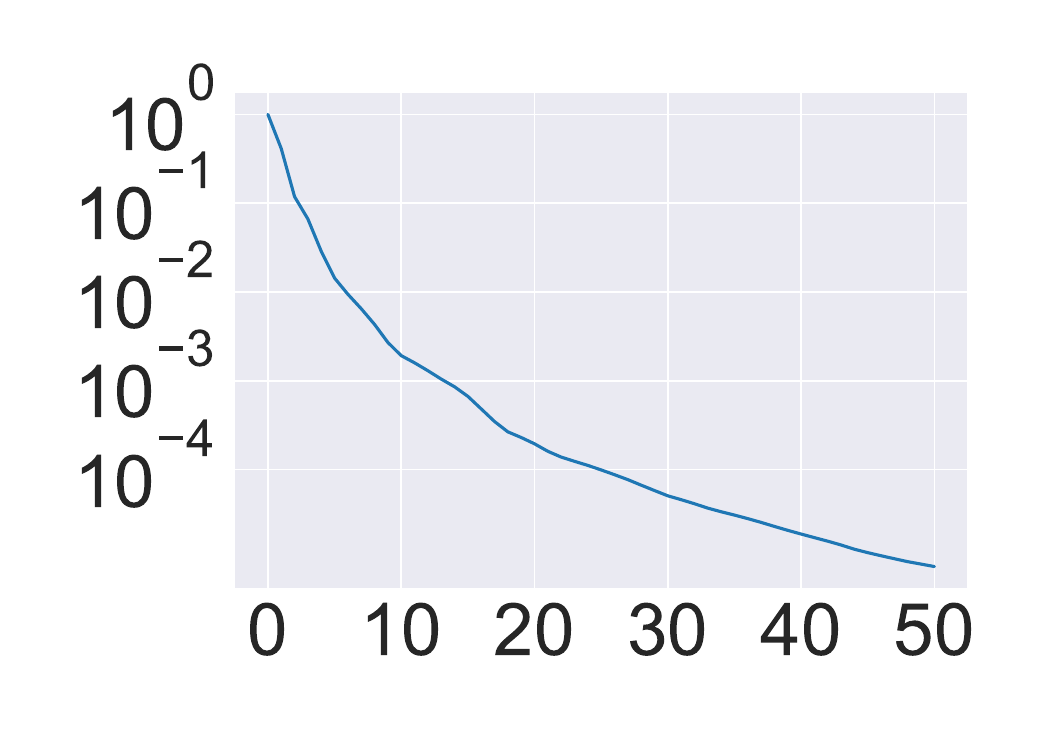}
\caption{$n=35,\ \nu=\infty$}
\end{subfigure}

\begin{subfigure}[b]{0.20\textwidth}
\includegraphics[width=\textwidth, trim=55pt 20pt 0pt -10pt, clip]{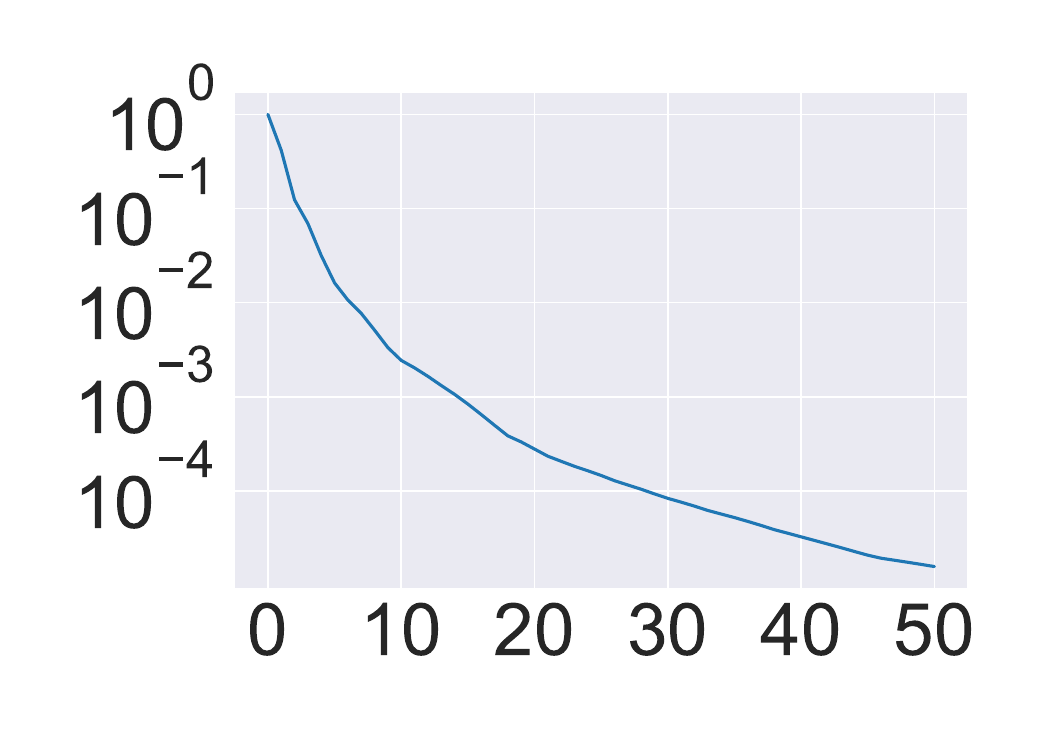}
\caption{$n=40,\ \nu=0.5$}
\end{subfigure}
\hfill
\begin{subfigure}[b]{0.20\textwidth}
\includegraphics[width=\textwidth, trim=55pt 20pt 0pt -10pt, clip]{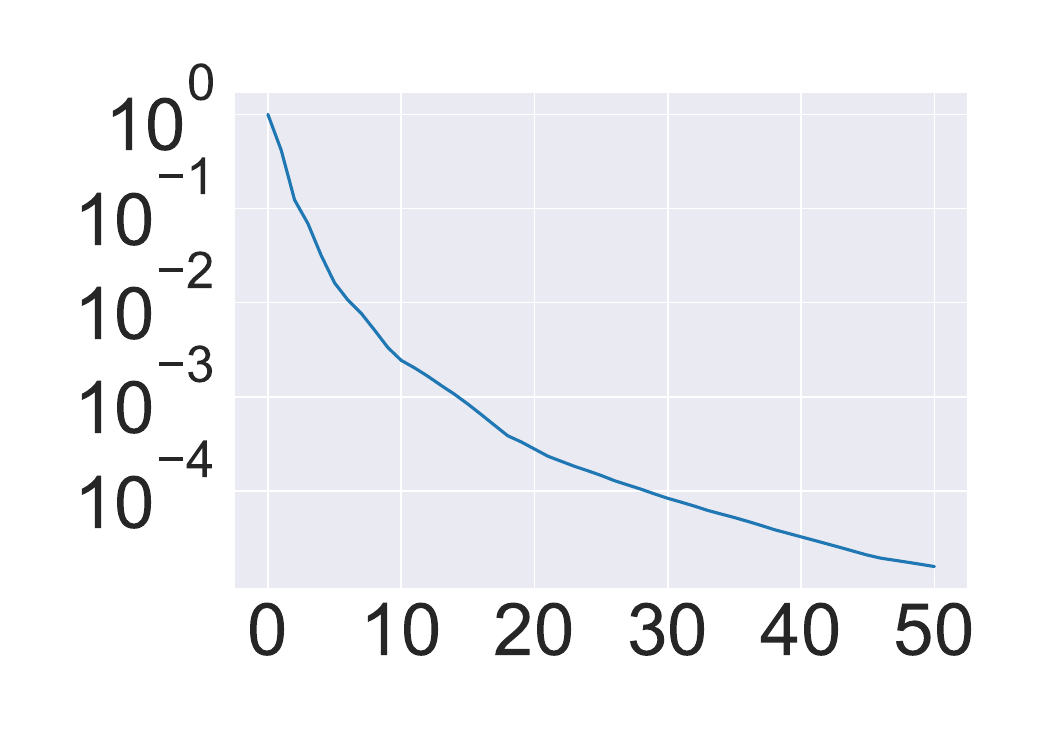}
\caption{$n=40,\ \nu=1.5$}
\end{subfigure}
\hfill
\begin{subfigure}[b]{0.20\textwidth}
\includegraphics[width=\textwidth, trim=55pt 20pt 0pt -10pt, clip]{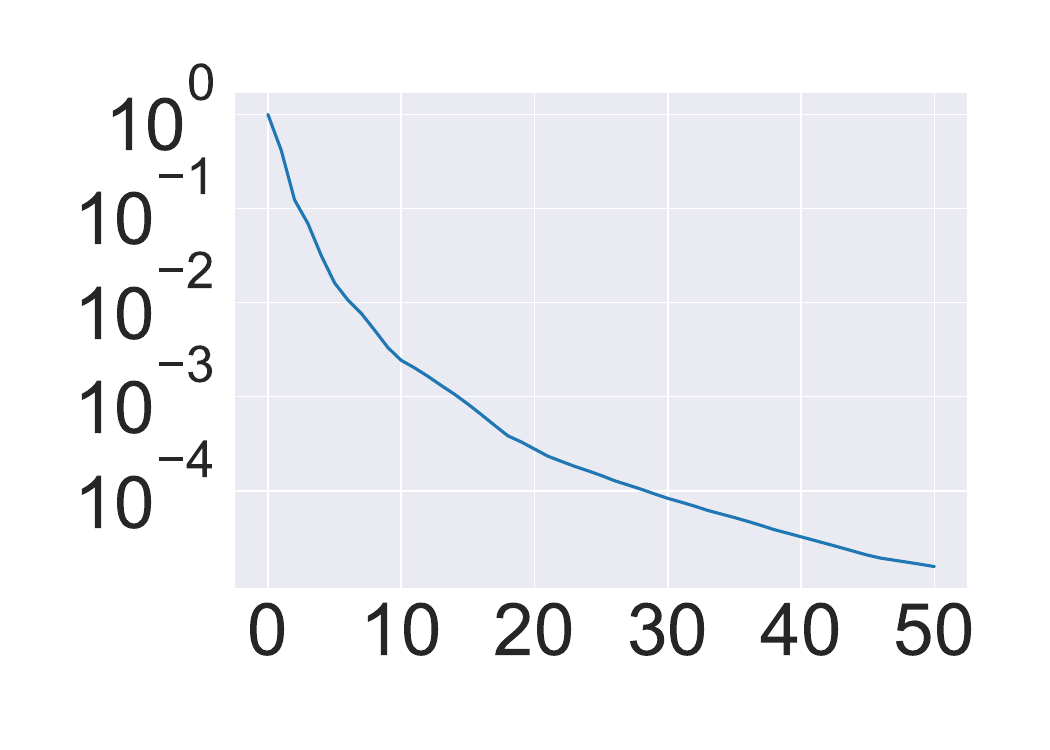}
\caption{$n=40,\ \nu=2.5$}
\end{subfigure}
\hfill
\begin{subfigure}[b]{0.20\textwidth}
\includegraphics[width=\textwidth, trim=55pt 20pt 0pt -10pt, clip]{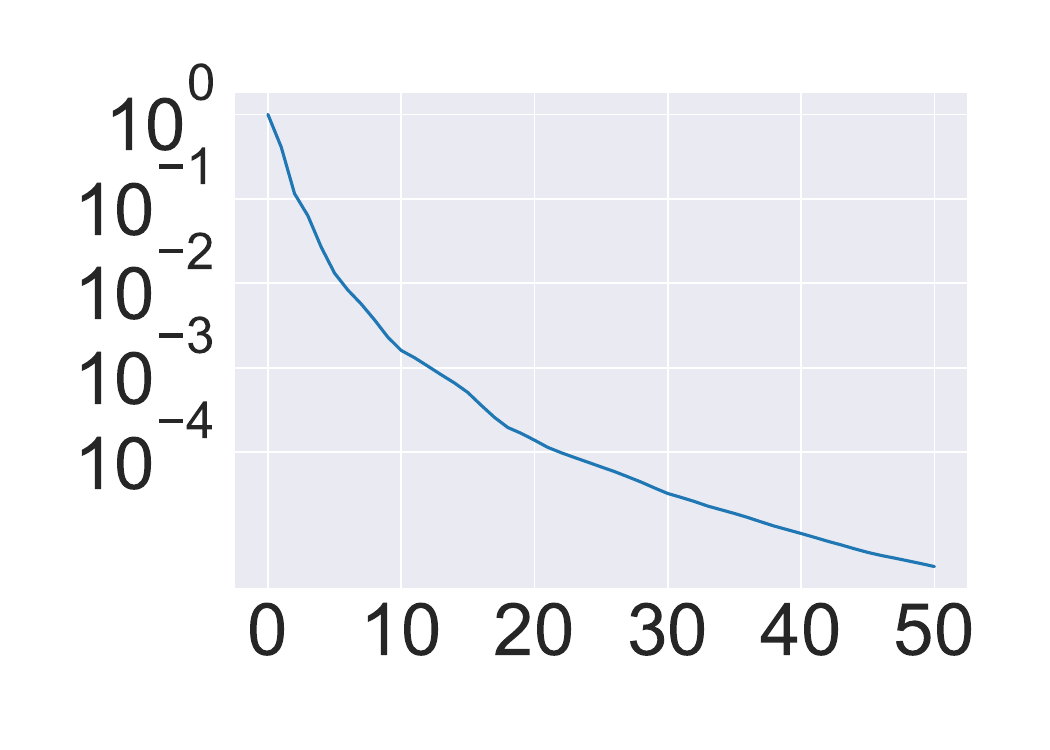}
\caption{$n=40,\ \nu=\infty$}
\end{subfigure}
\caption{
Additional results on approximation quality of truncated kernels as a function of $|\mathcal{R}|$.
The $x$-axis represents the number of partitions $|\mathcal{R}|$ used in the truncation; the $y$-axis displays the relative $L^2$ approximation error $\norm{k - k_\mathcal{R}}_{L^2(\c{X} \times \c{X})} / \norm{k}_{L^2(\c{X} \times \c{X})}$ (log scale).
For each $|\mathcal{R}|$, the set $\mathcal{R}$ is selected by the heuristic maximizing~$\rho_1$.
}
\label{fig:extended-approximation-error}
\end{figure}

\begin{figure}[t]

\begin{subfigure}[b]{0.20\textwidth}
\includegraphics[width=1.0\textwidth, trim=55pt 0pt 0pt 0pt]{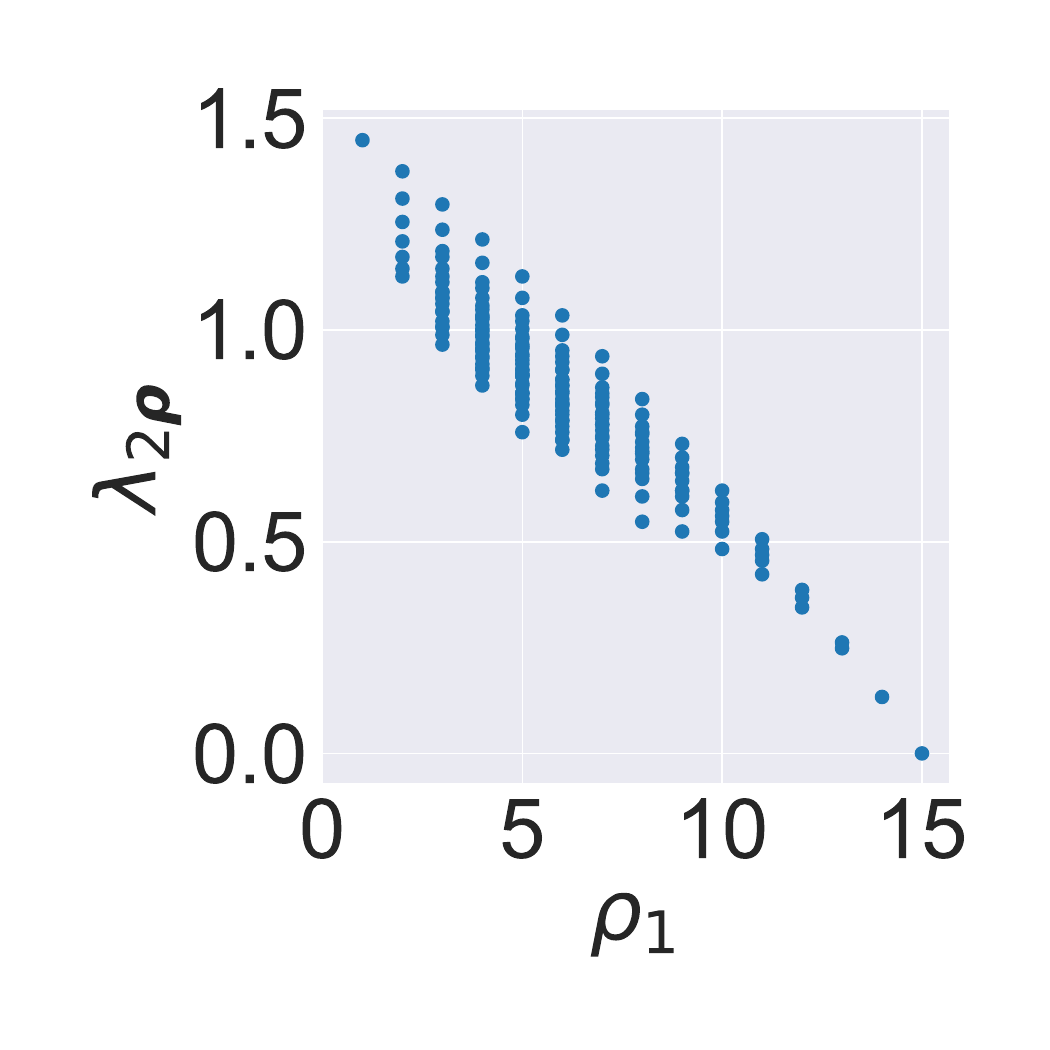}
\caption{$n=15$, max}
\end{subfigure}
\hfill
\begin{subfigure}[b]{0.20\textwidth}
\includegraphics[width=1.0\textwidth, trim=55pt 0pt 0pt 0pt]{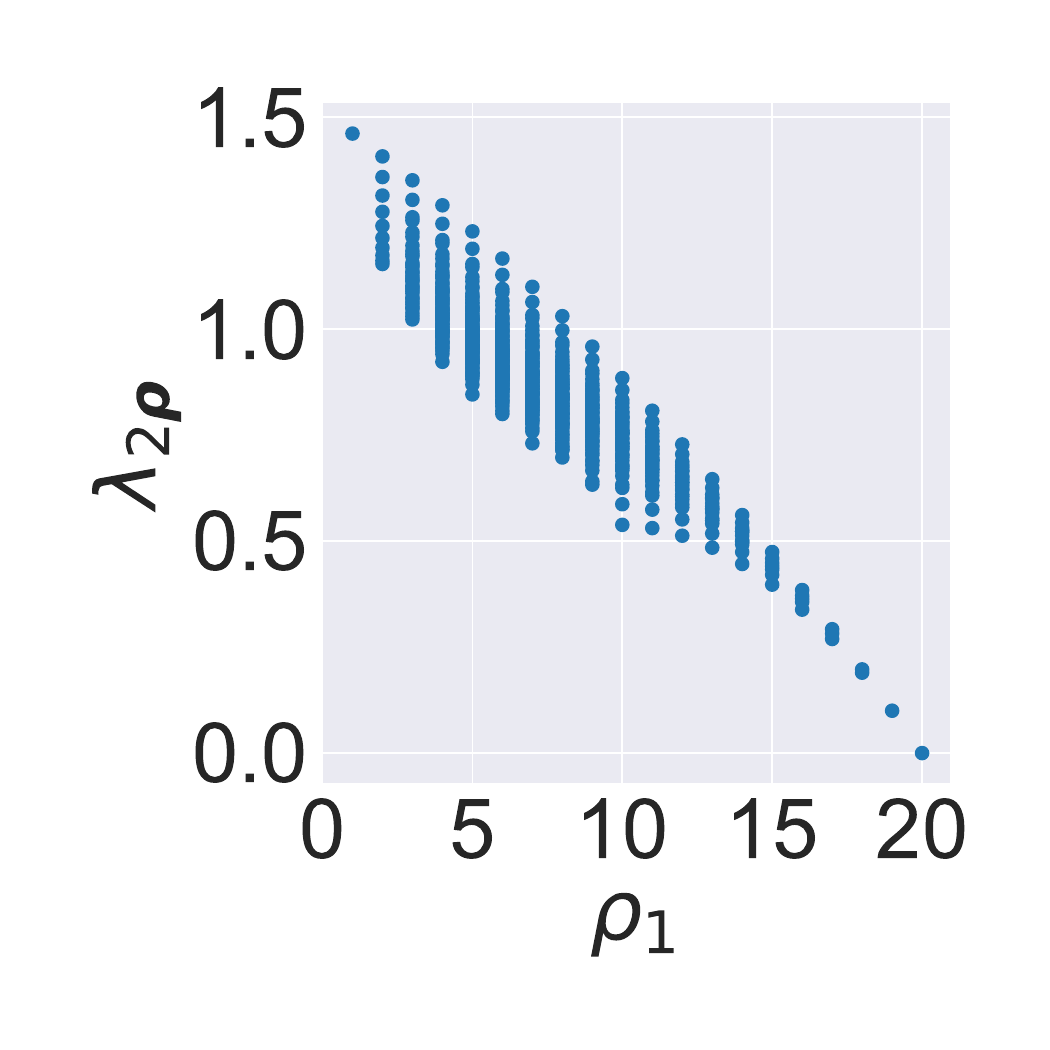}
\caption{$n=20$, max}
\end{subfigure}
\hfill
\begin{subfigure}[b]{0.20\textwidth}
\includegraphics[width=1.0\textwidth, trim=55pt 0pt 0pt 0pt]{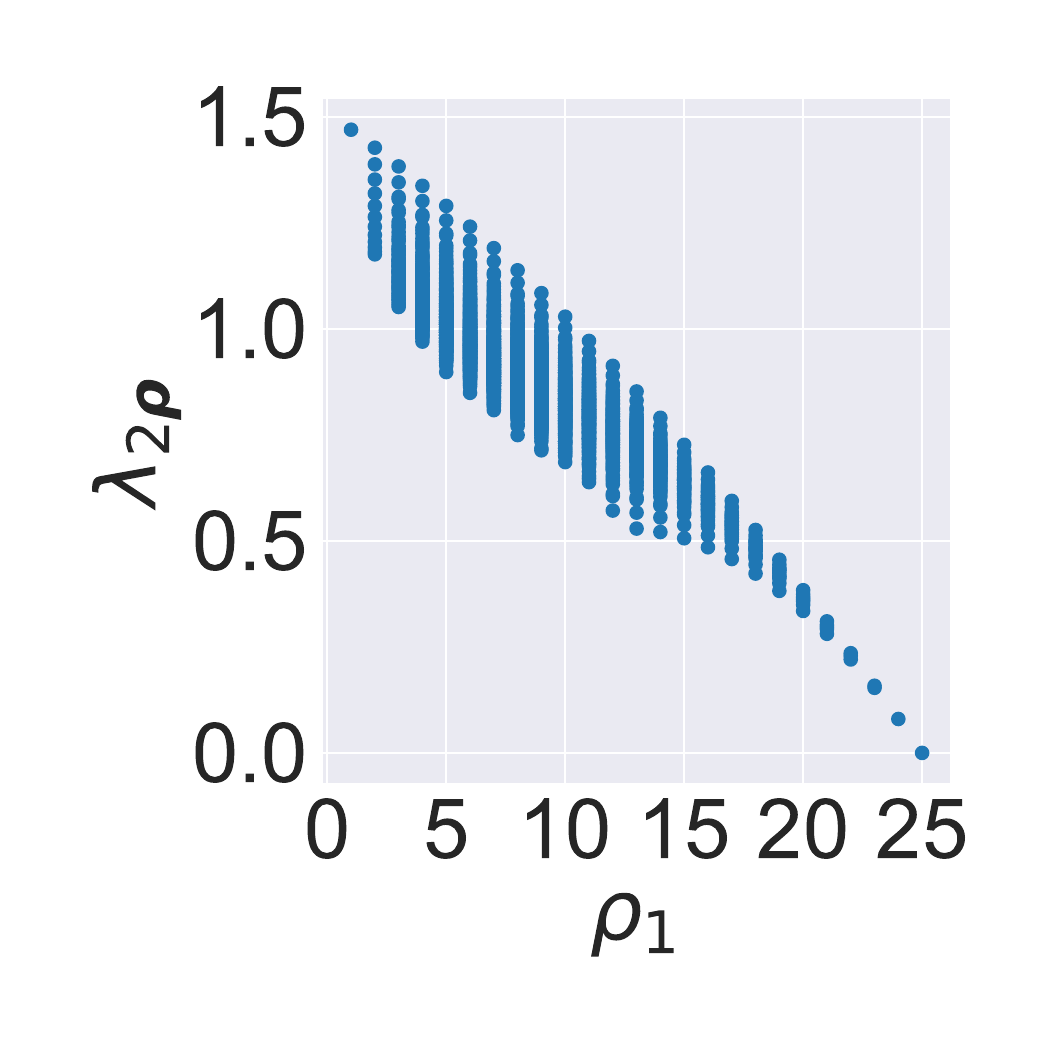}
\caption{$n=25$, max}
\end{subfigure}
\hfill
\begin{subfigure}[b]{0.20\textwidth}
\includegraphics[width=1.0\textwidth, trim=55pt 0pt 0pt 0pt]{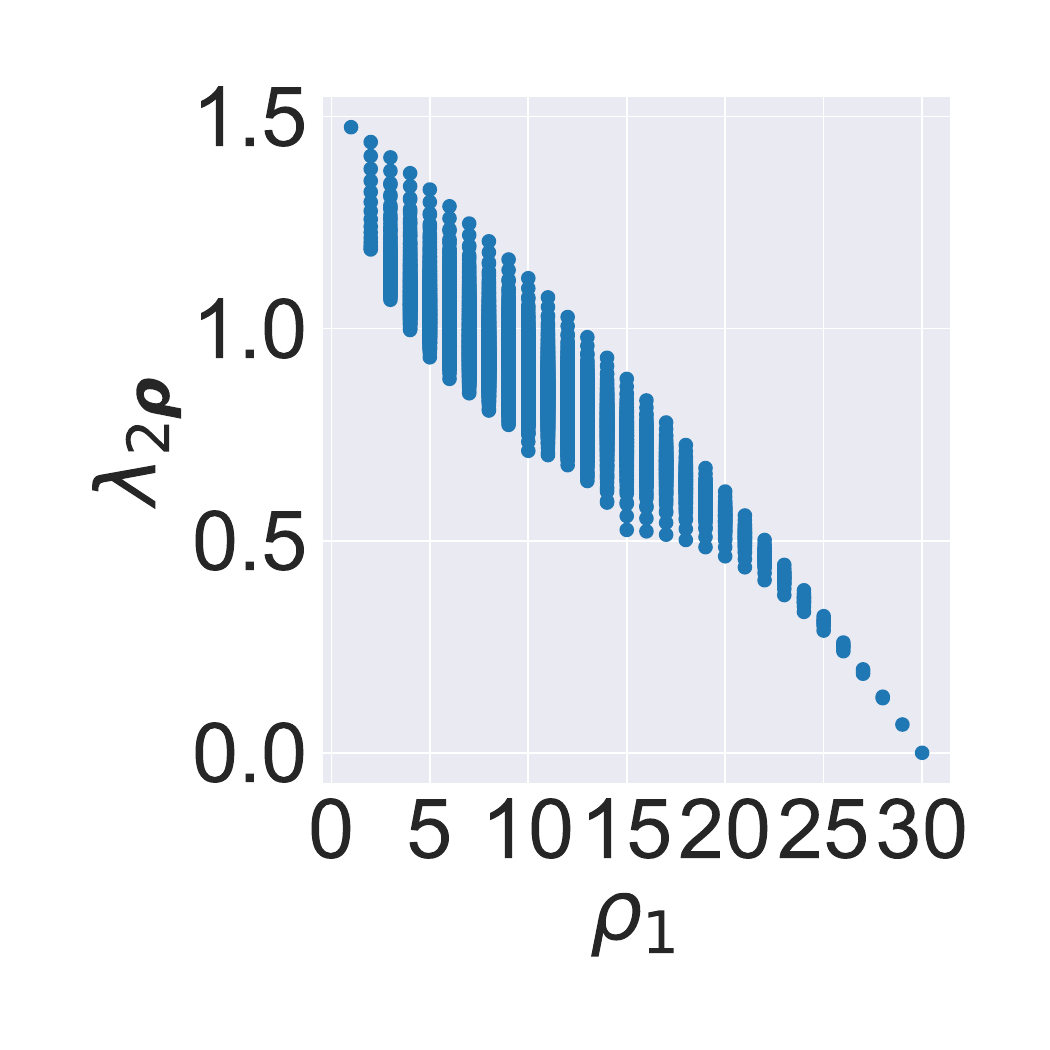}
\caption{$n=30$, max}
\end{subfigure}

\begin{subfigure}[b]{0.20\textwidth}
\includegraphics[width=1.0\textwidth, trim=55pt 0pt 0pt 0pt]{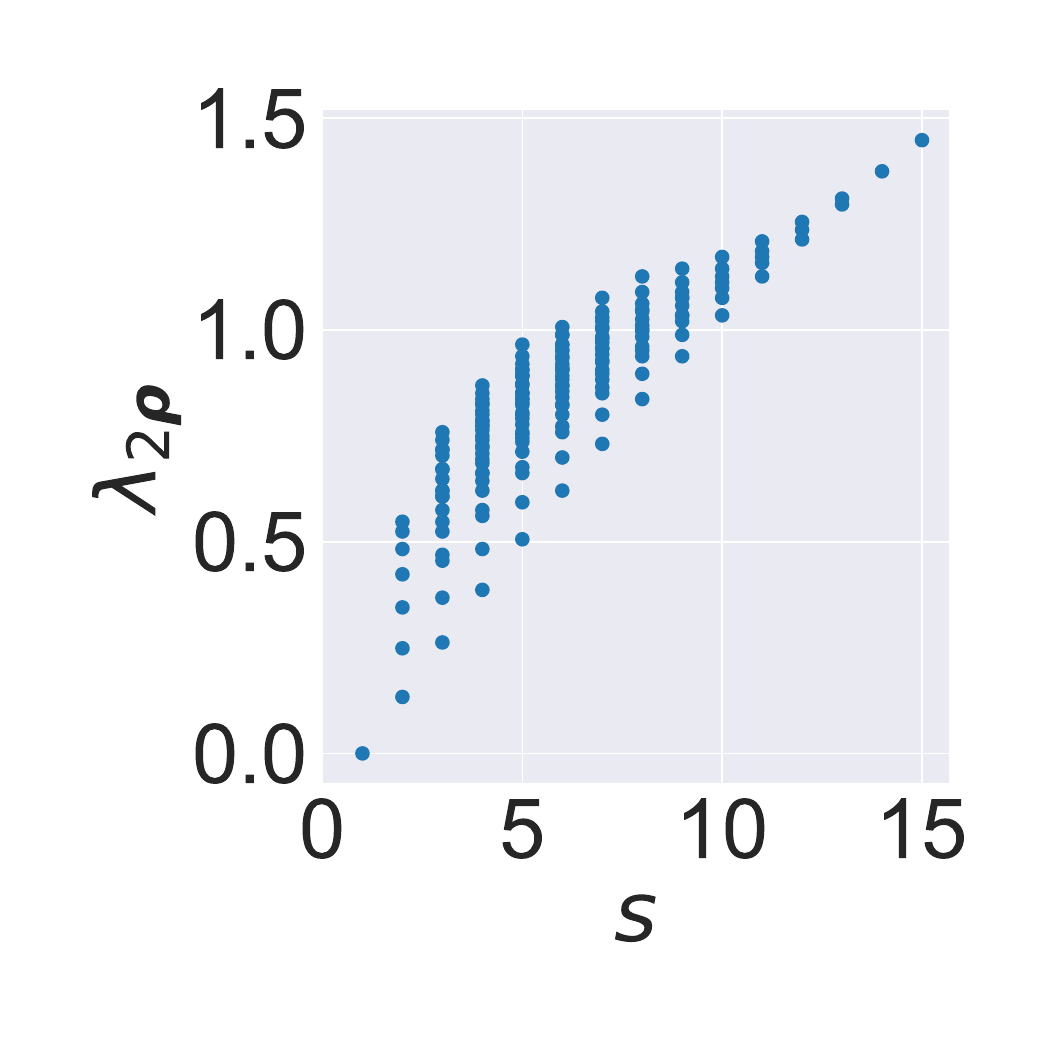}
\caption{$n=15$, length}
\end{subfigure}
\hfill
\begin{subfigure}[b]{0.20\textwidth}
\includegraphics[width=1.0\textwidth, trim=55pt 0pt 0pt 0pt]{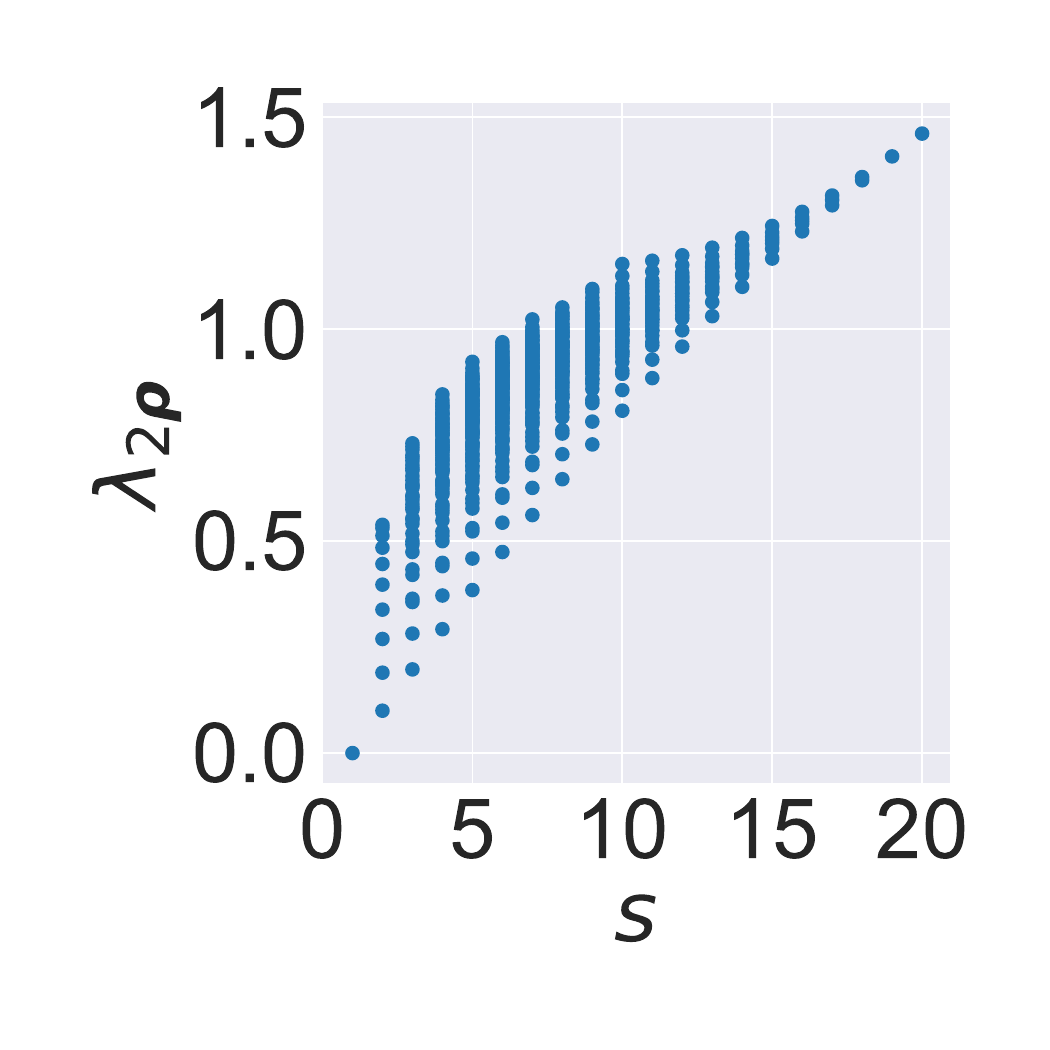}
\caption{$n=20$, length}
\end{subfigure}
\hfill
\begin{subfigure}[b]{0.20\textwidth}
\includegraphics[width=1.0\textwidth, trim=55pt 0pt 0pt 0pt]{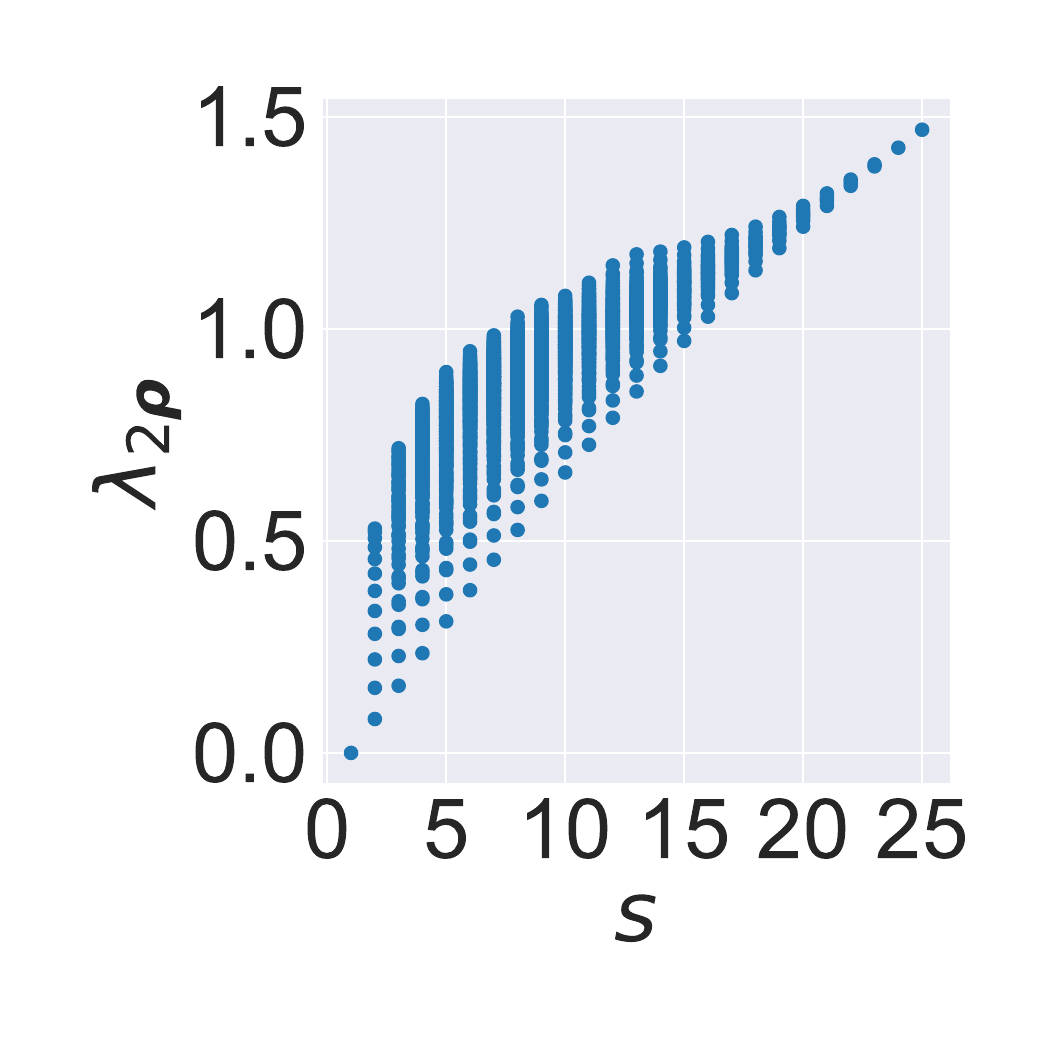}
\caption{$n=25$, length}
\end{subfigure}
\hfill
\begin{subfigure}[b]{0.20\textwidth}
\includegraphics[width=1.0\textwidth, trim=55pt 0pt 0pt 0pt]{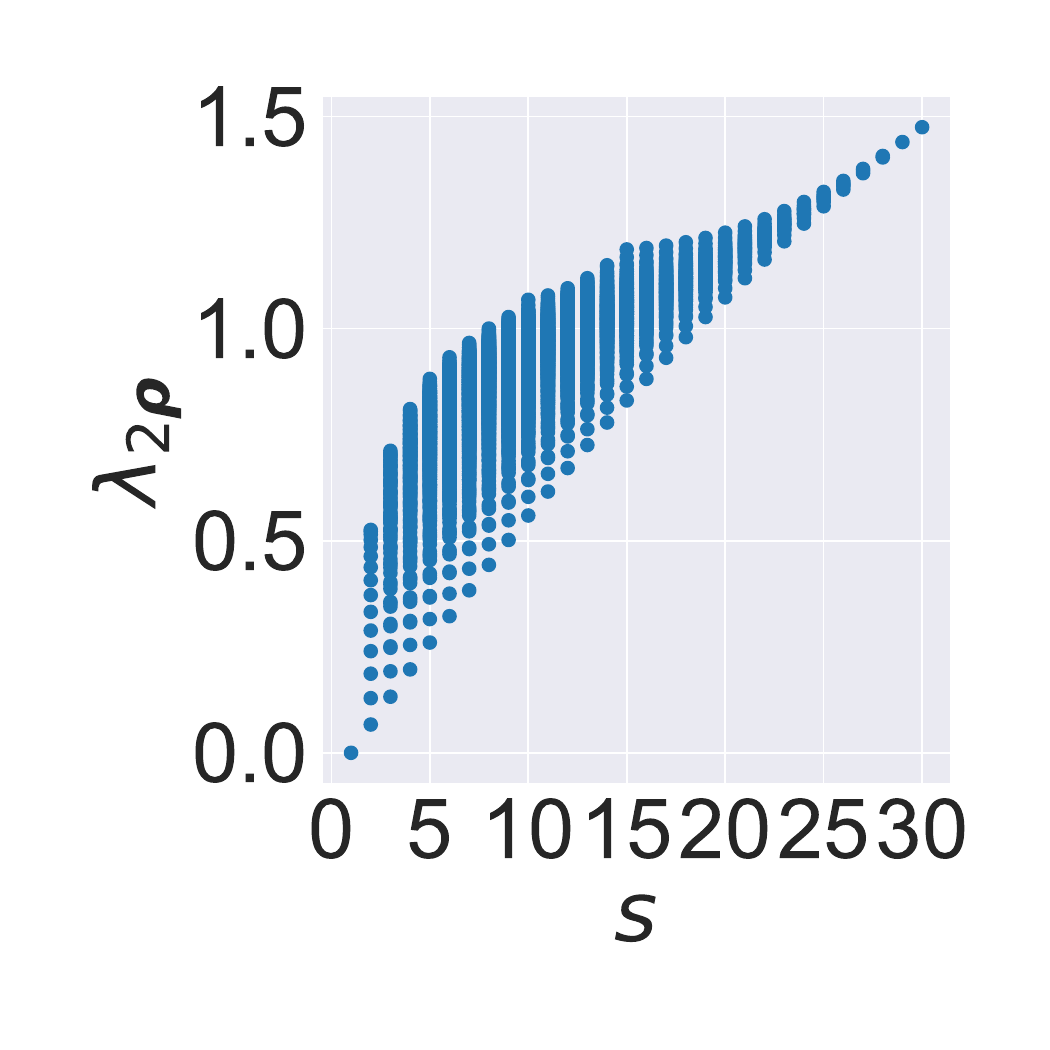}
\caption{$n=30$, length}
\end{subfigure}

\begin{subfigure}[b]{0.20\textwidth}
\includegraphics[width=1.0\textwidth, trim=55pt 0pt 0pt 0pt]{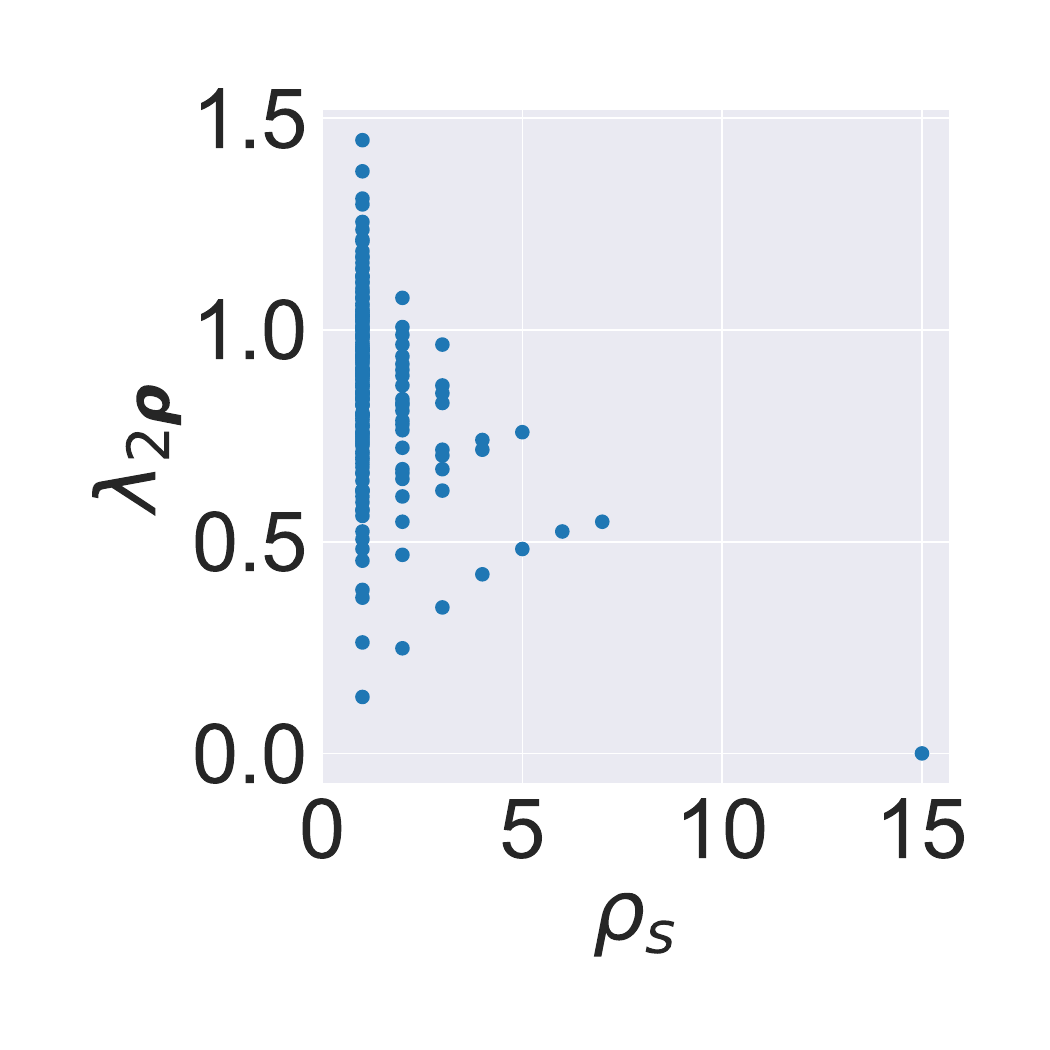}
\caption{$n=15$, min}
\end{subfigure}
\hfill
\begin{subfigure}[b]{0.20\textwidth}
\includegraphics[width=1.0\textwidth, trim=55pt 0pt 0pt 0pt]{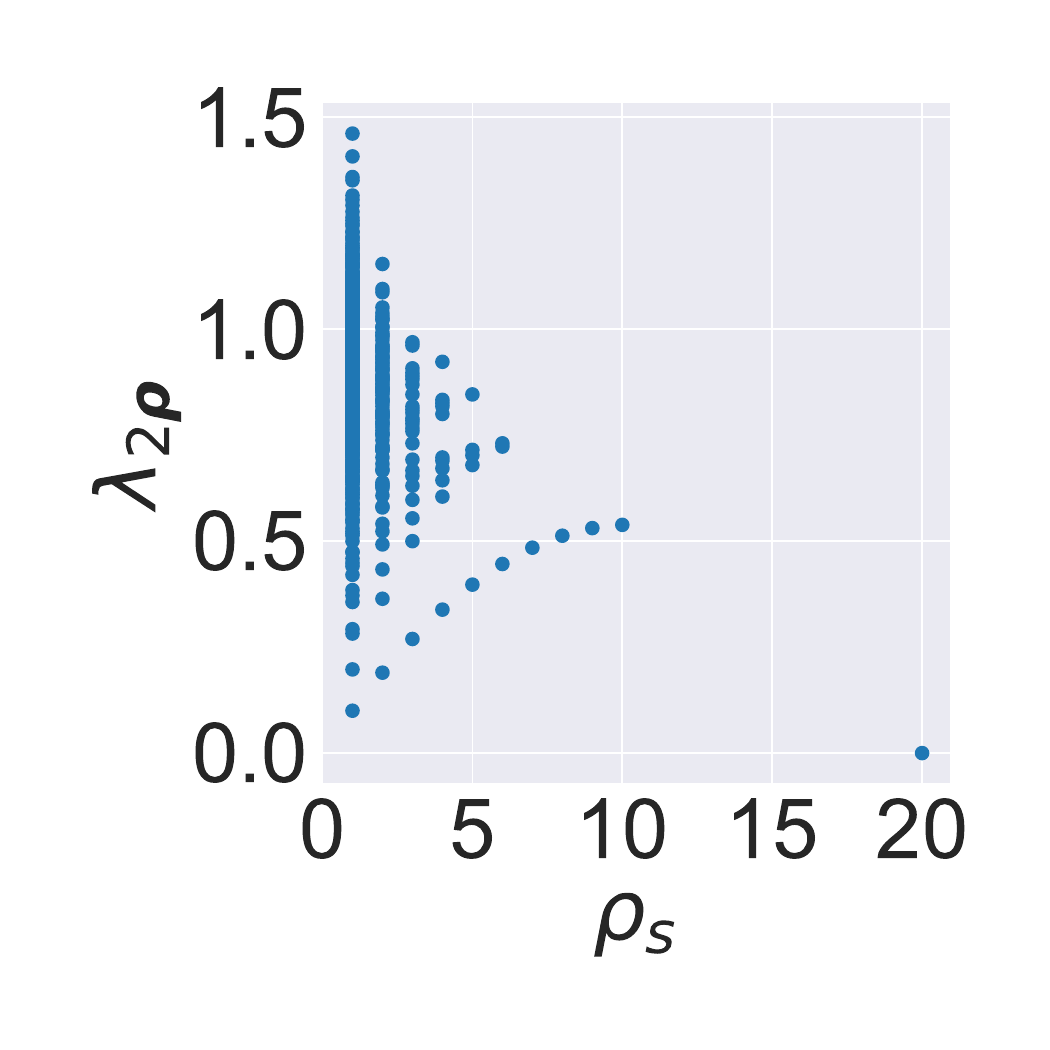}
\caption{$n=20$, min}
\end{subfigure}
\hfill
\begin{subfigure}[b]{0.20\textwidth}
\includegraphics[width=1.0\textwidth, trim=55pt 0pt 0pt 0pt]{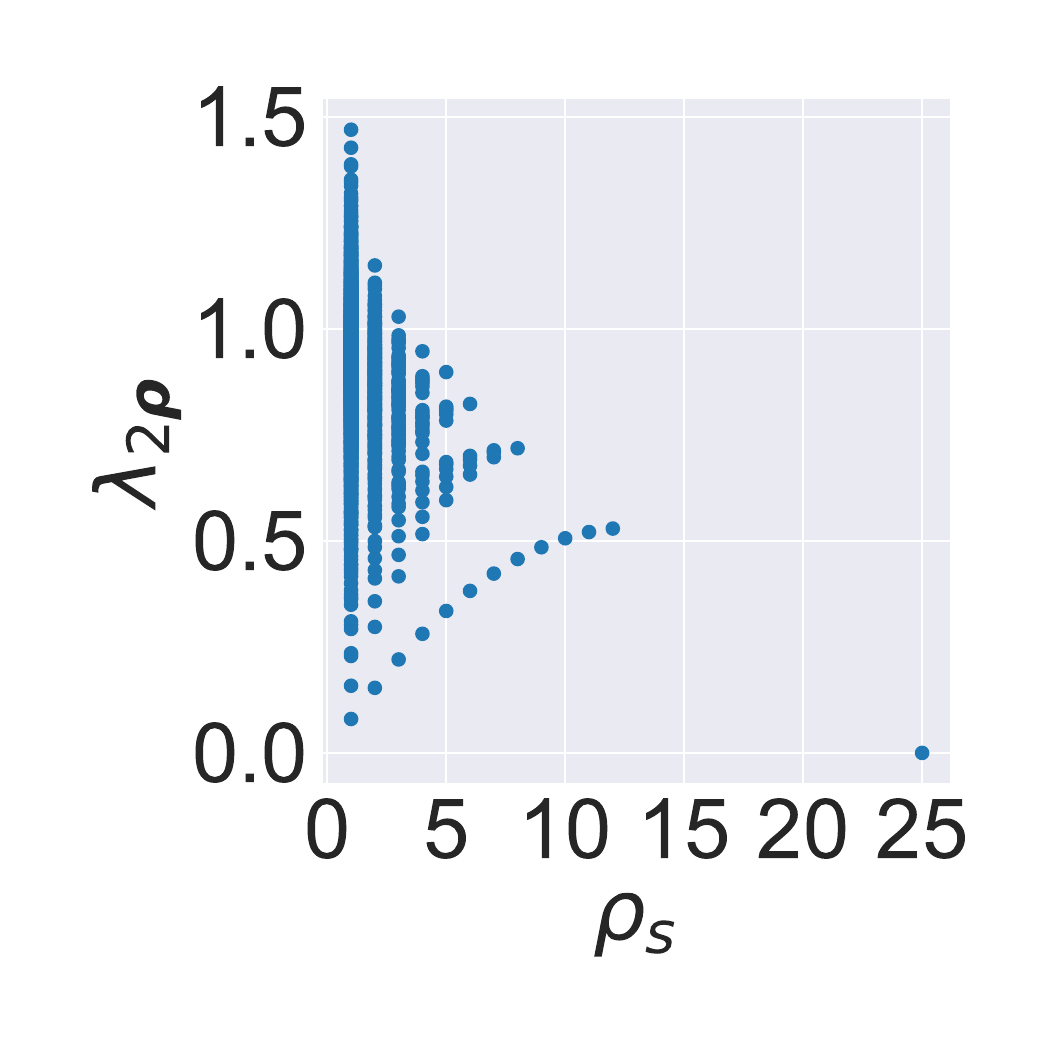}
\caption{$n=25$, min}
\end{subfigure}
\hfill
\begin{subfigure}[b]{0.20\textwidth}
\includegraphics[width=1.0\textwidth, trim=55pt 0pt 0pt 0pt]{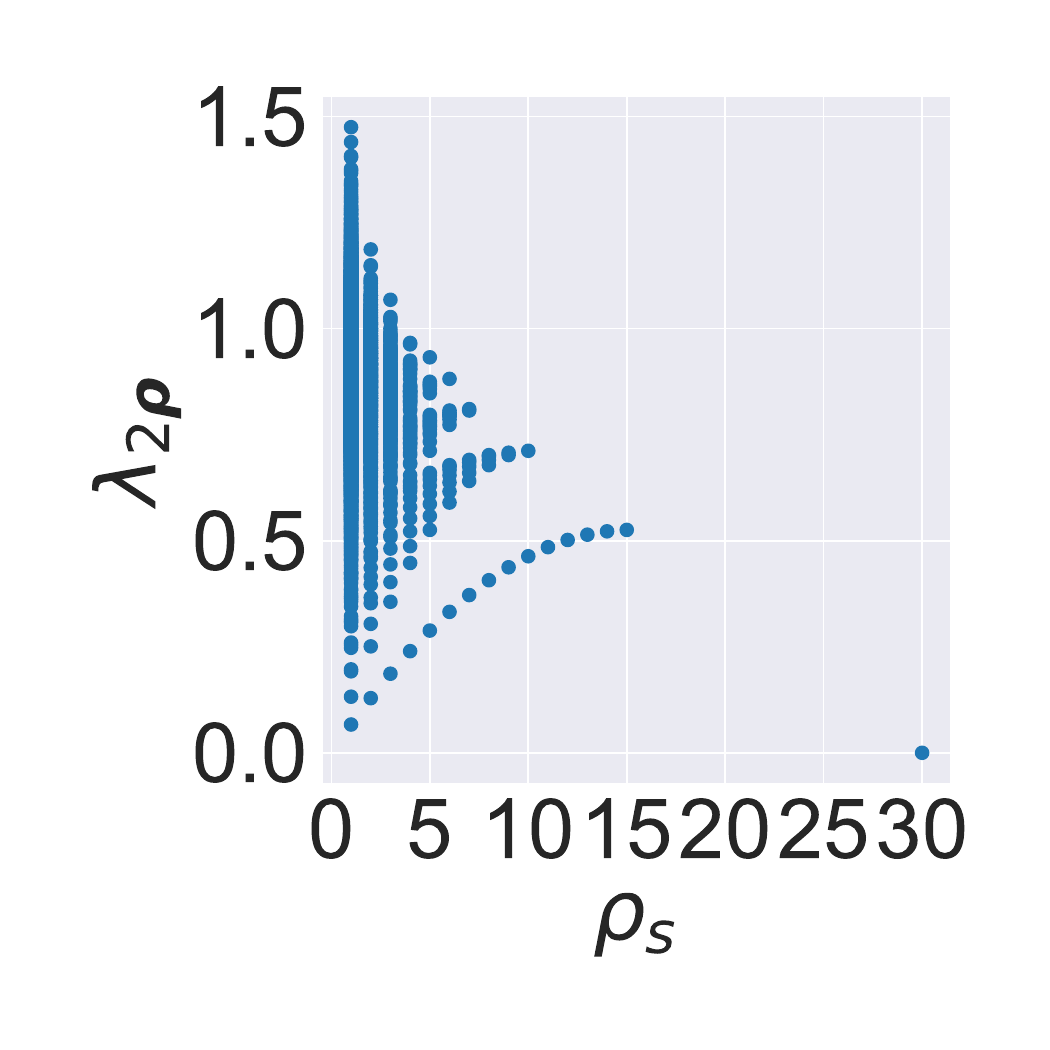}
\caption{$n=30$, min}
\end{subfigure}

\caption{Laplacian eigenvalues $\lambda_{2\v{\rho}}$ versus various truncation heuristics, namely, maximal entry $\rho_1$, length $s$ and minimal entry $\rho_s$, for all partitions $\v{\rho} = (\rho_1, \ldots, \rho_s) \vdash n$.}
\label{fig:alternative-truncation-heuristics}
\end{figure}

\begin{figure}[t]

\begin{subfigure}[b]{0.20\textwidth}
\includegraphics[width=1.0\textwidth, trim=55pt 0pt 0pt 0pt]{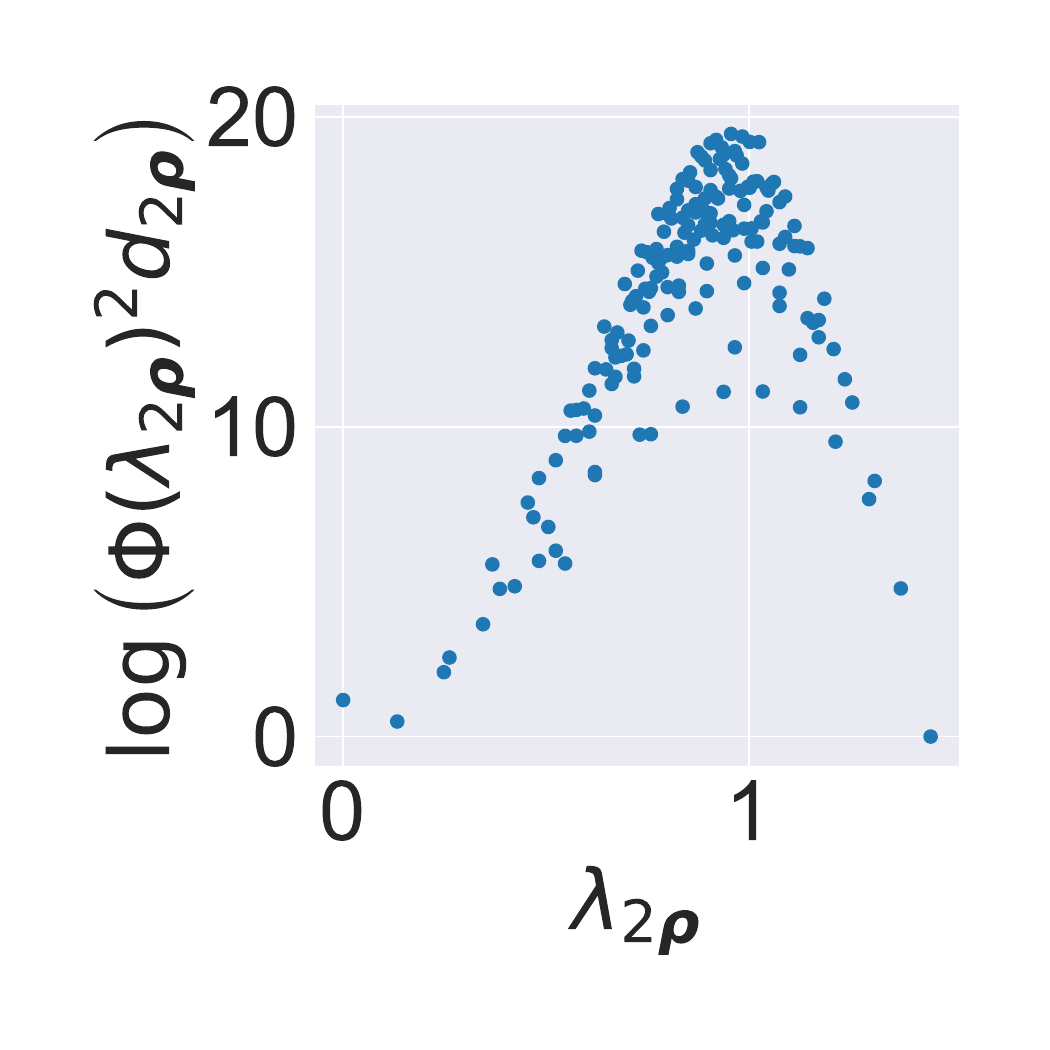}
\caption{$n=15$, without DC}
\end{subfigure}
\hfill
\begin{subfigure}[b]{0.20\textwidth}
\includegraphics[width=1.0\textwidth, trim=55pt 0pt 0pt 0pt]{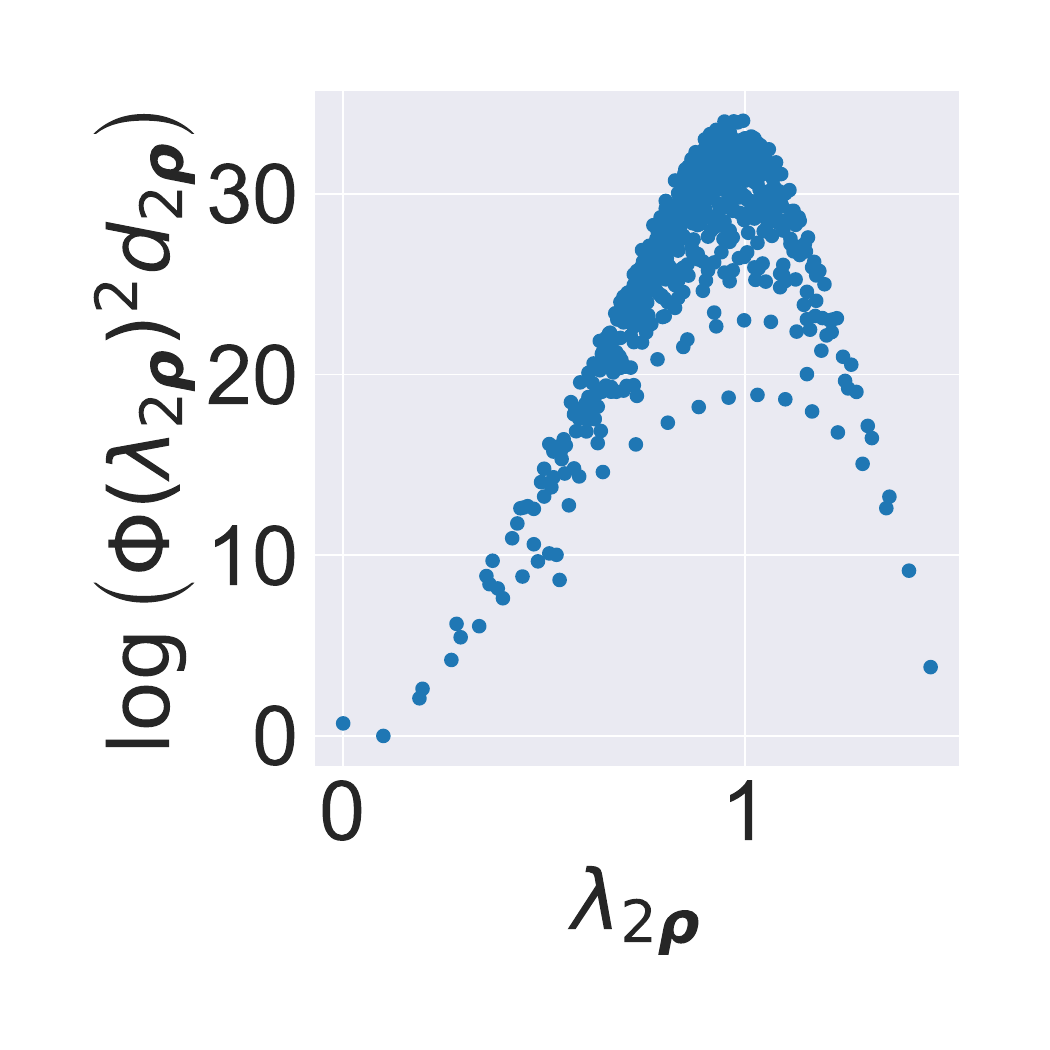}
\caption{$n=20$, without DC}
\end{subfigure}
\hfill
\begin{subfigure}[b]{0.20\textwidth}
\includegraphics[width=1.0\textwidth, trim=55pt 0pt 0pt 0pt]{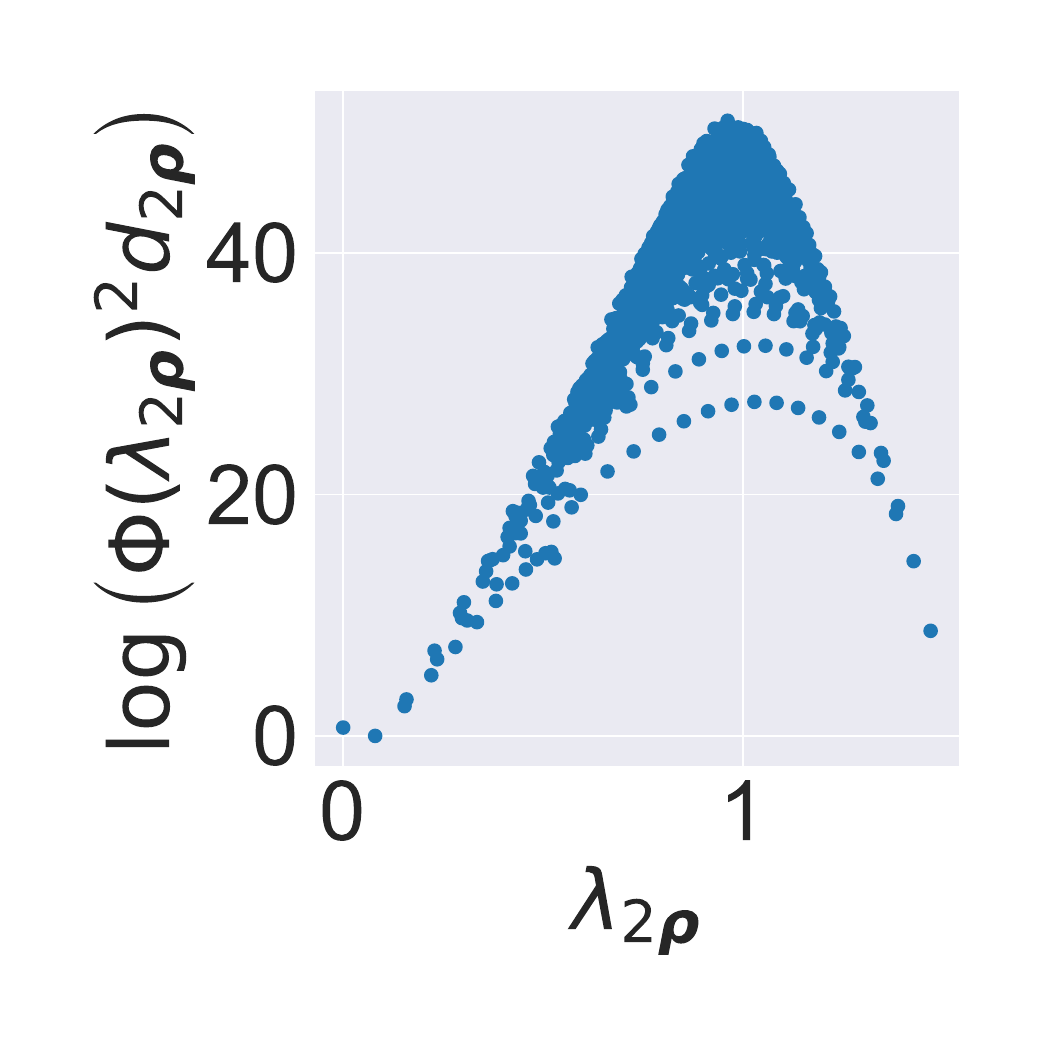}
\caption{$n=25$, without DC}
\end{subfigure}
\hfill
\begin{subfigure}[b]{0.20\textwidth}
\includegraphics[width=1.0\textwidth, trim=55pt 0pt 0pt 0pt]{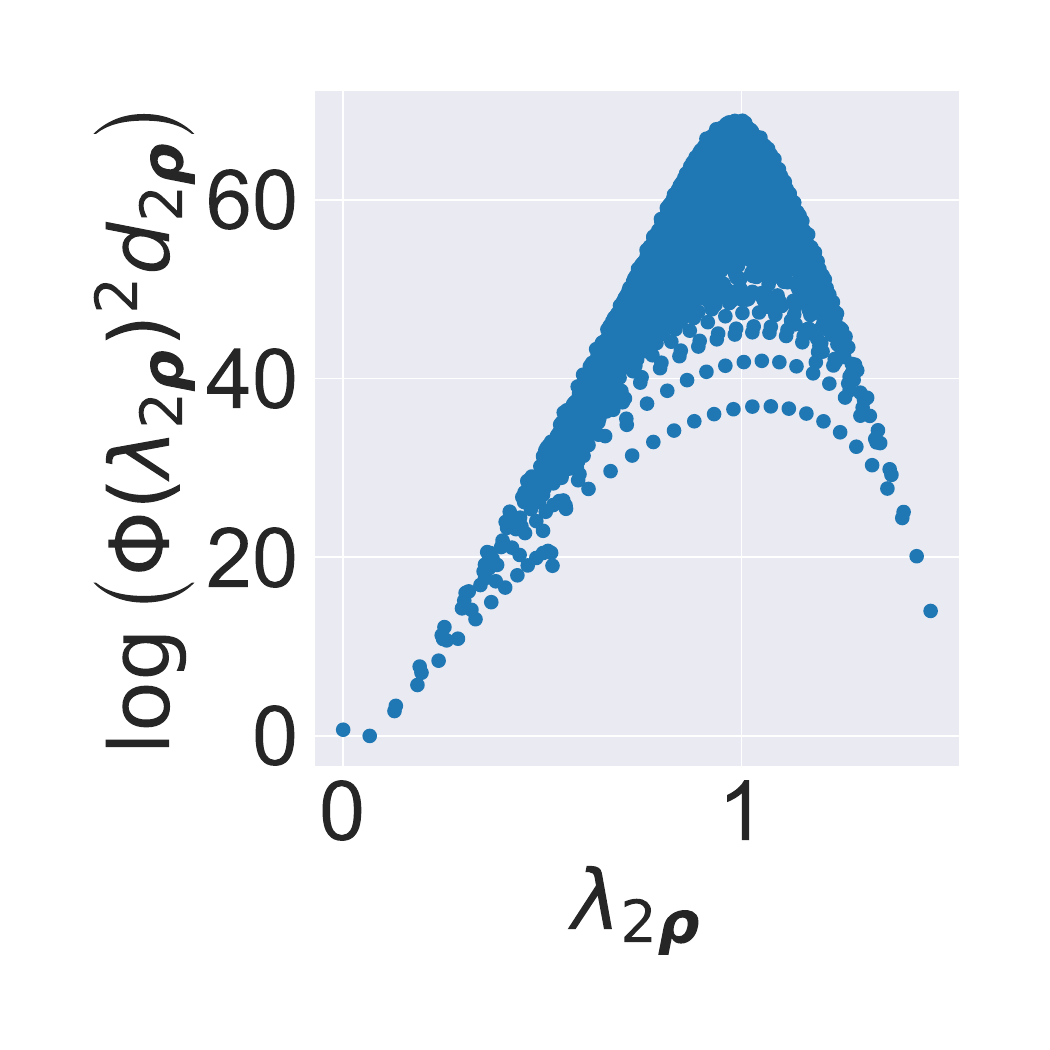}
\caption{$n=30$, without DC}
\end{subfigure}

\begin{subfigure}[b]{0.20\textwidth}
\includegraphics[width=1.0\textwidth, trim=55pt 0pt 0pt 0pt]{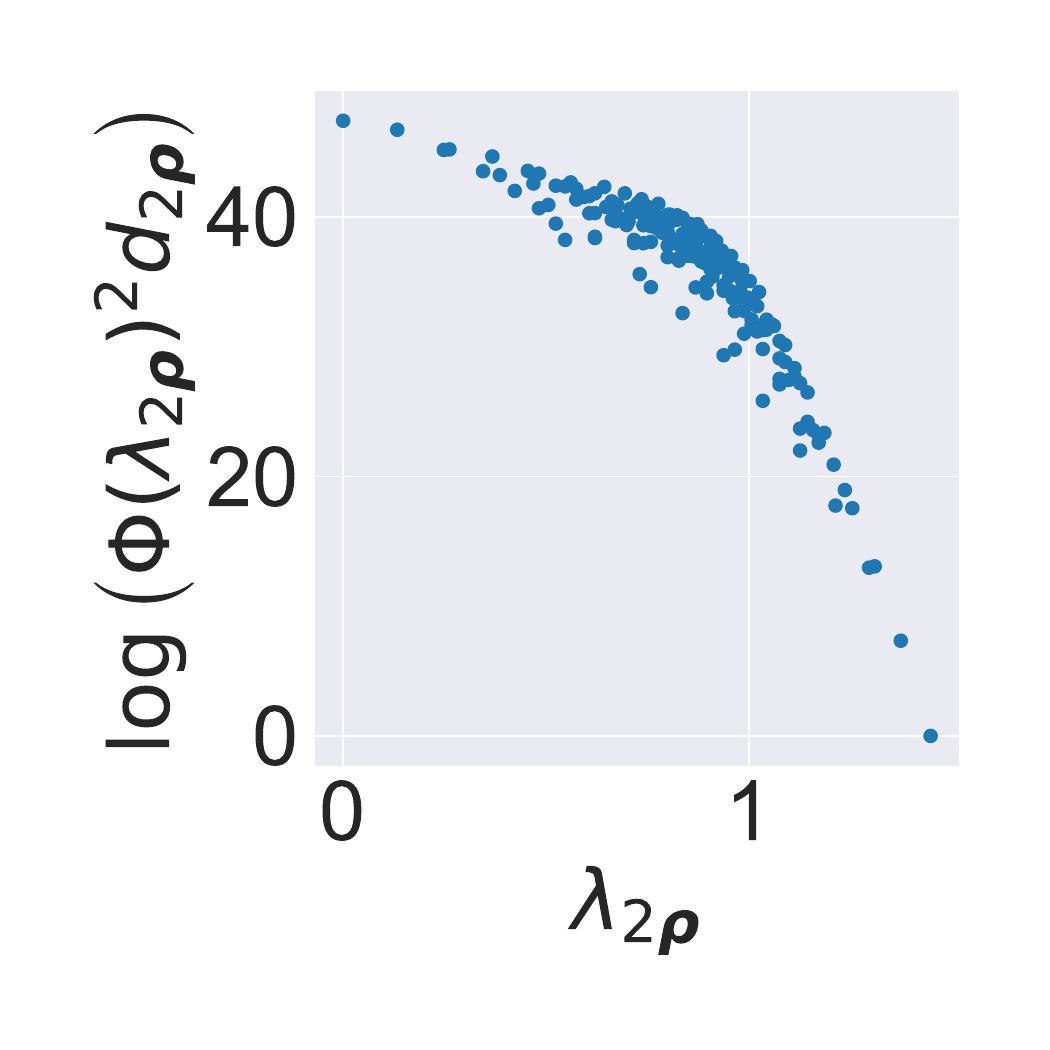}
\caption{$n=15$, with DC}
\end{subfigure}
\hfill
\begin{subfigure}[b]{0.20\textwidth}
\includegraphics[width=1.0\textwidth, trim=55pt 0pt 0pt 0pt]{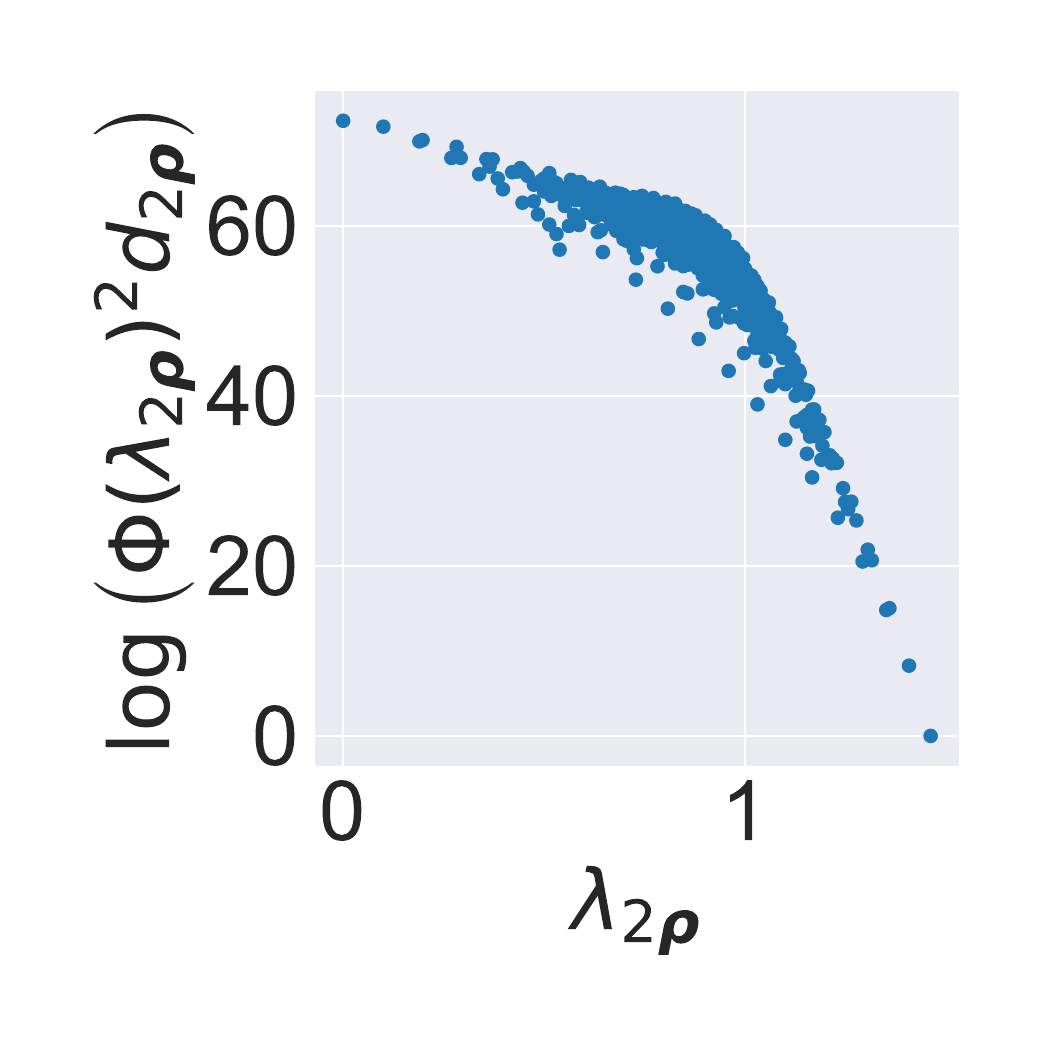}
\caption{$n=20$, with DC}
\end{subfigure}
\hfill
\begin{subfigure}[b]{0.20\textwidth}
\includegraphics[width=1.0\textwidth, trim=55pt 0pt 0pt 0pt]{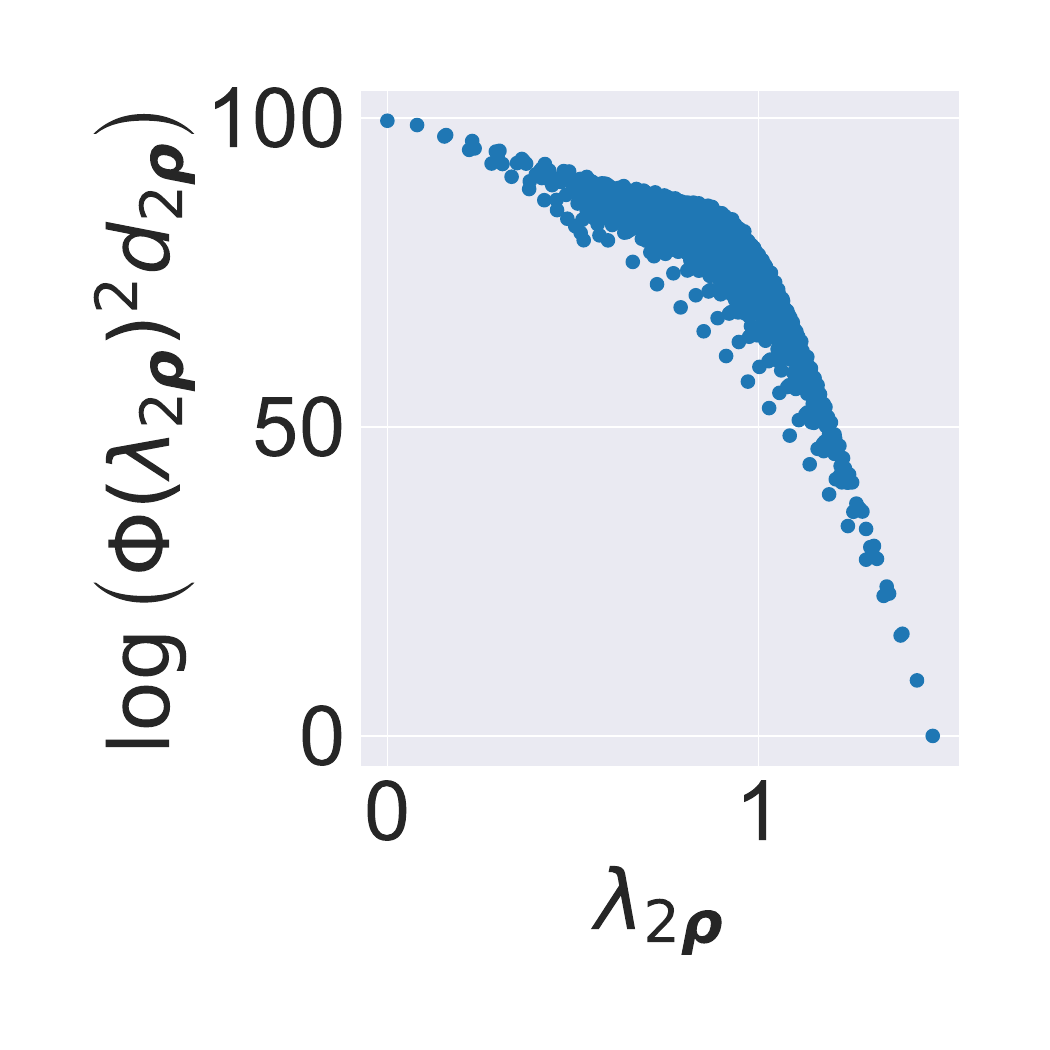}
\caption{$n=25$, with DC}
\end{subfigure}
\hfill
\begin{subfigure}[b]{0.20\textwidth}
\includegraphics[width=1.0\textwidth, trim=55pt 0pt 0pt 0pt]{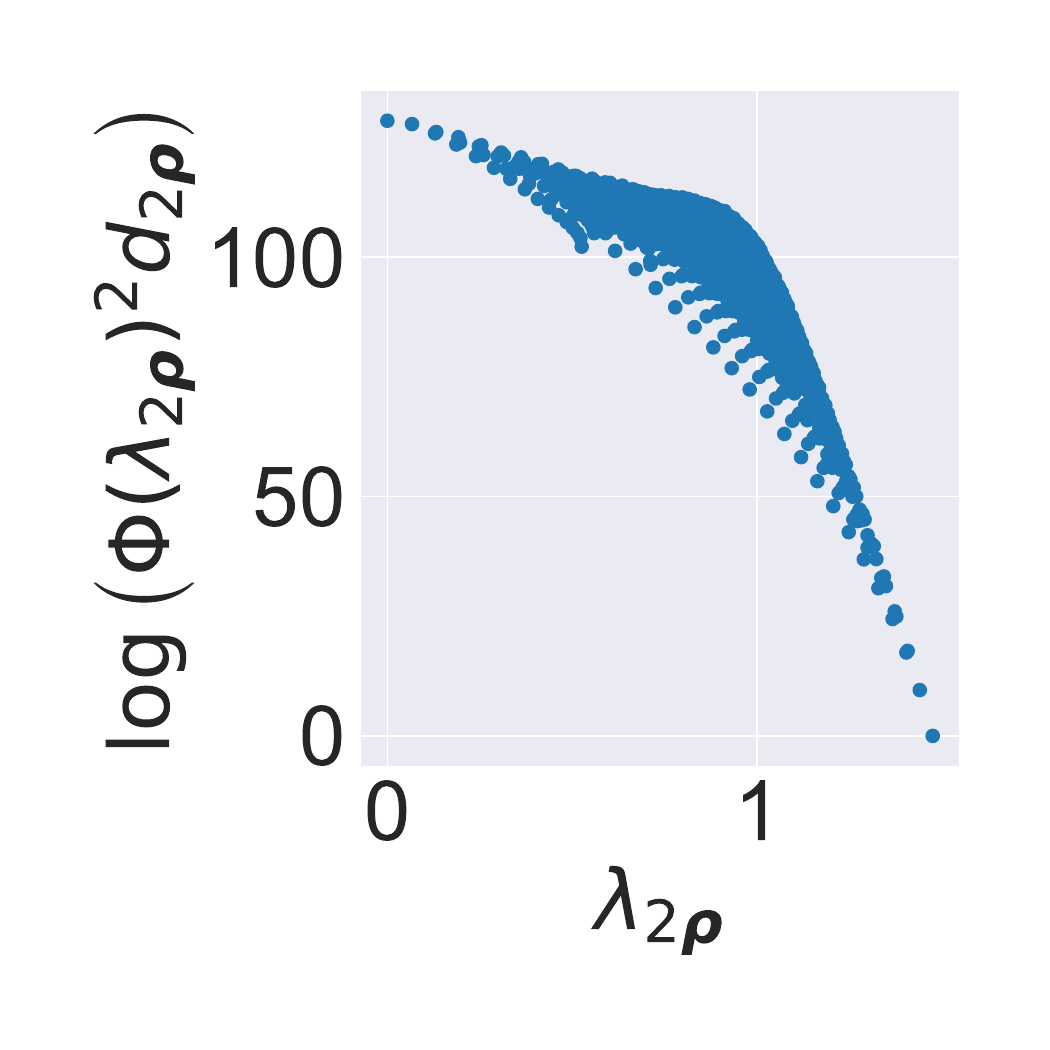}
\caption{$n=30$, with DC}
\end{subfigure}

\caption{Scatter plot of eigenvalue $\lambda_{2\v{\rho}}$ against log-scaled ``spectral density'' $\log \Phi(\lambda_{2\v{\rho}})^2 d_{2\v{\rho}}$ for base $\nu=2.5$, with and without degree correction (DC).}
\label{fig:spectral-density}
\end{figure}

\end{document}

%% file: algorithms/zsf-naive.tex
\begin{algorithm}[t]
\caption{Naive method}\label{algorithm:ZSFNaive}
\begin{algorithmic}
\ForAll{$\mu \vdash n$}
    \State $a_\mu^\lambda \gets 0$
    \State $|A_\mu| \gets 0$
\EndFor
\ForAll{$x \in X$}
    \State $\mu \gets d(x, x_0)$
    \State $|A_\mu| \gets |A_\mu| + 1$
    \ForAll{$\sigma \in C'_t$}
        \If{$\{ \sigma t\} \text{ covers } x$}
            \State $a_\mu^\lambda \gets a_\mu^\lambda + \sgn(\sigma)$
        \EndIf
    \EndFor
\EndFor
\ForAll{$\mu \vdash n$}
    \State $\phi_\lambda(\mu) \gets \frac{a_\mu^\lambda}{|A_\mu|}$
\EndFor
\end{algorithmic}
\end{algorithm}

%% file: algorithms/augmented-monomials.tex
\begin{algorithm}[t]
\caption{Decomposition of $\widetilde{m}_\kappa$ in the basis $p_\mu$}\label{algorithm:AugmentedMonomials}
\begin{algorithmic}
\State \textbf{Procedure:} $monom$
\State \textbf{Input:} $\kappa = (\kappa_1, \ldots, \kappa_r)$
\State \textbf{Global Variables:} $cache$
\If{$cache[\kappa] \neq \varnothing$}
    \State \textbf{return } $cache[\kappa]$
\EndIf
\If{$r = 1$}
    \State \textbf{return } $p_{\kappa_1}$
\EndIf
\If{$r = 2$}
    \State \textbf{return } $p_{\kappa_1} \cdot p_{\kappa_2} - p_{\kappa_1 + \kappa_2}$
\Else
    \State $s \gets p_{\kappa_r} \cdot monom(\kappa_1, \ldots, \kappa_{r-1})$
    \ForAll{$i = 1, \ldots, r-1$}
        \State $\rho \gets (\kappa_1, \ldots, \kappa_{r-1})$
        \State $\rho_i \gets \rho_i + \kappa_r$
        \State $s \gets s - monom(\rho)$
    \EndFor
    \State $cache[\kappa] \gets s$
    \State \textbf{return } s
\EndIf
\end{algorithmic}
\end{algorithm}

%% file: figures/small-nni-big-matching-dist.tex
\begin{figure}[t!]
\begin{center}
\begin{subfigure}[b]{0.45\textwidth}
\begin{forest}
[R [1] [$\bullet$ [2] [$\bullet$ [9] [$\bullet$ [8] [$\bullet$ [7] [$\bullet$ [6] [$\bullet$ [5] [$\bullet$ [3] [4]]]]]]]]]
\end{forest}
\end{subfigure}
\hfill
\begin{subfigure}[b]{0.45\textwidth}
\begin{forest}
[R [$\bullet$ [1] [2]] [$\bullet$ [9] [$\bullet$ [8] [$\bullet$ [7] [$\bullet$ [6] [$\bullet$ [5] [$\bullet$ [3] [4]]]]]]]]
\end{forest}
\end{subfigure}
\end{center}
\caption{Example of two phylogenetic trees differing by a single NNI-move such that corresponding matchings have large distance in quotient Cayley graph. Specifically, x=[(1, 16), (2, 15), (3, 4), (5, 10), (6, 11), (7, 12), (8, 13), (9, 14)] and y=[(1, 2), (3, 4), (5, 11), (6, 12), (7, 13), (8, 14), (9, 15), (10, 16)]. This example also generalizes to larger $n$ by inserting new leaves between $2$ and $9$ in the left tree.}
\label{fig:small-nni-big-matching-dist}
\end{figure}